\documentclass[]{style}
\usepackage{float}
\usepackage{graphicx}
\usepackage{titletoc}

\usepackage[english]{babel}
\usepackage{xcolor}
\usepackage{color}
\usepackage{wrapfig}

\usepackage{amsmath}
\usepackage{amssymb}
\usepackage{amsthm}
\usepackage{graphicx}
\usepackage{booktabs}
\usepackage{array}
\usepackage{multirow}
\usepackage{caption}  
\usepackage{amsmath}
\usepackage[table]{xcolor} 
\usepackage{xcolor}
\usepackage{microtype}
\usepackage{parskip}
\usepackage{tabularx}
\usepackage{tcolorbox}

\definecolor{bg}{HTML}{E7E7FE}
\definecolor{accent}{HTML}{6B67EE}
\definecolor{accent-bright}{HTML}{6B67EE}
\definecolor{mist}{HTML}{F0F0F8}
\definecolor{panelborder}{HTML}{9998B3}
\definecolor{muted}{HTML}{4a4a6a}
\definecolor{committed}{HTML}{0085B8}
\definecolor{refining}{HTML}{6B67EE}
\newtcolorbox{figurepanel}{
    enhanced,
    colback=mist,
    colframe=panelborder,
    boxrule=0.8pt,
    arc=6pt,
    left=10pt,right=10pt,top=10pt,bottom=10pt
}
\newtcolorbox{samplebox}{
    enhanced,
    colback=accent!4!bg!60!white,
    colframe=accent!55!black,
    boxrule=0.6pt,
    arc=4pt,
    left=8pt,right=8pt,top=7pt,bottom=7pt
}
\definecolor{mybg}{HTML}{EAE9FF} 
\definecolor{outline}{HTML}{9793F8} 
\newtcolorbox{borderbox}[1][]{
    colback=bg!50!white,
    colframe=outline,
    top=2pt, left=2pt, right=2pt, bottom=2pt,
    #1
}

\newcommand{\nfeheader}[1]{%
    {\sffamily\bfseries\large #1}
}
\newcommand{\methodtitle}[1]{%
    {\bfseries #1}
}

\usepackage{algorithm}

\usepackage[noEnd=false]{algpseudocodex}

\usepackage{thmtools}
\usepackage{thm-restate}
\tcbset{thmbox/.style={enhanced, breakable,
    colback=accent!4!bg!60!white, colframe=accent,
    boxrule=0pt, leftrule=0pt, arc=0pt,
    left=8pt, right=8pt, top=8pt, bottom=8pt,
    fonttitle=\bfseries\sffamily, coltitle=accent-bright}}
\newtcolorbox{thmbox}{enhanced, breakable,
    colback=accent!4!bg!60!white, colframe=accent,
    boxrule=0pt, leftrule=0pt, arc=0pt,
    left=8pt, right=8pt, top=8pt, bottom=8pt}

\tcolorboxenvironment{theorem}{thmbox}
\tcolorboxenvironment{proposition}{thmbox}
\tcolorboxenvironment{lemma}{thmbox}
\tcolorboxenvironment{corollary}{thmbox}
\tcolorboxenvironment{definition}{thmbox}
\tcolorboxenvironment{remark}{thmbox}
\tcolorboxenvironment{assumption}{thmbox}
\tcolorboxenvironment{box*}{thmbox}
\tcolorboxenvironment{restatable}{thmbox}
\newcommand{\restate}[1]{\begin{thmbox}#1*\end{thmbox}}

\DeclareMathOperator{\softmax}{softmax}
\DeclareMathOperator{\diag}{diag}
\DeclareMathOperator{\kl}{KL}
\DeclareMathOperator{\var}{Var}

\newcommand{\HL}[1]{\colorbox{mybg}{$\displaystyle #1$}}
\newcommand{\Mode}[1]{\textcolor{outline}{\texttt{#1}}}

\declaretheorem[
  name=Proposition,
  numberwithin=section,
  refname={Proposition,Propositions},
  Refname={Proposition,Propositions}
]{proposition}

\declaretheorem[
  name=Lemma,
  numberwithin=section,
  refname={Lemma,Lemmas},
  Refname={Lemma,Lemmas}
]{lemma}

\declaretheorem[
  name=Theorem,
  numberwithin=section,
  refname={Theorem,Theorems},
  Refname={Theorem,Theorems}
]{theorem}

\declaretheorem[
  name=Corollary,
  numberwithin=section,
]{corollary}

\declaretheorem[]{box*}

\declaretheorem[
  name=Remark,
  numberwithin=section,
  refname={Remark,Remarks},
  Refname={Remark,Remarks}
]{remark}
\declaretheorem[
  name=Assumption,
  numberwithin=section,
  refname={Assumption,Assumptions},
  Refname={Assumption,Assumptions}
]{assumption}

\title{\fontsize{0.6cm}{0.72cm}\selectfont Discrete Beckmann Transport Models\\[0.3cm] for One-Step Language Modeling and Reasoning}

\author[1]{Sophia Tang$^\dagger$}
\author[2,3,4]{Shiyi Wang}

\affiliation[1]{University of Pennsylvania}
\affiliation[2]{Harvard University}
\affiliation[3]{Kempner Institute}
\affiliation[4]{IAIFI}
\abstract{Discrete diffusion and flow models are a promising alternative to autoregressive language models, but compressing many-step sampling into fewer steps typically requires distilling a pretrained teacher model. This caps the student at the teacher's quality and requires a costly two-stage training pipeline. We introduce Discrete Beckmann Transport Models (DBTM), built on a time-independent flow whose autonomous transport map provably carries any point in the ambient space to a fixed point on the vertices of the simplex in a single step. We show that this fixed-point property is characterized by a conservation equation whose residual can be minimized directly from data, removing the requirement for a teacher flow and time conditioning. Under this construction, a partially trained map corresponds to the flow truncated at finite time, so generation reduces to iterating one map until it reaches a fixed point. We further extend the map to a partial-context interpolant where additional function evaluations act as refinement steps rather than ODE integration steps. On language modeling and reasoning tasks, DBTM enables one- and few-step generation that improves quality and accuracy over discrete diffusion and continuous flow baselines.

\vspace{0.5em}%
\textbf{\sffamily\bfseries Correspondence:} \href{sophtang@engineering.upenn.edu}{\texttt{sophtang@engineering.upenn.edu}}, \href{fwang@math.harvard.edu}{\texttt{fwang@math.harvard.edu}}
}
\begin{document}
\maketitle
\begingroup
\renewcommand{\thefootnote}{}
\footnotetext{$^\dagger$Work done while a visiting researcher at Harvard University.}
\endgroup

\renewcommand{\footnoterule}{%
  \kern -3pt
  \hrule width \linewidth
  \kern 2.6pt
}
\vspace{-0.2cm}


\begin{figure}[h!]
    \centering
    \includegraphics[width=0.9\linewidth]{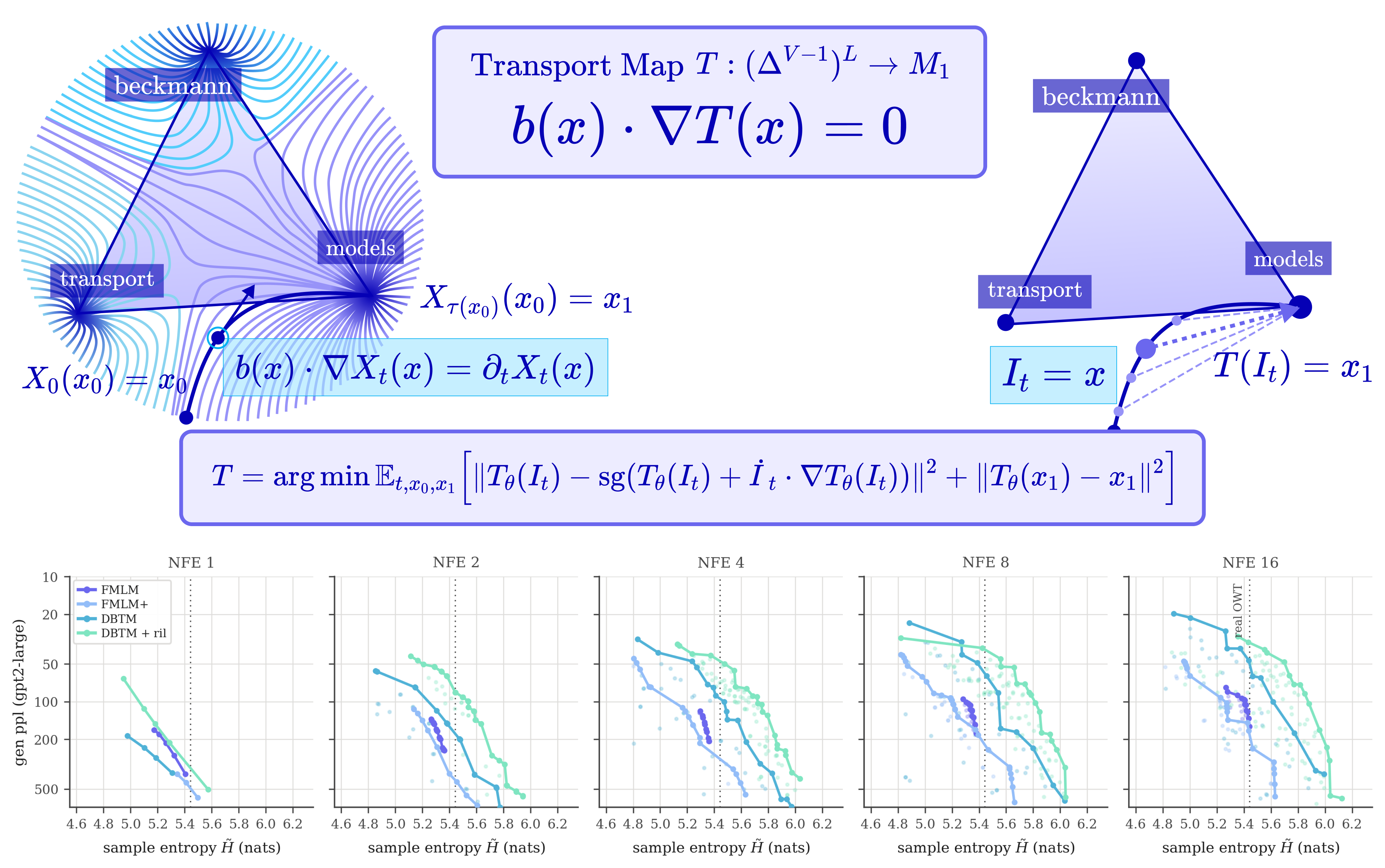}
    \caption{\textbf{Discrete Beckmann Transport Models.} \textbf{Top Left:} The autonomous field on a 2-dimensional simplex where the autonomous velocity field $b(x)$ solves the Eulerian equation. \textbf{Top Right:} The associated transport map $T(x)$ that solves the conservation equation $b(x)\cdot \nabla T(x)=0$ with boundary constraint $T(x)=x$ for $x\in M_1$ maps every point along the interpolant $I_t=x$ directly to the endpoint $x_1$. \textbf{Bottom:} Generative frontiers of DBTM variants vs. flow map baselines FMLM and FMLM+ on OpenWebText, with lower generative PPL and higher entropy being the optimal directions.}
    \label{fig:discrete-btm}
    \vspace{-10pt}
\end{figure}

\section{Introduction}

Discrete diffusion models \citep{shi2024simplified, sahoo2024simple, ou2024your, zheng2024masked} are competitive alternatives to autoregressive language modeling, offering bidirectional dependencies. However, for sequences of length $L$ over a vocabulary of size $V$, the state space grows as $V^L$, forcing practical samplers to factorize the per-step denoising distribution across positions. This factorization discards intra-step dependencies, so unmasking many positions at once degrades sample quality, and high-fidelity generation requires many function evaluations scaling linearly with $L$. The resulting limit on parallelism has motivated continuous-space flows that land on the simplex and can be distilled into few-step samplers. Among these techniques are \textit{flow maps} \citep{boffi2025build} which have been adapted to discrete sequences \citep{potaptchik2026discrete, lee2026flow} by defining a mean denoiser that transports a point at time $s$ directly to its further denoised state at time $t$, landing at a vertex of the simplex at $t=1$.

A key limitation of few-step language modeling with flow maps is the need to learn both the diagonal ($s=t$) and off-diagonal ($s\neq t$) terms, either by distilling a teacher flow approximating the marginal velocity $b_t$ or by training both jointly via bootstrapping, with the former commonly performing better \citep{potaptchik2026discrete, lee2026flow}. Distillation caps the student at the teacher's quality, propagating its approximation errors, and imposes a two-stage pipeline that must train the teacher to convergence before fitting the student. Bootstrapping removes the teacher but makes the off-diagonal targets depend on the model's own evolving predictions, which can be unstable and slow to converge.

We introduce \textbf{Discrete Beckmann Transport Models (DBTM)}, a framework that learns the \textit{autonomous transport map} $T$ associated with the time-independent flow, which provably has straighter trajectories than the time-dependent flow and carries any ambient point to the data manifold supported on the simplex vertices. This fixed-point property is formalized by the \textit{conservation equation} $b\cdot\nabla T=0$, whose residual we minimize to train directly from data, replacing the distillation stage entirely.

Our \textbf{main contributions} can be summarized as follows:
\begin{enumerate}
    \item \textbf{Discrete Beckmann Transport Models:} From the time-independent flow ODE $X_s(x_0)$, we define a \textit{one-step map} $T(x):=X_{\tau(x_0)}(x)$ that provably transports any ambient point to a fixed point on the discrete data manifold without conditioning on time.
    \item \textbf{Direct Training from Data:} $T$ can be trained without distillation by iteratively regressing onto a frozen copy of itself plus an Euler correction. A partially trained $T_\theta$ maps to an intermediate point along the autonomous flow and converges to $T_{\theta^\star}=X_{\tau(x_0)}$, so the manifold can be reached by iteratively applying $T_\theta$.
    \item \textbf{Self-Correction via Iterative Refinement:} We train with a partial-context interpolant, which enables confidence- or quality-based iterative refinement during training and inference. 
\end{enumerate}

We validate DBTM on unconditional language modeling and conditional reasoning tasks, demonstrating stronger performance than baselines at lower or matched NFEs. Further discussion of related works is provided in App. \ref{app:related-work}.

\section{Preliminaries}

\paragraph{Continuous Generative Modeling on the Simplex}
Let $\mathcal{V}=\{1,\dots,V\}$ be a vocabulary and $x=(x^1,\dots,x^L)\in\mathcal{V}^L$ a sequence of $L$ tokens. Embedding each token as a vertex of the probability simplex $\Delta^{V-1}:=\{x\in\mathbb{R}^V_{\geq0}:\sum_jx_j=1\}$ via the one-hot map $x^\ell\mapsto e_{x^\ell}$ represents a sequence with a point in $(\mathbb{R}^V)^L$. The data distribution $\mu_1$ is atomic, supported on the $V^L$ vertex configurations:
\begin{align}
    M_1:=\{e_1,\dots,e_V\}^L\subset(\Delta^{V-1})^L,\label{eq:data-manifold}
\end{align}
and we decode by taking the per-position argmax $x\mapsto(\arg\max_jx^\ell_j)_{\ell\in[L]}$, which is exact on $M_1$. The prior distribution can be defined as a product of Gaussians $\mu_0=\mathcal{N}(0,\sigma^2\boldsymbol{I}_V)^{\otimes L}$, and continuous generative modeling aims to transport $\mu_0$ to $\mu_1$ via a continuous flow in ambient space.

\paragraph{Generative Flows and Flow Maps}
The stochastic interpolants framework \citep{albergo2022building, albergo2025stochastic} connects $\mu_0$ and $\mu_1$ through the interpolant:
\begin{align}
    I_t:=\alpha_tx_0+\beta_tx_1, \quad t\in [0,1]\label{eq:interpolant}
\end{align}
with $x_0\sim\mu_0$, $x_1\sim\mu_1$ independent and schedules satisfying $\alpha_0=\beta_1=1$, $\alpha_1=\beta_0=0$, so that $I_0=x_0$ and $I_1=x_1$. We take $\beta_t=1-\alpha_t$ with $\alpha_t=(1-t)^a$ for $a\geq1$, recovering the linear interpolant at $a=1$. Writing $\mu_t:=\mathrm{Law}(I_t)$, the flow matching velocity is given by:
\begin{align}
    b_t(x):=\mathbb{E}_{x_0,x_1}\left[\dot I_t\big|I_t=x\right]=\underset{\hat b_t}{\arg\min}\ \mathbb{E}_{x_0,x_1}\left[\left|\hat b_t(I_t)-\dot I_t\right|^2\right]\label{eq:fm-velocity}
\end{align}
where the second equality is the regression characterization that makes $b_t$ learnable from samples without access to $\mu_t$. The pair $(b_t, \mu_t)$ satisfies the \textit{continuity equation}:
\begin{align}
    \nabla \cdot (b_t\mu_t)=-\partial_t\mu_t.\label{eq:continuity}
\end{align}
The probability flow ODE is given by:
\begin{align}
    \frac{d}{dt}X_{s,t}(x)=b_t(X_{s,t}(x)), \quad X_{s,s}(x)=x\label{eq:probability-flow}
\end{align}
transports $\mu_s$ to $\mu_t$. Sampling from a learned $b_t$ requires integrating \eqref{eq:probability-flow}, at a cost measured in network function evaluations (NFE). Flow maps instead learn the two-time solution operator $X_{s,t}$, advancing the state over a finite interval in one evaluation. By uniqueness of the ODE solution, it obeys the semigroup property:
\begin{align}
     X_{t,u}\circ X_{s,t}=X_{s,u}\ \ \text{for all}\ s\leq t\leq u,\qquad X_s\circ X_u=X_{s+u}\ \ \text{when}\ b_t\equiv b\label{eq:semigroup}
\end{align}
where the second identity is the case where the velocity field is time-independent. 

\section{Autonomous Flows on the Simplex}

\subsection{Properties of the Autonomous Flow}
While standard flow matching learns a time-dependent field $b_t$ and integrates it over a fixed horizon $t\in[0,1]$, an autonomous flow model \citep{lee2026beckmann} defines a time-independent velocity field $b$:
\begin{align}
    b(x):=\mathbb{E}_{t, x_0,x_1}[\dot I_t|I_t=x]=\underset{\hat b}{\arg\min}\mathbb{E}_{t,x_0,x_1}\left[|\hat b(I_t)-\dot I_t|^2\right], \qquad t\sim \mathcal{U}[0,1]
\end{align}
associated with the autonomous ODE:
\begin{align}
    \dot X_s(x_0)=b(X_s(x_0)),  \quad X_0(x_0)=x_0\label{eq:autonomous-flow}
\end{align}
where $s\in [0,\infty)$ is the integration time of the autonomous ODE, which is decoupled from the interpolation time $t$.

\begin{restatable}[Time-average of standard flow matching]{proposition}{timeaverage}\label{prop:time-average}
    Given the time-dependent flow matching velocity $b_t(x):=\mathbb{E}_{x_0,x_1}[\dot I_t|I_t=x]$ defined with the interpolant $I_t:=\alpha_tx_0+\beta_tx_1$, the autonomous velocity field is the time-weighted average:
    \begin{align}
    \label{eqn:defn auto velocity}
        b(x)
    &=\frac{\int_0^1\mu_t(x)b_t(x)dt}{\int_0^1\mu_t(x)dt}
    \end{align}
\end{restatable}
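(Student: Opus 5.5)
The plan is to compute the conditional expectation defining $b(x)$ by disintegrating the joint law of $(t, I_t, \dot I_t)$ with respect to $t$. Throughout, $\mu_t$ denotes the density of $\mathrm{Law}(I_t)$ in ambient space. Since $x_0$ is Gaussian, this density exists and is smooth for every $t<1$. The endpoint $t=1$ is Lebesgue-null and does not affect the integrals.

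Under the joint law with $t\sim\mathcal{U}[0,1]$ independent of $(x_0,x_1)$, the random variable $I_t$ has marginal density
\begin{align}
    \bar\mu(x):=\int_0^1\mu_t(x)\,dt .
\end{align}
Conditionally on $I_t=x$, the random time $t$ has density proportional to $\mu_t(x)$, by Bayes' rule. The conditional density is therefore $\mu_t(x)/\bar\mu(x)$ on $[0,1]$.

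Next I apply the tower property, conditioning first on $(t, I_t)$ and then on $I_t$:
\begin{align}
    b(x)=\mathbb{E}\big[\dot I_t\,\big|\,I_t=x\big]
    =\mathbb{E}\Big[\mathbb{E}_{x_0,x_1}\big[\dot I_t\,\big|\,t, I_t=x\big]\,\Big|\,I_t=x\Big]
    =\mathbb{E}\big[b_t(x)\,\big|\,I_t=x\big]
    =\int_0^1 b_t(x)\,\frac{\mu_t(x)}{\bar\mu(x)}\,dt .
\end{align}
The third equality uses two facts. First, $t$ is independent of $(x_0,x_1)$. Second, for fixed $t$ the inner conditional expectation is exactly the flow matching velocity $b_t(x)$ from \eqref{eq:fm-velocity}. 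Substituting the expression for $\bar\mu$ gives \eqref{eqn:defn auto velocity}.

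To make this rigorous without relying on conditional densities, I would verify the identity weakly. For any bounded test function $\varphi$, write
\begin{align}
    \mathbb{E}\big[\varphi(I_t)\cdot\dot I_t\big]
    =\int_0^1\!\!\int \varphi(x)\cdot b_t(x)\,\mu_t(x)\,dx\,dt
    =\int \varphi(x)\cdot\Big(\int_0^1\mu_t b_t\,dt\Big)(x)\,dx .
\end{align}
The last step exchanges the order of integration by Fubini. This yields the ratio formula as the $L^2$ minimizer of the regression $\mathbb{E}_{t,x_0,x_1}|\hat b(I_t)-\dot I_t|^2$, with $\bar\mu(x)\,dx$ as the relevant reference measure.

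The main obstacle is justifying Fubini, which needs $\int_0^1\!\int|b_t|\mu_t\,dx\,dt<\infty$. I would obtain this from the bound $\int|b_t|\mu_t\le\mathbb{E}|\dot I_t|\le|\dot\alpha_t|\,\mathbb{E}|x_0|+|\dot\beta_t|\,\mathbb{E}|x_1|$. This bound is integrable in $t$ because $\alpha_t=(1-t)^a$ with $a\ge1$ is monotone, so $\int_0^1|\dot\alpha_t|\,dt=1$, and similarly for $\beta_t$.

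A second, minor point is that $\bar\mu(x)>0$ everywhere, so the ratio is well defined. This holds because $\mu_t$ is a Gaussian mixture, hence strictly positive, for each $t<1$.
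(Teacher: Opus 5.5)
Your proposal is correct and follows essentially the same route as the paper's proof. Both treat $(t,I_t)$ as having joint density $\mu_t(x)$, obtain the conditional density $\mu_t(x)/\bar\mu(x)$ of $t$ given $I_t=x$ by Bayes' rule, and finish with the tower property using independence of $t$ from $(x_0,x_1)$; your weak-form Fubini check, with the integrability bound $\int_0^1\mathbb{E}|\dot I_t|\,dt<\infty$ and positivity of $\bar\mu$, is extra rigor the paper omits (it only asserts the identity wherever $\bar\mu(x)>0$).
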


The proof is in App. \ref{app:time-average}. Intuitively, $b(x)$ averages the velocities of every interpolant passing through $x$ at any time $t\in [0,1]$, so the field depends on position alone, removing the time dependence that constrains the flow to terminate at $\mu_1$ at $t=1$ and yielding straighter dynamics. We make this precise in App. \ref{app:dynamics-proofs}. Just as the time-dependent flow satisfies a \textit{continuity equation} \eqref{eq:continuity}, the autonomous flow satisfies the \textit{divergence condition} (Proof in App. \ref{app:divergence-cond}) \citep{lee2026beckmann} obtained by integrating over time $t\in [0,1]$:
\begin{align}
    \int_0^1\nabla\cdot(b_t \mu_t)=-\int_0^1\partial_t p _t\implies \nabla \cdot j=\mu_0-\mu_1
\end{align}
where $j=\nu b=\int_0^1\mu_tb_tdt$ is the \textit{current} and $\nu(x):=\int_0^1\mu_t(x)dt$ the \textit{occupation measure}. This makes $\mu_0$ the source and $\mu_1$ the sink of the current, so $b$ transports $\mu_0$ to $\mu_1$ -- with the key distinction that $X_s$ need not reach $M_1$ at $s=1$. We call the time at which it converges to $M_1$ the \textit{hitting time} $\tau(x_0)\in[0,+\infty)$, and show it is finite on the simplex.

\begin{restatable}[Convergence of autonomous flow on the simplex]{theorem}{convergence}\label{thm:convergence-simplex}
    Let $\mu_0$ be absolutely continuous on $(\mathbb{R}^V)^L$ with continuous, strictly positive density, and let
    \begin{align}
        M_1=\mathrm{supp}(\mu_1)=(\{e_1,\dots,e_V\})^L\subset(\Delta^{V-1})^L
    \end{align}
    be the finite set of $V^L$ vertex configurations. Then for $\mu_0$-a.e.\ $x_0$, the autonomous flow $X_s(x_0)$ satisfies:
    \begin{enumerate}
        \item[(i)] \textbf{Absorption}: if $X_{s^\star}(x_0)\in M_1$ for some $s^\star<\infty$, then $X_s(x_0)=X_{s^\star}(x_0)$ for all $s>s^\star$.
        \item[(ii)] \textbf{Convergence}: there is $x^\star(x_0)\in M_1$ with $\lim_{s\to\infty}X_s(x_0)=x^\star(x_0)$.
        \item[(iii)] \textbf{Finite hitting time}: for the linear interpolant $I_t=(1-t)x_0+tx_1$, $\tau(x_0):=\inf\{t\geq0:X_s(x_0)\in M_1\}<+\infty$. Although the autonomous velocity field vanishes at the vertices, $b(e_j)=0$ for all $e_j\in M_1$, it does not vanish in the limit approaching them: $\lim_{r\downarrow0}b(e_j+r\omega)=-\kappa(\omega)\omega$. The approach speed is bounded below by $\kappa_->0$ and $\tau(x_0)\leq t_{\text{entry}}+2r_1/\kappa_-$ for a fixed radius $r_1$.
    \end{enumerate}
\end{restatable}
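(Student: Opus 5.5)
We would work directly from the explicit form of the autonomous field given by Proposition~\ref{prop:time-average}, specialized to the linear interpolant and the atomic target $\mu_1=\sum_{y\in M_1}\pi_y\delta_y$. For $x\notin M_1$, conditioning on $x_1=y$ forces $x_0=(x-ty)/(1-t)$. A change of variables then gives
\begin{align}
\mu_t(x)=\sum_{y}\pi_y(1-t)^{-d}\rho_0\!\left(\tfrac{x-ty}{1-t}\right), \qquad \dot I_t=y-x_0=\tfrac{y-x}{1-t},
\end{align}
where $d=VL$ and $\rho_0$ is the density of $\mu_0$. Hence
\begin{align}
b(x)=\frac{\sum_y\pi_y\int_0^1(1-t)^{-d-1}\rho_0\!\left(\tfrac{x-ty}{1-t}\right)(y-x)\,dt}{\sum_y\pi_y\int_0^1(1-t)^{-d}\rho_0\!\left(\tfrac{x-ty}{1-t}\right)dt}.
\end{align}
In words, $b(x)$ is a positive combination of the directions $y-x$ pointing toward the vertices. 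This is a "pull toward vertices" field, and all three claims will be read off from it.

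\textbf{Absorption (i).} At a vertex the field is zero by convention, since $b(e_j)=0$ is stated. We would also check that this point is an equilibrium reachable in finite time without violating uniqueness. Uniqueness fails only at the vertex, where $b$ is discontinuous, so we define solutions in the Filippov or forward sense. Because all nearby velocities point inward, with radial component $-\kappa<0$, no solution can leave once it arrives. That gives (i).

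\textbf{Local behaviour near a vertex, the first part of (iii).} Fix $e=y^\star$ and write $x=e+r\omega$. In the denominator, the $y=e$ term has argument $(x-te)/(1-t)=e+r\omega/(1-t)$, and substituting $u=r/(1-t)$ gives mass of order $r^{-d+1}\int^{\infty}_{r}u^{d-2}\rho_0(e+u\omega)\,du$, which dominates. The terms with $y\neq e$ stay bounded, since their arguments avoid blowing up except near $t=1$, where they are $O(1)$ in $r$. The numerator for $y=e$ is $-r\omega\cdot r^{-d}\int u^{d-1}\rho_0\,du/r$. Taking the ratio gives
\begin{align}
b(e+r\omega)\to -\kappa(\omega)\,\omega, \qquad \kappa(\omega)=\frac{\int_0^\infty u^{d-1}\rho_0(e+u\omega)\,du}{\int_0^\infty u^{d-2}\rho_0(e+u\omega)\,du}>0.
\end{align}
Continuity and positivity of $\rho_0$, together with compactness of the sphere of $\omega$, give $\kappa_->0$. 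We take $r_1$ small enough that on the ball $B(e,r_1)$ the radial component satisfies $\omega\cdot b\le-\kappa_-/2$, using uniform convergence in $\omega$. Then $\tfrac{d}{ds}r\le-\kappa_-/2$, so the hitting time after entry is at most $2r_1/\kappa_-$. We expect this uniform asymptotics, with dominated-convergence control of the $y\neq e$ terms and of the tails uniformly in $\omega$, to be the main technical obstacle. Tail integrability needs $\rho_0$ to have moments, which holds for the Gaussian prior; in general we would assume it.

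\textbf{Convergence (ii) and the entry time.} Outside the balls we need a Lyapunov argument showing that trajectories reach some $B(e,r_1)$ in finite time $t_{\text{entry}}$. Our first attempt is $\Phi(x)=|x-c|^2$, with $c$ the barycenter of the simplex product. Because $b$ is a convex combination of the vectors $y-x$ with $y\in M_1$, the hull of $M_1$ is forward-invariant and attracting, and trajectories are bounded. For a.e.\ $x_0$ we would then invoke the divergence condition $\nabla\cdot j=\mu_0-\mu_1$. The current has no sinks other than $M_1$, so the $\omega$-limit sets of a.e.\ trajectory, which are invariant sets on which $\nabla\cdot j=0$ with no source, must carry no flux and must lie in $\{b=0\}$. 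Away from $M_1$ this set is $\mu_0$-null; we would argue that its stable set is null via Liouville-type measure transport of $\mu_0$ along the flow. Hence a.e.\ trajectory enters some $B(e,r_1)$ at a finite time $t_{\text{entry}}$. From then on the trajectory is trapped and hits $e$, which gives (ii) with $x^\star=e$ and the bound $\tau\le t_{\text{entry}}+2r_1/\kappa_-$.
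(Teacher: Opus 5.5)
Your local analysis near a vertex matches the paper's proof. You use the same substitution $u=r/(1-t)$, obtain the same $\kappa(\omega)$, make the same choice of $r_1$ with radial component at most $-\kappa_-/2$, give the same absorption argument, and get the same bound $\tau\le t_{\text{entry}}+2r_1/\kappa_-$. You are also right that some decay of $\rho_0$ beyond ``continuous, strictly positive'' is silently needed; the paper's appendix assumes exponential decay.

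Your remark that $b(x)=Z(x)(\bar y(x)-x)$ with $Z\ge 1$ and $\bar y(x)\in\mathrm{conv}(M_1)$ makes the distance to the hull decay. That is a cleaner way to get a compact forward-invariant region $K$ than the paper's far-field lemma. Note that this is a positive combination with total weight $Z\geq1$, not a convex one.

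The gap is in (ii): the claim that almost every trajectory reaches $U=\bigcup_e B(e,r_1)$ in finite time.

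\begin{itemize}
\item You argue that $\omega$-limit sets are invariant sets ``on which $\nabla\cdot j=0$ with no source'', and so lie in $\{b=0\}$. The divergence condition says the opposite: off $M_1$, $\nabla\cdot(\nu b)=\rho_0>0$ at \emph{every} point, so every point is a source.
\item Nothing you propose forces limit sets into $\{b=0\}$. The function $|x-c|^2$ only controls the far field, there is no Lyapunov function for the interior dynamics, and periodic or more complicated limit sets are not excluded.
\item Even if you knew that $\{b=0\}\setminus M_1$ is $\mu_0$-null, that would say nothing about its basin, which is what actually matters.
\end{itemize}

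The missing idea is that the strictly positive source makes the flow expand $\nu$-weighted volume:
\[
\frac{d}{ds}\bigl[\nu(X_s(x))\det DX_s(x)\bigr]=\rho_0(X_s(x))\det DX_s(x).
\]
Apply this directly, with no classification of limit sets, to the trapped set $N_n=\{x_0: X_s(x_0)\in E \text{ for all } s\ge s_n\}$, where $E=K\setminus U$ is compact and bounded away from the vertices. Then $m(s)=\nu(X_s(N_n))$ satisfies $\dot m\ge (c_0/\nu_+)\,m$, with $c_0=\min_E\rho_0$ and $\nu_+=\max_E\nu$. So $m$ grows exponentially while staying below $\nu(E)<\infty$, which forces $m(s_n)=0$. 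Since $\nu\ge\nu_->0$ on $E$ and $X_{s_n}$ is a diffeomorphism, this gives $\mathrm{Leb}(N_n)=0$.

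This is exactly the paper's Lemma~\ref{lemma:trapped}. Your ``Liouville-type transport'' is the right tool, but it must be aimed at this trapped set, not at the stable set of the non-vertex equilibria.
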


The proof is in App. \ref{app:vertex-vanish-proof}. The resulting discontinuity in $b$ at the vertices is an artifact of the linear interpolant: for a general interpolant \eqref{eq:interpolant} that is \textit{self-stopping}, i.e., $\dot\alpha_t\to0$ as $t\to1$, Proposition \ref{prop:continuity at the vertices} shows $b$ is continuous at the vertices with $b(x)\to0$ as $x\to e_i$.
 
\subsection{Autonomous Transport Map}
Given the hitting time $\tau(x_0)$ of the autonomous flow $X_s(x_0)$, the associated one-step transport map $T$ can be written as the limit:
\begin{align}
    T(x_0):=\lim_{s\to \tau(x_0)}X_s(x_0) \in M_1, \quad T_{\#}\mu_0=\mu_1\label{eq:transport-map}
\end{align}
which is also the pushforward of $\mu_0$ to $\mu_1$. Since $X_s(x_0)$ is a deterministic function of $x_0$, starting along any point $X_s(x_0)$ for $s\in [0,\tau(x_0))$ yields the same fixed point $T(X_s(x_0))=T(x_0)$. By the semigroup condition \eqref{eq:semigroup}, the map $T$ is idempotent such that $T^{\circ k}(\cdot )=T^{\circ k'}(\cdot)$ for any $k,k'\in \mathbb{N}$.

\begin{restatable}[Autonomous transport map solves conservation equation \citep{lee2026beckmann}]{proposition}{conservation}\label{prop:conservation-equation}
    Given the autonomous flow velocity field $b$, the autonomous flow $X_s(x_0)$ defined in \eqref{eq:autonomous-flow} solves the Eulerian equation:
    \begin{align}
    \begin{cases}
         b(x)\cdot \nabla X_s(x)=\partial_sX_s(x), \quad X_0(x) =x & x\notin M_1\\
         X_s(x)=x&x\in M_1
    \end{cases}\label{eq:eulerian-equation}
    \end{align}
    Since the autonomous transport map $T(x)$ defined in \eqref{eq:transport-map} is the limit of $X_s(x)$ as $s\to \tau(x)$ with vanishing velocity at the vertices ($\dot X_s(e_j)=0$ for all $j\in \{1, \dots, V\}$) by Theorem \ref{thm:convergence-simplex}, $T(x)$ is the unique solution of the conservation equation:
    \begin{align}
    \begin{cases}
         b(x)\cdot \nabla T(x)=0 & x\notin M_1\\
         T(x)=x&x\in M_1
    \end{cases}\label{eq:conservation-equation}
    \end{align}
\end{restatable}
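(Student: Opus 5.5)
The plan is to derive both equations from the flow (semigroup) property of the autonomous ODE \eqref{eq:autonomous-flow}, and then to pass to the limit $s\to\tau(x)$ using Theorem \ref{thm:convergence-simplex}.

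\textbf{Eulerian equation.} By the autonomous semigroup identity \eqref{eq:semigroup},
\[
X_{s}(X_h(x)) = X_{s+h}(x) \qquad \text{for small } h\geq 0 .
\]
Differentiating in $h$ at $h=0$ gives, by the chain rule (valid off $M_1$, where $b$ is assumed locally Lipschitz so that $X_s$ is $C^1$ in $x$),
\[
\nabla X_s(x)\, b(x)=\partial_s X_s(x),
\]
which is $b\cdot\nabla X_s=\partial_s X_s$ read componentwise. The initial condition $X_0(x)=x$ holds by definition. For $x\in M_1$, Theorem \ref{thm:convergence-simplex}(i) and $b(e_j)=0$ give $X_s(x)=x$ for all $s$, which is the boundary branch.

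\textbf{Conservation equation.} Fix $x\notin M_1$. Since $X_s(x)$ is a deterministic orbit, for $0\le h<\tau(x)$ the hitting time satisfies $\tau(X_h(x))=\tau(x)-h$, and $T(X_h(x))=T(x)$. This is the invariance already noted after \eqref{eq:transport-map}, and it follows from \eqref{eq:semigroup} together with the absorption property (i). So $T$ is constant along integral curves of $b$. Differentiating $h\mapsto T(X_h(x))$ at $h=0$ gives $\nabla T(x)\,b(x)=0$, provided $T$ is differentiable at $x$. On $M_1$ we have $T(x)=x$ by (i). An alternative route is to take $s\to\tau(x)$ in the Eulerian equation: $\partial_s X_s(x)=b(X_s(x))$ should tend to zero because the flow is absorbed. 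However, Theorem \ref{thm:convergence-simplex}(iii) says that for the linear interpolant $b$ does \emph{not} vanish in the approach limit. So I would rely on the invariance argument rather than this limit, and use the vanishing only at the absorbed state itself.

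\textbf{Uniqueness.} Let $S$ be any solution. Then
\[
\tfrac{d}{dh}S(X_h(x))=\nabla S\, b=0,
\]
so $S$ is constant along the orbit of $x$. By (ii)/(iii) the orbit reaches $T(x)\in M_1$ at time $\tau(x)$. Continuity of $S$ up to $M_1$ along the trajectory, together with the boundary condition, then gives
\[
S(x)=S(T(x))=T(x).
\]
This is the method of characteristics: every characteristic of $b$ hits the boundary set $M_1$ in finite time, so the boundary data determine the solution.

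\textbf{Main obstacle.} The delicate part is regularity. $T$ is piecewise constant with values in the finite set $M_1$, so it is differentiable only away from the separatrices between basins of attraction, and $b$ is discontinuous at the vertices. I would therefore state the result for $\mu_0$-a.e.\ $x$, on the open basins where $T$ is locally constant and the equation holds classically. Uniqueness would be stated within the class of functions continuous along characteristics. If needed, the equation can also be interpreted in the weak sense $\nabla\cdot(jT)=T\,\nabla\cdot j$ using the divergence condition.
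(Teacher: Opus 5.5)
Your proposal is correct and follows the paper's proof step for step: the Eulerian equation comes from differentiating the autonomous semigroup identity, and the conservation equation comes from differentiating the orbit invariance $T(X_s(x))=T(x)$ at $s=0$. Uniqueness comes, as in the paper, from constancy along characteristics for any solution continuous along trajectories, letting $s\to\tau(x)$ and using the boundary data.

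Your extra caveats are refinements the paper leaves implicit. These are that the nonvanishing approach limit in Theorem~\ref{thm:convergence-simplex}(iii) rules out the naive $s\to\tau(x)$ limit argument, which the paper's proof also avoids, and that $T$ is locally constant on the open basins, so the equation is best read $\mu_0$-a.e. One small fix: local Lipschitz continuity of $b$ only yields a Lipschitz flow, so your chain-rule step needs $b\in C^1$ off $M_1$.
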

Proposition \ref{prop:conservation-equation} follows from Theorem \ref{thm:convergence-simplex}, with proof in App. \ref{app:eulerian-conservation}. This is the key result that drives a principled approach for training the one-step map directly from data without distillation.

\section{Learning One-Step Maps Without Distillation}
\label{sec:training-map}
We present a method of directly learning the one-step transport map $T$ by matching the updates that converge to the solution of the conservation equation in \eqref{eq:conservation-equation}. The converged map $T$ transports any noisy state to its corresponding fixed point on the data manifold, yielding one-step generation and principled refinement techniques.

\subsection{Iteratively Solving the Eulerian Equation}
\label{sec:conservation-loss}
Unlike previous few-step model training, which requires distillation or bootstrapping from an approximated target, Proposition \ref{prop:conservation-equation} presents a principled approach to learning $T$ by iteratively regressing the autonomous flow $X_s(x)$ until the equilibrium point when $b(x)\cdot \nabla X_{\tau(x)}(x)=b(x)\cdot \nabla T(x)=0$ for all $x\in (\mathbb{R}^V)^L$. Starting from an untrained map $T^{(0)}(x)= X^{(0)}(x)=x$, we iteratively regress the Euler step of size $\eta$:
\begin{align}
    X^{(n+1)}(x)=X^{(n)}(x)+\eta b(x)\cdot \nabla X^{(n)}(x)
\end{align}
where $T_\theta$ is trained to match $X^{(n+1)}$ given fixed $X^{(n)}$ as $X^{(n)}\to T$ for large $n$. Since we define $b(x):=\mathbb{E}_{t, x_0,x_1}[\dot I_t|I_t=x]$, the optimal map is:
\begin{align}
    T=\underset{T_\theta}{\arg\min}\mathbb{E}_{t,x_0,x_1}\left[\|T_\theta(I_t )-\text{sg}(T_\theta(I_t)+\dot I_t\cdot \nabla T_\theta(I_t))\|^2+\|T_\theta(x_1)-x_1\|^2\right]\label{eq:argmin-T}
\end{align}
minimized exactly when $T$ satisfies the conservation equation $\mathbb{E}_{t,x_0,x_1}[\dot I_t \cdot\nabla T_\theta(I_t)|I_t=x ]=b(x)\cdot\nabla T_\theta(x)=0$ for $x\notin M_1$ and $T(x)=x$ for $x\in M_1$. Crucially, this objective ensures that even when $T_\theta$ has not converged, it approximates a map to some point along $X_s$, so by the semigroup property \eqref{eq:semigroup}, $k$ inference-time iterations $T_\theta^{\circ k}(x_0)$ approximates along the flow to $X_{kt}(x_0)$, which converges to $M_1$ as $kt\to \tau(x_0)$. 

\subsection{Training Objectives for the Transport Map}
\label{sec:training-objectives}
In practice, the autonomous transport map is parameterized as $T_\theta(\cdot)=\text{softmax}(f_\theta(\cdot))$ for a neural network $f_\theta$. Since the map is restricted to the simplex, we formulate each objective as a cross-entropy (CE) over categorical distributions rather than an unconstrained squared error, with the exception of the transport loss where the target does not lie on the simplex. 

\paragraph{Transport Loss}
From \eqref{eq:argmin-T}, the \textit{transport loss} is obtained by regressing $T_\theta$ onto the stop-gradient of itself, added to the Euler step correction along the autonomous flow:
\begin{align}
    \mathcal{L}_{\text{transport}}(\theta):=\mathbb{E}_{t,x_0,x_1}\bigg[\sum_{\ell\in [L]\setminus \mathcal{C}}\left\|T_\theta(I_t)_\ell-\text{sg}(T_\theta(I_t)+\dot I_t\cdot \nabla T_\theta(I_t))_\ell \right\|^2\bigg]\label{loss:transport}
\end{align}
where $\mathcal{C}$ is the set of committed tokens and the correction $\tfrac{d}{dt}T(I_t)=\dot I_t\cdot \nabla T(I_t)$ is tangent to the simplex and its elements sum to zero. 

\paragraph{Semigroup Loss}
From \eqref{eq:transport-map}, we show that by the semigroup condition $T_\theta^{\circ k}(\cdot)=T_\theta^{\circ k'}(\cdot)$, which we enforce using the \textit{semigroup loss}:
\begin{align}
    \mathcal{L}_{\text{semi}}(\theta):=\mathbb{E}_{t,x_0,x_1}\bigg[\sum_{k\in \mathcal{K}}\sum_{\ell\in [L]\setminus \mathcal{C}}\text{CE}\left(T_\theta(I_t)_\ell, \text{sg}(T^{\circ k}_\theta(I_t)_\ell) \right)\bigg]\label{loss:semigroup}
\end{align}
for $k\in \mathcal{K}\subset \mathbb{N}$ applications of the map. We apply this loss only toward the end of training, since the initial identity map $T=\text{id}$ satisfies it trivially. 

\paragraph{Boundary Loss}
To enforce the boundary condition $T(x) =x$ for $x\in M_1$ from Proposition \ref{prop:conservation-equation}, we add a \textit{boundary loss}:
\begin{align}
\mathcal{L}_{\text{bnd}}(\theta):=\mathbb{E}_{x_1}\bigg[\sum_{\ell\in [L]\setminus \mathcal{C}}\text{CE}\left(T_\theta(x_1)_\ell, x_1^\ell\right)\bigg]
\end{align}
Beyond the boundary itself, any intermediate $I_t$ should map to its target, $T_\theta(I_t)=x_1$. While the \textit{anchor condition} is implicitly enforced by the transport loss, the geometry of the simplex makes explicit enforcement useful over a specific time interval, which we describe next.

\subsection{Phase Transitions and the Anchor Loss}
The flow constructed from the time-dependent interpolant on the simplex commits to a vertex at a short time window called the \textit{phase transition}.  Before the transition, the output token remains uncertain; after it, the token is committed, and further integration does not change the output. Such transitions have been analyzed in diffusion models \citep{ambrogioni2023statistical, yu2025nonequilbrium}, and we can use tools from random energy models (REM) \citep{derrida1980random} to determine the timing of this transition. 

\begin{restatable}[Phase Transition and Commitment Time]{theorem}{optimalt}\label{thm:optimal-t}
    The phase transition where the interpolant \eqref{eq:interpolant} commits to a vertex occurs at time:
    \begin{align}
    \label{eqn:anchor time}
        t^\star=1-\left(1+\sigma\sqrt{2\log V}\right)^{-1/a}
    \end{align}
    where $\sigma$ is the noise scale of the $x_0$, and $a$ is the exponent on $\alpha_t=(1-t)^a$ and $V$ is the vocabulary size defining the simplex $\Delta^{V-1}$.  
\end{restatable}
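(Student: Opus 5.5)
We will reduce the statement to a single-position problem and then apply the random energy model (REM) condensation threshold. Because $\mu_0$ and the interpolant factorize over positions and we ask when a single token commits, it suffices to fix one position and work on $\mathbb{R}^V$.

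Write $I_t=\alpha_t x_0+\beta_t e_{j^\star}$, with $x_0\sim\mathcal N(0,\sigma^2 I_V)$ and $\beta_t=1-\alpha_t$. The posterior over the vertex $e_j$ given $I_t=x$ is a Gibbs measure,
\begin{align}
p_t(j\mid x)\propto \exp\!\Big(-\tfrac{\|x-\beta_t e_j\|^2}{2\sigma^2\alpha_t^2}\Big)\propto \exp\!\Big(\tfrac{\beta_t x_j}{\sigma^2\alpha_t^2}\Big),
\end{align}
so the commitment question is whether this Gibbs measure is spread over many $j$ or condensed on $j^\star$. Substituting $x=I_t$, the exponent for the true vertex is $\frac{\beta_t(\beta_t+\alpha_t x_{0,j^\star})}{\sigma^2\alpha_t^2}$. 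For $j\neq j^\star$ the exponents are $\frac{\beta_t}{\sigma\alpha_t}g_j$ with $g_j=x_{0,j}/\sigma$ i.i.d. standard Gaussians. These $V-1$ exponents form exactly a REM with $V$ i.i.d. Gaussian energies.

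The next step compares the planted signal against the extreme value of the noise. The largest of the $V-1$ Gaussian terms is, to leading order, $\sigma\alpha_t\sqrt{2\log V}$ in the coordinate $\alpha_t g_j\sigma$. The planted coordinate is $\beta_t$ plus an $O(\sigma\alpha_t)$ fluctuation that does not scale with $\log V$. At the REM level, the vertex $e_{j^\star}$ wins the argmax, so the Gibbs measure condenses on it, precisely when the signal exceeds the noise extreme:
\begin{align}
\beta_t \gtrsim \sigma\alpha_t\sqrt{2\log V}.
\end{align}

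The hardest step is pinning down the exact threshold constant, because the naive argmax comparison gives $\beta_t=\sigma\alpha_t\sqrt{2\log V}$, that is, $1-\alpha_t=\alpha_t\sigma\sqrt{2\log V}$. This yields $\alpha_t=(1+\sigma\sqrt{2\log V})^{-1}$, which with $\alpha_t=(1-t)^a$ gives exactly $t^\star=1-(1+\sigma\sqrt{2\log V})^{-1/a}$. So the stated formula corresponds to this argmax comparison. We would justify it with the standard Derrida computation: write the free energy of the REM part, $\frac{1}{\log V}\log\sum_j e^{\lambda g_j}$ with $\lambda=\beta_t/(\sigma\alpha_t)$. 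Above the freezing inverse temperature $\lambda_c=\sqrt{2\log V}$, this sum is dominated by $O(1)$ terms with value $\approx e^{\lambda\sqrt{2\log V}}$.

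The final step compares this to the planted weight $e^{\lambda\beta_t/(\sigma\alpha_t)}=e^{\lambda^2}$. The planted weight dominates once $\lambda^2>\lambda\sqrt{2\log V}$, again giving $\lambda=\sqrt{2\log V}$. We would state the result as holding to leading order in $\log V$ as $V\to\infty$, since finite-$V$ corrections of order $\log\log V/\sqrt{\log V}$ are ignored. The last step is a one-line inversion of $\alpha_t=(1-t)^a$, where monotonicity of $\alpha_t$ guarantees uniqueness of $t^\star$.
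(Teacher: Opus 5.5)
Your proposal follows the paper's own derivation essentially step for step. The Bayes posterior is a Gibbs measure with energies $(1-\alpha_t)x_j/(\sigma^2\alpha_t^2)$, which splits into a planted energy $\rho^2$ (with $\rho=(1-\alpha_t)/(\alpha_t\sigma)$, your $\lambda$) against a REM of $V-1$ Gaussian energies of scale $\rho$ whose maximum is $\rho\sqrt{2\log V}$; this gives the critical point $\rho=\sqrt{2\log V}$, inverted through $\alpha_t=(1-t)^a$. Your Derrida free-energy check is a consistent extra, provided you add that for $\lambda<\sqrt{2\log V}$ the noise sum is $\approx e^{\log V+\lambda^2/2}$, which yields the same threshold. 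Like the paper, you implicitly assume a uniform per-token prior and ignore cross-position correlations in $\mu_1$.
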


The full derivation is in App. \ref{app:optimal-t}. The derivation relies on the idea that competing forces pull the flow trajectory towards each of the $V$ vertices, resembling the selection of states in a random energy model at a given temperature, which undergoes a phase transition at $t=t^\star$. With the target vertex fixed at $e_k$ during training, the trajectory starts from $x_0$, where each vertex carries an energy proportional to the noise scale $\sigma$, and the flow commits to the target when the energy of the target vertex outweighs the competing energies of all other vertices $j\neq k$. Intuitively, this means that the interval $[0,t^\star]$ is when the flow must resolve uncertainty in coordination with the other positions, and the interval $[t^\star, 1]$ is the period post-transition where the endpoint is decided, and we can safely anchor the trajectory to the dominant vertex. 

This gives a principled criterion for \textit{when} to supervise the map toward its target. Over $[t^\star,1]$, where the endpoint is already decided, we can anchor $T_\theta(I_t)$ directly without sacrificing diversity:
\begin{align}
    \mathcal{L}_{\text{anchor}}(\theta):=\mathbb{E}_{t,x_0,x_1}\bigg[\sum_{\ell}\boldsymbol{1}[t>t_{\text{anchor}}]\text{CE}\left(T_\theta(I_t)^\ell, x_1^\ell\right)\bigg]\label{eq:anchor-loss}
\end{align}
with $t_{\text{anchor}}=t^\star$. Empirically, disabling $\mathcal{L}_{\text{anchor}}$ or setting $t_{\text{anchor}}>t^\star$ slowed convergence, while $t_{\text{anchor}}<t^\star$ caused mode collapse (Table \ref{tab:anchor-ablation}). More broadly, this characterization of the commitment time on the simplex suggests that the transport loss \eqref{loss:transport} and inference-time steering are best concentrated on the pre-transition interval $[0,t^\star]$, where the endpoint is still being resolved. 

\section{Training and Sampling with Fixed-Point Refinement}
\label{sec:self-correction}

\subsection{Test-Time Scaling with Self-Refinement}
\label{sec:self-refinement}
\paragraph{Partial-Context Interpolant}
We consider the subclass of interpolants in which tokens at positions $\mathcal{C}\subset \{1, \dots, L\}$ in the sequence are held as \textit{context tokens} in their clean state $I^\ell_t=x^\ell_1$, while the remaining are noised to time $t$:
\begin{align}
    I_t^\ell(\mathcal{C}):=\begin{cases}
        x_1^\ell&\ell \in \mathcal{C}\\
        \alpha_t^\ell x_0^\ell+(1-\alpha_t^\ell)x_1^\ell&\ell\in [L]\setminus \mathcal{C}
    \end{cases}
\end{align}
The map $T_\theta$ remains time-independent but is trained on this larger family of per-position interpolants $I_t^\ell(\mathcal{C})$, enabling conditioning on partial clean contexts during inference.

\paragraph{Confidence-Based Refinement}
\label{sec:confidence-correct}
Using the predicted map $T_\theta$, we can obtain a measure of the model's per-token confidence $q_\phi^\ell$ and use it to determine which tokens to \textit{renoise} and \textit{refine} via the context-dependent map. Concretely, we define:
\begin{align}
    q^\ell(T_\theta(x) ):=\langle T_\theta(x), \hat x_1\rangle, \quad \hat x_1=\arg\max(T_\theta(x))
\end{align}
Given a threshold $\kappa$ (e.g. $\kappa=0.9$), the refinement step holds the clean one-hot tokens $\hat x^\ell$ for the set of high-confidence positions $\mathcal{C}:=\{\ell:q^\ell(T_\theta(x))\geq \kappa\}$ and samples fresh noise for all remaining positions $\ell \in [L]\setminus \mathcal{C}$, and reapplies the map $T_\theta$, which training with the partial-context interpolant makes well-defined.

\paragraph{Refinement with Learned Per-Token Quality}
\label{sec:quality-correct}
Another approach to performing self-correction on the output of the transport map is to measure the coherence of each token given the predicted context and resampling noise for incoherent predictions. The ideal coherence score is the probability of the proposed token $\hat x^\ell_1$ given the ground truth at all other positions, $q_\star^\ell(\hat x):=\text{Pr}[\hat x^\ell\mid x_1\oplus m^\ell]$, where $m^\ell$ masks the $\ell$th position. Since $x_1$ is unavailable at inference, computing it would cost $L$ extra forward passes. Following \citet{kim2025fine}, we replace the ground truth with the model's own prediction and define the \textit{per-token quality}:
\begin{align}
    q^\ell_\phi(T_\theta(x))=\text{Pr}\left[\hat x^\ell_1=x_1^\ell \mid T_\theta(x)\right], \quad \hat x^\ell= \arg\max\left(T_\theta(x)^\ell\right)
\end{align}
which can be estimated by a learnable quality head with parameters $\phi$ trained by minimizing: 
\begin{align}
    \mathcal{L}_{\phi}(\phi)=\mathbb{E}_{x_0, x_1}\bigg[\sum_\ell\text{BCE}\left(\boldsymbol{1}[\hat x_1^\ell=x_1^\ell], q_\phi^\ell(T_\theta(x))\right)\bigg], \quad \hat x_1= \arg\max\left(T_\theta(x)\right)
\end{align}
Since this expectation is over the same $(x_0,x_1)$ pairs as the map objectives, it trains jointly with $T_\theta$. At inference, obtaining $q_\phi$ costs no additional NFE and defines $\mathcal{C}:=\{\ell:q_\phi^\ell(T_\theta(x))\geq \kappa\}$ just like confidence-based refinement. Algorithm \ref{alg:btm-sampling-refinement} provides the full inference algorithm with refinement.

\subsection{Training with Refinement in the Loop}
\label{sec:refinement-in-loop}
Both refinement rules reuse a map trained on the partial-context interpolant of Section \ref{sec:self-refinement}, where the context set $\mathcal{C}$ is drawn at random. At inference, however, $\mathcal{C}$ is not random. It is built up over rounds by committing the highest-confidence or quality positions first. The training distribution therefore contains out-of-distribution contexts, since none of the losses enforce the map to complete the structured context generated from the inference sampler. The mismatch is significant in the few-NFE regime: under a budget of $k$ rounds, each round commits roughly $1/k$ of the remaining positions, so the context after round $r$ is a large, highly non-uniform set, and $q_\phi$ is also out of distribution when scoring it.

\paragraph{Commit Rule}
We close this gap by supervising $T_\theta$ on the commit sets the sampler is likely to visit at inference. Index refinement rounds by $r=1,\dots,k$ with $\mathcal{C}_0=\emptyset$, and let
$\mathcal{R}_r:=[L]\setminus\mathcal{C}_{r-1}$ be the positions still uncommitted
entering round $r$. Writing $q^\ell$ for the commit score of position $\ell$ (confidence or the quality $q_\phi^\ell$), each round commits the set $\Delta \mathcal{C}_r$ ranked highest by $q^\ell$ and renoises the rest:
\begin{align}
    \Delta\mathcal{C}_r
    :=\underbrace{\{\ell\in\mathcal{R}_r: q^\ell\geq\kappa\}}_{\text{threshold}}
    \;\cup\;
    \underbrace{\operatorname{top}_{n_r}\big(\mathcal{R}_r; q\big)}_{\text{floor}},
    \qquad
    n_r:=\Big\lceil \tfrac{|\mathcal{R}_r|}{k-r+1}\Big\rceil,\qquad
    \hat x_{r,k}^\ell:=\begin{cases}
        x_1^\ell & \ell\in\mathcal{C}_r\\
        x_0^\ell & \ell\in[L]\setminus\mathcal{C}_r
    \end{cases}
    \label{eq:commit-rule}
\end{align}
where $\operatorname{top}_{n_r}(\mathcal{R}_r;q)$ denotes the $n_r$ highest-scoring positions
of $\mathcal{R}_r$, and $\mathcal{C}_r:=\mathcal{C}_{r-1}\cup\Delta\mathcal{C}_r$. So every position above $\kappa$ is committed, and at minimum $n_r$ positions are
committed regardless. Since $|\Delta\mathcal{C}_r|\geq n_r$ and $n_k=|\mathcal{R}_k|$,
we have $\mathcal{C}_k=[L]$ and the sampler always terminates within $k$ NFE.

\paragraph{Refinement-in-Loop Objective}
From \eqref{eq:commit-rule}, $\hat x_{r,k}:= \textsc{Renoise}(T_\theta(\hat x_{r-1,k}))$ can be used like the partial-context interpolant of Section \ref{sec:self-refinement} with the committed positions determined from sampling with refinement. We train the map to recover $x_1$ from these states via the \textit{refinement-in-loop} (ril) objective:
\begin{align}
    \mathcal{L}_{\text{ril-c}}(\theta):=\mathbb{E}_{x_0,x_1}\bigg[\sum_{k\in\mathcal{K}}\sum_{r=1}^{k}\sum_{\ell\in[L]\setminus\mathcal{C}_r}\text{CE}\left(T_\theta(\hat x_{r-1,k})^\ell,\ x_1^\ell\right)\bigg]\label{loss:ril}
\end{align}
which sums only over uncommitted positions, since the committed positions are pinned one-hot and carry no gradient. The same states are used to train the quality head:
\begin{align}
    \mathcal{L}_{\text{ril-}\phi}(\phi):=\mathbb{E}_{x_0,x_1}\bigg[\sum_{k\in\mathcal{K}}\sum_{r=1}^{k}\sum_{\ell\in[L]\setminus\mathcal{C}_r}\text{BCE}\left(\boldsymbol{1}[\hat x_1^\ell=x_1^\ell],\ q^\ell_\phi(T_\theta(\hat x_{r-1,k}))\right)\bigg], \quad \hat x_1=\arg\max(T_\theta(\hat x_{r-1,k}))\label{loss:ril-phi}
\end{align}
with the argument of $q_\phi$ detached. The full objective is
\begin{align}
    \mathcal{L}(\theta,\phi)=\mathcal{L}_{\text{transport}}+\lambda_{\text{semi}}\mathcal{L}_{\text{semi}}+\lambda_{\text{bnd}}\mathcal{L}_{\text{bnd}}+\lambda_{\text{anchor}}\mathcal{L}_{\text{anchor}}+\lambda_{\phi}\mathcal{L}_{\phi}+\lambda_{\text{ril}}\left(\mathcal{L}_{\text{ril-c}}+\lambda_{\text{quality}}\mathcal{L}_{\text{ril-}\phi}\right)
\end{align}
The stop-gradient rollout costs $k$ extra forward passes and no extra backward pass during training. We implement $q_\phi$ as a head on the trunk of $T_\theta$ reading its hidden state, so a refinement round costs exactly one NFE and the $k$-round sampler requires only $k$ NFEs resulting in \textit{no inference overhead}. An alternative instantiation for ril training is given in App.~\ref{app:ril-training}.

\paragraph{Self-Distillation with Semigroup Loss}
Late in training, the output after $k$ refinement rounds approximates a clean sample from $\mu_1$, and its coupling to the initial noise $x_0$ is automatic, removing the need for a prescribed noise-data coupling. Writing $\hat x_{r,k}$ for the state after $r$ rounds under budget $k$ with $\hat x_{0,k}:=T_\theta(x_0)$, the \textit{semigroup objective} matches the map at every intermediate round to the final output:
\begin{align}
    \mathcal{L}_{\text{semi-r}}(\theta):=\mathbb{E}_{x_0,x_1}\bigg[\sum_{k\in \mathcal{K}}\sum_{r=1}^{k}\sum_{\ell}\text{CE}\left(T_\theta(\hat x_{r-1,k}), \text{sg}(T_\theta(\hat x_{k-1,k}))\right)\bigg]
\end{align}
where $\hat x_{k,k}=T_\theta(\hat x_{k-1,k})$ is the final clean sequence after $k$ rounds of refinement. This is analogous to the semigroup loss defined in \eqref{loss:semigroup} but with the map operation composed with the renoise operation.

\begin{figure}[t]
    \centering
    \includegraphics[width=0.85\linewidth]{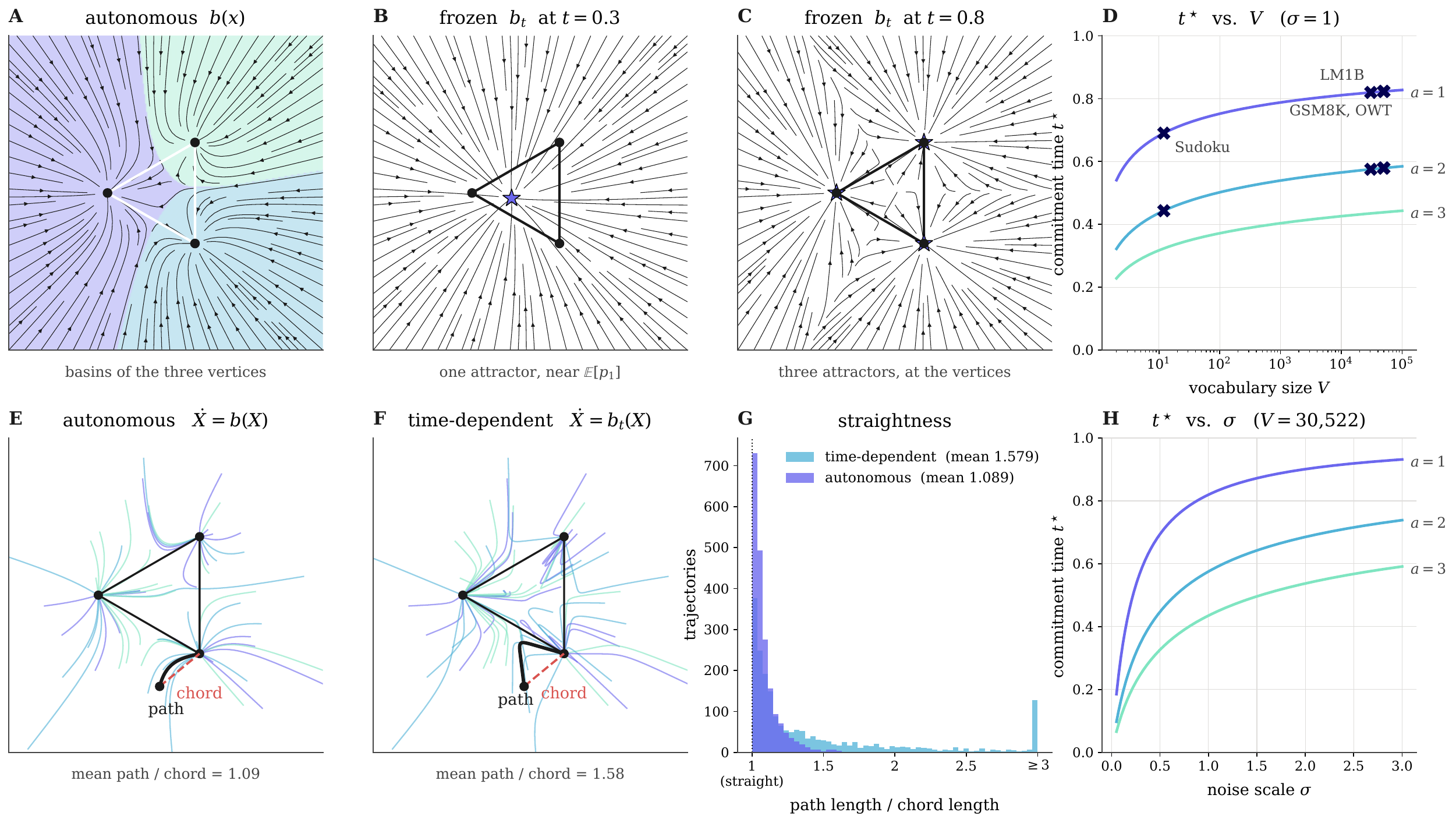}
    \vspace{-10pt}
    \caption{\textbf{Dynamics of Autonomous and Time-Dependent Flows.} \textbf{(A)} Visualization of the autonomous flow on $\Delta^2$ that is stationary across time and flows towards fixed basins of attraction at the vertices. \textbf{(B)} Time-dependent flow before the vertex commitment time ($t=0.3$) where the basin of attraction is at the mean, and \textbf{(C)} after the commitment time ($t=0.8$) when the basins of attraction are at the vertices. \textbf{(E) and (G)} Comparison of mean path and chord ratio of autonomous and time-dependent flows. \textbf{(D) and (H)} Plot of commitment time $t^\star$ against the simplex dimension $V$ and the prior scale $\sigma$ with curves for when the interpolant exponent $a\in \{1,2,3\}$.}
    \vspace{-10pt}
    \label{fig:flow-dynamics}
\end{figure}

\section{Experiments}
We validate our claims empirically: (i) the autonomous flow on the simplex yields stable attractors at the vertices of the simplex, resulting in smoother trajectories; (ii) the timing of the phase transition is the optimal anchor time that prevents mode collapse and yields faster convergence; (iii) direct training of the one-step map scales to unconditional language modeling and reasoning benchmarks; and (iv) refinement-in-loop training improves  coherence and accuracy. 

\subsection{The Autonomous Field Has Stable Attractors at the Vertices}
\label{sec:exp-dynamics}
In Figure \ref{fig:flow-dynamics}, we compare the time-dependent flow matching field against the autonomous field on the 2-simplex: the autonomous field partitions the ambient space into basins where the only equilibria are the vertices, a property we analyze further for general simplices in App. \ref{app:dynamics-proofs}. As illustrated in Figure \ref{fig:flow-dynamics}E-G, the autonomous flow has straighter trajectories due to the stationary basins of attraction than the time-dependent flow, whose basins of attraction appear at staggered times in the interior of the simplex before moving to the vertices (App. \ref{app:attractors}). 

\begin{figure}[t]
    \centering
    \includegraphics[width=\linewidth]{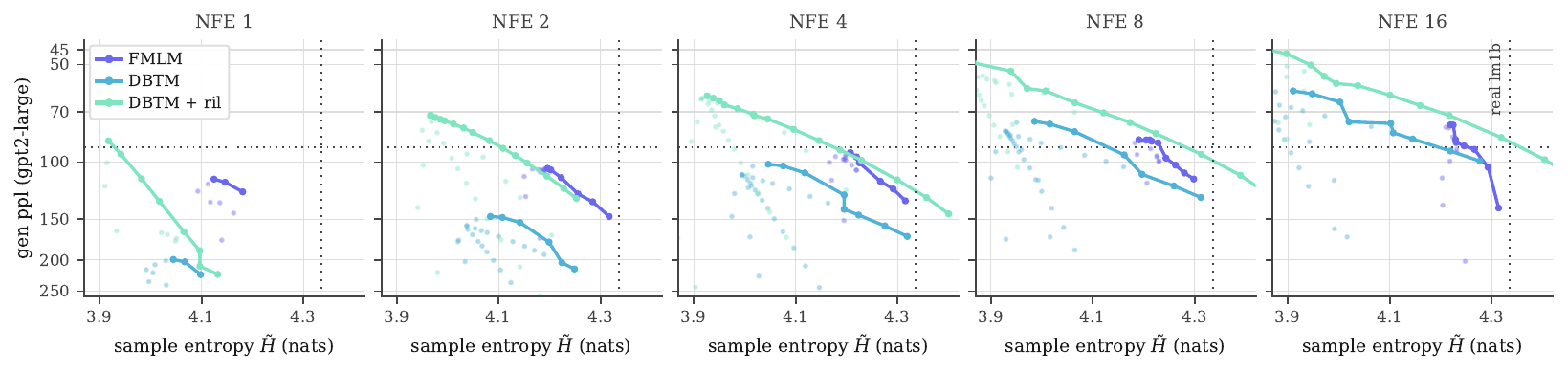}
    \caption{\textbf{Generative Frontiers of DBTM vs. FMLM.} Generative perplexity (plotted log scale, $\downarrow$) against sample entropy ($\uparrow$) on LM1B for NFE $\in \{1,2,4,8,16\}$. Each curve is a Pareto frontier: every inference-time knob swept for each method is pooled, and a point is kept only if no other reaches both higher entropy and lower Gen. PPL. DBTM is swept over commit temperature, prior scale $\sigma$, and commit threshold $\kappa$. FMLM is swept over its churn $\gamma$ and the same prior scale. The dotted line marks real LM1B data entropy (4.336). DBTM and DBTM + ril are trained for a total of 200K steps each, compared to FMLM initialized from a 1M-step teacher with 100K distillation steps.}
    \label{fig:generative-frontier-lm1b}
    \vspace{-8pt}
\end{figure}

\begin{table}[t]
\centering
\small
\resizebox{\textwidth}{!}{%
\begin{tabular}{l cc cc cc cc cc cc}
\toprule
& \multicolumn{6}{c}{\textbf{LM1B}} & \multicolumn{6}{c}{\textbf{OWT}} \\
\cmidrule(lr){2-7}\cmidrule(lr){8-13}
\textbf{NFE} & \multicolumn{2}{c}{1} & \multicolumn{2}{c}{2} & \multicolumn{2}{c}{4} & \multicolumn{2}{c}{1} & \multicolumn{2}{c}{2} & \multicolumn{2}{c}{4} \\
\cmidrule(lr){2-3}\cmidrule(lr){4-5}\cmidrule(lr){6-7}\cmidrule(lr){8-9}\cmidrule(lr){10-11}\cmidrule(lr){12-13}
\textbf{Method} & Gen-PPL~$\downarrow$ & Ent.~$\uparrow$ & Gen-PPL~$\downarrow$ & Ent.~$\uparrow$ & Gen-PPL~$\downarrow$ & Ent.~$\uparrow$ & Gen-PPL~$\downarrow$ & Ent.~$\uparrow$ & Gen-PPL~$\downarrow$ & Ent.~$\uparrow$ & Gen-PPL~$\downarrow$ & Ent.~$\uparrow$ \\
\midrule
Duo + DCD   & 1224.52 & 4.33 & 520.08 & 4.20 & 210.88 & 4.23 & 5743.29 & 6.02 & 891.16 & 5.41 & 250.86 & 5.37 \\
Duo + Di4C  & 292.94  & 3.79 & 247.69 & 3.87 & 150.67 & 4.00 & 370.51  & 3.92 & 210.22 & 4.63 & 154.67 & 4.85 \\
MDLM + SDTT & 1429.48 & 4.31 & 602.14 & 4.28 & 241.01 & 4.28 & 1260.86 & 5.26 & 877.22 & 5.34 & 339.73 & 5.38 \\
MDLM + Di4C & 1217.10 & 4.38 & 621.59 & 4.37 & 247.32 & 4.00 & 1298.80 & 5.29 & 758.23 & 5.35 & 239.27 & 5.40 \\
CFM         & 269.72  & 3.10 & 267.39 & 3.15 & 267.97 & 3.28 & --      & --   & --     & --   & --     & --   \\
FMLM        & 119.34  & 4.16 & 110.19 & 4.21 & 98.76  & 4.21 & 168.30  & 5.17 & 133.29 & 5.25 & 111.31 & 5.26 \\
FMLM+ & -- & -- & -- & -- & -- & -- & 378.53 & 5.34 & 114.46 & 5.14 & 41.16 & 4.77 \\
DFM (PSD)   & 94.08   & 4.06 & 87.42  & 4.08 & 78.89  & 4.10 & 180.29  & 4.91 & 152.83 & 5.03 & 122.32 & 5.10 \\
DFM (ESD)   & 68.11  & 3.79 & 77.60  & 4.11 & 71.53  & 4.13 & 5.33    & 0.26 & 108.91 & 5.15 & 77.08  & 5.27 \\
\midrule
\rowcolor{mybg} \textbf{DBTM}  & 201.20 & 4.03 & 167.05 & 4.19 & 119.99 & 4.20 & 188.2 & 4.97 & 76.2 & 5.14 & 52.2 & 5.26 \\
\rowcolor{mybg} \textbf{DBTM + ril}  & 85.50 & 3.92 & 81.00 & 4.05 & 73.54 & 4.04 & 65.2 & 4.95 & 62.2 & 5.38 & 57.5 & 5.52 \\
\bottomrule
\end{tabular}
}
\caption{\textbf{Discrete Beckmann Transport Model Unconditional Language Modeling Performance.} Generative perplexity ($\downarrow$) and entropy ($\uparrow$) across number of function evaluations (NFEs) for LM1B and OWT. Comparison at matched entropy at the generative frontier across inference-time knobs is given in Tables \ref{tab:decode-frontier-lm1b} and \ref{tab:decode-frontier-owt}. Reported OWT values for DBTM are with linear attention at matched parameter count. Standard DiT comparison is given in Table \ref{tab:language-results-nfe}.}
\label{tab:language-results}
\vspace{-10pt}
\end{table}

\subsection{Scaling the One-Step Map to Language}
\label{sec:exp-language}

\paragraph{Experimental setup.}
We evaluate unconditional generation on LM1B \citep{chelba2013one} and OpenWebText (OWT) \citep{Gokaslan2019OpenWeb}, reporting generative perplexity (Gen-PPL) under GPT-2-Large \citep{radford2019language} and within-sequence unigram entropy (Ent.). Since a model that collapses onto repeated tokens can reach arbitrarily low perplexity, we compare perplexities only among checkpoints whose entropy remains in the range of the data entropy. Baselines include distilled discrete diffusion and continuous flow map baselines evaluated at NFE budgets matched to ours. We report two variants: \textbf{DBTM}, trained with the transport, boundary, and anchor losses of Section \ref{sec:training-objectives}, and \textbf{DBTM + ril}, which adds the refinement-in-loop objectives of Section \ref{sec:refinement-in-loop}. Further experiment details in App.~\ref{app:language-exp-details}.

\paragraph{Discrete BTM is competitive with distilled flow maps.}
Table \ref{tab:language-results} compares against distilled few-step models and simplex flows at matched NFE. On LM1B, DBTM with ril outperforms every distilled discrete diffusion model at 1, 2, and 4 NFE, and is competitive with the best flow-map baseline while maintaining entropy close to the data. On OWT, with linear attention at matched parameter count, DBTM with ril achieves the lowest perplexity across all non-collapsed entropy methods at every NFE budget with the closest entropy to the data. At 4 NFE, DBTM without ril reaches the lowest perplexity overall (52.2) while DBTM + ril attains the highest entropy in the table (5.52), so the two variants trace the two ends of the quality-diversity frontier. Refinement-in-loop yields significant gains in the low NFE regime, reducing 1-NFE perplexity by 58\% on LM1B and by 65\% on OWT. Figures \ref{fig:discrete-btm} and \ref{fig:generative-frontier-lm1b} show the resulting perplexity-entropy frontier, where DBTM and DBTM + ril dominate the baselines across entropies within the range of the data. Notably, DBTM requires no teacher flow, fewer training steps, and no time conditioning. Example generations are in App.~\ref{app:example-generations}.

\paragraph{Anchoring at the phase transition prevents mode collapse.}
Theorem \ref{thm:optimal-t} gives $t^\star$ as a function of the vocabulary size, noise scale, and schedule exponent. Sweeping $t_{\text{anchor}}$ around this value on LM1B (Table \ref{tab:anchor-ablation}) demonstrates its importance. Anchoring before $t^\star$ supervises toward a fixed target while the endpoint is undecided, and the model collapses to repetitive tokens, whereas anchoring after $t^\star$ (e.g.\ $t_{\text{anchor}}=0.85$) leaves the post-transition interval unsupervised, resulting in higher Gen. PPL at equal training time (Table \ref{tab:anchor-ablation}).

\paragraph{Refinement enables more effective use of step budget.}
Table \ref{tab:language-results-nfe} evaluates a single DBTM checkpoint across NFE budgets: quality improves consistently with increasing budget until the sequence is fully committed, at which point it saturates. Distilled flow models instead move along a fixed time axis in discrete jumps, and masked diffusion unmasks on a fixed schedule with factorization error growing as NFE shrinks. Because every DBTM evaluation returns a clean-sequence distribution from any intermediate state, one model produces a one-shot proposal and then, via Section \ref{sec:self-correction}, renoises low-quality positions and remaps them against committed context. The commitment condition also acts as a self-stopping rule, so the NFE count adapts to the sample rather than being fixed in advance as in flow map and flow matching samplers.

\subsection{Reasoning with Discrete BTMs}
\label{sec:exp-reasoning}
\begin{wraptable}{r}{0.6\textwidth}
\vspace{-\baselineskip}
\centering
\small
\resizebox{\linewidth}{!}{%
\begin{tabular}{ll cccc cc}
\toprule
& & \multicolumn{4}{c}{\textbf{Sudoku}} & \multicolumn{2}{c}{\textbf{GSM8K}} \\
\cmidrule(lr){3-6}\cmidrule(lr){7-8}
Category & Method & NFE & Easy & Med. & Hard & NFE & Acc. (\%,$\uparrow$) \\
\midrule
\multirow{2}{*}{AR}
 & Sample & 128 & 13.9 & 5.1  & 0.6  & 512  & 53.9 \\
 & Greedy & 128 & 14.6 & 5.1  & 1.0  & 512  & 63.3 \\
\midrule
\multirow{2}{*}{Discrete}
 & MDLM & 128 & 92.0 & 77.1 & 30.2 & 1024 & 18.0 \\
 & Duo  & 128 & 96.3 & 84.7 & 58.4 & 1024 & 17.2 \\
\midrule
\multirow{3}{*}{Continuous}
 & CANDI & 128 & 79.3 & 45.9 & 16.7 & 1024 & 0.2 \\
 & FLM   & 128 & 94.2 & 82.7 & 44.5 & 1024 & 0.3 \\
 & S-FLM & 128 & 94.8 & 85.2 & 45.0 & 1024 & 18.0 \\
\midrule
 & FMLM+ & 4  & 97.9 & 92.0 & 71.2 & 1 & 0 \\
 & FMLM+ & 16 & 97.8 & 92.6 & 81.4 & 32 & 16.6 \\
\rowcolor{mybg} & \textbf{DBTM + ril} & 4  & \textbf{99.5} & \textbf{97.3} & \textbf{84.6} & 1 & \textbf{0.8} \\
\rowcolor{mybg} \multirow{-4}{*}{Few-Step} & \textbf{DBTM + ril} & 16 & \textbf{99.9} & \textbf{99.4} & \textbf{97.5} & 32 & \textbf{16.8} \\
\bottomrule
\end{tabular}
}
\caption{\textbf{Discrete Beckmann Transport Model Reasoning Performance.} Sudoku columns are exact-solve accuracy (\%) on the 2000-puzzle held-out set and GSM8K is accuracy (\%) on the 1319-problem test set. FMLM+ at NFE 16 is our own evaluation of the released checkpoints on easy and medium, and the hard and GSM8K cells are taken from Table 13 of \citet{agarwal2026posterior}.}
\label{tab:reasoning-results}
\vspace{-\baselineskip}
\end{wraptable}
\paragraph{Experimental setup.}
We evaluate on two conditional reasoning tasks, Sudoku at three difficulty levels, evaluated on one-shot exact-solve accuracy, and TinyGSM code \citep{liu2023tinygsm}, evaluated on zero-shot accuracy on the GSM8K test set \citep{cobbe2021training} following \citet{kim2025train}. For baselines, we compare against many-step autoregressive, discrete diffusion, and continuous diffusion models as well as FMLM+ \citep{agarwal2026posterior}, a distilled few-step flow map model trained for iterative refinement. We match our NFE budget and network parameters to FMLM+ for fair comparison. Details in App. \ref{app:exp-details}.

\paragraph{Refinement enables fixed-point reasoning.}
Reasoning tasks are where the refinement scheme matters most, because a joint one-shot decode of a puzzle or a chain of arithmetic almost always contains locally plausible but globally inconsistent positions. Table \ref{tab:reasoning-results} shows that DBTM matches or exceeds many-step autoregressive, discrete diffusion, and continuous flow baselines on all difficulty levels of Sudoku while using one to two orders of magnitude fewer network evaluations, and exceeds the strongest few-step baseline at the same budget, with the margin widening on the harder difficulty levels. On the 1319-problem GSM8K test set, DBTM achieves 0.8\% solve accuracy in 1 NFE, surpassing continuous baselines CANDI \citep{pynadath2025candi} and FLM \citep{lee2026flow} at 1024 NFEs and FMLM+ \citep{agarwal2026posterior}. At 32 NFEs, DBTM achieves 16.8\% solve accuracy, slightly higher than FMLM+ at matched NFE and closing the gap with S-FLM at 32$\times$ fewer NFE. Autoregressive (AR) methods still dominate on GSM8K, suggesting refinement schemes that exploit the left-to-right structure of reasoning traces as a promising direction for future work.

\section{Conclusion}
We introduced \textbf{Discrete Beckmann Transport Models} (DBTM), a class of discrete generative models that replace the time-dependent velocity of flow matching with a stationary, autonomous field on the simplex. The resulting trajectories are straighter, time-averaged paths whose only stable equilibria lie at the vertices of the simplex. Defining the transport map as the solution to the conservation equation along this field lets us train it end-to-end from data alone, with no time conditioning and no teacher to distill from. Since every application of the map proposes a clean sequence, scaling NFEs take the form of renoise-and-refine steps that self-correct inconsistencies given a set of committed context tokens rather than finer integration steps. Across language modeling and reasoning benchmarks, DBTMs improve on both discrete diffusion and flow baselines.

\section*{Acknowledgments}
The authors thank Michael Albergo and Brian Cheuk-Kit Lee for helpful discussions and guidance. SW is supported by a Kempner Graduate Fellowship. This work is made possible by a gift from the Chan Zuckerberg Initiative Foundation to establish the Kempner Institute for the Study of Natural and Artificial Intelligence. 

\paragraph{Code Availability} 
The code can be found at \url{https://github.com/sophtang/DBTM}.

\bibliographystyle{acl_natbib.bst}
\bibliography{citation.bib}

\clearpage
\beginappendix
\startcontents[app]
\printcontents[app]{l}{1}{\setcounter{tocdepth}{2}}

\newpage
\section{Related Work}
\label{app:related-work}

\subsection{Discrete and Continuous Generative Models for Sequence Data}
Generative modeling of discrete data in the form of sequences or graphs has taken many forms. Discrete diffusion \citep{austin2021structured, lou2023discrete}, masked discrete diffusion \citep{shi2024simplified, sahoo2024simple, ou2024your, zheng2024masked}, and discrete flow matching \citep{gat2024discrete, campbell2024generative} model discrete sequence generation as a continuous-time Markov chain (CTMC) where transitions are defined by a rate matrix. However, since the discrete state space grows exponentially with the vocabulary size and sequence length, discrete diffusion models rely on learning the factorized approximations of the transition probabilities that require many inference steps to capture the dependencies between tokens for accurate generation \citep{kang2026parallelbench}. To overcome the factorization errors inherent in discrete state-space parameterizations, continuous state-space parameterizations have emerged as an alternative by embedding the flow trajectories in simplex space \citep{stark2024dirichlet, davis2024fisher, tang2025gumbel} or in a latent embedding space \citep{cheng2025alpha, chen2026langflow, dieleman2022continuous, hu2026elf, deschenaux2026language, yang2026continuous, jo2026continuous}. In this work, we model the autonomous flow transporting a Gaussian prior $\mu_0$ on $\mathbb{R}^d$, $d:= VL$, to the data distribution $\mu_1$ supported on the vertex set $\mathcal{V}:=\{e_1, \dots, e_V\}^L$ of the product of simplices $(\Delta^{V-1})^L\subset\mathbb{R}^d$, which is the construction used in discrete flow maps \citep{potaptchik2026discrete, lee2026flow}. 

\subsection{Time-dependent and autonomous flows}
Our framework develops the discrete state space instantiation of \textit{Beckmann Transport Models (BTMs)} \citep{lee2026beckmann}, an approach to the problem of transporting states between distributions $\mu_0$ and $\mu_1$ via a time-independent velocity field $b$ that satisfies the divergence condition $\nabla\cdot (\nu b )=\mu_0-\mu_1$. In contrast to the time-dependent velocity field of standard flow matching \citep{lipman2022flow, albergo2022building, albergo2025stochastic, liu2022flow} that transports $x$ along $b_s(x)=\mathbb{E}_{x_0,x_1}[\dot I_s|I_s=x]$ which is the expectation of the interpolant velocity passing $x$ at time $s$, the autonomous velocity field additionally averages over the time coordinate of the interpolant $s\in [0,1]$, yielding the expectation over all interpolant velocities that pass $x$ at any time. In our work, we derive and empirically validate the unique properties of the autonomous flow on the simplex, specifically showing that it yields straighter flows with favorable Lipschitz properties and stationary attractors at the vertex. We establish the fact that the autonomous velocity field is non-vanishing approaching each vertex but vanishes when reaching it, partitioning the ambient space around the simplex into stationary basins of attraction. 

\subsection{Flow maps in discrete state space}
There exists a long line of work on distilling and constructing few-step generative models that aim to reduce the number of function evaluations needed to integrate the generative ODE or SDE. These include consistency models \citep{song2023consistency, song2023improved}, shortcut models \citep{frans2024one}, MeanFlow \citep{guo2025splitmeanflow}, rectified flow \citep{liu2022flow}, flow maps \citep{boffi2024flow, sabour2025align, boffi2025build}, and many of these frameworks have been extended to the discrete state space \citep{roos2026categorical, lee2026flow, potaptchik2026discrete} and variable-length generation \citep{tang2026expanding}. Most related to our work are flow map language models \citep{lee2026flow} with posterior refinement \citep{agarwal2026posterior} and discrete flow maps \citep{potaptchik2026discrete}, which learn a two-time map $(s,t)$ along the interpolant between a Gaussian prior and the simplex vertices. Despite this shared goal, DBTM differs from flow maps in several aspects, summarized in Table \ref{tab:flowmap-vs-dbtm}. 

Flow map models are explicitly time-conditioned and are typically trained by distilling a pretrained teacher flow $b_t$, requiring supervision over all $(s,t)$ time pairs together with a careful balance of the diagonal (teacher matching) and off-diagonal consistency objectives. DBTM instead trains a single autonomous model conditioned only on the current state, with no teacher and no time embedding, reducing the space of functions that need to be learned during training. For a flow map, each NFE is a discretized jump along the generative trajectory, whereas for DBTM each NFE refines a complete clean proposal, so intermediate iterates remain in the data space and can be treated as a reasoning trace. Since inference only ever visits a small subset of the $(s,t)$ grid, flow map models spend training time on time pairs that are never used at sampling time. Training DBTM from noise-data interpolants instead covers exactly the states encountered at inference. Finally, the autonomous formulation admits a natural stopping criterion: sampling iterations stop once the proposal reaches its fixed point or once all tokens pass a confidence threshold, whereas flow map sampling requires the step schedule to be fixed in advance.

\begin{table}[h!]
  \centering
  \caption{\textbf{Comparison of DBTM with Flow Map Language Models (FMLM) and Discrete Flow Maps (DFM).} Both generate in a fixed small number of function evaluations, but flow maps discretize an ODE between trained $(s,t)$ pairs and need a teacher and time conditioning, whereas DBTM refines a clean proposal with a single autonomous map. The rows contrast the two methods.}
  \label{tab:flowmap-vs-dbtm}
  \small
  \setlength{\tabcolsep}{6pt}
  \renewcommand{\arraystretch}{1.15}
  \begin{tabularx}{\linewidth}{@{}l >{\raggedright\arraybackslash}X >{\raggedright\arraybackslash}X@{}}
    \toprule
     & \textbf{Flow Map LMs / DFM} & \textbf{DBTM (Ours)} \\
    \midrule
    Few-step generation            & Yes & Yes \\
    Time conditioning              & Yes ($s$ and $t$ conditioning) & No \\
    Teacher model                  & Yes & No \\
    Each NFE is                    & Discretized step along ODE & Refinement of clean proposal \\
    Self-stopping                  & No & Yes \\
    Training complexity            & High; requires diagonal/off-diagonal balancing and supervision over all $(s,t)$ time pairs & Low; trains a single model conditioned only on the current state \\
    Training-inference mismatch   & Yes; many $(s,t)$ time jumps are unused at inference & No; training from noise-data interpolants covers all states seen at inference \\
    Self-correction / reasoning trace & No & Yes \\
    \bottomrule
  \end{tabularx}
\end{table}

\section{Background on Autonomous Flows}
\label{app:extended-background}
\subsection{Notation}
\paragraph{Sequences and the simplex}
$V$ is the vocabulary size, $L$ the sequence length, $\ell\in[L]$ a position and $j,k$ vertex indices. $e_j$ is a one-hot vertex of the simplex $\Delta^{V-1}$, $(\mathbb{R}^V)^L$ the ambient space, $x^\ell$ the $\ell$-th position of a sequence, and $M_1$ the data manifold of $V^L$ vertex configurations. $\mu_0$ is the Gaussian prior with noise scale $\sigma$ and $\mu_1$ the data distribution.

\paragraph{Interpolants and time}
$s,t,u$ denote time coordinates, $I_t=\alpha_tx_0+\beta_tx_1$ the interpolant between $x_0\sim \mu_0$ and $x_1\sim \mu_1$, $\dot I_t$ its velocity and $\mu_t$ its marginal. $\alpha_t=(1-t)^a$ is the interpolant time schedule with exponent $a$. For the partial-context interpolant, $t_\ell$ and $\alpha^\ell_t$ are per-position, $\mathcal{C}\subset[L]$ is the clean context set, $f$ the clean fraction, and $c_\ell$ the context indicator.

\paragraph{Flows and the transport map}
$b_t$ is the time-dependent flow matching velocity and $b$ the autonomous (time-independent) velocity. $\nu$ is the occupation measure, $j=\nu b$ the current, $\nu^{(j)},J^{(j)}$ their per-vertex components and $w_j$ the occupancy weights. $X_s$ the autonomous flow, $\tau(x_0)$ the hitting time of $M_1$ and $T$ the autonomous transport map, the long-time limit of $X_s$.

\paragraph{Learned objects and refinement}
$T_\theta=\text{softmax}(f_\theta)$ is the learned map, $q^\ell_\phi$ the per-token quality head on the trunk hidden state $h^\ell$, and $\text{sg}(\cdot)$ the stop-gradient. Losses are $\mathcal{L}_{\text{transport}},\mathcal{L}_{\text{bnd}},\mathcal{L}_{\text{anchor}},\mathcal{L}_{\phi},\mathcal{L}_{\text{ril}},\mathcal{L}_{\text{ril-}\phi}$ each weighted by a $\lambda$ scalar. At inference, $\hat x_{r,k}$ is the partially committed sequence at refinement round $r$ of $k$ total rounds, $\kappa$ is the commit threshold, $\mathcal{C}_r$ is the committed index set after round $r$, and NFE is the number of network evaluations, which coincides with the number of refinement rounds $k$. 

\subsection{Divergence Condition}
\label{app:divergence-cond}
\begin{proposition}[Autonomous flow satisfies divergence condition]
\label{prop:divergence}
    For uniformly sampled $t\sim \mathcal{U}([0,1])$, the autonomous velocity $b(x):=\mathbb{E}_{t,x_0,x_1}[\dot I_t|I_t=x]$ satisfies the divergence condition:
    \begin{align}
        \nabla\cdot(b\nu)=\mu_0-\mu_1
    \end{align}
\end{proposition}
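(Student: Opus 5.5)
The plan is to prove the identity in its weak (distributional) form: pair $b\nu$ against an arbitrary test function $\varphi\in C_c^\infty((\mathbb{R}^V)^L)$ and show
\begin{align}
    \int \nabla\varphi(x)\cdot b(x)\,\nu(x)\,dx=\int\varphi\,d\mu_1-\int\varphi\,d\mu_0 ,
\end{align}
which is exactly $\nabla\cdot(b\nu)=\mu_0-\mu_1$ tested against $\varphi$. Here $\nu(x)=\int_0^1\mu_t(x)\,dt$ is the occupation measure, which is the law of $I_t$ when $t\sim\mathcal{U}[0,1]$ is drawn independently of $(x_0,x_1)$. Working weakly matters because $\mu_1$ is atomic on $M_1$. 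In the same way, $b\nu$ should be read as a vector measure, namely the current $j$, and not as a product of densities.

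The first step uses the definition of $b$ as a conditional expectation to remove the conditioning. Since $b(x)=\mathbb{E}_{t,x_0,x_1}[\dot I_t\mid I_t=x]$ with $I_t\sim\nu$ under the joint law of $(t,x_0,x_1)$, the tower property gives
\begin{align}
    \int\nabla\varphi\cdot b\,d\nu=\mathbb{E}_{t,x_0,x_1}\big[\nabla\varphi(I_t)\cdot b(I_t)\big]=\mathbb{E}_{t,x_0,x_1}\big[\nabla\varphi(I_t)\cdot\dot I_t\big].
\end{align}
The chain rule identifies the integrand as $\tfrac{d}{dt}\varphi(I_t)$. Because $t$ is uniform on $[0,1]$, the expectation over $t$ equals $\int_0^1 dt$, and Fubini lets us exchange it with $\mathbb{E}_{x_0,x_1}$. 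The fundamental theorem of calculus then gives $\mathbb{E}_{x_0,x_1}[\varphi(I_1)-\varphi(I_0)]=\mathbb{E}[\varphi(x_1)]-\mathbb{E}[\varphi(x_0)]$, using $I_0=x_0$ and $I_1=x_1$. This is the right-hand side above.

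An equivalent route goes through Proposition~\ref{prop:time-average}. Writing $b\nu=\int_0^1\mu_tb_t\,dt$ and integrating the continuity equation \eqref{eq:continuity} over $t\in[0,1]$ gives $-\int_0^1\partial_t\mu_t\,dt=\mu_0-\mu_1$. I would use this only as a consistency remark, since the direct computation avoids having to justify pointwise densities near the atoms.

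The main obstacle is justifying the interchanges. Fubini requires $\nabla\varphi(I_t)\cdot\dot I_t$ to be integrable. This holds because $\varphi$ has compact support, so $\nabla\varphi$ is bounded, and because $\dot I_t=\dot\alpha_t x_0+\dot\beta_t x_1$ with $\dot\alpha_t=-a(1-t)^{a-1}$ bounded for $a\geq1$. Since $x_0$ is Gaussian and $x_1$ lies in a finite set, $\mathbb{E}|\dot I_t|$ is then finite uniformly in $t$. The tower-property step needs $b$ to be a measurable version of the conditional expectation defined $\nu$-a.e.; no regularity of $b$ is required, which is why the weak formulation is the natural one. The chain rule holds because $t\mapsto I_t$ is $C^1$ on $[0,1)$ and continuous at $t=1$.
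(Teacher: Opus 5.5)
Your proof is correct, but it follows a different route from the paper. The argument you relegate to a ``consistency remark'' is in fact the paper's proof.

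The paper works at the level of densities. It writes $b_t$ through the conditional means $\eta_0,\eta_1$ and expresses $b\nu=\int_0^1 b_t\mu_t\,dt$. It then integrates the per-time continuity equation $\partial_t\mu_t+\nabla\cdot(b_t\mu_t)=0$ over $t\in[0,1]$ and evaluates $\int_0^1\partial_t\mu_t\,dt=\mu_1-\mu_0$.

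You never invoke $b_t$ or the continuity equation. You pair against a test function and use the tower property on the time-averaged conditional expectation to replace $b(I_t)$ by $\dot I_t$. You then apply the fundamental theorem of calculus pathwise to $t\mapsto\varphi(I_t)$. In effect you prove the weak continuity equation and its time integral in one step.

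Your version gives rigor exactly where the paper is informal. The paper interchanges $\nabla\cdot$ with $\int_0^1 dt$, and it uses $\int_0^1\partial_t\mu_t\,dt$ even though $\mu_t$ collapses onto the atoms of $M_1$ as $t\to1$. Your argument handles both automatically through Fubini and the bound $|\dot I_t|\le a(|x_0|+|x_1|)$. It needs no density of $\mu_t$ and no regularity of $b$.

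The paper's version is shorter if the continuity equation is taken as known. It also makes explicit the representation $b\nu=\int_0^1\mu_t b_t\,dt$ of Proposition~\ref{prop:time-average}, which the paper reuses immediately in the endpoint parameterization.

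Both arguments use the uniformity of $t$ at the same place: it turns $\mathbb{E}_t$ into $\int_0^1 dt$, so the time integral telescopes to a boundary term.
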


\textit{Proof.} For a family of interpolants $(\alpha_t, \beta_t)_{t\in [0,1]}$, the time-dependent velocity, by the chain rule $\dot I_t=\dot \alpha_tx_0+\dot \beta_tx_1$, is:
\begin{align}
    b_t(x)=\mathbb{E}[\dot I_t|I_t=x]=\dot \alpha_t\mathbb{E}[x_0|I_t=x]+\dot \beta_t\mathbb{E}[x_1|I_t=x]=\dot \alpha_t\eta_0(t,x)+\dot \beta_t\eta_1(t, x)\label{proof-eq:b_t}
\end{align}
where we define $\eta_0(t,x):=\mathbb{E}[x_0|I_t=x]$ and $\eta_1(t, x):=\mathbb{E}[x_1|I_t=x]$. $b_t$ satisfies the continuity equation with time-dependent marginal $\mu_t$:
\begin{align}
    \partial_t\mu_t+\nabla \cdot(b_t\mu_t)=0
\end{align}
The autonomous flow is given by integrating over $t\in [0,1]$ of the time-dependent flow:
\begin{align}
    b(x)=\mathbb{E}_t[b_t(x)|I_t=x]=\frac{\int_0^1 b_t(x)\mu_t(x)\pi(t)dt }{\int_0^1 \mu_t(x)\pi(t)dt}=\frac{1}{\nu(x)}\int_0^1b_t(x)\mu_t(x)dt\label{eq:proof-bx-1}
\end{align}
where $\pi(t)$ is the distribution from which $t$ is sampled (typically uniform on $[0,1]$). For uniform $\pi(t)$, we can integrate the continuity equation over $t\in [0,1]$ to get:
\begin{align}
    \int_0^1\partial_t\mu_tdt+\nabla \cdot\int_0^1b_t\mu_tdt=0,
    \label{eq:proof-bx-2}
\end{align}
The first integral reduces to $\int_0^1\partial_t\mu_tdt=\mu_1-\mu_0$ and the second integral can be simplified using the definition of $b(x)$ in \eqref{eq:proof-bx-1} to get $(\mu_1-\mu_0)+\nabla\cdot(b\nu)=0$ or $\nabla\cdot(b\nu)=\mu_0-\mu_1$. \hfill $\square$ 

\begin{corollary}[Endpoint parameterization]
    Let $q_x(t):=\mu_t(x)/\nu(x)$ be the probability density of $t\in [0,1]$ given $x$. Then, the endpoint parameterization of the autonomous flow is given by:
    \begin{align}
        b(x)=\mathbb{E}_{t\sim q_x(t)}\left[\lambda_t(\eta_1(t,x)-x)\right]=Z(x)\left(\bar\eta_1(t,x)-x\right)\label{prop-eq:endpoint-b}
    \end{align}
    where $Z(x):=\mathbb{E}_{t\sim q_x}[\lambda_t]$ is a scalar and $\bar\eta_1(t,x):=\tfrac{1}{Z(x)}\mathbb{E}_{t\sim q_x}[\lambda_t\eta_1(t,x)]$ is the averaged endpoint prediction.
\end{corollary}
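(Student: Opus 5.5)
We would start from the representation \eqref{proof-eq:b_t}, $b_t(x)=\dot\alpha_t\eta_0(t,x)+\dot\beta_t\eta_1(t,x)$, and eliminate $\eta_0$ using the interpolant constraint. Conditioning $I_t=\alpha_tx_0+\beta_tx_1$ on $I_t=x$ and taking expectations gives the identity
\begin{align*}
x=\alpha_t\eta_0(t,x)+\beta_t\eta_1(t,x).
\end{align*}
For $t<1$ we have $\alpha_t>0$, so $\eta_0=(x-\beta_t\eta_1)/\alpha_t$. Substituting this into $b_t$ gives
\begin{align*}
b_t(x)=\frac{\dot\alpha_t}{\alpha_t}x+\Big(\dot\beta_t-\frac{\dot\alpha_t\beta_t}{\alpha_t}\Big)\eta_1(t,x).
\end{align*}
Since $\beta_t=1-\alpha_t$ and $\dot\beta_t=-\dot\alpha_t$, the coefficient of $\eta_1$ is $-\dot\alpha_t/\alpha_t$. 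Setting $\lambda_t:=-\dot\alpha_t/\alpha_t\ge0$ (for $\alpha_t=(1-t)^a$ this is $\lambda_t=a/(1-t)$), we obtain the time-dependent endpoint form
\begin{align*}
b_t(x)=\lambda_t\big(\eta_1(t,x)-x\big).
\end{align*}

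Next we apply Proposition \ref{prop:time-average}, equivalently \eqref{eq:proof-bx-1}: $b(x)=\int_0^1 q_x(t)\,b_t(x)\,dt$, where $q_x(t)=\mu_t(x)/\nu(x)$. Here $q_x$ is nonnegative and integrates to one in $t$ by definition of $\nu$, so it is a probability density on $[0,1]$ whenever $\nu(x)>0$. Substituting the endpoint form of $b_t$ gives the first equality, $b(x)=\mathbb{E}_{t\sim q_x}[\lambda_t(\eta_1(t,x)-x)]$. For the second equality, we use that $x$ does not depend on $t$, so it factors out:
\begin{align*}
b(x)=\mathbb{E}_{q_x}[\lambda_t\eta_1]-Z(x)\,x,\qquad Z(x):=\mathbb{E}_{q_x}[\lambda_t].
\end{align*}
Writing $\mathbb{E}_{q_x}[\lambda_t\eta_1]=Z(x)\bar\eta_1$ by the definition of $\bar\eta_1$ yields $Z(x)(\bar\eta_1-x)$. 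Because $\lambda_t\ge0$, $\bar\eta_1$ is a convex combination of the posterior means $\eta_1(t,x)$, hence lies in $(\Delta^{V-1})^L$.

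The main obstacle is integrability near $t=1$, where $\lambda_t=a/(1-t)$ blows up, so $Z(x)$ might be infinite. We would handle it as follows.
\begin{enumerate}
\item For $x\notin M_1$, the density $\mu_t(x)$ is the Gaussian density $\mathcal{N}(\beta_tx_1,\alpha_t^2\sigma^2)$ mixed over $x_1$. It decays like $\exp(-c/\alpha_t^2)$ as $\alpha_t\to0$, which kills the $1/(1-t)$ singularity, so $Z(x)<\infty$.
\item The point $t=1$ and the set where $\nu(x)=0$ have measure zero and can be excluded.
\item Exchanging the expectation with the algebraic substitution is then justified by dominated convergence.
\end{enumerate}
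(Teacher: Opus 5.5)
Your proposal is correct and follows the paper's proof essentially step for step. You eliminate $\eta_0$ via $x=\alpha_t\eta_0(t,x)+\beta_t\eta_1(t,x)$ to get $b_t(x)=\lambda_t(\eta_1(t,x)-x)$ with $\lambda_t=-\dot\alpha_t/\alpha_t$, substitute into the time average $b(x)=\int_0^1 q_x(t)\,b_t(x)\,dt$, and split linearly. Your justification of $Z(x)<\infty$ is sounder than the paper's: you use the $\exp(-c/\alpha_t^2)$ decay of $\mu_t(x)$ as $t\to1$ for $x\notin M_1$. The paper instead infers finiteness only from $q_x$ integrating to one, which is insufficient because $\lambda_t=a/(1-t)$ is unbounded.
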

\textit{Proof.} Given \eqref{proof-eq:b_t}, we can write $x=\alpha_t\eta_0(t,x)+\beta_t\eta_1(t,x)$ and for $t<1$, we can substitute $\eta_0=(x-\beta_t\eta_1)/\alpha_t$ to get the endpoint form:
\begin{align}
    b_t(x)=\frac{\dot \alpha_t}{\alpha_t}x+\left(\dot \beta_t-\frac{\dot \alpha_t\beta_t}{\alpha_t}\right)\eta_1(t,x)\label{proof-eq:b_t2}
\end{align}
Since $\beta_t=1-\alpha_t$, we have $\dot \beta_t= -\dot \alpha_t$ and the bracket simplifies to $-\dot \alpha\left(1+\tfrac{1-\alpha_t}{\alpha_t}\right)=-\dot \alpha_t/\alpha_t=:\lambda_t$. Therefore, \eqref{proof-eq:b_t2} reduces to: 
\begin{align}
    b_t(x)=\lambda_t(\eta_1(t,x)-x), \quad \lambda_t:=-\frac{\dot \alpha_t}{\alpha_t}=\frac{a(1-t)^{a-1}}{(1-t)^a}= \frac{a}{1-t}
\end{align}
Substituting into \eqref{eq:proof-bx-1} and writing $\nu(x)=\int_0^1\mu_t(x)dt$, we get:
\begin{align}
    b(x)=\frac{1}{\nu(x)}\int_0^1\lambda_t(\eta_1(t,x)-x)\mu_t (x)dt =\int_0^1\lambda_t(\eta_1(t,x)-x)q_x(t)dt
\end{align}
which is the first equality in \eqref{prop-eq:endpoint-b}. Since $q_x$ integrates to one, $Z(x)<\infty$ and splitting the integrand linearly gives $b(x)=Z(x)\bar\eta_1(t,x)-Z(x)x=Z(x)(\bar\eta_1(t,x)-x)$, which is the second equality. \hfill $\square$

\begin{remark}[Using the transport map to integrate the autonomous flow]\label{remark:autonomous-flow}
    Since $\eta_1(t,x)$ is a conditional expectation of $x_1\in M_1$, $\bar\eta_1(t,x)$ is a convex combination of these and lies on the simplex $\bar\eta_1(t,x)\in (\Delta^{V-1})^L$ for every $x$. Therefore, the autonomous velocity is fully determined by a map onto the product of simplices, which is exactly what $T_\theta:=\mathrm{softmax}(f_\theta)$ predicts. The trajectory of the autonomous flow is unchanged with respect to $Z(x)$ up to a time reparameterization, so we only need to learn the map. 
\end{remark}

\section{Theoretical Results}
Here, we present the proofs and derivations for the theoretical results of our method. In App \ref{app:dynamics-proofs}, we show the favorable properties of the time-dependent and autonomous flows, including the stationary attractors at the vertices. In App \ref{app:phase-transition-proof}, we derive the commitment time of the stochastic interpolant to a vertex of the simplex as a function of the prior variance, the simplex dimension, and the exponent in the interpolant schedule. In App \ref{app:occupation-measure-proof}, we derive the form of the occupation measure defining the time spent at each point in ambient space. Finally, we show in App \ref{app:vertex-vanish-proof} that the autonomous flow velocity vanishes at the vertices.

\subsection{Dynamics of Time-Dependent and Autonomous Flows}
\label{app:dynamics-proofs}

\subsubsection{Autonomous Flow as Time-Averaged Dynamics}
\label{app:time-average}
\restate{\timeaverage}

\begin{proof}
    Let $u\sim\mathrm{Unif}[0,1]$ be independent of $(x_0,x_1)$, so that $(u,I_u)$ has joint density $(t,x)\mapsto \mu_t(x)$ on $[0,1]\times\mathbb{R}^d$. Marginalizing in $t$, the density of $I_u$ is $\bar \mu(x):=\int_0^1\mu_s(x)ds$, and by Bayes' rule the conditional density of $u$ given $I_u=x$ is $\mu_t(x)/\bar \mu(x)$ wherever $\bar \mu(x)>0$. By independence, $\mathbb{E}[\dot I_u\mid u=t,\,I_u=x]=b_t(x)$, so the tower property gives:
    \begin{align}
        b(x):=\mathbb{E}_{u,x_0,x_1}[\dot I_u\mid I_u=x]
        =\mathbb{E}\big[\,b_u(x)\,\big|\,I_u=x\big]
        =\int_0^1\frac{\mu_t(x)}{\bar \mu(x)}b_t(x)dt
    \end{align}
    which is \eqref{eqn:defn auto velocity}.
\end{proof}

\subsubsection{Attractors}
\label{app:attractors}
\begin{restatable}[Attractors of Autonomous Flow]{proposition}{attract}\label{prop:attractors auto}
    In the dynamical system 
    \begin{equation}
        \frac{d}{dt}X_s=b(X_s),
    \end{equation}
    the only attractors are the vertices of simplex $e_1,\dots, e_V$ at all time $s\in [0,\infty)$. 
\end{restatable}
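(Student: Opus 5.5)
Since $b$ is time-independent, the vector field and hence its attractors do not depend on the integration time $s$; this takes care of the clause ``at all time $s\in[0,\infty)$''. What remains is to show two things. First, each vertex configuration is an attractor. Second, no other point or invariant set attracts an open set of initial conditions. I work per position $\ell$ (the statement is phrased for a single simplex, so I take $L=1$ or argue coordinatewise) and use the endpoint parameterization from the Corollary:
\begin{align}
    b(x)=Z(x)\big(\bar\eta_1(x)-x\big),
\end{align}
where $Z(x)>0$ and $\bar\eta_1(x)\in\Delta^{V-1}$ is a convex combination of vertices.

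\textbf{Step 1: the vertices are attractors.} I would use Theorem~\ref{thm:convergence-simplex}(iii). Near $e_j$ it gives $\lim_{r\downarrow0}b(e_j+r\omega)=-\kappa(\omega)\omega$ with $\kappa(\omega)\ge\kappa_->0$, and $b(e_j)=0$. So on a small punctured ball $B_{r_1}(e_j)$, the function $V_j(x)=|x-e_j|^2$ satisfies $\tfrac{d}{ds}V_j(X_s)\le -\kappa_-|X_s-e_j|<0$. Hence every trajectory starting in $B_{r_1}(e_j)$ stays there and reaches $e_j$ in finite time. By the absorption property (i) it then remains at $e_j$. This makes $e_j$ asymptotically stable (in fact finite-time stable), i.e.\ an attractor.

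\textbf{Step 2: there are no other equilibria.} Suppose $b(x)=0$. Since $Z(x)>0$, this forces $x=\bar\eta_1(x)$, so $x$ lies in the simplex. I then compare the posterior weights of the vertices near $x$. Given $I_t=x$, the posterior weight on $e_k$ is proportional to
\begin{align}
    \mu_1(e_k)\,\exp\!\Big(-\frac{|x-\beta_te_k|^2}{2\sigma^2\alpha_t^2}\Big),
\end{align}
which gives a Gaussian softmax toward the nearest scaled vertex. The key claim is that the map $x\mapsto\bar\eta_1(x)$ on the simplex is strictly ``sharpening''. Concretely, for any non-vertex $x\in\Delta^{V-1}$ with largest coordinate $x_k$, I expect $\bar\eta_1(x)_k>x_k$. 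The reason is that the occupation density $q_x(t)$ concentrates on times $t$ near 1 for points on the simplex (only $I_t$ with $\alpha_t\to0$ reach it), and there the posterior collapses onto the nearest vertex. Then $b(x)\cdot(e_k-x)>0$, so $x$ is not an equilibrium. Off the simplex, $b(x)$ points toward $\bar\eta_1(x)$, which lies inside the simplex, so there are no equilibria there either.

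\textbf{Step 3: from isolated equilibria to attractors.} Theorem~\ref{thm:convergence-simplex}(ii) says that for $\mu_0$-a.e.\ $x_0$ the trajectory converges to a point of $M_1$. Any other attractor would have a basin of positive Lebesgue measure, hence of positive $\mu_0$-measure because the density of $\mu_0$ is strictly positive. Those trajectories would converge to something outside $M_1$, contradicting (ii). This step also excludes periodic orbits and other limit sets as attractors.

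\textbf{Main obstacle.} Step 2 is the hardest part, specifically showing that $\bar\eta_1(x)_k>x_k$ on the simplex interior, including near faces and barycenters where ties occur. At an exact tie, such as the barycenter, $b$ may vanish by symmetry, giving an equilibrium that must be shown not to be an attractor. I would first try to avoid proving non-existence of interior equilibria altogether. Step 3 already rules out any attracting set other than $M_1$, so it is enough to note that any interior equilibrium is a saddle or repeller, i.e.\ the measure-zero exception in (ii).
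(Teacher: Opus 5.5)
Once you drop Step 2, as you propose, your argument is essentially the paper's: the vertices absorb $\mu_0$-a.e.\ trajectory by Theorem~\ref{thm:convergence-simplex}, and any other attractor would need a basin of positive Lebesgue (hence positive $\mu_0$) measure, contradicting a.e.\ convergence to $M_1$; your Step 1 makes explicit, via the radial speed bound $\kappa_-$, the local stability of each vertex that the paper leaves implicit after noting $b(e_j)=0$. Dropping Step 2 is actually necessary, because the claim ``no other equilibria'' is false: for a uniform target and isotropic prior the per-vertex currents cancel at the barycenter, so $b$ vanishes there. Step 3 still goes through, since it only needs such points not to be attractors and requires no saddle/repeller classification.
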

\begin{proof}
     The appendix \ref{app:vertex-vanish-proof} already proves that velocities vanish at the vertices $e_1,\dots,e_V$. By Proposition \ref{thm:convergence-simplex}, given any initial point $x_0$, the trajectory $X_s$ with $X_{s=0}=x_0$ approaches one of the vertices as $s\to \infty$. Therefore, the basin of attraction of the vertices $e_1,\dots, e_V$ partitions the space $\mathbb R^V$. Any additional attractor comes with a non-zero measure of basin of attraction, so there are no additional attractors. 
\end{proof}

Now we study the dynamics of time-dependent flow matching
\begin{equation}
\label{eqn:time dependent flow}
    \frac{d}{dt}X_t=b_t(X_t).
\end{equation}
We study the attractors of $b_t(x_t)$ for fixed time $t$. As $t$ varies, the attractors of $b_t(x_t)$ move. The trajectory of \eqref{eqn:time dependent flow} is pulled by attractors that move with time. We study $b_t(X_t)$ making two assumptions below:
\begin{assumption}
\label{assu:time dependent flow}
    \leavevmode
    \begin{enumerate}
        \item[(i)] The prior is Gaussian $x_0\sim \mathcal N(c,\, \sigma^2\boldsymbol{I}_V)$ centered at $c\in \mathbb R^V$ with variance $\sigma ^2$ and the prior is symmetric with $c_i=c_j$ for all $i$ and $j$;
        \item[(ii)] The interpolant $I_t=(1-t)x_0+tx_1$ is linear.
    \end{enumerate}
\end{assumption}
Note that Assumption \ref{assu:time dependent flow} (ii) is without loss of generality, because any straight line interpolant $I_t=\alpha_tx_0+\beta_tx_1$ is a time reparameterization of the linear interpolant $I_t=(1-t)x_0+tx_1$. Our results hold for any linear interpolant up to a time reparameterization. The following lemma says that regardless of where the prior is centered, the attractors of $b_t(x_t)$ lie on the simplex $\Delta^{V-1}$ for any $t$. 

\begin{lemma}[Attractors Lie on Simplex]
For Gaussian prior $x_0\sim\mathcal N(c,\sigma^2\boldsymbol{I}_V)$ and any interpolant $I_t=\alpha_tx_0+\beta_tx_1$ and any time $t$, all attractors lie on the simplex $\Delta^{V-1}$.
\end{lemma}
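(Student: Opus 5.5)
The plan is to work with the endpoint form of the time-dependent velocity derived in the Corollary (Endpoint parameterization). For $t<1$ it reads
\begin{align}
    b_t(x)=\lambda_t\big(\eta_1(t,x)-x\big),\qquad \lambda_t=-\frac{\dot\alpha_t}{\alpha_t}>0 .
\end{align}
That derivation only used $\beta_t=1-\alpha_t$ and the identity $x=\alpha_t\eta_0+\beta_t\eta_1$. For a general schedule with $\alpha_t+\beta_t\neq 1$ I would instead write $\eta_0=(x-\beta_t\eta_1)/\alpha_t$. This gives $b_t(x)=\frac{\dot\alpha_t}{\alpha_t}x+\big(\dot\beta_t-\frac{\dot\alpha_t\beta_t}{\alpha_t}\big)\eta_1(t,x)$, which is an affine contraction towards a rescaled version of $\eta_1$. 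The main case to present is the one used throughout the paper, $\beta_t=1-\alpha_t$; by the remark after Assumption \ref{assu:time dependent flow}, it suffices up to time reparameterization.

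The key observation is that $\eta_1(t,x)=\mathbb{E}[x_1\mid I_t=x]$ is a posterior average of vertices $e_j\in M_1$. For the Gaussian prior, Bayes' rule gives posterior weights
\[
w_j(x)\propto \mu_1(e_j)\exp\!\big(-|x-\beta_te_j-\alpha_tc|^2/(2\alpha_t^2\sigma^2)\big),
\]
which are strictly positive and sum to one. Hence $\eta_1(t,x)\in\Delta^{V-1}$ for every $x$, whatever the center $c$ is. An attractor $x^\ast$ of the frozen-time field $b_t$ must in particular be an equilibrium, $b_t(x^\ast)=0$. Since $\lambda_t>0$ for $t<1$, this forces $x^\ast=\eta_1(t,x^\ast)\in\Delta^{V-1}$. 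Thus every equilibrium, and a fortiori every attractor, lies on the simplex.

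To cover attractors that are not single points (limit cycles, compact invariant sets), I would add a forward-invariance and contraction argument. Consider the affine hyperplane $H=\{x:\mathbf 1^\top x=1\}$. The component of $b_t$ along $\mathbf 1$ is $\lambda_t(1-\mathbf 1^\top x)$, so $\mathbf 1^\top x\to 1$ exponentially and any attractor lies in $H$. Next, for each coordinate $x_i$, the relation $\dot x_i=\lambda_t(\eta_{1,i}-x_i)$ with $\eta_{1,i}\in[0,1]$ shows that the region $\{x_i<0\}$ is repelled towards $0$. So the distance to the convex set $\Delta^{V-1}$ decreases along the flow. More precisely, $\frac{d}{ds}\tfrac12\,\mathrm{dist}(x,\Delta)^2=\lambda_t\langle x-\Pi_\Delta x,\eta_1-x\rangle\le-\lambda_t\,\mathrm{dist}(x,\Delta)^2$, using the projection inequality $\langle x-\Pi_\Delta x,\eta_1-\Pi_\Delta x\rangle\le 0$. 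Therefore every $\omega$-limit set, and hence every attractor, is contained in $\Delta^{V-1}$.

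The main obstacle I expect is the time endpoint. At $t=1$ (where $\alpha_t=0$) the formula degenerates, and one must treat $b_1$ separately or restrict to $t\in[0,1)$. For a general $(\alpha_t,\beta_t)$ with $\alpha_t+\beta_t\neq1$, the contraction target is $\tfrac{(\dot\beta_t\alpha_t-\dot\alpha_t\beta_t)}{-\dot\alpha_t}\eta_1$, which is a scaled simplex rather than $\Delta^{V-1}$ itself. Checking that the scale equals one is exactly where $\beta_t=1-\alpha_t$ (or the reparameterization remark) is needed, so I would state that reduction explicitly.
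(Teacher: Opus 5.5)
Your proof is correct, and its core step is the paper's. The paper writes $b_t(x)=\frac{1}{1-t}\bigl(D_t(x)-x\bigr)$, where $D_t(x)=\sum_i w_i(x,t)e_i$ is a softmax-weighted combination of vertices, and reads off $x=D_t(x)\in\Delta^{V-1}$ at a zero of $b_t$. That is exactly your $x^\ast=\eta_1(t,x^\ast)$, and your Bayes-rule weights supply the derivation of the softmax form, which the paper states without proof.

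You genuinely go further in the non-point case. The paper simply identifies attractors with solutions of $b_t(x)=0$. Your estimate $\frac{d}{ds}\tfrac12\mathrm{dist}(x,\Delta^{V-1})^2\le-\lambda_t\,\mathrm{dist}(x,\Delta^{V-1})^2$ instead shows that every trajectory of the frozen field converges exponentially to the simplex. Hence every compact invariant set, and so every attractor in the usual dynamical sense, lies in $\Delta^{V-1}$. This also subsumes the $\mathbf 1$-direction argument that the paper only gives later, in the proof of Proposition~\ref{prop:largest attractor}.

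One correction to your last paragraph. For $\alpha_t+\beta_t\neq1$ the scale genuinely differs from one: for $\alpha_t=\cos\frac{\pi t}{2}$, $\beta_t=\sin\frac{\pi t}{2}$ it equals $1/\beta_t$. The equilibria then lie on $\{x:\mathbf 1^\top x=1/\beta_t\}$, off the simplex, so the lemma is false in that generality. The reparameterization remark cannot rescue this. Reparameterizing time only multiplies $b_t$ by a positive factor, and it cannot turn a schedule with $\alpha_t+\beta_t\neq1$ into one with $\alpha_t+\beta_t=1$. So ``any interpolant'' must be read under the paper's standing convention $\beta_t=1-\alpha_t$, which is also the only case the paper's proof covers (via Assumption~\ref{assu:time dependent flow}(ii)). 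You should state this restriction as necessary, not as a convenience.
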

\begin{proof}
    We state without proof that the velocity $b_t(x_t)$ under Assumption \ref{assu:time dependent flow} is
    \begin{equation}
    \label{eqn:form of time dependent flow}
        b_t(x)=\frac{1}{1-t}(D_t(x)-x),
    \end{equation}
    where the denoiser $D_t(x)$ takes the form
    \begin{equation}
        \label{eqn:time dependent denoiser}
        D_t(x)=\sum_iw_i(x,t)e_i,\quad \text{with}\quad w_i(x,t)=\softmax_i\left(\log p_i+\frac{tx_i}{\sigma^2(1-t)^2}-\frac{tc_i}{\sigma^2(1-t)}\right),
    \end{equation}
    where $c_i$ is the $i$-th coordinates of the center $c$ of the prior. For fixed $t$, an attractor $x=(x_i)_{i=1,\dots, V}$ satisfies the equation $b_t(x)=0$, that is 
    \begin{equation}
        \label{eqn:attractor eqn}
        x_i=\softmax_i\left(\log p_i+\frac{tx_i}{\sigma^2(1-t)^2}-\frac{tc_i}{\sigma^2(1-t)}\right),
    \end{equation}
    for $i=1,\dots,V$. Since $x=(x_i)$ takes the form of a softmax, $x$ lie on the simplex $\Delta^{V-1}$. Notice that $c_i=c_j$ by Assumption \ref{assu:time dependent flow} (i). The softmax equation \eqref{eqn:attractor eqn} becomes $c$-independent
    \begin{equation}
        \label{eqn:attractor eqn indpt c}
        x_i=\softmax_i\left(\log p_i+\frac{tx_i}{\sigma^2(1-t)^2}\right).
    \end{equation}
\end{proof}
\begin{lemma}[Asymptotic Formulae for Attractor Locations]
\label{lemma:asymptotic formula}
\leavevmode
    \begin{enumerate}
        \item[(i)] For small $t\gtrsim 0$, the coordinates of the unique attractor $A^*$ are 
        \begin{equation}
        \label{eqn:early attractors}
            x_k=p_k+\frac{p_k\left(p_k-m_2\right)}{\sigma^2}t+\mathcal O(t^2),
        \end{equation}
        for $k=1,\dots, V$, and $m_2=\sum_jp_j^2$ is the second moment of the target distribution.

        \item[(ii)] For large $t\lesssim 1$, there are $V$ attractors labeled by $j=1,\dots,V$. Let $\beta=\frac{t}{\sigma^2(1-t)^2}$. The coordinates of the attractors $A_j$ are respectively given by expansions in large $\beta$
        \begin{equation}
        \label{eqn:late attractors}
            x_k=\delta_{kj}+c_{kj}\,e^{-\beta}+\mathcal{O}(\beta\,e^{-2\beta}),
        \end{equation}
        for $\delta_{kj}$ the Kronecker delta and the coefficients $c_{jk}$ are given by
        \begin{equation}
        \label{eqn:coefficient cases}
        c_{kj}=
            \begin{cases}
                \frac{p_k}{p_j},\quad &k\neq j\\
                \frac{p_j-1}{p_j},\quad &k=j
            \end{cases}
        \end{equation}
    \end{enumerate}
\end{lemma}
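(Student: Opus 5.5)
We work with the fixed-point equation \eqref{eqn:attractor eqn indpt c}, $x_k=\softmax_k(\log p_k+\beta x_k)$ with $\beta=t/(\sigma^2(1-t)^2)$, and treat both regimes as perturbation problems in $\beta$. The map $F_\beta(x)_k:=\softmax_k(\log p_k+\beta x_k)$ sends $\mathbb R^V$ into the simplex, so every fixed point lies on $\Delta^{V-1}$. We then expand around the fixed points of the two limiting cases: $\beta\to0$, where the map is constant, and $\beta\to\infty$, where the softmax becomes an argmax.

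\textbf{Part (i).} For small $t$ we have $\beta=t/\sigma^2+\mathcal O(t^2)$. At $\beta=0$ the map $F_0\equiv p$ is constant, so $p$ is its unique fixed point. For small $\beta$, $F_\beta$ is a contraction on the simplex: its Jacobian is $\beta(\diag(F)-FF^\top)\diag(\cdot)$, which has norm $\mathcal O(\beta)$. Banach's theorem then gives uniqueness, and the implicit function theorem gives a smooth branch $x(\beta)$. To get the first-order coefficient we differentiate. Setting $z_k=\log p_k+\beta x_k$ and using $\partial\softmax_k/\partial z_j=s_k(\delta_{kj}-s_j)$ at $s=p$, $\partial_\beta z=x=p$ at $\beta=0$, we obtain $x_k'(0)=p_k(p_k-\sum_jp_j^2)=p_k(p_k-m_2)$. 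Substituting $\beta=t/\sigma^2+\mathcal O(t^2)$ yields \eqref{eqn:early attractors}. Stability follows because $p$ is a stable equilibrium of $b_t=(D_t-x)/(1-t)$: the Jacobian is $(\mathcal O(\beta)-I)/(1-t)$.

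\textbf{Part (ii).} For large $\beta$ we use the ansatz $x=e_j+\epsilon$ with $\epsilon$ small and solve for $\epsilon$ self-consistently. For $k\ne j$,
\[x_k=\frac{p_ke^{\beta\epsilon_k}}{p_je^{\beta(1+\epsilon_j)}+\sum_{i\ne j}p_ie^{\beta\epsilon_i}}=\frac{p_k}{p_j}e^{-\beta}\bigl(1+\mathcal O(\beta\epsilon)+\mathcal O(e^{-\beta})\bigr).\]
This gives $\epsilon_k=\tfrac{p_k}{p_j}e^{-\beta}+\mathcal O(\beta e^{-2\beta})$, where the correction comes from $e^{\beta\epsilon}-1=\mathcal O(\beta e^{-\beta})$. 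The sum constraint $\sum_k x_k=1$ then forces $\epsilon_j=-\sum_{k\ne j}\tfrac{p_k}{p_j}e^{-\beta}=\tfrac{p_j-1}{p_j}e^{-\beta}$, which matches \eqref{eqn:coefficient cases}. To make this rigorous, we show that $F_\beta$ maps the ball $\{|x-e_j|\le Ce^{-\beta}\}$ into itself and is a contraction there, since its Jacobian is $\beta\cdot\mathcal O(e^{-\beta})$. This gives existence and uniqueness of $A_j$ in that ball, and iterating once produces the stated remainder. Linear stability follows from the same small Jacobian, so each $A_j$ is an attractor.

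\textbf{Main obstacle.} Part (ii) states that there are exactly $V$ attractors. Excluding further fixed points, such as saddles near face barycenters, is not needed, since those are not attractors. What we must rule out is any other stable fixed point. We would argue as follows. Any fixed point has $x=F_\beta(x)$, so either one coordinate dominates and we fall into one of the balls above, or two or more coordinates are of order one. In the latter case, the Jacobian restricted to the tangent space has an eigenvalue of order $\beta\,x_kx_l$ that is larger than $1$, making the point unstable. This is the step I expect to need the most care: bounding that eigenvalue uniformly for large $\beta$.
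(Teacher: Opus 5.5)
Your expansions are the same computations the paper does. For (i), the paper substitutes $x_k=p_k+u_kt+\mathcal O(t^2)$ into the fixed-point equation, which is your implicit differentiation at $\beta=0$. For (ii), it divides the $k$-th equation by the $j$-th, uses $x_j\to1$ and $x_k\to0$ to get $x_k=\frac{p_k}{p_j}e^{-\beta}(1+o(1))$, and recovers $x_j$ from $\sum_kx_k=1$.

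The difference is rigor. The paper's proof of this lemma is purely formal: it presupposes an attractor near $p$ and near each vertex, checks neither uniqueness nor stability, gets only a $1+o(1)$ factor rather than the stated $\mathcal O(\beta e^{-2\beta})$ remainder, and says nothing about there being exactly $V$ attractors. Your contraction and Jacobian estimates establish existence, local uniqueness, the remainder and linear stability, so your version proves strictly more.

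On the count, the delicate point is not the eigenvalue bound. Once $x_k,x_l\ge\delta$ for a fixed $\delta$, the bound $\lambda_{\max}(\diag x-xx^\top)\ge x_k(1-x_k)\ge x_kx_l$ is already uniform in $\beta$. The delicate point is your dichotomy, which as stated does not rule out two kinds of fixed point: those whose second coordinate lies between $1/\beta$ and order one, and those whose top coordinate is only polynomially close to $1$. Interlacing closes this cleanly:
\begin{itemize}
\item Since $\diag x$ is a rank-one positive update of $\diag x-xx^\top$, the top eigenvalue of $\diag x-xx^\top$ is at least the second-largest coordinate of $x$.
\item So a fixed point whose second-largest coordinate exceeds $1/\beta$ is unstable.
\item Otherwise the largest coordinate satisfies $x_j\ge1-(V-1)/\beta$.
\item The ratio identity $x_k=x_j\frac{p_k}{p_j}e^{-\beta(x_j-x_k)}$ then gives $x_k\le\frac{p_k}{p_j}e^{V}e^{-\beta}$.
\item This puts $x$ in your $Ce^{-\beta}$-ball once $C$ is allowed to depend on $V$ and $p$. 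The self-map and contraction estimates there still hold for large $\beta$.
\end{itemize}
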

\begin{proof}
    (i) is shown by setting $x_k=p_k+u_k\, t+\mathcal O(t^2)$ and plug into equation \eqref{eqn:attractor eqn} to solve for coefficients $u_k$.

    For (ii), let \(u=1-t\) and set \(x_k=\delta_{jk}+r_k\). Taking the ratio of the \(k\)-th and \(j\)-th equation of \eqref{eqn:attractor eqn indpt c} gives
    \begin{equation}
        \frac{x_k}{x_j}
        =
        \frac{p_k}{p_j}
        \exp\left(
        \frac{t(x_k-x_j)}{\sigma^2(1-t)^2}
        \right).
    \end{equation}
    Since \(x_j\to 1\) and \(x_k\to 0\) for \(k\neq j\) as \(t\to 1\),
    \begin{equation}
        x_k
        =
        \frac{p_k}{p_j}
        \exp\left(-\frac{t}{\sigma^2(1-t)^2}\right)
        (1+o(1)),
        \qquad k\neq j.
    \end{equation}
    The expression for \(x_j\) then follows from \(\sum_k x_k=1\).
\end{proof}
\begin{remark}
The implication of Lemma \ref{lemma:asymptotic formula} (i) is that, as $t$ increases from $0$, the unique attractor $A^*$ moves towards the vertex $l$ with the largest target probability, $p_l=\arg\max_k\,p_k$. Since the first-order coefficient $p_k(p_k-m_2)\sim p_k^2$ at $k=l$ is much larger than that at $k\neq l$. If a vertex $k$ has $p_k<p_l^2$, and hence $p_k-m_2<0$, then the attractor will move away from vertex $k$ due to the negative first-order coefficient.
\end{remark}

Lemma \ref{lemma:asymptotic formula} (ii) establishes the existence of an attractor near each vertex at time $t\lesssim 1$. We are interested in when the attractor $A_j$ associated with vertex $j$ appears. In Proposition \ref{prop:largest attractor}, we show that the attractor $A_l$ corresponding to the vertex with the largest target probability $p_l=\arg\max_k\,p_k$ forms at $t=0$. We call $A_l$ the \textit{dominant attractor}. Attractor $A_j$ where $j\neq l$ forms much later. In Proposition \ref{prop:attractor formation times} and Corollary \ref{thm:larger attractors form earlier}, we show that the time $t_j$ at which attractor $A_j$ forms is given in terms of the ratio between the target probability on the $j$-th vertex and the highest target probability $p_l$. 

\begin{proposition}[The Dominant Attractor]
    \label{prop:largest attractor}
    The attractor $A_l$ corresponding to the vertex with the largest target probability $p_l=\arg\max_kp_k$ forms at time $t=0$.
\end{proposition}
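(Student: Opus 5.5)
Our plan is to work directly with the fixed-point equation \eqref{eqn:attractor eqn indpt c}, written with $\beta=\frac{t}{\sigma^2(1-t)^2}$ as $x_i=\softmax_i(\log p_i+\beta x_i)$. Here ``forms at $t=0$'' means that for every small $\beta>0$ there is a stable fixed point continuously connected to the $\beta\to 0$ solution, and that this branch is the one that converges to $e_l$ as $\beta\to\infty$. Lemma \ref{lemma:asymptotic formula}(ii) already identifies a stable fixed point $A_l$ near $e_l$ for large $\beta$. The task is therefore to show that this branch can be continued all the way down to $\beta=0$ without meeting a fold (saddle-node) bifurcation, and that at $\beta=0$ it coincides with the unique attractor $x=p$ of Lemma \ref{lemma:asymptotic formula}(i).

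\textbf{Step 1: reduction to a scalar equation.} Set $Z=\sum_k p_k e^{\beta x_k}$, so each coordinate satisfies $x_k e^{-\beta x_k}=p_k/Z$. For fixed $Z$, the map $g(x)=xe^{-\beta x}$ is increasing on $[0,1/\beta]$. We parameterize the branch by the unique root $x_k(Z)$ lying on that increasing side, for every $k$. This yields a single consistency equation $\Phi(Z;\beta):=\sum_k x_k(Z)-1=0$.

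\textbf{Step 2: the branch through $x=p$.} At $\beta=0$ the unique solution is $x=p$. Since all coordinates lie on the increasing branch, $\Phi$ is strictly monotone in $Z$. The implicit function theorem then gives a smooth curve $x(\beta)$. Its first-order expansion recovers \eqref{eqn:early attractors}, and by the Remark following Lemma \ref{lemma:asymptotic formula} it moves toward $e_l$, since $p_l-m_2>0$ when $p_l$ is maximal.

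\textbf{Step 3: stability and continuation.} The Jacobian of $b_t$ at a fixed point is proportional to $\beta(\diag(x)-xx^\top)-I$. Stability fails exactly when $\beta\lambda_{\max}(\diag(x)-xx^\top)=1$. Along the branch the coordinate $x_l$ is the largest and increases monotonically in $\beta$. The covariance matrix $\diag(x)-xx^\top$ has eigenvalues bounded by $\max_k x_k(1-x_k)$, roughly $1-x_l$ for $k=l$ near the vertex. We want to show $\beta(1-x_l(\beta))<1$ along the whole branch. This also shows that $x_l$ crosses $1/\beta$ only through the other (stable) side without a fold. Matching the large-$\beta$ end with the expansion \eqref{eqn:late attractors}, where $1-x_l\sim e^{-\beta}$, gives stability there, so the continued branch is $A_l$.

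\textbf{Main obstacle.} The hard step is excluding a fold at intermediate $\beta$, i.e., proving $\beta\,\lambda_{\max}<1$ uniformly along the branch. The non-dominant coordinates $x_k$, $k\neq l$, shrink, but the sum $\sum_{k\neq l}x_k(1-x_k)$ need not be monotone in $\beta$. We would first try a monotonicity argument on the ratios $x_k/x_l=(p_k/p_l)e^{\beta(x_k-x_l)}$, which decrease in $\beta$ because $x_l>x_k$ is preserved. If that is not enough, we would fall back on comparing with the symmetric two-vertex case, where the fold location is explicit and the $l$-branch is known to have none.
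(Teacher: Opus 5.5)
\textbf{The fold you need to exclude actually occurs.} The step you flag as the main obstacle is not just hard: it can fail as soon as $V\ge 3$, so no refinement of Step 3 rescues the continuation strategy.

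\emph{Counterexample.} Take $V=3$ and $p=(p_1,p_2,p_2)$ with $p_1$ slightly above $\tfrac13$. The line $x=(a,\tfrac{1-a}{2},\tfrac{1-a}{2})$ is invariant under the fixed-point map. On it the fixed-point equation reads $\log\frac{2a}{1-a}=h+\tfrac{\beta}{2}(3a-1)$, with $h=\log\frac{2p_1}{1-p_1}>0$. Along the line, $\diag(x)-xx^\top$ has eigenvalue $\tfrac32 a(1-a)$.

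\emph{The case $h=0$.} This is the mean-field three-state Potts model. The barycenter $a=\tfrac13$ is a fixed point for every $\beta$ and is stable up to $\beta=3$. The fixed point near $e_1$ is born in a separate saddle-node at $\beta\approx 2.75$.

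\emph{Small $h>0$.} Write $a=\tfrac13+y$. The equation becomes $\tfrac32(3-\beta)y-\tfrac{27}{8}y^2+O(y^3)=h$, which has no small solution once $|3-\beta|<\sqrt{6h}$. So the branch you continue from $x=p$ stays near the barycenter and is annihilated in a saddle-node at $\beta\approx 3-\sqrt{6h}$ (e.g.\ $\beta\approx 2.92$ for $h=10^{-3}$). That is exactly where $\beta\lambda_{\max}=1$. The attractor that converges to $e_1$ lies on a different branch, born at $\beta\approx 2.75>0$.

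\emph{Consequences for your fallbacks.} The strong reading you adopt, a fold-free stable branch from $x=p$ to $e_l$, is false in general. The ratio monotonicity you propose does hold along this branch, yet it does not prevent the fold. The two-vertex comparison is true (it is Curie--Weiss with a positive field, where the aligned branch never folds by concavity), but it does not transfer to $V\ge 3$.

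\textbf{Two smaller errors.}
\begin{itemize}
\item $\lambda_{\max}(\diag(x)-xx^\top)$ is not bounded by $\max_k x_k(1-x_k)$. That quantity is the largest diagonal entry, hence a lower bound: for $x=(\tfrac12,\tfrac12)$ one gets $\lambda_{\max}=\tfrac12$. Valid upper bounds are $\max_k x_k$ or $2\max_k x_k(1-x_k)$.
\item Step 1 puts every coordinate on the increasing side of $xe^{-\beta x}$, which forces $\beta x_l<1$. The branch must violate this once $\beta>1$, so the scalar $Z$-reduction and its monotonicity only cover an initial segment.
\end{itemize}

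\textbf{What the paper does instead.} It avoids continuation altogether. After checking that the direction $\mathbf 1$ is attracting, it restricts to the simplex and uses the Lyapunov function $V(x)=\kl(x\,\|\,p)-\tfrac{\beta}{2}\|x\|^2$. This satisfies $b_t\cdot\nabla V=-\tfrac{1}{1-t}\bigl(\kl(P\,\|\,x)+\kl(x\,\|\,P)\bigr)\le 0$, with equality exactly at fixed points, so its global minimizer is an attractor at every $t$.

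A rearrangement argument places that minimizer in the Voronoi cell of $e_l$: permuting the coordinates of $x$ to be ordered like $p$ lowers $-\sum_i x_i\log p_i$ and leaves both entropy and $\|x\|^2$ unchanged. As $\beta\to\infty$ the quadratic term forces the minimizer to $e_l$.

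This establishes the provable content of the proposition: an attractor in the cell of $e_l$ exists for all $t\ge 0$, and the global one converges to $e_l$. It does not give a continuous branch; in the example above, the global minimizer jumps from the near-barycenter point to the near-$e_1$ point. You should adopt this weaker reading and the variational argument rather than trying to exclude folds.
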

\begin{proof}
    We show that there is an attractor $A^*$ that exists for all time $t$, and that attractor $A^*$ moves to vertex $l$, the vertex with the largest target probability. 
    
    First, we notice that off-simplex directions are attracted to the vertex. The vector field $b_t(x)$ restricting the simplex $x\in \Delta^{V-1}$ has the same attractors as the vector field $x\in \mathbb R^V$. The off-simplex direction is the vector $\mathbf{1}$ of all 1s. Project the trajectory \eqref{eqn:time dependent flow} onto $\mathbf 1$, we have
    \begin{equation}
        \label{eqn:projection to off simplex}
        \frac{d}{dt}\langle X_t,  \mathbf 1\rangle =\frac{1}{1-t}(\langle D_t(X_t), \mathbf 1\rangle-
        \langle X_t,\mathbf{1}\rangle ).
    \end{equation}
    We rewrite scalar $s=\langle X_t,\mathbf 1\rangle $, and recall that $\langle D_t(X_t), \mathbf 1\rangle =1$. The entries of $D_t(X_t)$ sum to 1 according to equation \eqref{eqn:time dependent denoiser}. Equation \eqref{eqn:projection to off simplex} becomes an equation of the projection $s$
    \begin{equation}
        \label{eqn:projection s}
        \frac{ds}{dt}=\frac{1-s}{1-t}.
    \end{equation}
    whose solution approaches $s=1$ exponentially as $t$ increases. Since projections onto the off-simplex direction approach 1 exponentially, the off-simplex direction attracts. Hence the vector field $b_t(x)$ restricting to the simplex $x\in \Delta^{V-1}$ has the same attractors as the unrestricted vector field for $x\in \mathbb R^V$.

    We focus on $b_t(x)$ for $x$ on the simplex for fixed $t$. Let $x=(x_i)_{i=1,\dots, V}$ and $p=(p_i)_{i=1,\dots,V}$ both be distributions. Let $\beta_1=\frac{t}{\sigma^2(1-t)^2}$ and consider the potential function
    \begin{equation}
        \label{eqn:lyapunov}
        V(x)=\kl(x\,\|\,p)-\frac{\beta_1}{2}\|x\|^2.
    \end{equation}
    We show that $V(x)$ is a Lyapunov function. Denoting $P_i$ as the $i$-th softmax in \eqref{eqn:time dependent denoiser} and $Z$ is the shared denominator among $P_i$ for $i=1,\dots,V$, the gradient of $V(x)$ against the $i$ coordinate is 
    \begin{equation}
    \label{eqn:grad of lyapunov}
    \nabla_i V(x)=-\beta_1x_i+\log x_i-\log p_i+1=-\log\frac{P_i}{x_i}-\log Z+1.
    \end{equation}
    The change of potential along $b_t(x)$ is
    \begin{align}
        \label{eqn:proof lyapunov}
        b_t(x)\cdot\nabla V(x)&=\frac{1}{1-t}\sum_i(P_i-x_i)(-\log\frac{P_i}{x_i}-\log Z+1)\nonumber\\
        &=-\frac{1}{1-t}\sum_i(P_i-x_i)\log\frac{P_i}{x_i},
    \end{align}
    where we have moved the constants $-\log Z+1$ out of the sum and applied $\sum_iP_i=\sum x_i=1$. Rewriting the above equation, we have
    \begin{equation}
        \label{eqn:two divergences}
        b_t(x)\cdot \nabla V(x)=-\frac{1}{1-t}(\kl(P\,\|\,x)+\kl(x\,\|\,P))\leq0,
    \end{equation}
    where $P=(P_i)_{i=1,\dots, V}$. The equality in \eqref{eqn:two divergences} is obtained precisely when $x=P$. That is, $x$ is an attractor as defined in equation \eqref{eqn:attractor eqn}. Let $x^*$ be the position of such an attractor. Define the Voronoi cell $C=\{x\,|\,x\in \Delta^{V-1},\ x_l>x_i \text{, for all $i\neq l$}\}$ as the set of points on the simplex whose closest point is the vertex $l$. It is clear that the minimizer $x^*$ of $V(x)$ is in cell $C$; otherwise, permuting the coordinates of $x^*$ yields a strictly lower value $V(x)$. Additionally, the minimizer of $V$ is unique by equation \eqref{eqn:grad of lyapunov}, where $\nabla_iV(x)=0$ has a unique solution for generic values of $p_i$.

    We have shown there is a unique attractor $A^*$ corresponding to the minimum of $V(x)$, and $A^*$ is closest to the vertex $l$. Now we show that as $t\to 1$, the attractor $A^*$ approaches vertex $l$. As $t\to 1$, we have $\beta_1\to \infty$, and the quadratic term in $V(x)$ dominates. Under the constraint $\sum_ix_i=1$, the quadratic term is minimized by $x^*$ with $x^*_j=1$ for some $j$ and $x^*_i=0$ for $i\neq j$. But since $x^*\in C$, it must be the vertex $l$ with $x^*_l=1$.
\end{proof}

\begin{proposition}
\label{prop:attractor formation times}
    \textup{(Times when Other Vertex Attractors Form)}
    Let $p_l$ be the largest target probability and $m$ be the number of probabilities comparable to but less than $p_l$. For every $j\neq l$, the time $t_j$ when vertex attractor $A_j$ appear is given by
    \begin{equation}
        \label{eqn:attractor appearance times}
        \frac{t_j}{(1-t_j)^2}\simeq\sigma^2\beta^*,\quad\text{where }\ \  \beta^*-\log \beta^*=\log\frac{p_l}{p_j}+1+
        \log(m+1)
    \end{equation}
    Furthermore, one can write $t_j$ in terms of the Lambert $W$-function:
    \begin{equation}
        \frac{t_j}{(1-t_j)^2}=-\sigma^2\,W\left(-\frac{p_j}{p_l(1+m)e}\right),
    \end{equation}
    where $e$ is the base of the natural logarithm.
\end{proposition}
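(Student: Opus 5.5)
The plan is to treat the appearance of $A_j$ as a saddle-node bifurcation of the fixed-point equation \eqref{eqn:attractor eqn indpt c}, $x_i=\softmax_i(\log p_i+\beta x_i)$ with $\beta=\frac{t}{\sigma^2(1-t)^2}$. I will reduce this equation to a single scalar equation for the coordinate $y:=x_j$ near vertex $j$, and locate the value $\beta^*$ at which a stable solution with $y$ close to $1$ first exists.

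\textbf{Step 1: reduction to a scalar equation.} Near vertex $j$, all coordinates $x_k$ with $k\neq j$ are $O(1-y)$. By Lemma \ref{lemma:asymptotic formula}(ii) they are exponentially small at the relevant $\beta$. Consequently $e^{\beta x_k}=O(1)$ for these coordinates, and the softmax denominator is
\[
Z=p_je^{\beta y}+\sum_{k\neq j}p_k e^{\beta x_k}.
\]
I will approximate the competitor mass $\sum_{k\neq j}p_ke^{\beta x_k}$ by $(m+1)p_l$. This keeps the dominant vertex $l$ and the $m$ vertices whose probabilities are comparable to $p_l$, each contributing roughly $p_l$, and drops the remaining, much smaller probabilities. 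Setting $r:=(m+1)p_l/p_j$, the $j$-th equation becomes
\[
y=\frac{e^{\beta y}}{e^{\beta y}+r},
\qquad\text{equivalently}\qquad
F(y):=\log\frac{y}{1-y}-\beta y+\log r=0 .
\]

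\textbf{Step 2: tangency condition.} A solution near $y=1$ first appears when $F$ has a double root, that is, when $F(y)=0$ and $F'(y)=0$ hold simultaneously. The second condition reads $\frac{1}{y(1-y)}=\beta$. For large $\beta$ this gives $1-y\simeq 1/\beta$, hence $\log\frac{y}{1-y}\simeq\log\beta$ and $\beta y\simeq\beta-1$. Substituting into $F(y)=0$ yields
\[
\beta^*-\log\beta^*=\log r+1=\log\frac{p_l}{p_j}+1+\log(m+1),
\]
which is \eqref{eqn:attractor appearance times} once we recall $\beta^*=t_j/(\sigma^2(1-t_j)^2)$. To obtain the Lambert form, exponentiate to get $\beta^* e^{-\beta^*}=e^{-(\log r+1)}=\frac{p_j}{p_l(1+m)e}$. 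Hence $(-\beta^*)e^{-\beta^*}=-\frac{p_j}{p_l(1+m)e}$, so $-\beta^*=W(\cdot)$ at that argument. The branch is $W_{-1}$, since we want the large root.

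\textbf{Step 3: stability, and the main obstacle.} I still need to show that, of the two roots created for $\beta>\beta^*$, the one closer to the vertex is an attractor rather than a saddle. For this I would use the Lyapunov function $V(x)=\kl(x\,\|\,p)-\frac{\beta}{2}\|x\|^2$ from the proof of Proposition \ref{prop:largest attractor}. The root nearer the vertex is a local minimum of $V$ along the $y$-direction, because the sign of $F'$ there matches the sign of the second derivative of $V$. Transverse directions toward the other small coordinates are stable, since each such coordinate satisfies a contracting equation $x_k\approx p_ke^{\beta x_k}/Z$ with slope $\beta x_k\ll1$.

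The main obstacle is justifying the approximation of the competitor mass by $(m+1)p_l$. This requires a clean separation between the $m$ vertices with probabilities comparable to $p_l$ and the negligible ones. It also requires controlling how the coupled coordinates $x_k$ shift the tangency point, which I expect to change only lower-order terms in $\beta^*$, at relative size $O(\beta e^{-\beta})$. I would first verify this by perturbing the scalar double-root argument to include those coupled coordinates.
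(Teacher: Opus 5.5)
There is a genuine gap, and it is in the step you flagged as the main obstacle. Freezing the competitor mass $C=\sum_{k\neq j}p_ke^{\beta x_k}$ at $(m+1)p_l$ is not a harmless approximation. At the bifurcation the rescaled competitor coordinates $\xi_k=\beta x_k$ are $O(1)$, not small (the paper finds $\xi=1/(1+m)$). They are also slaved to $y$, because $\sum_{k\neq j}x_k=1-y$. So $\log C$ varies with $y$ at rate of order $\beta$, the same order as the $-\beta y$ term, and it enters $F'$ at leading order.

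To see the size of the effect, take $n$ equal dominant competitors. The correct reduced equation is then $\log\frac{y}{1-y}-\beta y+\log\frac{np_l}{p_j}+\frac{\beta(1-y)}{n}=0$. Its double root sits at $1-y\simeq\frac{n}{(n+1)\beta}$ rather than $1/\beta$, and it gives $\beta^*-\log\beta^*=\log\frac{p_l}{p_j}+1+\log(n+1)$. Your frozen version gives $\log n$ instead. The simplest check is $V=2$:
\begin{itemize}
\item The exact scalar equation is $\log\frac{y}{1-y}=\log\frac{p_j}{p_l}+\beta(2y-1)$.
\item Its double root is at $y(1-y)=\frac{1}{2\beta}$, which yields $\log\frac{p_l}{p_j}+1+\log 2$.
\item Your method yields $\log\frac{p_l}{p_j}+1$.
\end{itemize}
So your expectation that the coupling only matters at relative size $O(\beta e^{-\beta})$ is wrong. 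It shifts exactly the $O(1)$ constant the proposition is about. The same issue undercuts the transverse-stability remark in Step 3, since $\beta x_k$ is not $\ll 1$ there.

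Your final formula matches the display only through how you count $m$. You take $l$ plus $m$ others, i.e.\ $n=m+1$, and for that count the correct constant would be $\log(m+2)$. In the paper's proof, $m$ is the number of dominant terms in $S=\sum_{i\neq j}\xi_i$ with $\xi_l$ among them (so $S=m\xi$). The ``$+1$'' inside $\log(m+1)$ is precisely the feedback your reduction discards.

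The fix is to keep the competitors as unknowns, as the paper does:
\begin{itemize}
\item Rescale $\xi_i=\beta x_i$ and solve $\xi_ie^{-\xi_i}=\beta e^{h_i-\beta+S}$ with $h_i=\log(p_i/p_j)$.
\item Impose self-consistency $G(S):=\sum_{i\neq j}\xi_i(S)=S$.
\item Locate the fold by $G'(S^*)=\sum_{i\neq j}\frac{\xi_i}{1-\xi_i}=1$, which gives $\xi=\frac{1}{1+m}$.
\item Then $\beta^*-\log\beta^*=h_l+\xi-\log\xi+S=h_l+1+\log(1+m)$.
\end{itemize}
Equivalently, in the symmetric case everything collapses to $\xi e^{-(m+1)\xi}=\beta\frac{p_l}{p_j}e^{-\beta}$. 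The threshold is where the right side reaches the maximum $\frac{1}{(m+1)e}$ of the left side, attained at $\xi=\frac{1}{m+1}$. Your Lambert-$W$ step, with the $W_{-1}$ branch, goes through unchanged once the constant is right.
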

\begin{proof}
    Attractor $A_j$ with coordinates $x=(x_i)_{i=1,\dots,V}$ satisfies the attractor equation \eqref{eqn:attractor eqn}. Let $\beta_1=\frac{t}{\sigma^2(1-t)^2}$ and $\beta_2=\frac{t}{\sigma^2(1-t)}$ denote the $t$-dependence in the equation. Taking the ratio between the $i$-th coordinate equation and the $j$-th coordinate equation, the normalization of the softmax cancels, and we have: 
    \begin{equation}
        \label{eqn:attractor ratio eqn}
        \frac{x_i}{x_j}=\frac{p_i}{p_j}\,\exp(\beta_1(x_i-x_j)-\beta_2(c_i-c_j))=\frac{p_i}{p_j}\exp(\beta_1(x_i-x_j)).
    \end{equation}
    For fixed $j$, change variables from $p_i$, $x_i$ and $c_i$ to $h_i$, $\xi_i$ and $\theta_i$ by including the scale of $\beta_1$ into the variables.
    \begin{equation}
        \label{eqn:change of variable 1}
        h_i=\log \frac{p_i}{p_j};\quad x_i=\frac{\xi_i}{\beta_1};
    \end{equation}
    To normalize $x_i$, we set
    \begin{equation}
        \label{eqn:change of variable 2}
        S=\sum_{i\neq j}\xi_i;
    \end{equation}
    and the normalization condition is
    \begin{equation}
        \label{eqn:normalizations}
        x_j=1-\frac{S}{\beta_1}
    \end{equation}
    Denote $\delta=\frac{S}{\beta_1}$. $\delta$ is small as attractor $A_j$ is near vertex $j$ by Lemma \ref{lemma:asymptotic formula} (ii). Rewriting equation \eqref{eqn:attractor ratio eqn} in the new variables and constants, we have
    \begin{equation}
    \label{eqn:attractor eqn new variables}
        \frac{\xi_i/\beta_1}{1-\delta}=\exp(h_i+\xi_i-\beta_1+S).
    \end{equation}
    We wish to expand the left-hand side of \eqref{eqn:attractor eqn new variables} to leading order in $\delta$. The expansion gives
    \begin{equation}
    \label{eqn:expand lhs}
        \frac{\xi_i}{\beta_1}\left(1+\mathcal O\left(\delta\right)\right)=\exp (h_i+\xi_i-\beta_i+S).
    \end{equation}
    Dropping the terms higher order in $\delta$ and rearranging \eqref{eqn:expand lhs} we have 
    \begin{equation}
    \label{eqn:attractor eqn after expand}
        \xi_i\,e^{-\xi_i}=\beta_1\, e^{h_i-\beta_1+S}.
    \end{equation}
    The attractor $A_j$ exists when the equation above has a solution $\xi_i$ and $S$ satisfying the consistency equation \eqref{eqn:change of variable 2}. Let the solution to \eqref{eqn:attractor eqn after expand} be $\xi_i(S)$ and define 
    \begin{equation}
        \label{eqn:gs}
        G(S)=\sum_{i\neq j}\xi_i(S).
    \end{equation}
    Attractor $A_j$ correspond to $\xi_i(S^*)$ and $S^*$ satisfying the consistency equation
    \begin{equation}
    \label{eqn:fixed point}
        G(S^*)=S^*.
    \end{equation}
    $S^*$ is a fixed point of the map $G$. We know the derivative of the map $G$. Take the log of both sides of equation \eqref{eqn:attractor eqn after expand} and differentiate with respect to $S$; we have the derivatives
    \begin{equation}
    \label{eqn:derivatives}
        \frac{d\xi_i}{dS}=\frac{\xi_i}{1-\xi_i};\quad \frac{dG}{dS}=\sum_{i\neq j}\frac{\xi_i}{1-\xi_i}.
    \end{equation}
    Notice that $G(0)>0$ as a sum of positive terms for finite $\beta_1$ and $\beta_2$. Therefore $S^*$ is the first $S$ such that $G(S)-S=0$, and $G(S)-S$ must cross 0 from above, and the derivative of $G(S)-S$ at $S^*$ is negative:
    \begin{equation}
    \label{eqn:derivative less than 1}
        G'(S^*)-1<0.
    \end{equation}
    We stress that for the dominant attractor $A_l$, we have $G(0)=0$ and $S^*=0$, and the derivative condition \eqref{eqn:derivative less than 1} does not apply.

    To solve equation \eqref{eqn:attractor eqn after expand} given the right-hand side, we notice that the left-hand side function $\xi_i\to \xi_i\, e^{-\xi_i}$ has two branches. For a given right-hand side, there are two solutions $\xi_i$, and we are interested in the solution $\xi_i\in [0,1]$. First we notice that $\xi_i\to 0$ as $t\to 1$ because
    \begin{equation}
        \label{eqn:lower range of xi}
        \xi_i=\beta_1x_i=\beta_1x_j\,e^{h_i}\,e^{\beta_1(x_i-x_j)}\leq\beta_1\, e^{h_i}\,e^{-\beta_1(1+o(1))}\to 0,
    \end{equation}
    where we have used that $x_j$ and $x_i$ tending to 1 and 0 respectively as $t\to 1$. As $t$ and $\beta_1$ decreases, solutions $\xi_i(S^*)$ and $S^*$ to equation \eqref{eqn:attractor eqn after expand} and \eqref{eqn:fixed point} vary continuously. We argue that the solutions $\xi_i(S^*)<1$. If not, consider the first time $t^*$ such that some $\xi_i(S^*)$ approaches 1, and the rest of $\xi _k(S^*)$ are below 1. Then we have 
    \begin{equation}
        \frac{\xi_i(S^*)}{1-\xi_i(S^*)}\to  \infty;\quad 0<\frac{\xi_k(S^*)}{1-\xi_k(S^*)}<\infty,
    \end{equation}
    for $k\neq i$. And the sum of the expressions \eqref{eqn:derivatives} is infinity:
    \begin{equation}
        G'(S^*)=\sum_{i\neq j}\frac{\xi_i}{1-\xi_i}\to \infty,
    \end{equation}
    as $t\to  t^*$, contradicting \eqref{eqn:derivative less than 1} where the derivative $G'(S^*)$ must be less than 1. We have shown that the rescaled coordinates of $A_j$ have $\xi_i(S^*)\in (0,1)$.

    Considering equation \eqref{eqn:attractor eqn after expand} for the largest target probability $p_l$ while taking the log of \eqref{eqn:attractor eqn after expand}, we have 
    \begin{equation}
        \label{eqn:log attractor equation}
        \beta_1-\log \beta_1=h_l+\xi_l-\log\xi_l+S.
    \end{equation}
    Recall that $p_l$ is the largest target probability. Subtracting the $l$-th equation with the $i$-th equation, we eliminate $\beta_1$ and obtain
    \begin{equation}
        \label{eqn:attractor equation subtract}
        \xi_i-\log\xi_i=\xi_l-\log\xi_l+(h_l-h_i).
    \end{equation}
    The map $\xi_i\to \xi_i-\log\xi_i$ is rapidly decreasing on $\xi_i\in (0,1)$. Since $h_l-h_i=\log p_l/p_i$, if the $p_l\gg p_i$, then $\xi_l\gg \xi_i$. The sum $S$ in \eqref{eqn:change of variable 2} is dominated by $m$ terms $\xi_s$, where each term corresponds to a probability $p_s$ that is comparable to the largest probability $p_l$.

    At time $t_j$, the vertex $A_j$ disappears when the equation $G(S)-S=0$ is tangent to $0$ at $S=S^*$. For $t>t_j$, the curve $G(S)-S$ crosses $0$ transversally at $S=S^*$. The tangent condition is
    \begin{equation}
        \label{eqn:tangent condition}
        G'(S^*)=1,
    \end{equation}
    where $G'(S^*)$ is summing $\frac{\xi_i}{1-\xi_i}$ from \eqref{eqn:derivatives}. Since $\xi_l\gg\xi_i$, we have $G'(S^*)$ is dominated by $m$ terms $\xi_s$, each corresponds to a probability $p_s$ that is comparable to the largest probability $p_l$. We may assume all $\xi_s=\xi_l=\xi$ for some $\xi$ for all $s$ because of equation \eqref{eqn:attractor equation subtract} and $h_l-h_i=\log{p_l}/{p_s}\simeq 0$. Derivative $G'(S^*)$ equaling 1 becomes
    \begin{equation}
        m\,\frac{\xi}{1-\xi}=1\quad\Longleftrightarrow\quad \xi=\frac{1}{1+m},
    \end{equation}
    Plugging the value of $\xi_l=\xi$ back to equation \eqref{eqn:log attractor equation}, we have the critical $\beta_1^*$ when $t_j$ disappears is when
    \begin{equation}
    \label{eqn:threshold time equation}
        \beta_1^*-\log \beta_1^*=h_l+\frac{1}{1+m}+\log(1+m)+ \frac{m}{1+m}=h_l+1+\log(1+m),
    \end{equation}
    where we can solve for $\beta_1^*$ and hence $t_j$ with the Lambert $W$-function:
    \begin{equation}
        \label{eqn:lambert function}
        \beta_1^*=-W\left(-\frac{p_j}{p_l(1+m)e}\right),
    \end{equation}
    where $\beta_1^*=t_j/\sigma^2(1-t_j)^2$.
    \end{proof}
Note that Proposition \ref{prop:attractor formation times} applies to both origin- and simplex-centered priors. 

\begin{corollary}[Attractors Form in Order]
    \label{thm:larger attractors form earlier}
     Let $p_l$ be the largest target probability. For any $j,k\neq l$, if $p_j>p_k$, then $t_j<t_k$. That is, attractor $A_j$ forms earlier than attractor $A_k$.
\end{corollary}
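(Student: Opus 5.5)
The corollary follows from the closed form in Proposition \ref{prop:attractor formation times} together with two monotonicity facts. The first concerns the map $g(t):=t/(1-t)^2$. The second concerns the dependence of $\beta^*$ on the right-hand side of \eqref{eqn:attractor appearance times}.

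The plan has four steps.

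\textbf{Step 1: the critical value $\beta_j^*$.} For a fixed non-dominant index $j$, Proposition \ref{prop:attractor formation times} identifies the formation time through $g(t_j)=\sigma^2\beta_j^*$. Here $\beta_j^*$ solves
\begin{align}
\beta-\log\beta=\log\tfrac{p_l}{p_j}+1+\log(m+1)=:R_j .
\end{align}
The key point is that $m$ (the number of probabilities comparable to $p_l$) and $p_l$ do not depend on $j$. This holds because $G'(S^*)$ is dominated by the comparable-to-$p_l$ terms, not by $\xi_j$ itself. So $R_j$ depends on $j$ only through $-\log p_j$, and $p_j>p_k$ gives $R_j<R_k$.

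\textbf{Step 2: inverting $\beta-\log\beta$.} The relevant branch is the one with $\beta^*>1$. This is the large-$\beta$ regime of Lemma \ref{lemma:asymptotic formula}(ii), where attractors sit near vertices. Equivalently, in Lambert-$W$ form, $\beta^*=-W_{-1}(-p_j/(p_l(1+m)e))$. The argument lies in $(-1/e,0)$ because $p_j\le p_l$ and $m\ge 0$.

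On $(1,\infty)$ the function $h(\beta)=\beta-\log\beta$ has derivative $1-1/\beta>0$, so it is strictly increasing. Hence $R_j<R_k$ implies $\beta_j^*<\beta_k^*$. The same conclusion follows from the fact that $-W_{-1}(-y)$ is decreasing in $y\in(0,1/e)$.

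\textbf{Step 3: transferring to time.} The map $g(t)=t/(1-t)^2$ is strictly increasing on $[0,1)$, since $g'(t)=(1+t)/(1-t)^3>0$. It runs from $0$ to $\infty$ and is therefore a bijection. With $\sigma^2>0$, we get $g(t_j)=\sigma^2\beta_j^*<\sigma^2\beta_k^*=g(t_k)$, which yields $t_j<t_k$.

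\textbf{Step 4: interpreting the result.} Finally, the proof of Proposition \ref{prop:attractor formation times} shows that $A_j$ exists for $t>t_j$: the crossing of $G(S)-S$ is transversal there and tangent at $t_j$. So $t_j$ really is the formation time, and $A_j$ appears before $A_k$.

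\textbf{Main obstacle.} The step needing care is the branch choice and the $j$-independence of $m$ in Step 1. I would first argue that $\beta^*>1$ by noting that $R_j\ge 1+\log(m+1)\ge 1=h(1)$, which is the minimum value of $h$. This leaves two branch solutions, and the relevant one is the larger. That choice is justified because the construction requires $\delta=S/\beta_1$ to be small, which forces large $\beta_1$.

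If $p_j$ is itself comparable to $p_l$, then $j$ could be counted in $m$. In that case I would treat the ordering as holding within the regime of the proposition's approximation, where $m$ is fixed across the compared indices.
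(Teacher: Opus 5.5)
Your proof is correct and has the same skeleton as the paper's: the right-hand side of \eqref{eqn:attractor appearance times} decreases in $p_j$, $\beta\mapsto\beta-\log\beta$ is increasing on $\beta>1$, and $t\mapsto t/(1-t)^2$ is increasing. Where you differ is the only nontrivial step, selecting the root $\beta^*>1$.

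The paper argues from the exact vector field. At the formation time the fixed point must be degenerate, since otherwise the implicit function theorem would continue it to earlier times. So $\beta_1^*(\diag x-xx^\top)-I$ is singular, $1/\beta_1^*$ is an eigenvalue of $\diag x-xx^\top$, and therefore $1/\beta_1^*\le\max_i x_i\le1$.

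You argue instead from self-consistency of the small-$\delta$ ansatz. This is more elementary and avoids bifurcation theory, but as written it is only qualitative. Also, $R_j\ge1=h(1)$ only shows that the two roots straddle $1$; it does not select one. One line makes your version airtight. For the smaller root, $-\log\beta=R_j-\beta>R_j-1\ge\log(1+m)$, so $\beta<1/(1+m)$. Meanwhile $S^*=m\xi=m/(1+m)$. Hence $\delta=S^*/\beta>m\ge1$ and $x_j=1-\delta<0$, which is impossible.

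The paper's spectral route buys more. It bounds the true bifurcation value independently of the expansion, and the same eigenvalue bound, sharpened by a variance estimate, yields $\beta_1^*\ge2$ in Proposition \ref{prop:other attractors form late}. Your explicit identification of the branch as $W_{-1}$ is a useful precision that the paper's Lambert-$W$ formula leaves implicit.
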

\begin{proof}
    For the right-hand side of \eqref{eqn:threshold time equation}, there are two choices of $\beta^*_1$, one greater than 1 and the other less than 1. We argue that we select the solution $\beta_1^*>1$. The Jacobian of the vector field $b_t(x)$ in equation \eqref{eqn:form of time dependent flow} is
    \begin{equation}
        \label{eqn:jacobian}
        J(x)=\frac{1}{1-t}\left(J\left(\softmax\left(\log p_i+\frac{tx_i}{\sigma^2(1-t)^2}-\frac{tc_i}{\sigma^2(1-t)}\right)\right)-I\right)
    \end{equation}
    where the Jacobian of the softmax vector field can be computed and simplified using \eqref{eqn:form of time dependent flow}. 
    \begin{equation}
    \label{eqn:expand jacobian}
        J(x)=\frac{1}{1-t}(\beta_1(\diag x-xx^{\top})-I)\propto\beta_1(\diag x-xx^{\top})-I,
    \end{equation}
    where the positive coefficient $1/(1-t)$ is irrelevant. At the time $t_j$ when the $j$-th attractor appears, the coordinates of the attractor have 
    \begin{equation}
        \label{eqn:attractor appear determinant}
        \det J(x)=0.
    \end{equation}
    Consider the map $F:\mathbb R^V\times [0,1]\to \mathbb R^V$ via
    \begin{equation}
        F: (x, \beta_1)\to  b_{t(\beta_1)}(x),
    \end{equation}
    where $t(\beta_1)$ expresses $t$ in terms of $\beta_1=\frac{t}{\sigma^2(1-t)^2}$. The Jacobian is the $x$-derivative of the map $F(x,\beta_1)$. If at $\beta_1^*$ and the attractor $x(\beta_1^*)$, the determinant
    \begin{equation}
        \label{eqn:inverse function thm condit}
        \det J(x)=\det D_xF(x,\beta_1)\neq0,
    \end{equation}
    then by Inverse Function Theorem, there exists a neighbourhood $B$ around $\beta_1^*$ and an neighbourhood $U$ around $x^*$ such that $x(\beta_1):B\to  U$ is a differentiable and satisfies
    \begin{equation}
        \label{eqn:inverse function thm statement}
        F(x(\beta_1),\beta_1)=0,
    \end{equation}
    for $\beta_1\in B$. In other words, $x(\beta_1)$ is a family of fixed points of $b_{t(\beta_1)}(x)$. There is a lower time $\beta'_1=\inf B$ that is smaller than the threshold $\beta_1^*$, contradicting the assumption that $\beta_1^*$ corresponds to the smallest time for attractor $A_j$ to exist. Therefore $\det J(x)=0$ at the threshold $\beta_1^*$ for attractor $A_j$.

    Write $M=\diag x-xx^{\top}$, then according to equation \eqref{eqn:expand jacobian}, the matrix $\beta_1M-I$ is singular at $\beta_1^*$ and attractor $x=x(\beta_1^*)$. Therefore, $1/\beta_1^*$ is an eigenvalue of the matrix $M$. It is well-known that the largest eigenvalue of a matrix of the form $M$ is upper-bounded by $\max_i x_i$, and we have
    \begin{equation}
        \label{eqn:bound of beta}
        \frac{1}{\beta_1^*}\leq\max_i x_i\leq 1.
    \end{equation}
    We have shown that $\beta_1^*\geq1$. In this range, the mapping $\beta_1\to \beta_1-\log\beta_1$ is increasing. In equation \eqref{eqn:attractor appearance times}, a larger $p_j$ corresponds to a smaller $\beta_1^*$, and thus a smaller appearance time $t_j$.
\end{proof}
\begin{proposition}[Most Attractors Form Late]
    \label{prop:other attractors form late}
    Let the prior be standard Gaussian $\mathcal N(c,\mathbb{I})$ for $c$ subject to Assumption \ref{assu:time dependent flow} (i). Except the dominant attractor $A_l$, no other attractors form earlier than $t=\frac{1}{2}$. 
\end{proposition}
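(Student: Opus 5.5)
The plan is to use the lower bound $\beta_1^*\ge 1$ from the proof of Corollary~\ref{thm:larger attractors form earlier}, with $\sigma^2=1$. Recall $\beta_1=t/(\sigma^2(1-t)^2)$. In that proof we showed that at the formation threshold of any non-dominant attractor $A_j$, the Jacobian $\beta_1 M-I$ with $M=\diag x-xx^\top$ is singular. Hence $1/\beta_1^*$ is an eigenvalue of $M$. Since the spectrum of $M$ is bounded by $\max_i x_i\le 1$, this forces $\beta_1^*\ge 1$.

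With $\sigma^2=1$, the condition becomes $t_j/(1-t_j)^2\ge 1$. The map $t\mapsto t/(1-t)^2$ is strictly increasing on $[0,1)$, since its derivative is $(1+t)/(1-t)^3>0$. So $t_j$ is bounded below by the root of $t=(1-t)^2$, namely $t^2-3t+1=0$, giving $t_0=(3-\sqrt5)/2\approx 0.382$.

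This alone does not reach $\tfrac12$, so the bound on the eigenvalue must be sharpened. At $t=\tfrac12$, $\beta_1=2$, so I need to show that the largest eigenvalue of $M$ at the candidate attractor is at most $\tfrac12$.

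\textbf{Sharpened eigenvalue bound.} $M$ is the covariance matrix of a one-hot vector drawn from the distribution $x$. For any unit vector $v$,
\[
v^\top M v=\var_x(v)\le \max_i x_i\,(1-\max_i x_i)+\cdots .
\]
A cleaner route is the standard bound $\lambda_{\max}(\diag x-xx^\top)\le\max_{i\ne k}(x_i+x_k)/2$, or the even simpler $\lambda_{\max}\le \tfrac12$ for any probability vector. The latter follows because $\var(v_I)\le(\max v-\min v)^2/4$ by Popoviciu's inequality, and $(\max v-\min v)^2\le 2\|v\|^2$. Hence $\lambda_{\max}(M)\le \tfrac12$ universally, which gives $\beta_1^*\ge 2$. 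Then $t_j/(1-t_j)^2\ge 2$, and monotonicity gives $t_j\ge \tfrac12$, since $t=\tfrac12$ gives exactly $\tfrac{1/2}{1/4}=2$. This matches the claim exactly, so it is the intended route.

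\textbf{Steps.}
\begin{enumerate}
\item Recall from the proof of Corollary~\ref{thm:larger attractors form earlier} that at formation $\det(\beta_1^* M-I)=0$. The prior center $c$ drops out by Assumption~\ref{assu:time dependent flow}(i), as in \eqref{eqn:attractor eqn indpt c}.
\item Bound $\lambda_{\max}(M)\le \tfrac12$ via the variance interpretation and Popoviciu's inequality.
\item Conclude $\beta_1^*\ge 2$ and invert the increasing map $t\mapsto t/(1-t)^2$ with $\sigma=1$.
\item Note that the dominant attractor $A_l$ is excluded because, by Proposition~\ref{prop:largest attractor}, it exists from $t=0$ without a bifurcation.
\end{enumerate}

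\textbf{Main obstacle.} The delicate point is justifying that non-dominant attractors appear only through a singular Jacobian, i.e.\ a saddle-node bifurcation, rather than entering from infinity or off the simplex. I would rely on the implicit-function argument already given in Corollary~\ref{thm:larger attractors form earlier}, together with the lemma that all attractors lie on the simplex, which keeps them in a compact set.
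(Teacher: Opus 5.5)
Your proposal is correct and is essentially the paper's proof. Both arguments reuse the singular-Jacobian condition $\det(\beta_1^*M-I)=0$ from Corollary~\ref{thm:larger attractors form earlier} and sharpen the eigenvalue bound to $\lambda_{\max}(M)=\max_{\|v\|=1}\var_x(v)\le \tfrac12$; the paper proves your Popoviciu step directly via the midpoint $c=(a+b)/2$ of the extreme entries $a,b$ of $v$, then uses $(a-b)^2/4\le (a^2+b^2)/2\le \|v\|^2/2$, and your explicit monotonicity of $t\mapsto t/(1-t)^2$ makes precise the final inversion $\beta_1^*\ge 2\Rightarrow t_j\ge\tfrac12$ that the paper leaves implicit.
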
 
\begin{proof}
    We show that $t_j\geq\frac{1}{2}$ for all $j\neq l$ by a closer look at equation \eqref{eqn:bound of beta}. The eigenvalue $1/\beta_1^*$ of $M=\diag x-xx^{\top}$ is smaller than the largest eigenvalue $\lambda_{\max}$ which we can write as
    \begin{equation}
        \label{eqn:largest eigenvalue}
        \frac{1}{\beta_1^*}\leq\lambda_{\max}=\max_{\|v\|=1}v^{\top}Mv=\max_{\|v\|=1}\var_x(v).
    \end{equation}
    Where $x\in \Delta^{V-1}$ is a distribution and $\var_x(v)$ takes the variance of $v$ over $x$. Let $a=\max_ix_i$, $b=\min_ix_i$ and $c=(a+b)/2$ be the midpoint. By the bias-variance decomposition
    \begin{equation}
        \mathbb E_x[(v-c)^2]=\var_x(v)+(\mathbb E_x[v]-c)^2,
    \end{equation}
    and the variance is bounded by
    \begin{equation}
    \label{eqn:bound variance}
    \var_x(v)
    \leq \mathbb{E}_x\!\left[(v-c)^2\right]
    = \sum_i x_i(v_i-c)^2
    \leq \sum_i x_i\left(\frac{a-b}{2}\right)^2
    \leq \frac{a^2+b^2}{2}
    \leq \frac{\lVert v\rVert^2}{2}
    = \frac{1}{2}.
    \end{equation}
    Combining equation \eqref{eqn:bound variance} and \eqref{eqn:largest eigenvalue} gives $\beta_1^*\geq 2$. When the prior has variance $\sigma^2=1$, we have $t_j\geq1/2$.
\end{proof}

Proposition \ref{prop:attractor formation times} and Corollary \ref{thm:larger attractors form earlier} show that for target distribution $(p_j)_{j=1,\dots,V}$, there are $V$ attractors, one corresponding to each vertex. The larger the probability $p_j$ for vertex $j$, the earlier the attractor $A_j$ forms. From time $t=0$ to 1, attractors form one by one in the interior of the simplex. By Proposition \ref{prop:largest attractor}, the first formed attractor corresponds to the vertex/token of the highest probability $p_j$. Once an attractor $A_j$ forms, it moves towards vertex $j$ according to Lemma \ref{lemma:asymptotic formula} (i) and (ii). A trajectory that samples a low-probability vertex is pulled away at early times $t$ by the attractors that form earlier, which correspond to high-probability vertices/tokens. 

Figure \ref{fig:basins} illustrates this on a four-vertex example: the second attractor appears only after $t=0.5$ despite its probability $0.38$ being nearly equal to the dominant $0.4$, so the dominant basin is the only one present for the first half of the flow. Figure \ref{fig:empirical} confirms the picture at scale: the predicted appearance times \eqref{eqn:attractor appearance times} match simulation up to $V=100$ (Figure \ref{fig:empirical}A), no non-dominant attractor forms before $t=0.5$ in any simulation up to $V=5000$ (Figure \ref{fig:empirical}B), and among non-dominant attractors, larger $p_j$ form earlier (Figure \ref{fig:empirical}C).

\begin{figure}[h!]
    \centering
    \includegraphics[width=1.0\linewidth]{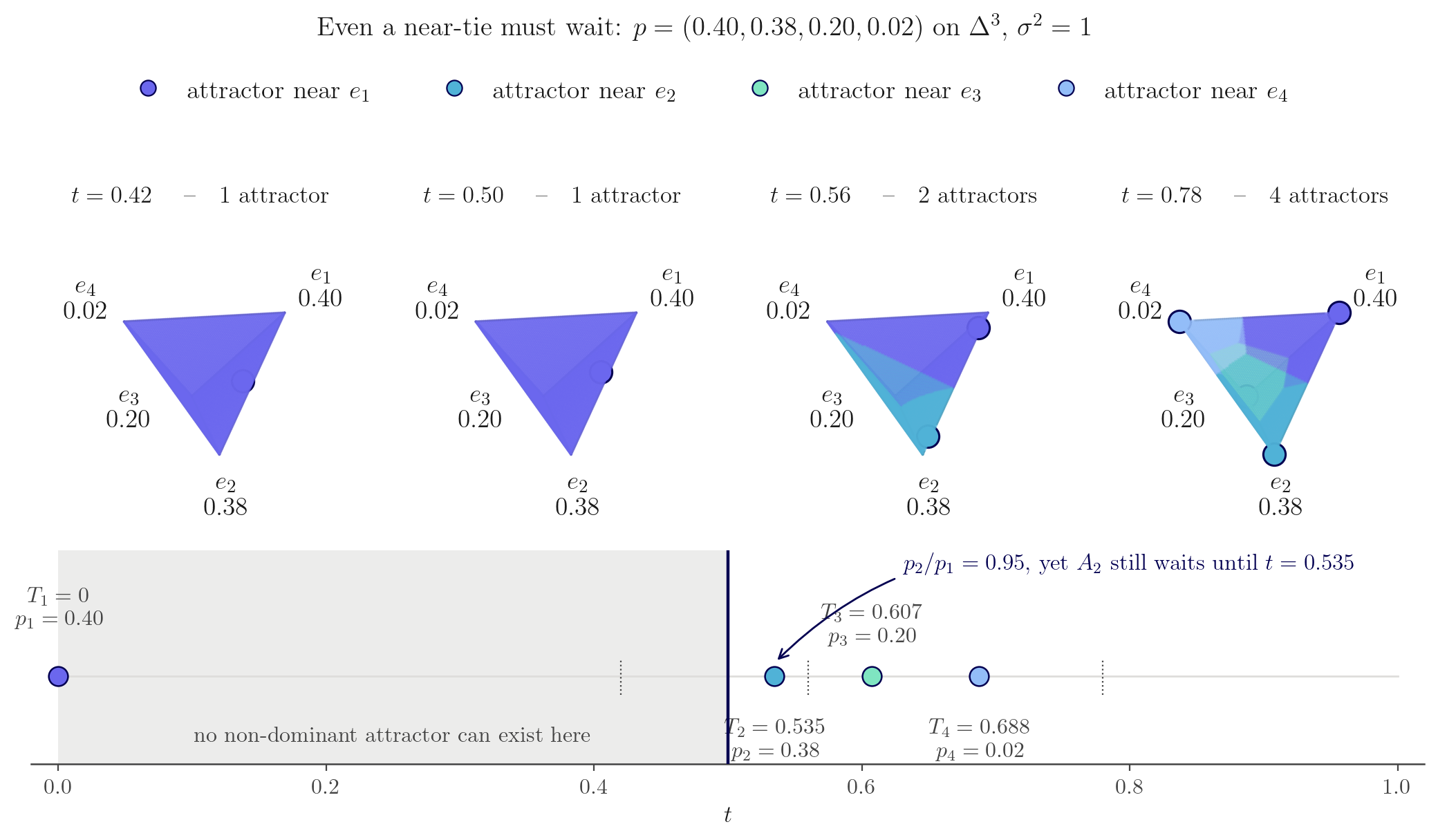}
    \caption{For target probability $p=(0.4,0.38, 0.2, 0.02)$, the attractor corresponding to the vertex with probability 0.38 forms later than $t=0.5$, even though the probability 0.38 is only slightly smaller than the largest probability.}
    \label{fig:basins}
\end{figure}

\begin{figure}[h!]
    \centering
    \includegraphics[width=1.0\linewidth]{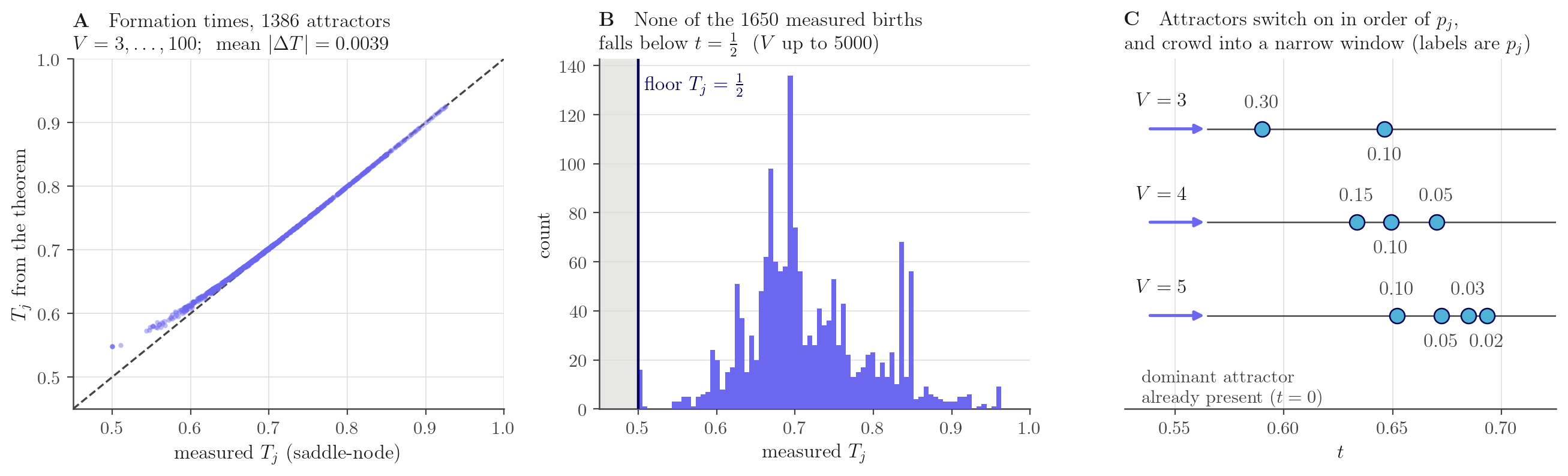}
    \caption{\textbf{(A)} Formula \eqref{eqn:attractor appearance times} predicts the attractor appearance times $t_j$ agrees with simulations. The attractors are simulated up to dimension 100. \textbf{(B)} No simulated non-dominant attractors up to dimension 5000 form earlier than $t=0.5$. \textbf{(C)} Non-dominant attractors form earlier for larger probabilities $p_j$.}
    \label{fig:empirical}
\end{figure}

\subsubsection{Continuity of the Autonomous Velocity}

\begin{proposition}[Continuity of the Autonomous Velocity Field]
    \label{prop:continuity at the vertices}
    Let $\mu_0$ be a prior on $\mathbb R^V$ that decays exponentially towards infinity. Let the interpolant be $I_t=\alpha_tx_0+\beta_tx_1$. Then the autonomous velocity $b(x)$ in equation \eqref{eqn:defn auto velocity} is continuous at the vertices when $\dot \alpha_t=0$ at $t=1$. That is $b(x)\to 0$ as $x\to  e_i$ for any vertex $e_i$.
\end{proposition}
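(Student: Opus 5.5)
Using Proposition \ref{prop:time-average}, I will write $b(x)$ as the occupation-weighted average of $b_t(x)$, and control numerator and denominator separately near a vertex. With the endpoint form $b_t(x)=\mathbb{E}[\dot I_t\mid I_t=x]=\dot\alpha_t\,\eta_0(t,x)+\dot\beta_t\,\eta_1(t,x)$ and $\beta_t=1-\alpha_t$, we get $b_t(x)=\dot\alpha_t\big(\eta_0(t,x)-\eta_1(t,x)\big)$. Hence
\begin{align}
    |b(x)|\leq \frac{\int_0^1|\dot\alpha_t|\,\mathbb{E}\big[|x_0-x_1|\,\big|\,I_t=x\big]\mu_t(x)\,dt}{\int_0^1\mu_t(x)\,dt}.
\end{align}
So $b(x)$ is a weighted average over $t$ of $|\dot\alpha_t|$ times a conditional displacement. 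The goal is to show that, as $x\to e_i$, the posterior weight $q_x(t)=\mu_t(x)/\nu(x)$ concentrates near $t=1$. There $|\dot\alpha_t|\to0$ by the self-stopping assumption, while the conditional displacement stays bounded.

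\textbf{Step 1: the denominator.} Near $e_i$, the interpolant passing through $x$ at late times mostly uses $x_1=e_i$ with $x_0=(x-\beta_te_i)/\alpha_t$. Its contribution to the density is
\begin{align}
    p_i\,\alpha_t^{-V}\mu_0\big((x-e_i)/\alpha_t+e_i\big).
\end{align}
Put $r=|x-e_i|$ and choose $t$ with $\alpha_t\approx r$. Then the argument of $\mu_0$ is $O(1)$, so this piece is $\gtrsim r^{-V}$ on a $t$-interval whose length is comparable to $\{t:\alpha_t\in[r,2r]\}$. This gives a lower bound $\nu(x)\gtrsim r^{-V}\,|\{t:\alpha_t\in[r,2r]\}|$.

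\textbf{Step 2: the numerator, split in $t$.} Split the integral at $t_\delta$, where $\alpha_{t_\delta}=\delta$.
\begin{itemize}
    \item \emph{Region $t\leq t_\delta$.} Here $\alpha_t\geq\delta$, so every $\mu_t(x)$ is bounded by $\delta^{-V}\sup\mu_0$ (summing over vertices). For $x$ near $e_i$ this is negligible against the $r^{-V}$ lower bound as $r\to0$.
    \item \emph{Region $t>t_\delta$.} Here $|\dot\alpha_t|\leq\epsilon(\delta)\to0$. The conditional term $\mathbb{E}[|x_0-x_1|\mid I_t=x]\,\mu_t(x)$ must be bounded by $C\mu_t(x)$ plus exponentially small terms. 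Posterior mass on $x_1=e_j$ with $j\neq i$ needs $|x_0|\sim1/\alpha_t$, which is exponentially suppressed by the decay of $\mu_0$. For $x_1=e_i$, the displacement $|x_0-e_i|$ has bounded posterior moments under $\mu_0$, again by exponential decay.
\end{itemize}

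\textbf{Conclusion.} Combining the two regions gives $\limsup_{x\to e_i}|b(x)|\leq C\epsilon(\delta)$ for every $\delta$, hence $b(x)\to0$. The value $b(e_i)=0$ follows the same way, or from Theorem \ref{thm:convergence-simplex}.

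\textbf{Main obstacle.} I expect the hardest part to be Step 1 together with the exponentially-small-term bookkeeping in Step 2. One has to show that early times and wrong-vertex contributions are uniformly dominated by the $\alpha_t\approx r$ window, including the Jacobian factor $\alpha_t^{-V}$ and the change of variables $dt=d\alpha/|\dot\alpha_t|$.

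That change of variables is delicate: since $\dot\alpha_t\to0$, the weight $|\dot\alpha_t|\,dt=d\alpha$ is exactly what appears in the numerator. The cleanest route is therefore to rewrite both integrals in the variable $\alpha$. The numerator becomes $\int d\alpha\,(\cdot)$, while the denominator is $\int d\alpha/|\dot\alpha|\,(\cdot)$. The ratio then tends to $0$ because $1/|\dot\alpha|\to\infty$ as $\alpha\to0$. I would try this reparametrization first, computing both integrals explicitly for the dominant $x_1=e_i$ term and bounding the rest crudely.
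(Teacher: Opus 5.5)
The argument you actually write out breaks in the late-time bullet of Step 2. Given $I_t=x$ and $x_1=e_i$, the noise is determined, $x_0=e_i+(x-e_i)/\alpha_t$, so $|x_0-x_1|=r/\alpha_t$ exactly. At fixed $t$ there is no posterior spread in $x_0$ for the decay of $\mu_0$ to act on, and this displacement is unbounded as $t\to1$. The pointwise bound $\mathbb{E}[|x_0-x_1|\mid I_t=x]\,\mu_t(x)\le C\mu_t(x)+(\text{small})$ is therefore false.

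Decay of $\mu_0$ helps only after integrating in $t$. With $v=r/\alpha_t$, the $e_i$-part of the time posterior has density in $v$ proportional to $v^{V-2}\mu_0(e_i+v\hat\epsilon_i)/|\dot\alpha_{t(v)}|$. Whether $v$ has a mean bounded uniformly as $r\to0$ then depends on how $1/|\dot\alpha|$ varies over $\alpha\in(0,r]$. For $\alpha_t=(1-t)^a$ the extra factor is $\propto v^{(a-1)/a}$ and the mean stays bounded. The hypothesis $\dot\alpha_1=0$ alone does not give this, however. A schedule whose $|\dot\alpha|$ dips sharply on short $\alpha$-intervals can push the mean to infinity along a sequence $r_k\to0$, and then $\limsup|b|\le C\epsilon(\delta)$ says nothing. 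The real issue is the split: bounding $|\dot\alpha_t|$ by $\epsilon(\delta)$ separately from the displacement throws away the identity $|\dot\alpha_t|\,dt=|d\alpha|$.

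Your last paragraph names the right mechanism, and it is exactly the paper's proof. Use the per-vertex formulas of Proposition~\ref{prop:occupancy-current} with $v=r/\alpha_t$. The numerator of your first display is $\sum_j\|J^{(j)}(x)\|$, where $J^{(i)}(x)=-p_ir^{1-V}\hat\epsilon_i\int_r^\infty v^{V-1}\mu_0(e_i+v\hat\epsilon_i)\,dv$ contains no $\dot\alpha$ at all. Hence $\|J^{(i)}(x)\|\le p_iMr^{1-V}$ with $M=\sup_{|\omega|=1}\int_0^\infty v^{V-1}\mu_0(e_i+v\omega)\,dv<\infty$. For $j\ne i$, $\|J^{(j)}(x)\|\le M_j$ uniformly, by the exponential suppression you already noted. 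The occupancy satisfies $\nu^{(i)}(x)=p_ir^{1-V}\int_r^\infty v^{V-2}\mu_0(e_i+v\hat\epsilon_i)\,dv/|\dot\alpha_{t(v)}|\ge p_ic_0r^{1-V}/h(r)$, where $h(r)=\sup\{|\dot\alpha_t|:\alpha_t\le r\}$ and $c_0=\inf_{|\omega|=1}\int_1^\infty v^{V-2}\mu_0(e_i+v\omega)\,dv>0$. Together these give $\|b(x)\|\le h(r)\,(p_iM+r^{V-1}\sum_{j\ne i}M_j)/(p_ic_0)\to0$. All the smallness comes from the denominator, so no $t_\delta$ split and no bounded-displacement claim is needed.

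Your Step 1 can do the same job once made quantitative. Since $\alpha$ drops from $2r$ to $r$ at speed at most $h(2r)$, $|\{t:\alpha_t\in[r,2r]\}|\ge r/h(2r)$, so $\nu(x)\gtrsim r^{1-V}/h(2r)$. Against the numerator bound $\lesssim r^{1-V}$, this gives $\|b(x)\|=O(h(2r))$.
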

\begin{proof}
    We will use Proposition \ref{prop:occupancy-current} to show that for $x\to  e_i$ for vertex $e_i$, the autonomous velocity is bounded by
    \begin{equation}
        \label{eqn:bounds of autonomous field}
        \|b(x)\|\leq\frac{\|J^{(i)}(x)\|+\|\sum_{j\neq i} J^{(j)}(x)\|}{|\nu^{(i)}(x)|}\to  0.
    \end{equation}
    Let $x\in B_r(e_i)$ the ball of radius $r$ around $e_i$ where we take $r\to 0$. Let $\epsilon_i=x-e_i$ and $\hat\epsilon_i$ be the normalized $\epsilon$. The length scale in the integral formula \eqref{eqn:per vertex current} and \eqref{eq:simplex-drift} is 
    \begin{equation}
        v=\frac{r}{\alpha_t}\to  \infty,
    \end{equation}as $t\to 1$. The prior $\mu_0$ is evaluated at far-field $e_i+v\,\hat\epsilon_i$. Rewrite the integrals $J^{(i)}(x)$ and $\nu^{i}(x)$ in terms of the new variables $v$ and $r$ we have
    \begin{align}
        J^{(i)}(x)&=-p_i\, r^{-(V-1)}\hat\epsilon_i\int_r^{\infty}v^{V-1}\mu_0(e_i+v\,\hat{\epsilon_i})\,dv;\\
        \nu^{(i)}(x)&=p_i\,r^{-(V-1)}\int_r^{\infty}\frac{v^{V-2}\mu_0(e_i+v\,\hat\epsilon_i)}{-\dot\alpha_{t(v)}}\,dv,
    \end{align}
    where $t(v)$ expresses time $t$ in terms of $v$. Since $\mu_0$ decays exponentially towards infinity, there exists an exponential decay that bounds $\mu_0(x)<Ce^{-\lambda x}$ for some $C,\lambda>0$ for all $x$. Up to relaxing $v_0$ to a looser bound $v_1$, the integral in $J^{(i)}(x)$ is upper bounded by 
    \begin{equation}
        \label{eqn:bound of ji}M=\sup_{\|\hat\epsilon_i\|=1}\int_0^{\infty}v^{V-1}\mu_0(e_i+v\,
        \hat\epsilon_i)\,dv<\sup_{\|\hat\epsilon_i\|=1}\left\{\int_0^{v_1}v^{V-1}\mu_0(e_i+v\,\hat\epsilon_i)\,dv\right\}+\int_{v_1}^{\infty}Cv^{V-1}e^{-\lambda v}\,dv<\infty.
    \end{equation}
    and $J^{(i)}(x)$ is upper bounded by $r^{V-1}\|J^{(i)}(x)\|\leq p_iM$. Now we show that for $j\neq i$, the current $J^{(j)}(x)$ is independent of $r$, the distance to the vertex $e_i$ which $x$ is approaching. Since $x$ is approaching $e_i$, the distance from $x$ to vertex $e_j$ is lower bounded by some $l>0$. Bound $\mu_0(e_j+\frac{\epsilon_i}{\alpha_t})\leq Ce^\lambda e^{-\lambda l/\alpha_t}$ and let $L$ be the upper bound of $|\dot\alpha_t|<L$.
    \begin{equation}
    \label{eqn:bound of jj}
        \|J^{(j)}(x)\|\leq p_j L\, \int_0^1\alpha_t^{-(V-1)}Ce^{\lambda}e^{-\lambda l/\alpha_t}\,dt=M_j<\infty,
    \end{equation}
    where $M_j$ is independent of $x$.

    Now we lower bound the denominator of equation \eqref{eqn:bounds of autonomous field}, $\nu^{(i)}(x)$. Define function $h(\delta)$ such that 
    \begin{equation}
        \label{eqn:defn of bound quantity}
        h(\delta):=\sup\{-\dot\alpha_t\,|\,\, \alpha_t\leq\delta\}.
    \end{equation}
    As $\delta\to 0$, we have $h(\delta)\to -\dot\alpha_1$, the derivative of $\alpha_t$ evaluated at $t=1$. The denominator $\nu^{(i)}(x)$ is bounded by 
    \begin{equation}
        \label{eqn:bound of nu}
        r^{V-1}\nu^{(i)}(x)\geq\frac{p_i}{h(r)}\int_1^{\infty}v^{V-2}\mu_0(e_i+v\,\hat\epsilon_i)\, dv\geq \frac{p_i\,c_0}{h(r)},
    \end{equation}
    where $c_0$ is defined as 
    \begin{equation}
        \label{eqn:inf of coefficient}
        c_0=\inf_{\|\hat\epsilon_i\|=1}\int_1^{\infty}v^{V-2}\mu_0(e_i+v\,\hat\epsilon_i)\,dv.
    \end{equation}
    The infimum is taken for positive values over a compact set $\{\hat{\epsilon_i}\,|\,\|\hat{\epsilon_i}\|=1\}$ and thus exists and is positive. Combining equation \eqref{eqn:bound of ji}, \eqref{eqn:bound of jj} and \eqref{eqn:bound of nu}, we have the final bound of the autonomous velocity
    \begin{equation}
        \|b(x)\|\leq\frac{p_iMr^{-(V-1)}+\sum_{j\neq i}M_j}{p_ic_0r^{-(V-1)}/h(r)}=\frac{h(r)}{p_ic_0}\,\left(p_iM+\sum_{j\neq i}M_jr^{V-1}\right)\to 0,
    \end{equation}
    when $h(r)\to 0$ as $r\to 0$ which happens when $\dot\alpha_t\to 0$ as $t\to 1$.
\end{proof}

\subsection{Deriving the Phase Transition and Commitment Time}
\label{app:phase-transition-proof}
\label{app:optimal-t}
\restate{\optimalt}

\begin{proof} We want to determine the optimal $t_{\text{anchor}}$ for the anchor loss:
\begin{align}
    \mathcal{L}_{\text{anchor}}:=\sum_{\ell}\boldsymbol{1}[t>t_{\text{anchor}}]\text{CE}(T(I_t)_\ell, x_{1, \ell})
\end{align}
which preserves the diversity of generations. This means finding the smallest $t_{\text{anchor}}$ such that $[t_{\text{anchor}}, 1]$ is after the phase transition has occurred. 

\textbf{Bayes Posterior.} Since the transport map is $T_\theta(\cdot)=\text{softmax}(f_\theta(\cdot))$ acting on an intermediate state $x=I_t\in \mathbb{R}^d$, independent of time, the minimizer of $\mathcal{L}_{\text{anchor}}$ given $x$ is the Bayes posterior:
\begin{align}
    T^\star(x)=\mathbb{E}[x_1|I_t=x, t\geq t_{\text{anchor}}]
\end{align}
This raises the question: \textit{for which $t_{\text{anchor}}$ is the interpolant at $I_t$ committed to the target $x_1$?} This can also be thought of as identifying the phase transition time in the autonomous flow.

\textbf{Gibbs measure definition.}
Given $x_1=e_k$ and $x_0=z\sim \mathcal{N}(0, \sigma^2I_V)$ with noise scale $\sigma$, the interpolant is given by:
\begin{align}
    x=I_t=(1-\alpha_t)e_k+\alpha_tz, \quad x|j=k\sim \mathcal{N}((1-\alpha_t)e_j, \alpha_t^2\sigma^2I_V)
\end{align}
By Bayes' rule:
\begin{align}
    p(j=k| x)&=\frac{\pi_jp(x|j=k)}{\sum_{l}\pi_lp(x|j=l)}\\
    p(x|j=k)&=\mathcal{N}((1-\alpha_t)e_j, \alpha_t^2\sigma^2I_V)=(2\pi\alpha^2\sigma^2)^{-V/2}\exp \left(-\frac{\|x-(1-\alpha)e_j\|^2}{2\alpha ^2\sigma^2}\right)
\end{align}
Since $(2\pi\alpha^2\sigma^2)^{-V/2}$ doesn't depend on $j$, it cancels in $ p(j=k| x)$. Given $\langle x, e_j\rangle =x_j$ and $\|e_j\|^2$, we can expand the squared distance as: 
\begin{align}
    \|x-(1-\alpha)e_j\|^2=\|x\|^2- 2(1-\alpha)x_j +(1-\alpha) ^2
\end{align}
The $j$-independent terms cancel, and $p(x|j=k)$ reduces to: 
\begin{align}
    p(x|j=k)\propto \exp \left(\frac{(1-\alpha)x_j}{\alpha^2 \sigma^2}\right)=:e^{\beta(t) x_j}, \quad \beta(t):=\frac{1-\alpha}{\alpha^2\sigma^2}
\end{align}
Substituting into the posterior gives:
\begin{align}
    p(j=k| x)&=\frac{\pi_je^{\beta(t)x_j}}{\sum_{l}\pi_le^{\beta(t)x_l}}\label{eq:j=k}
\end{align}
which is the Gibbs or Boltzmann distribution over vocabulary $V$ at inverse temperature $\beta(t)$, tilted by the prior distribution $\pi$, which we assume to be uniform $\pi_j=1/V$ for simplicity. As $t\to 1$ and $\alpha\to 0$, we have $\beta\to \infty$ and the posterior converges to the Dirac delta on the vertex $p(\cdot|x)\to \delta_k$. As $t\to 0$ and $\alpha\to 1$, we have $\beta\to 0$ and the posterior converges to the prior distribution $p(\cdot| x)\to \pi$. 

\textbf{Signal-to-Noise Ratio.}
The interpolant is split into a data component and a noise component: 
\begin{align}
    I_t=\underbrace{(1-\alpha)e_k}_{\text{signal}}+\underbrace{\alpha z}_{\text{noise}}, \quad \alpha=\alpha_t=(1-t)^a, z\sim \mathcal{N}(0,\sigma^2I_V)
\end{align}
where the signal is on the coordinate $k$ scaled with amplitude $(1-\alpha)$ and the noise spreads over all $V$ coordinates with standard deviation $\alpha\sigma$. So, the signal-to-noise ratio (SNR) per coordinate is:
\begin{align}
    \rho:=\rho(t)=\frac{\text{signal amplitude}}{\text{noise std}}=\frac{1-\alpha}{\alpha\sigma}
\end{align}
which can be interpreted as how much larger the signal is above the noise floor at time $t$. For $t\to 1$ near the data, $\rho\to \infty$ and the target token is obvious, and for $t\to 0$ near initial noise, $\rho \to 0$ and the target is unknown. The SNR $\rho$ is monotonically increasing in $t$, and $\sigma$ and $\alpha$ determine how fast it increases.

\textbf{Per-Particle Energies.}
Since \eqref{eq:j=k} is a Boltzmann distribution, we can define the energy $U_j$ for the vertex $j$ as:
\begin{align}
    U_j:=\beta(t)x_j=\beta(t)(1-\alpha)\boldsymbol{1}[j=k]+\beta(t)\alpha z_j
\end{align}
With SNR $\rho=(1-\alpha)/\alpha\sigma$ and $\beta(t)=(1-\alpha)/\alpha^2\sigma^2$, we can expand the energy as:
\begin{align}
    U_j&=\frac{(1-\alpha)^2}{\alpha^2\sigma^2}\boldsymbol{1}[j=k]+\frac{1-\alpha}{\alpha\sigma^2} z_j\nonumber\\
    &=\underbrace{\left(\frac{1-\alpha}{\alpha\sigma}\right)^2}_{=\rho^2}\boldsymbol{1}[j=k]+\underbrace{\frac{1-\alpha}{\alpha\sigma^2} z_j}_{=:h_j}
\end{align}
Since $z_j \sim \mathcal{N}(0,\sigma^2)$, the variance of $h_j$ is:
\begin{align}
    \text{Var}(h_j)=\left(\frac{1-\alpha}{\alpha\sigma^2}\right)^2\sigma^2=\left(\frac{1-\alpha}{\alpha\sigma}\right)^2=\rho^2
\end{align}
So the energy of the posterior only depends on the SNR $\rho$:
\begin{align}
    U_j&=\rho^2\boldsymbol{1}[j=k]+h_j, \quad h_j\sim \mathcal{N}(0,\rho^2)
\end{align}

\textbf{Connection to Random Energy Model (REM).}
This has a nice connection with the Random Energy Model (REM) \citep{derrida1980random} in statistical physics, which describes a system with $M$ states with i.i.d. random energies and partition function (normalization constant) $Z=\sum_ie^{-\beta_{\text{phys}}\mathcal{E}_i}$. The normalization of our posterior $p(j=k|x)$ is:
\begin{align}
    Z=\sum_{j=1}^Ve^{U_j}=\underbrace{e^{\rho^2+h_k}}_{\text{target signal}}+\underbrace{\sum_{j\neq k}e^{h_j}}_{=:S}
\end{align}
where the sum of all non-target vertices $S$ is the REM partition function with $M=V-1$ states, i.i.d. Gaussian energies $h_j\sim \mathcal{N}(0, \rho^2)$ at physical temperature $\beta_{\text{phys}}=1$. Then, the probability of recovering $j=k$ from $x$ can be interpreted as whether the energy of state $k$ outweighs the competing energies $j\neq k$.

\textbf{Phase Transition Point. }
Now, we want to find the point where the target energy dominates. For $M=V-1$ states with i.i.d. $\mathcal{N}(0,\rho^2)$ energies, we can count how many exceed a level $\bar h$:
\begin{align}
    \mathbb{E}[\#\{j:h_j\geq \bar h\}]=M\text{Pr}(h_1>\bar h)\approx M\left(\frac{\rho}{\bar h\sqrt{2\pi}}\exp\left(\frac{-\bar h^2}{2\rho^2}\right)\right)\approx \exp\left(\log M-\frac{\bar h^2}{2\rho^2}\right)
\end{align}
The maximum non-target energy occurs when the expected count is equal to 1, i.e., $\log M=\frac{\bar h^2}{2\rho^2}\implies h^\star=\rho \sqrt{2\log M}$. Given $M=V-1$, $\log M\approx \log V$ and: 
\begin{align}
    \max_j h_j\approx \rho\sqrt{2\log M}\approx \rho\sqrt{2\log V}
\end{align}
Since the target coordinate has energy $\rho^2$, recovery of the target occurs when $\rho^2$ surpasses the largest non-target energy $\rho\sqrt{2\log V}$. The critical point is when $\rho^2=\rho\sqrt{2\log V}$, which for $\rho> 0$ yields:
\begin{align}
    \rho=\sqrt{2\log V}
\end{align}
We can define:
\begin{align}
    \theta:=\frac{\rho}{\sqrt{2\log V}}
\end{align}
so that the critical point occurs when $\theta=1$. 

\textbf{Argmax Recovery. }
Taking the argmax coordinate $\hat k=\arg\max_jp(k=j|x)=\arg\max_jU_j$ also leads to the critical point. The argmax coordinate is correct (i.e., $\hat k=k$) iff the energy of the target state $k$ dominates every non-target state:
\begin{align}
    U_k> \max_{j \neq k }U_j\iff \rho^2+h_k>\max_{j\neq k}h_j
\end{align}
where $h_k\sim \mathcal{N}(0, \rho^2)$ is negligible given $\rho^2$. Then, we can count the number of non-targets that dominate the target as:
\begin{align}
    \mathbb{E}[\#\{j\neq k : h_j > \rho^2\}]\approx Ve^{-\rho^2/2}=\exp((1-\theta^2) \log V)=V^{1-\theta^2}
\end{align}
which recovers the same critical point:
\begin{enumerate}
    \item[(i)] $\theta>1$: $V^{1-\theta^2}\to 0$ and none of the non-target states dominate the target, and the argmax is correct $\hat k=k$.
    \item[(ii)] $\theta<1$: $V^{1-\theta^2}\to \infty$ and many non-targets dominate the target and $\hat k$ is a yields a random coordinate. 
\end{enumerate}
The critical point occurs when $V^{1-\theta^2}=1\iff \theta=1$ and the target signal equals the maximum non-target $\rho=\sqrt{2\log V}$.

\textbf{Sharpness of Phase Transition. }
The maximum of $V$ Gaussians concentrates at $\max_{j \neq k}h_j =\rho\sqrt{2\log V}+ \mathcal{O}(\rho/\sqrt{\log V})$ with Gumbel fluctuations. Writing $\rho =\sqrt{2\log V}(1+\delta)$ and $\theta=1+\delta$, the exponent becomes $1-\theta^2\approx -2\delta$ and the probability of error is approximately: 
\begin{align}
    \text{Pr}(\text{error})\approx V^{-2\delta}=e^{-2\delta \log V}
\end{align}
which is near 0 just above zero (i.e., $\delta >0$) and huge just below $\delta < 0$. The phase transition occurs within the window when $2\delta \log V=\mathcal{O}(1)$ of width $\delta =\mathcal{O}(1/\log V)$ which goes to zero as $V\to \infty$. Therefore, the probability curve sharpens to a step function as $V\to \infty$: 
\begin{align}
    \text{Pr}(\hat k=k)\to \boldsymbol{1}[\rho> \sqrt{2\log V}]\quad \text{as}\quad V \to \infty
\end{align}

\textbf{Optimal $t_{\text{anchor}}$. }
For $\alpha_t=(1-t)^a$, the SNR is:
\begin{align}
    \rho(t)= \frac{1-(1-t)^a}{(1-t)^a\sigma}
\end{align}
is monotone in $t$. Let $t^\star$ denote the optimal $t_{\text{anchor}}$, we can invert $\rho(t^\star)=\sqrt{2\log V}$ to get the optimal $\alpha^\star$:
\begin{align}
    \sqrt{2\log V}=\frac{1-\alpha^\star}{\alpha^\star\sigma}\implies \alpha^\star=\frac{1}{1+\sigma\sqrt{2\log V}}
\end{align}
and the optimal $t^\star$:
\begin{align}
    \boxed{t^\star=1-\left(1+\sigma\sqrt{2\log V}\right)^{-1/a}}
\end{align}
which concludes the proof.
\end{proof}

\subsubsection{Training Implications}
Since the simplex dimension appears only in the logarithm $\sqrt{2\log V}$, when $V$ becomes large, large increases in $V$ result in small increases in $t^\star$ (Figure \ref{fig:flow-dynamics}D). The exponent on the interpolant schedule $a$ is inversely related to $t^\star$ and moves the curve significantly for $a\in \{1, 2,3\}$ as shown in Figure \ref{fig:flow-dynamics}D and H. Finally, increasing the scale $\sigma$ of the prior Gaussian $\mu_0=\mathcal{N}(\boldsymbol{0}, \sigma^2\boldsymbol{I}_V)^{\otimes L}$ results in a much larger increase in $t^\star$, since the initial point injects more uncertainty, resulting in a later commitment time. We show the impact of these parameters empirically in App \ref{app:hyperparameters}.

\begin{table}[t]
\centering
\begin{tabular}{l cccc}
\toprule
& LM1B & OpenWebText & GSM8K & Sudoku \\
\midrule
Vocabulary $|V|$              & 30{,}522 & 50{,}257 & 49{,}152 & 12 \\
Noise scale $\sigma$          & 1.0 & 1.0 & 1.0 & 1.0 \\
Schedule exponent $a$         & 1 & 1 & 1 & 1 \\
$\sqrt{2\log|V|}$             & 4.545 & 4.653 & 4.648 & 2.229 \\
Optimal $t^*_{\text{anchor}}$ & 0.820 & 0.823 & 0.823 & 0.690 \\
\bottomrule
\end{tabular}
\caption{\textbf{Optimal anchor threshold $t^*_{\text{anchor}}$ per task.} Computed from the phase transition $\rho(t^*)=\sqrt{2\log|V|}$, inverted through the schedule $\alpha_t=(1-t)^a$ to $t^*_{\text{anchor}}=1-\big(1+\sigma\sqrt{2\log|V|}\big)^{-1/a}$. All tasks use $\sigma=1$ and $a=1$.}
\label{tab:optimal-anchor}
\end{table}

\subsection{Occupation Measure of Autonomous Flow on the Simplex}
\label{app:occupation-measure-proof}

\begin{proposition}[Explicit form of $\nu$ and $j$ on simplex]\label{prop:occupancy-current}
Let $x_0\sim \mu_0$, with $\mu_0$ being a prior density on $\mathbb{R}^V$ that decays exponentially towards infinity, and let $x_1\sim \mu_1:=\sum_{j=1}^V p_j\delta_{e_j}$ be independent of $x_0$, where $\{e_1,\dots,e_V\}$ are the vertices of the simplex $\Delta^{V-1}$. Let $I_t=\alpha_tx_0+\beta_tx_1$ with $\alpha_t+\beta_t=1$ and $\alpha_0=1$, $\alpha_1=0$. Fix $x\notin\{e_1,\dots,e_V\}$ and let $\epsilon_j:=x-e_j$. We have
    the per-vertex occupancy $\nu^{(j)}(x)$ and current $J^{(j)}(x)$ are given by:
    \begin{align}
    \label{eqn:per vertex current}
        \nu^{(j)}(x)&=p_j\int_0^1\alpha_t^{-V}\,\mu_0\left(\frac{\epsilon_j+\alpha_te_j}{\alpha_t}\right)dt\\
        J^{(j)}(x)&=p_j\int_0^1\alpha_t^{-V-1}\dot\alpha_t\,\epsilon_j\, \mu_0\left(\frac{\epsilon_j+\alpha_te_j
        }{\alpha_t}\right)dt,
    \end{align}    
    and for $\nu(x)>0$, the autonomous drift $b=j /\nu$ takes the form:
    \begin{align}
        b(x)=\frac{\sum_jJ^{(j)}(x)}{\sum_j\nu^{(j)}(x)}
        \label{eq:simplex-drift}
    \end{align}
\end{proposition}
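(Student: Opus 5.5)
The plan is to compute the time-$t$ marginal density $\mu_t(x)$ explicitly by conditioning on which vertex $x_1$ equals. Everything is then obtained by integrating over $t$ and invoking Proposition \ref{prop:time-average}. Since $x_0$ and $x_1$ are independent and $\mu_1=\sum_j p_j\delta_{e_j}$, we have $\mu_t=\sum_j p_j\,\mu_t^{(j)}$, where $\mu_t^{(j)}$ is the law of $I_t$ given $x_1=e_j$.

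Conditionally on $x_1=e_j$, the map $x_0\mapsto I_t=\alpha_t x_0+(1-\alpha_t)e_j$ is an affine bijection of $\mathbb{R}^V$ for $t<1$, with Jacobian $\alpha_t^V$. The change-of-variables formula therefore gives
\begin{align}
\mu_t^{(j)}(x)=\alpha_t^{-V}\mu_0\!\left(\frac{x-(1-\alpha_t)e_j}{\alpha_t}\right)=\alpha_t^{-V}\mu_0\!\left(\frac{\epsilon_j+\alpha_te_j}{\alpha_t}\right).
\end{align}
Defining $\nu^{(j)}(x):=p_j\int_0^1\mu_t^{(j)}(x)\,dt$ yields the first formula, and $\nu=\sum_j\nu^{(j)}$. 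The endpoint $t=1$ is a null set in $dt$, so it can be ignored.

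For the current, I will write $j(x)=\int_0^1\mu_t(x)b_t(x)\,dt$ as $\int_0^1\mathbb{E}[\dot I_t\,\delta(I_t-x)]\,dt$ and again split over $x_1=e_j$. Using $\beta_t=1-\alpha_t$, we get $\dot I_t=\dot\alpha_t(x_0-e_j)$. On the event $I_t=x$ we can solve $x_0=(x-(1-\alpha_t)e_j)/\alpha_t$, so $x_0-e_j=\epsilon_j/\alpha_t$ is deterministic given $(t,x,j)$. The conditional velocity is thus $\dot\alpha_t\,\epsilon_j/\alpha_t$. Multiplying by $p_j\mu_t^{(j)}(x)$ and integrating in $t$ gives exactly $J^{(j)}(x)=p_j\int_0^1\alpha_t^{-V-1}\dot\alpha_t\,\epsilon_j\,\mu_0(\cdot)\,dt$. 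Proposition \ref{prop:time-average} then gives $b=j/\nu=\sum_jJ^{(j)}/\sum_j\nu^{(j)}$ wherever $\nu(x)>0$.

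The main obstacle is justifying that these integrals are finite and that the exchanges of integrals (Fubini, and the tower property with the disintegration over $t$) are legitimate. The danger comes from the singular factors $\alpha_t^{-V}$ and $\alpha_t^{-V-1}$ as $t\to1$. This is exactly where the hypothesis $x\notin\{e_1,\dots,e_V\}$ enters. Since $\epsilon_j\neq0$ for every $j$, the argument $(\epsilon_j+\alpha_te_j)/\alpha_t$ of $\mu_0$ has norm at least $\|\epsilon_j\|/\alpha_t-1\to\infty$. The exponential decay $\mu_0(y)\le Ce^{-\lambda\|y\|}$ then dominates any power of $\alpha_t^{-1}$.

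Concretely, I would substitute $v=\|\epsilon_j\|/\alpha_t$ (using monotonicity of $\alpha_t$), or simply bound $|\dot\alpha_t|$ on $[0,1]$. This reduces both integrals to the form $\int v^{V-1}e^{-\lambda v}\,dv<\infty$. Near $t=0$ the integrands are bounded because $\alpha_t\to1$. With these integrable bounds in hand, Tonelli/Fubini justifies all interchanges and completes the argument.
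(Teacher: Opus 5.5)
Your proposal is correct and follows essentially the same route as the paper: condition on $x_1=e_j$, change variables with inverse Jacobian $\alpha_t^{-V}$, observe that $\dot I_t=\tfrac{\dot\alpha_t}{\alpha_t}\epsilon_j$ is deterministic given $(t,x,j)$, and sum over vertices to get $b=\sum_jJ^{(j)}/\sum_j\nu^{(j)}$. The paper phrases the per-vertex decomposition weakly through test functions rather than your formal $\delta(I_t-x)$, and only asserts finiteness, so your integrability argument is more explicit. One small correction: the substitution $v=\|\epsilon_j\|/\alpha_t$ yields the form $\int v^{V-1}e^{-\lambda v}\,dv$ only for $J^{(j)}$. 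For $\nu^{(j)}$ it introduces a factor $1/|\dot\alpha_t|$, so the cleaner bound there is that $\alpha_t^{-V}e^{-\lambda\|\epsilon_j\|/\alpha_t}$ is bounded on $[0,1)$.
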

\begin{proof} For categorical distributions on the simplex, we can write the target distribution as a weighted mixture of point masses $\mu_1=\sum_jp_j\delta_{x_j}$ where $x_j=e_j$ and $e_1, \dots, e_d$ are the $d$ vertices of the simplex $\Delta^{d-1}$, each with probability $p_j$.

Rather than taking a ratio of measures, we define $\mu_t$ and $j_t$ weakly so that the per-vertex decomposition of the autonomous flow can be derived via expectations. For any test function $\varphi\in C^\infty(\mathbb{R}^V)$, we write:
\begin{align}
    \int\varphi(x)\mu_t(x)dx=\mathbb{E}\left[\varphi(I_t)\right], \qquad \int\varphi(x)j_t(x)dx=\mathbb{E}\left[\dot I_t\varphi(I_t)\right]
\end{align}
which is consistent with the definition $b_t=j_t/\mu_t$ since taking the ratio or the right-hand identities yields $\mathbb{E}[\dot I_t|I_t=x]\mu_t(x)$ when tested against $\varphi$. Restricting $x_1=e_j$ gives us the per-vertex definitions of $\mu^{(j)}_t$ and $J^{(j)}_t$ as:
\begin{align}
     \int\varphi(x)\mu^{(j)}_t(x)dx=\mathbb{E}\left[\varphi(I_t)\boldsymbol{1}_{\{x_1=e_j\}}\right], \qquad \int\varphi(x)J^{(j)}_t(x)dx=\mathbb{E}\left[\dot I_t\varphi(I_t)\boldsymbol{1}_{\{x_1=e_j\}}\right]
\end{align}
with the occupancy $\nu^{(j)}(x)=\int_0^1\mu_t^{(j)}(x)dt$ and autonomous current $J^{(j)}(x)=\int_0^1j_t^{(j)}(x)dt$.

\textbf{Step 1: Per-vertex occupancy.} Since $\mu_1$ is a sum of point masses, $\nu$ and $j$ can be decomposed for each vertex $\nu^{(j)}$ and $J^{(j)}$. Therefore, we can consider one atom first. If we fix atom $j$, then $x_1=e_j$ and the only randomness in the interpolant is $x_0\sim \mathcal{N}(0, I_d)$. The interpolant at time $t$ is given by:
\begin{align}
    I_t=\alpha_tx_0+\beta_te_j
\end{align}
Fixing $I_t=x$, we can solve for $x_0(x,t)$ to get:
\begin{align}
    x_0(x,t)=\frac{x-\beta_te_j}{\alpha_t}=\frac{x-(1-\alpha_t)e_j}{\alpha_t}=\frac{\epsilon_j+\alpha_te_j}{\alpha_t},
\end{align}
where we have denoted $\epsilon_j=x-e_j$ and substituted $\beta_t=1-\alpha_t$. Substituting into the Gaussian density of $x_0$ gives a factor of:
\begin{align}
    G_t(x)=\mu_0\left(x_0(x,t)\right)= \mu_0\left(\frac{\epsilon_j+\alpha_te_j}{\alpha_t}\right)
\end{align}
Since the map $x_0(x,t)\mapsto I_t=\alpha_t x_0+\beta_te_j$ is just a scaling of $x_0$ by $\alpha_t$ and a constant shift, its derivative is $\alpha_tI_d$, and the density transforms via the inverse Jacobian $\alpha_t^{-d}$. Since the probability of selecting vertex $j$ is $p_j$, the pointwise density $\mu_t^{(j)}(x)$ at $x$ at time $t$ is:
\begin{align}
    \mu_t^{(j)}(x)=p_j\alpha_t^{-V}G_t(x) =p_j\,\alpha_t^{-V}\mu_0\left(\frac{\epsilon_j+\alpha_t\,e_j}{\alpha_t}\right)
\end{align}
Then, the occupancy measure $\nu^{(j)}(x)$ is just $\mu_t^{(j)}(x)$ integrated over time $t\in [0,1]$ given by:
\begin{align}
    \nu^{(j)}(x)=\int_0^1\mu_t^{(j)}(x)dt=p_j\int_0^1\alpha_t^{-V}\mu_0\left(\frac{\epsilon_j+\alpha_t\,e_j}{\alpha_t}\right)dt\label{eq:occupancy}
\end{align}
Since $\alpha_t\to 0$ as $t\to 1$, we have $\frac{\epsilon_j+\alpha_te_j}{\alpha_t}$ goes to infinity as $t\to  1$. Hence the prior $\mu_0(\frac{\epsilon_j+\alpha_t e_j}{\alpha_t})$ decays exponentially as $t\to 1$ offsetting the divergent $\alpha_t^{-V}$, and the integral in equation \eqref{eq:occupancy} is finite.

\textbf{Step 2: Per-vertex current.} Given $x_0(x,t)=e_j$, the velocity $\dot I_t$ is determined pointwise by $x$ since $x_0$ is a function of $I_t$ for all $I_t=x$. Differentiating $I_t$ and substituting $x_0=(\epsilon_j+\alpha_te_j)/\alpha_t$, we get:
\begin{align}
    \dot I_t=\dot\alpha_tx_0+\dot \beta_te_j=\frac{\dot \alpha_t}{\alpha_t}(\epsilon_j+\alpha_te_j)+\dot \beta_te_j=\frac{\dot \alpha_t}{\alpha_t}\,\epsilon_j
\end{align}
where we have used $\dot\alpha_t+\dot\beta_t=0$. Since $j= b\nu=\int_0^1b_t\mu_tdt$, we:
\begin{align}
    J^{(j)}(x)=p_j\int_0^1\alpha_t^{-V}\underbrace{\left(\frac{\dot \alpha_t}{\alpha_t}\,\epsilon_j\right)}_{\dot I_t}\mu_0\left(\frac{\epsilon_j+\alpha_t\,e_j}{\alpha_t}\right)dt=p_j\int_0^1\alpha_t^{-V-1}\dot\alpha_t\,\epsilon_j\, \mu_0\left(\frac{\epsilon_j+\alpha_te_j
        }{\alpha_t}\right)dt,
    \label{eqn:per-vertex current}
\end{align}
by a similar argument to integral \eqref{eq:occupancy}, the integral \eqref{eqn:per-vertex current} is finite.

\textbf{Step 3: Summing over vertices. }
Since $\mu_1$ is supported on $\{e_1, \dots,e_V\}$ and the events $\{x_1=e_j\}$ partition the sample space such that $\sum_j\boldsymbol{1}_{\{x_1=e_j\}}= 1$ almost surely. For any test function $\varphi$, linearity of expectation gives:
\begin{align}
    \sum_j\int\varphi(x)J^{(j)}_t(x)dx=\sum_j\mathbb{E}\left[\dot I_t\varphi(I_t)\boldsymbol{1}_{\{x_1=e_j\}}\right]=\mathbb{E}\bigg[\dot I_t\varphi(I_t)\sum_j\boldsymbol{1}_{\{x_1=e_j\}}\bigg]=\mathbb{E}\left[\dot I_t\varphi(I_t)\right]=\int \varphi j_tdx
\end{align}
and since this holds for all $\varphi\in C^\infty(\mathbb R^V)$, we can conclude $j_t(x)= \sum_jj_t^{(j)}(x)$-a.e.. Analogously, $\mu_t=\sum_j\mu_t^{(j)}$ by the same argument with $\dot I_t$ replaced by $1$. Integrating over $t\in [0,1]$ and exchanging the sum and integral operations, we have:
\begin{align}
    \nu(x)=\int\sum_j\mu^{(j)}_t(x)dt=\sum_j\nu^{(j)}(x), \quad J(x)=\int\sum_jJ^{(j)}_t(x)dt=\sum_jJ^{(j)}(x)
\end{align}
Since $b=j/\nu$, we conclude:
\begin{align}
    b(x)=\frac{\sum_jJ^{(j)}(x)}{\sum_j\nu^{(j)}(x)}
\end{align}
\end{proof}

\subsection{Autonomous Velocity Vanishes at the Vertices}
\label{app:vertex-vanish-proof}

\begin{theorem}[Autonomous Velocity Vanishes at the Vertices]\label{thm:attractor-vertices}
   Assume the interpolant $\alpha_t\sim(1-t)^a$ as $t\to 1$ with $a\geq1/V$, and assuming the prior $\mu_0(x)$ tends to zero exponentially as $x\to \infty$, then at the vertices of the simplex, for all $i=1,\dots,V$, the autonomous velocity field $b(e_i)=0$ when evaluated at vertices $e_i$.
\end{theorem}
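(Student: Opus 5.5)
The plan is to evaluate the autonomous drift at a vertex directly from the per-vertex decomposition of Proposition~\ref{prop:occupancy-current}, $b=\sum_j J^{(j)}/\sum_j\nu^{(j)}$. We show that at $x=e_i$ the numerator is finite while the denominator diverges, so $b(e_i)=0$.

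Proposition~\ref{prop:occupancy-current} is stated for $x\notin\{e_1,\dots,e_V\}$, and at $x=e_i$ the occupancy turns out to be infinite. So I would first fix what $b(e_i)$ means. I would truncate the time integral to $t\in[0,1-\delta]$, writing $\nu_\delta$, $J_\delta$ and $b_\delta:=J_\delta/\nu_\delta$. This is the same conditional expectation $\mathbb{E}[\dot I_t\mid I_t=x]$ with $t$ uniform on $[0,1-\delta]$. I would then define $b(e_i):=\lim_{\delta\downarrow0}b_\delta(e_i)$ and show this limit is $0$.

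\textbf{Step 1 (self term of the current vanishes).} With $x=e_i$ we have $\epsilon_i=x-e_i=0$. The integrand of $J^{(i)}$ in \eqref{eqn:per vertex current} carries the factor $\epsilon_i$, so $J^{(i)}_\delta(e_i)=0$ for every $\delta$. This matches the interpolant picture: conditioned on $x_1=e_i$ and $I_t=e_i$, one has $x_0=e_i$, and then $\dot I_t=\dot\alpha_t(x_0-e_i)=0$.

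\textbf{Step 2 (cross terms are bounded).} For $j\neq i$ we have $\epsilon_j=e_i-e_j$ with $\|\epsilon_j\|=\sqrt2$. The prior argument $(\epsilon_j+\alpha_te_j)/\alpha_t$ has norm of order $\sqrt2/\alpha_t\to\infty$ as $t\to1$. Using the exponential decay $\mu_0(y)\le Ce^{-\lambda\|y\|}$, exactly as in the bound \eqref{eqn:bound of jj}, we get
\begin{align}
\|J^{(j)}_\delta(e_i)\|\le p_j\sqrt2\,\sup_t|\dot\alpha_t|\int_0^1\alpha_t^{-V-1}Ce^{\lambda}e^{-\lambda\sqrt2/\alpha_t}\,dt<\infty .
\end{align}
This bound is uniform in $\delta$, because the factor $e^{-c/\alpha_t}$ kills any polynomial blow-up of $\alpha_t^{-V-1}$. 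Hence the numerator $\sum_jJ^{(j)}_\delta(e_i)$ stays bounded as $\delta\to0$.

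\textbf{Step 3 (occupancy diverges; main step).} At $x=e_i$ the prior argument in $\nu^{(i)}$ is $(0+\alpha_te_i)/\alpha_t=e_i$, which is independent of $t$. Hence
\begin{align}
\nu^{(i)}_\delta(e_i)=p_i\,\mu_0(e_i)\int_0^{1-\delta}\alpha_t^{-V}\,dt .
\end{align}
Since $\alpha_t\sim(1-t)^a$ near $t=1$, the integrand behaves like $(1-t)^{-aV}$. This is non-integrable at $t=1$ precisely when $aV\ge1$, which is the hypothesis $a\ge1/V$. Using $\mu_0(e_i)>0$ (for instance for the Gaussian prior) and $p_i>0$, we get $\nu^{(i)}_\delta(e_i)\to\infty$. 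The remaining occupancies $\nu^{(j)}$ are nonnegative, so the full denominator diverges.

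Combining Steps 1–3 gives $\|b_\delta(e_i)\|\le\big(\sum_{j\neq i}\|J^{(j)}\|\big)/\nu^{(i)}_\delta(e_i)\to0$, hence $b(e_i)=0$.

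I expect Step 3, together with the interpretive issue of defining $b$ where $\nu=\infty$, to be the main obstacle. The care needed is to check the borderline case $aV=1$, where the divergence is only logarithmic but still suffices. One must also confirm that the asymptotic $\alpha_t\sim(1-t)^a$ transfers to the integral: bound $\alpha_t\le2(1-t)^a$ on some interval $[1-\eta,1)$ and discard the compact part. An alternative I would mention is to read the result as the conditional law $q_{e_i}(t)$ of $t$ given $I_t=e_i$ collapsing to $\delta_{t=1}$ under the vertex-$i$ branch, where the velocity is zero.
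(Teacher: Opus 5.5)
Your proposal is correct and follows the same route as the paper's proof: $J^{(i)}(e_i)=0$ because $\epsilon_i=0$, the cross terms $J^{(j)}(e_i)$ are finite by the exponential decay of $\mu_0$, and $\nu^{(i)}(e_i)=p_i\mu_0(e_i)\int_0^1\alpha_t^{-V}\,dt$ diverges when $a\ge 1/V$; your $\delta$-truncation is a useful rigorization of the paper's ``finite over infinite'' ratio. One small fix: for $1/V\le a<1$ the quantity $\sup_t|\dot\alpha_t|$ is infinite, so bound the cross terms instead by substituting $u=\alpha_t$ (using that $\alpha_t$ is monotone), which gives $\int_0^1 u^{-V-1}e^{-\lambda\sqrt2/u}\,du<\infty$.
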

\begin{proof}
For given $e_i$, we evaluate $J^{(j)}(e_i)$ and $\nu^{(j)}(e_i)$ for each $j$. We compute $\nu^{(j)}(e_i)$ and $J^{(j)}(e_i)$ for the case $j=i$ and $j\neq i$ separately.
First when $j=i$, we have $\epsilon_i=e_i-e_i=0$ and 
\begin{align}
    \nu^{(i)}(e_i)=p_i\,\mu_0(e_i)\int_0^1\alpha_t^{-V}\, dt
\end{align}
For common choices of interpolant where $\alpha_t\sim(1-t)^a$ as $t\to 1$ with $a\geq1/V$, the integral $\int_0^1\alpha_t^{-V}\,dt$ diverges, and $\nu^{(i)}(e_i)$ is positive infinity.

When $j\neq i$, we have $\epsilon_j=e_i-e_j=e_{ij}$ where $e_{ij}$ is the simplex edge between vertex $i$ and $j$.
\begin{align}\label{eqn:occupany-jnoti}
    \nu^{(j)}(e_i)=p_j\int_0^1\alpha_t^{-V}\mu_0\left(\frac{e_{ij}}{\alpha_t}+e_j\right) dt
\end{align}
As $t\to1$, we have $\alpha_t\to 0$. Assuming $\mu_0(x)$ tends to 0 exponentially fast as $x\to \infty$, the integral in \eqref{eqn:occupany-jnoti} is finite. The denominator of $b(e_i)$ is $\nu(x)=\sum_j\nu^{(j)}(x)$. The denominator is infinite.

The numerator of $b(e_i)$ is $j(e_i)=\sum J^{(j)}(e_i)$. Consider when the index $j=i$. Since $\epsilon_i=0$, we have $j^{(i)}(e_i)=0$. When the index $j\neq i$, for the same reason as $\nu^{(j)}(e_i)$, we have $J^{(j)}(e_i)$ is a finite number. 

Overall, for all $i=1,\dots, V$, the autonomous velocity field $b(e_i)$ is a ratio between a sum of finite vectors over positive infinity, and hence $b(e_i)=0$.
\end{proof}

\subsection{Convergence of Autonomous Flow on the Simplex}
\restate{\convergence}

\textit{Proof.} \textbf{Notation.} 
Let $e_j \in M_1$ be a vertex on the simplex and the state $x=e_j+r\omega$ and define $\omega:=(x-e_j) /|x-e_j|$ as the unit vector pointing toward $x$ from $e_j$ and $r:=|x-e_j|$ the distance. If the interpolant is $I_t=(1-t)x_0+tx_1$, then the time-dependent velocity at $x$ at time $t$ is related to the denoiser $m(t)$ by: 
\begin{align}
    b_t(x)=\frac{m_t(x)-x}{1-t}, \quad m_t(x)=\mathbb{E}[x_1|I_t=x]\label{lemma-eq:b_t}
\end{align}

To get the time-independent velocity, we take the posterior weight over times $t$ with density proportional to $\sigma^{d-2}\mu_0(e_j+\sigma \omega)$, where $\mu_0(e_j+\sigma \omega)$ is the prior evaluated at the implied initial noisy state $x_0=(x-te_j)/(1-t)=e_j+\sigma \omega$ given $x$. Taking the average over the time-dependent speed gives the arrival speed $\kappa (\omega)$ toward vertex $e_j$ along direction $\omega$:
\begin{align}
    \kappa(\omega):=\frac{\int_0^\infty\sigma \cdot \sigma^{d-2}\mu_0(e_j+\sigma \omega)d\sigma}{\int_0^\infty\sigma^{d-2}\mu_0(e_j+\sigma \omega)d\sigma}, \quad 0<\kappa_-:=\min_{|\omega|=1}\kappa(\omega)\leq \kappa_+<\infty
\end{align}
where $|b(e_j+r\omega)|\to \kappa(\omega)$ is the speed in direction $\omega$ as $r\downarrow 0$, $\kappa_-$ is the minimum speed over all directions, and $\kappa_+$ is the maximum speed over directions. We simplify notation by writing:
\begin{align}
    \kappa_j(\omega) :=\frac{N_j(\omega)}{D _j(\omega)}, \quad N_j(\omega):=\int_0^\infty\sigma^{d-1}\mu_0(e_j+\sigma\omega)d\sigma,\quad D_j(\omega):=\int_0^\infty\sigma^{d-2}\mu_0(e_j+\sigma \omega)d\sigma
\end{align}
Given $x_1=e_j$ and $\alpha_t:=1-t$, we write the marginal distribution at time $t$ as:
\begin{align}
    \mu_t(x)=\sum_{e_l \in M_1}\pi_{e_l}\alpha_t^{-d}\mu_0\left(\frac{x-te_l}{\alpha_t}\right)\label{proofeq:mu_t}
\end{align}

\textbf{Autonomous velocity approaching vertex. }
We first prove a Lemma that gives the speed at which the autonomous flow approaches a vertex. 

\begin{lemma}[Autonomous velocity approaching vertex]\label{lemma:approach-vertex}
    Let $e_j\in M_1$ be a vertex on the simplex and $\omega :=x-e_j$ with $|\omega|=1$ be the unit vector pointing in the direction from $x$ to $e_j$. Then,
    \begin{align}
        \nu(x)=\pi_{e_j}r^{1-d}D_j(\omega)(1+o(1))\label{lemma-eq:nu}
    \end{align}
    and $b(e_j+\sigma\omega)\to -\kappa_j(\omega)\omega$ in the limit $r\downarrow 0$, uniformly in $\omega$. Therefore, $b$ is discontinuous at $e_j$, where its value at the vertex is $b(e_j)=0$ while its limits approaching $e_j$ radially are non-zero and direction-dependent.
\end{lemma}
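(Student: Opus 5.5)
We write $x=e_j+r\omega$ and work from the per-vertex decomposition of Proposition \ref{prop:occupancy-current} with the linear schedule $\alpha_t=1-t$, so that $b(x)=\sum_l J^{(l)}(x)/\sum_l\nu^{(l)}(x)$. The plan is to show that, as $r\downarrow0$, the $l=j$ terms of both numerator and denominator blow up like $r^{1-d}$. The $l\neq j$ terms stay bounded, so only the $j$-th vertex survives in the ratio.

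\textbf{Step 1: the $j$-th occupancy.} Substitute $\sigma:=r/\alpha_t$, the distance of the implied noise $x_0=(x-te_j)/\alpha_t=e_j+\sigma\omega$ from $e_j$. Then $dt=r\,\sigma^{-2}d\sigma$ and $\alpha_t^{-d}=\sigma^d r^{-d}$, and $t\in[0,1)$ maps to $\sigma\in[r,\infty)$. This gives
\begin{align}
\nu^{(j)}(x)=\pi_{e_j}\,r^{1-d}\int_r^\infty\sigma^{d-2}\mu_0(e_j+\sigma\omega)\,d\sigma .
\end{align}
Since $\mu_0$ is continuous and strictly positive, the integral converges to $D_j(\omega)$ as $r\downarrow0$. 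Near $0$ the integrand is $O(\sigma^{d-2})$ with $d\geq2$, so the lower cutoff contributes $O(r^{d-1})$. At infinity the decay of $\mu_0$ keeps the tail finite. Uniformity in $\omega$ follows because $\mu_0$ is bounded on compacts, which controls the small-$\sigma$ part, and a uniform integrable majorant controls the tail. This is \eqref{lemma-eq:nu} once the other vertices are shown to be negligible.

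\textbf{Step 2: the other vertices.} For $l\neq j$, $|x-e_l|\geq \sqrt2-r\geq 1$ for small $r$. The argument of $\mu_0$ is then $e_l+(x-e_l)/\alpha_t$, whose norm grows like $1/\alpha_t$. Exactly as in the bound \eqref{eqn:bound of jj}, the exponential decay dominates $\alpha_t^{-d}$ and $\alpha_t^{-d-1}$. Hence $\nu^{(l)}(x)$ and $|J^{(l)}(x)|$ are bounded uniformly for $x$ near $e_j$, and they are $o(r^{1-d})$. This is what produces the $(1+o(1))$ in \eqref{lemma-eq:nu}.

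\textbf{Step 3: the $j$-th current and the limit.} By \eqref{eqn:per-vertex current} with $\dot\alpha_t=-1$, $\epsilon_j=r\omega$ and the same substitution,
\begin{align}
J^{(j)}(x)=-\pi_{e_j}\,r\,\omega\int_r^\infty \sigma^{d+1}r^{-d-1}\mu_0(e_j+\sigma\omega)\,r\sigma^{-2}d\sigma=-\pi_{e_j}r^{1-d}\omega\int_r^\infty\sigma^{d-1}\mu_0(e_j+\sigma\omega)\,d\sigma ,
\end{align}
which tends to $-\pi_{e_j}r^{1-d}N_j(\omega)\,\omega$. Dividing by Step 1 and absorbing Step 2 gives $b(e_j+r\omega)\to-\kappa_j(\omega)\omega$ uniformly in $\omega$.

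\textbf{Step 4: positivity of the speed.} $N_j$ and $D_j$ are continuous and strictly positive on the compact unit sphere, so $0<\kappa_-\le\kappa_+<\infty$. Comparing this nonzero limit with $b(e_j)=0$ from Theorem \ref{thm:attractor-vertices} gives the discontinuity at $e_j$.

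The main obstacle is making Step 1's limit uniform in $\omega$, in particular the tail of $\int\sigma^{d-1}\mu_0$. It needs a direction-uniform exponential envelope $\mu_0(y)\le Ce^{-\lambda|y|}$. I would use the Gaussian prior explicitly here, since $|e_j+\sigma\omega|\ge\sigma-1$ supplies the envelope. Finiteness of $D_j$ near $\sigma=0$ also needs $d\geq2$, which holds since $d=VL\geq2$.
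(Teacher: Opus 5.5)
Your proof is correct and follows the paper's argument. The same substitution $\sigma=r/\alpha_t$ turns the $j$-th occupancy and current into $\pi_{e_j}r^{1-d}D_j(\omega)$ and $-\pi_{e_j}r^{1-d}N_j(\omega)\,\omega$ at leading order, and their ratio gives $-\kappa_j(\omega)\omega$. The difference is only bookkeeping: you compute $J^{(j)}$ exactly from the per-vertex current and bound the $l\neq j$ terms as in the continuity proposition, whereas the paper inserts $m_t(x)\approx e_j$ into $b_t$ and weights by the full $\mu_t$. Yours is, if anything, the more careful version, since it uses the correct lower limit $\sigma\geq r$ and gets uniformity in $\omega$ from $\inf_\omega D_j(\omega)>0$.
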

\textit{Proof.} 
Let $x:=e_j+r \omega$ and $\sigma:=r/\alpha$. Then, $\alpha=r/\sigma$ and $|d\alpha|=r\sigma^{-2}d\sigma$. The argument in $\mu_0$ in $\mu_t(x)$ \eqref{proofeq:mu_t} for $e_j$ takes the form: 
\begin{align}
    \frac{x-te_j}{\alpha_t}=\frac{r\omega+\alpha_t e_j}{\alpha_t} = e_j+\sigma \omega
\end{align}
so we can write $\mu_t(x)$ as:
\begin{align}
    \mu_t(x)=\pi_{e_j}\left(\frac{\sigma}{r}\right)^d\mu_0(e_j+\sigma\omega)(1+O(\alpha))
\end{align}
where $O(\alpha)$ contains the summands for $e_l\neq e_j$ which becomes negligible after integration over $t\in [0,\infty)$. Therefore, the occupation measure $\nu(x)$ is given by:
\begin{align}
    \nu(x)&=\pi_{e_j}\int_0^\infty\left(\frac{\sigma}{r}\right)^d\mu_0(e_j+\sigma \omega)\frac{rd\sigma}{\sigma^2}(1+o(1))\nonumber\\
    &= \pi_{e_j}r^{1-d}\underbrace{\int_0^\infty\sigma^{d-2}\mu_0(e_j+\sigma \omega)d\sigma}_{D_j(\omega)}(1+o(1))\nonumber\\
    &= \pi_{e_j}r^{1-d}D_j(\omega)(1+o(1))
\end{align}
which is the expression in \eqref{lemma-eq:nu}. To get $b(x)$, we can express $b_t(x)$ in \eqref{lemma-eq:b_t} with $m_t(x):=e_j+O(e^{-c/\alpha^2})$ as: 
\begin{align}
    b_t(x)=\frac{e_j-x}{\alpha}+O\left(\alpha^{-1}e^{-c/\alpha^2}\right)=-\frac{r}{\alpha}\omega+ o(1)=-\sigma\omega+o(1)
\end{align}
where the rescaled radius $\sigma$ is the speed. Therefore, the numerator of the autonomous flow $b(x)$ is:
\begin{align}
    \int_0^1\mu_t(x)b_t(x)dt=-\omega\pi_{e_j}r^{1-d}\int_0^\infty\sigma^{d-1}\mu_0(e_j+\sigma \omega)d\sigma(1+o(1))=-\omega\pi_{e_j}r^{1-d}N_j(\omega)(1+o(1))
\end{align}
and dividing by $\nu(x)$ yields $b(x)=-\omega N_j(\omega)/D_j(\omega)+o(1)=-\kappa(\omega)\omega$, which is not dependent on $r$. \hfill $\square$

Now, we prove Proposition \ref{thm:convergence-simplex}. 

By Lemma \ref{lemma:approach-vertex}, set a small constant $r_1>0$ such that:
\begin{align}
    \langle b(e_j+r\omega), \omega\rangle\leq -\frac{\kappa_-}{2}\quad \text{for all }e_j\in M_1,|\omega|=1,0<r\leq r_1,\label{proof-eq:be_j}
\end{align}
so the ball with radius $r_1$ around each vertex $\{B(e_j,r_1)\}_{e_j\in M_1}$ are pairwise disjoint.

\textbf{Local absorption in finite time.} Suppose that $X_s(x_0)\in B(e_j,r_1)$ for some finite time $s< \infty$ and some $e_j \in M_1$. Let $h(s):=|X_s(x_0)-e_j|$ be the distance from the vertex. Over any time interval where $0< h\leq r_1$, $h(t)$ is locally Lipschitz and $\dot h (s)=\langle b(X_s),(X_s-e_j)/h(s)\rangle \leq -\kappa_-/2$ by \eqref{proof-eq:be_j}. Therefore, $h(s)$ decreases at rate at least $\kappa_-/2$ and must vanish at some finite time:
\begin{align}
    \tau\leq s+\frac{2h(s)}{\kappa_-}\leq s+\frac{2r_1}{\kappa_-}< \infty\label{eq:local-absorption}
\end{align}

\textbf{Absorption.} From $\dot h(s)\leq -\kappa_-/2$, we have shown that $X_{s^\star}(x_0)$ arrives at $e_j$ in some finite time. Now, we argue that for all $s>s^\star$, $X_s(x_0)=e_j$ by contradiction. Suppose there exists $X_s(x_0)\neq e_j$ for $s> s^\star$, where $h(t)\leq r_1$ on $[s^\star,s]$. Then, we have shown $\dot h (s)\leq -\kappa_-/2$ a.e. on $\{h>0\}\cap [s^\star,s]$ and since $h(s)$ is absolutely continuous with $h(s^\star)=0$, we can write:
\begin{align}
    h(s)=\int_{s^\star}^s\dot h (u) du\leq -\frac{\kappa_-}{2}(t-t^\star)<0
\end{align}
which is a negative distance $h(s)$, a contradiction. Thus, there exists no $X_s(x_0)$ that leaves $e_j$ and $X_s(x_0)=e_j$ for all $s>s^\star$.

\begin{lemma}[Absorbance from far field]\label{lemma:far-field}
    For $b$ satisfying the divergence condition in Proposition \ref{prop:divergence}, $b(x)=-x(1+o(1))$ for $|x|\to \infty$ so there exists radius $R_0< \infty $ with $\langle b(x),x\rangle <0$ for $|x|\geq R_0$. Let $B(0,R_0)$ be the ball with radius $R_0$ centered at the origin, then $X_{s}\in B(0,R_0)$ for all $s\geq t_0$ if $X_{t_0}\in B(0,R_0)$ (no points escape the ball) and for every $x_0$ there is a finite $t_0$ with $X_{s}(x_0)\in B(0,R_0)$ for all $s\geq t_0$ (all points end up in the ball).
\end{lemma}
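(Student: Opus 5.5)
We first establish the far-field asymptotics $b(x)=-x(1+o(1))$ as $|x|\to\infty$, and then deduce the two invariance statements from the Lyapunov-type function $h(s):=|X_s(x_0)|^2$.

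\textbf{Far-field asymptotics.} We use the endpoint parameterization of the Corollary following Proposition \ref{prop:divergence}:
\begin{align}
    b(x)=Z(x)\big(\bar\eta_1(t,x)-x\big),\qquad Z(x)=\mathbb{E}_{t\sim q_x}[\lambda_t],\qquad \lambda_t=\frac{a}{1-t}.
\end{align}
Since $\bar\eta_1(t,x)\in(\Delta^{V-1})^L$ (Remark \ref{remark:autonomous-flow}), it is bounded, so $|\bar\eta_1-x|=|x|(1+o(1))$ and its direction tends to $-x/|x|$.

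It remains to control $Z(x)$. Through the explicit occupancy of Proposition \ref{prop:occupancy-current}, $\mu_t(x)$ contains the factor $\mu_0\big((x-\beta_te_j)/\alpha_t\big)$. For $|x|\to\infty$ this factor is exponentially small unless $\alpha_t$ is close to $1$. Hence the posterior $q_x(t)$ concentrates at $t\to0$, which gives $\lambda_t\to a$ and $Z(x)\to a$. For the Gaussian prior the computation is explicit: $q_x(t)\propto\alpha_t^{-d}\exp\!\big(-|x|^2(1+o(1))/(2\sigma^2\alpha_t^2)\big)$, so the mass lies at $1-\alpha_t=O(|x|^{-2})$.

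We therefore obtain $b(x)=-a\,x\,(1+o(1))$. The constant $a$ (equal to $1$ in the linear case) does not affect the argument, and we would state the lemma as $\langle b(x),x\rangle\le -c|x|^2$ for large $|x|$. With $R_0$ chosen so that $\langle b(x),x\rangle<0$ for $|x|\ge R_0$, the first claim follows.

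\textbf{Forward invariance.} Suppose the trajectory starts in $B(0,R_0)$ and later leaves it. By continuity there is a last time $s_1$ with $|X_{s_1}|=R_0$ after which $|X_s|>R_0$ on some interval $(s_1,s_1+\epsilon)$. But
\begin{align}
    \tfrac{d}{ds}|X_s|^2=2\langle b(X_s),X_s\rangle<0 \quad\text{for } |X_s|\ge R_0,
\end{align}
so $|X_s|$ is strictly decreasing at and just after $s_1$, which is a contradiction. One must be careful that $b$ is only locally Lipschitz away from $M_1$. Since $M_1\subset B(0,R_0)$, solutions are unique near the sphere $\{|x|=R_0\}$, and the argument there is valid.

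\textbf{Entry in finite time.} Start from $|x_0|>R_0$. Using $\langle b(x),x\rangle\le -c|x|^2$ on $\{|x|\ge R_0\}$, we get $\tfrac{d}{ds}|X_s|^2\le -2c|X_s|^2$, hence
\begin{align}
    |X_s|\le |x_0|\,e^{-cs}\quad\text{while } |X_s|\ge R_0.
\end{align}
So the trajectory hits the sphere by time $t_0\le c^{-1}\log(|x_0|/R_0)$ and stays inside afterwards by forward invariance. If the bound is only $\langle b,x\rangle<0$ without a rate, we would use compactness of the annulus $R_0\le|x|\le |x_0|$ and continuity of $b$ there: this gives $\sup\langle b(x),x\rangle/|x|<0$, i.e.\ a uniform rate of decrease of $|X_s|$, which also yields finite entry.

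\textbf{Main obstacle.} The hardest step is the uniform concentration of $q_x(t)$ near $t=0$ as $|x|\to\infty$. This requires the tail decay of $\mu_0$ to dominate the Jacobian factor $\alpha_t^{-d}$ uniformly in the direction $x/|x|$. We would first prove it for the Gaussian prior via Laplace's method. We would then extend it to general exponentially decaying priors by the same ratio-of-integrals bound used in Proposition \ref{prop:continuity at the vertices}.
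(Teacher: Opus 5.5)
Your proposal is correct. The dynamical part matches the paper's proof: forward invariance via the last-exit-time contradiction, and finite entry via the Gr\"onwall-type bound $|X_s|\le |x_0|e^{-cs}$.

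The real difference is in the first claim. The paper never proves $b(x)=-x(1+o(1))$. It only points to the divergence condition, which constrains $\nabla\cdot(\nu b)$ and so cannot fix the far-field behavior of $b$. You instead derive the asymptotics from the endpoint parameterization plus Laplace concentration of $q_x$ at $t=0$. You also correctly note that the constant is $a$, and equals $1$ only for the linear interpolant.

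For the Gaussian prior your concentration argument works. However, the step you flag as the main obstacle can be bypassed entirely:
\begin{itemize}
\item Since $\lambda_t=a/(1-t)\ge a$ on $[0,1)$ and $q_x$ is a probability density, $Z(x)\ge a$ for every $x$.
\item Since $\bar\eta_1\in(\Delta^{V-1})^L$, its norm is at most $\sqrt{L}$.
\end{itemize}
Together these give
\[
\langle b(x),x\rangle = Z(x)\big(\langle\bar\eta_1,x\rangle-|x|^2\big)\le -a\,|x|\,\big(|x|-\sqrt{L}\big)\qquad\text{for }|x|\ge\sqrt{L}.
\]
Taking $R_0=2\sqrt{L}$ yields $\langle b(x),x\rangle\le-\tfrac{a}{2}|x|^2$ on $\{|x|\ge R_0\}$. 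This is all your invariance and entry arguments use. It also places $M_1$, whose points have norm exactly $\sqrt{L}$, strictly inside the ball, which your uniqueness remark needs.

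This matters for your planned extension beyond Gaussians. An exponential upper bound on $\mu_0$ gives no lower control on $\mu_0$ at $x$ relative to its value at $(x-\beta_te_j)/\alpha_t$. So $q_x$ need not concentrate at $t=0$, and $Z(x)\to a$ is not guaranteed. Indeed, the asymptotic $b\sim -ax$ itself can fail for a continuous, positive prior that is only bounded above by an exponential. The one-sided bound above, by contrast, holds for any prior for which $b$ is well defined.
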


\begin{proof}
    Since $b(x)=-x(1+o(1))$ as $|x|\to \infty$, by fixing $R_0$, we have:
    \begin{align}
        \langle b(x),x\rangle\leq -\tfrac{1}{2}|x|^2\quad\text{for }|x|\geq R_0\label{eq:dotbx}
    \end{align}
    Let $h(s):= |X_s|$ be the distance from the origin and differentiate: 
    \begin{align}
        \frac{d}{dt}\tfrac{1}{2}|X_s|^2=\langle\dot X_s,X_s\rangle=\langle b(X_s),X_s\rangle <0\quad \text{whenever}\quad |X_s|\geq R_0\label{eq:decreasing-radius}
    \end{align}
    where $\langle b(x),x\rangle$ is the radial component of the velocity scaled by $|x|$, which is negative exactly when the field $b(x)$ points inward toward the origin.

    \textbf{No escape. }We prove by contradiction. Suppose a point $X_{t_0}\in B(0,R_0)$ escapes to $h(t_1)>R_0$ at some later time $t_1>t_0$. Let $s_1$ be the last time between $[t_0,t_1]$ where the trajectory was at radius $R_0$ or below: $s_1:= \mathrm{sup}\{s\in[t_0,t_1]:h(s)\leq R_0\}$. Then, $h(s_1)=R_0$ by continuity of the trajectory and the distance $h> R_0$ for all $(s_1,t_1]$ by definition of $s_1$ as the last time where $h\leq R_0$. But by \eqref{eq:decreasing-radius}, the radius must be decreasing $\tfrac{d}{ds}\tfrac{1}{2}h^2<0$ whenever $h\geq R_0$ which means $h(t_1)<h(s_1)=R_0$, a contradiction. Therefore, $X_s\in B(0,R_0)$ for all $s\geq t_0$.

    \textbf{Entry in finite time. }While $h(s)\geq R_0$, \eqref{eq:dotbx} gives $\dot h(s)=\langle b(x),x\rangle\leq -\tfrac{1}{2}h(s)$, so $h(s)\leq |x_0|e^{-s/2}$. Therefore, $h(s)<R_0$ for all $s>t_0:= \max (0,2\log (|x_0|/R_0))<\infty $, and by the no escape proof above, the trajectory remains in $B(0,R_0)$ for all times after entry.
\end{proof}

Now, we continue to prove (i) and (ii) of Proposition \ref{thm:convergence-simplex}.

\textbf{Convergence.} Given that all points in $\{B(e_j,r_1)\}$ are absorbed on the vertices in finite time, we now show that any initial point $x_0$ converges to $M_1$. Let $U:=\bigcup_{e_j \in M_1}B(e_j,r_1)$ and $E:=B(0,R_0)\setminus U$, and define the set of initial points $x_0$ whose autonomous flow $X_s(x_0)$ does not land in $U$ for all $t\geq 0$:
\begin{align}
    N:=\{x_0:X_s(x_0)\notin U\text{ for all }s\geq 0\}
\end{align}
By Lemma \ref{lemma:far-field}, all points $x_0$ land in the ball $B(0,R_0)$ at finite time $t_0$ and remain inside the ball for all $s\geq t_0$, so for $x_0\in N$, the flow trajectory lies in the set $E$ for all $s\geq t_0$. By Lemma \ref{lemma:trapped} below, we show that the set $N$ is Lebesgue-null $\mathrm{Leb}(N)=0$ (zero $d$-dimensional volume) such that $\mu_0\ll \mathrm{Leb}$ and $\mu_0(N)=0$. 

In the case where $x_0\notin N$, the autonomous flow enters $B(e_j,r_1)$ for some $e_j\in M_1$ at a finite time $s$ and by \eqref{eq:local-absorption}, it reaches $e_j$ at some time $\tau(x_0) \leq s+2r_1/\kappa_-$ and by the absorption proof, it remains there for all $t\geq \tau$. Setting $x^\star(x_0):=e_j$ gives $X_s(x_0)=x^\star(x_0)$ for all time $t\geq \tau(x_0)$.

\begin{lemma}\label{lemma:trapped}
    $\mathrm{Leb}(N)=0$ where $N:=\{x_0:X_t(x_0)\notin U\text{ for all }t\geq 0\}$
\end{lemma}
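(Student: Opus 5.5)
Since $\mathrm{div}\,b$ is not obviously sign-definite, I plan to bound the volume of $N$ via the mass balance $\nabla\cdot(\nu b)=\mu_0-\mu_1$ (Proposition~\ref{prop:divergence}). The idea is that the flow of $b$ transports the density $\nu$ with a positive source $\mu_0$. Concretely, along trajectories of $\dot X_s=b(X_s)$ the quantity $\nu$ obeys $\frac{d}{ds}\nu(X_s)+\nu(X_s)\,(\nabla\cdot b)(X_s)=\mu_0(X_s)$ away from $M_1$. Equivalently, the measure $\nu\,dx$ restricted to $E$ is sub-invariant with a strictly positive source. By Lemma~\ref{lemma:far-field}, any $x_0\in N$ has its trajectory in the compact set $\bar E=\overline{B(0,R_0)}\setminus U$ for all $s\ge t_0$. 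So it suffices to show that $N_E:=\{y\in\bar E: X_s(y)\in\bar E\ \forall s\ge0\}$ is Lebesgue-null, and then pull this back through the flow. The pull-back step uses that $X_{t_0}$ is a locally Lipschitz bijection onto its image, hence maps null sets to null sets in both directions, together with a countable union over rational entry times.

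\textbf{Step 1 (invariance of $N_E$).} The set $N_E$ is forward invariant: $X_s(N_E)\subset N_E$. Liouville's formula for the weighted measure $m:=\nu\,dx$ gives
\begin{align}
\frac{d}{ds}\, m\big(X_s(A)\big)=\int_{X_s(A)}\big(\nabla\cdot(\nu b)+ \text{correction}\big)\,dx ,
\end{align}
and more precisely $\frac{d}{ds}m(X_s(A))=\int_{X_s(A)}\nabla\cdot(\nu b)\,dx=\int_{X_s(A)}\mu_0\,dx$ on regions avoiding $M_1$. On $\bar E$ we are a positive distance from $M_1$, so $\mu_1=0$ there and $\nabla\cdot(\nu b)=\mu_0\ge c_E>0$, using that $\mu_0$ is continuous and strictly positive on the compact set $\bar E$. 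Hence $m(X_s(N_E))\ge m(N_E)+c_E\,s\,\mathrm{Leb}_{\min}$, where the last factor is controlled below through $\mathrm{Leb}(X_s(N_E))\ge \mathrm{Leb}(N_E)\,e^{-s\|\nabla\cdot b\|_\infty}$. A cleaner version avoids this factor: integrate the identity in time to get $m(X_s(N_E))-m(N_E)=\int_0^s\int_{X_u(N_E)}\mu_0\,dx\,du$.

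\textbf{Step 2 (contradiction).} The set $X_s(N_E)$ stays inside $\bar E$, and $m(\bar E)=\int_{\bar E}\nu<\infty$, since $\nu$ is finite and bounded away from the vertices by Proposition~\ref{prop:occupancy-current}. Therefore $\int_0^\infty\int_{X_u(N_E)}\mu_0\,dx\,du\le m(\bar E)<\infty$. This forces $\mathrm{Leb}(X_u(N_E))\to0$ along a sequence, and in fact makes it integrable in $u$. Because $\nu$ is bounded above and below on $\bar E$, $m$ and $\mathrm{Leb}$ are comparable there. Using the identity of Step 1, $m(X_u(N_E))$ is nondecreasing in $u$, so $m(X_u(N_E))\ge m(N_E)$ for every $u$. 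Comparability then gives $\mathrm{Leb}(X_u(N_E))\ge c\,m(N_E)$ for all $u$, which contradicts integrability unless $m(N_E)=0$, that is, $\mathrm{Leb}(N_E)=0$.

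\textbf{Main obstacle.} The delicate step is justifying the weighted Liouville identity. It requires $b$ to be locally Lipschitz and $\nu$ to be $C^1$ on a neighbourhood of $\bar E$, and measurability of $N_E$ must be checked; this follows because $N_E=\bigcap_{s\in\mathbb{Q}_+}X_s^{-1}(\bar E)$ is closed. Regularity off $M_1$ should come from differentiating under the integrals of Proposition~\ref{prop:occupancy-current}, using the exponential decay of $\mu_0$ and the fact that $\bar E$ stays at distance $\ge r_1$ from the vertices. Since $\nu\ge c>0$ there, $b=j/\nu$ is smooth. I would try this route first. As a fallback, I would prove the integrated identity weakly, testing $\nabla\cdot(\nu b)=\mu_0-\mu_1$ against $\mathbf 1_{X_u(N_E)}$ mollified, which needs only the continuity of $\nu b$ and of the flow map.
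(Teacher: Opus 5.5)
Your proposal is correct and is essentially the paper's argument: both rest on the weighted Liouville identity $\frac{d}{ds}\int_{X_s(A)}\nu\,dx=\int_{X_s(A)}\mu_0\,dx$ (from $\nabla\cdot(\nu b)=\mu_0$ off $M_1$), the bounds $\mu_0\ge c_0>0$ and $\nu_-\le\nu\le\nu_+$ on $E$, and $\nu(E)<\infty$ to force the forward-invariant trapped set to be null, and both then pull back through the flow map. The differences are minor: you get unbounded linear growth from the monotonicity of $m(X_u(N_E))$ where the paper applies Gr\"onwall to get exponential growth, and your countable-union pull-back and regularity remarks spell out steps the paper leaves implicit.
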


\begin{proof}
    Given $\nabla\cdot (\nu b ) =\mu_0-\mu_1$ and $\mu_1(E)=0$, we have:
    \begin{align}
        \nabla \cdot b(x)=\frac{\mu_0(x)-\langle \nabla \nu(x),b(x)\rangle }{\nu(x)}
    \end{align}
    for $x\in E$. Let $J_t:=\mathrm{det}(DX_t(x))$ be the Jacobian determinant of the autonomous flow $x\mapsto X_t(x)$, which acts like a change-of-variables factor $\mathrm{Leb}(X_t(A))=\int_AJ_t(x)dx$. Along any trajectory of the autonomous flow, we have:
    \begin{align}
        \frac{d}{dt}[\nu(X_t(x))J_t]=\mu_0(X_t(x))J_t
    \end{align}
    So for the set $N_n:=\{x_0:X_t(x_0)\notin U\text{ for all }t\geq t_n\}$, the occupation measure of the autonomous flow is $m(t):=\nu(X_t(N_n))=\int_{N_n}\nu(X_t(x))J_tdx$ for $t\geq t_n$. Differentiating, we have:
    \begin{align}
        \dot m(t)=\int_{N_n}\mu_0(X_t(x))J_tdx=\int_{X_t(N_n)}\mu_0(y)dy\geq c_0\mathrm{Leb}(X_t(N_n))\geq \frac{c_0}{\nu_+}m(t)
    \end{align}
    where we apply the change of variables $y:=X_t(x)$, then $\mu_0(E)\geq c_0>0$ on $E$, then $\nu\leq \nu_+ $ gives $\nu(A)\leq \nu_+\mathrm{Leb}(A)$. By Grönwall's inequality, $m(t)\geq m(t_n)\exp(\tfrac{c_0}{\nu_+}(t-t_n))$ for $t\geq t_n$. Since $X_t(N_n)\subseteq E$ for all $t\geq t_n$ by definition, we have $m(t)\leq \nu(E)\leq \nu_+\mathrm{Leb}(E)<\infty$. The exponentially growing quantity cannot be bounded by infinity, forcing $m(t_n)=0$. Since $\nu(E)\geq \nu_->0$, we have $\mathrm{Leb}(X_{t_n}(N_n))=0$ and $X_{t_n}$ is a diffeomorphism on the neighborhood of $E$ where $b\in C^1$, we conclude $\mathrm{Leb}(N_n) =0$. 
\end{proof}

\textbf{Finite hitting time. }
Let $x_0\notin M_1$ be a point in $\mu_0$ and let $t_{\text{entry}}$ be the first time when $X_s(x_0)\in B(x^\star(x_0),r_1)$, which we proved to be finite in the proof of (i). Then, from \eqref{eq:local-absorption}, we have:
\begin{align}
    \tau (x_0)\leq t_{\text{entry}}+\frac{2r_1}{\kappa_-}<\infty
\end{align}
and $X_s(x_0)=x^\star(x_0)$ for all $s\geq \tau (x_0)$ by (ii). Note that by Lemma \ref{lemma:approach-vertex}, the trajectory approaching $x^\star(x_0)=e_j$ arrives at speed $-\kappa_j(\omega)\omega$ and stops at $e_j$ where $b(e_j)=0$ by Theorem \ref{thm:attractor-vertices}. \hfill $\square$

\subsection{Eulerian and Conservation Equations}
\label{app:eulerian-conservation}
\restate{\conservation}

\begin{proof} 
We define $M_1=\{e_1, \dots, e_V\}$ for the set of vertices of $\Delta^{V-1}$ on which the autonomous flow $b$ vanishes by Theorem \ref{thm:attractor-vertices}. This proof follows closely from that of Theorem 2 and Proposition 2 of \citet{lee2026beckmann}, which we restate here in the simplex setting.

\textbf{Eulerian equation off $M_1$.} Since $b$ has no time dependence, for $x\notin M_1$ and $s,u\geq 0$ with $s+u<\tau(x)$, the uniqueness of the autonomous flow $X_s(x)$ gives the semigroup identity:
\begin{align}
    X_{s+u}(x) =X_u(X_s(x))\label{app-eq:semigroup}
\end{align}
By differentiating \eqref{app-eq:semigroup} in $s$ at $s=0$ and applying the chain rule on the RHS, we get:
\begin{align}
    \partial_sX_s(x)=\nabla X_s(x)\cdot\partial_sX_s(x)\vert_{s=0}=b(x) \cdot \nabla X_s(x)
\end{align}
with $X_0(x)=x$ by definition of the flow. For $x\in M_1$, $b(x)=0$ by Theorem \ref{thm:attractor-vertices}, so the constant curve $X_s(x)=x$ is the unique solution, which yields \eqref{eq:eulerian-equation}.

\textbf{Conservation equation of the transport map.} 
For $x\notin M_1$ and $s\in [0,\tau(x))$, by Proposition \ref{thm:convergence-simplex}, the autonomous ODE trajectory passing $x$ converges to a vertex, which is also the same trajectory passing $X_s(x)$. Therefore, the transport map $T$, defined as the long-time limit of the autonomous flow, satisfies:
\begin{align}
    T(X_s(x))=T(x)\quad \forall s\in [0,\tau(x))\label{eq:map-semigroup}
\end{align}
Differentiating \eqref{eq:map-semigroup} in $s$ at $s=0$ yields $b(x)\cdot \nabla T(x) =0$, which is the conservation equation in \eqref{eq:conservation-equation}. Since $b(x)=0$ for $x\in M_1$, $T(x)=\lim_{s\to \tau(x)}X_s(x)=x$ satisfies the boundary condition.

\textbf{Uniqueness of $T$.} 
Let $\tilde T$ be any solution of the conservation equation that is continuous along the flow trajectories. Then $\tfrac{d}{ds}\tilde T(X_s(x))=b(X_s(x))\cdot \nabla\tilde T(X_s(x))=0$ so $s\mapsto \tilde T(X_s(x))$ is constant. Letting $s\to \tau (x)$ and using $\tilde T=\text{id}$ on $M_1$ gives $\tilde T(x)=\lim_{s\to \tau(x)}X_s(x)=T(x)$. 
\end{proof}

\clearpage
\section{Refinement-in-Loop Details and Additional Results}
\label{app:ril-training}
A core component of our training approach for DBTM is concentrating training time on states that are likely to be seen during inference, and therefore, minimizing the training-inference mismatch. To do this, we mimic the inference procedure of BTM, consisting of clean sequence proposals, renoising of low-quality tokens, and refining the proposal during training using the current model, and minimize the loss on these self-generated rollout states. 

\subsection{On-policy Rollouts}
\label{app:on-policy-rollouts}
The standard method of constructing the context-dependent interpolant is to sample the fraction of clean tokens $f\sim \mathcal{U}[0,1]$ and apply an i.i.d. Bernoulli($f$) per position to determine whether it is in $\mathcal{C}$. However, this creates a mismatch between training and inference since the context fed back into the model during refinement is not random. Instead, the context is largely dependent on how 'easy' it is to be generated at the initial application of the transport map $T_\theta$ and the subsequent refinement maps. To align the context seen during inference to the ones used for training, we apply an \textbf{on-policy rollout step} to generate a set of partial context sequences for training by running the same map, renoise, and refine steps as done during inference with a predetermined commitment condition. 

\paragraph{Confidence-Based Selection}
To determine which tokens to commit during the training-time rollouts, we use a confidence-based selection scheme. Given a coupling $(x_0,x_1)$, for a rollout of depth $k$, let $\hat x_{r,k}:=\text{renoise}(T_\theta(\hat x_{r-1,k}))$ denote the state after $r$ refinement rounds without gradient tracking where $\hat x_{0,k}=x_0$ is the initial noise state and $\hat x_{k,k}$ is the final clean sequence after $k$ rounds. After each round $r$, we perform the following steps on the output of the previous round $\hat x_{r-1, k}$:
\begin{enumerate}
    \item [(i)] Apply the parameterized transport map to get a clean proposal sequence $T_\theta(\hat x_{r-1},k)$.
    \item [(ii)] Rank each position $\ell$ by the model's predicted confidence on the clean token value. Given the one-hot vector $x^\ell_1$, compute the confidence at each position $\ell$ as $q^\ell :=\langle T_\theta(\hat x_{r-1},k)^\ell , x^\ell_1\rangle$ to get an ordered set $\mathcal{R}_r$ of the uncommitted positions.
    \item [(iii)] Since there are $k$ total rounds, each round selects at minimum $n_r=\lceil |\mathcal{R}_r|/(k-r+1)\rceil$ highest confidence tokens to commit.
\end{enumerate}
Since the map $T_\theta$ is trained with the partial context interpolant, with more context yielding a sharper prediction task, this selection scheme uses the model's own predictions to distinguish key structural tokens that are easy to predict without context from difficult, highly context-dependent tokens, and trains on the partial context sets it is likely to encounter during inference.

\paragraph{Committed vs. Non-Committed Clean Context}
We perform two methods of refinement-in-loop training: either the clean context at each rollout step is \textit{committed} or \textit{not committed}. For the committed setting, at each iteration $i$ of the rollout with depth $r$, we apply the current trained map $T_\theta(\hat x_{i-1, r})$, commit all tokens with $q^\ell \geq \kappa$ or at least $n_r=\lceil |\mathcal{R}_r|/(k-r+1)\rceil$ (defined as the set $\Delta \mathcal{C}_r$) where $\mathcal{R}_r$ are the uncommitted tokens, and reset the remaining tokens back to their coupled noise state $x_0^\ell$ for $\ell \in [L] \setminus \mathcal{C}_r$. After the rollout step, the loss in \eqref{loss:ril} is computed on only the uncommitted tokens $[L] \setminus \mathcal{C}_r$ at each rollout depth $r$:
\begin{align}
    \mathcal{L}_{\text{ril-c}}(\theta):=\mathbb{E}_{x_0,x_1}\bigg[\sum_{k\in \mathcal{K}}\sum_{r=1}^k\sum_{\ell\in[L]\setminus \mathcal{C}_r}\text{CE}\left(T_\theta(\hat x_{r-1,k})^\ell, x_1^\ell\right)\bigg]\label{loss:ril-c}
\end{align}
where $\texttt{ril-c}$ denotes the loss for the committing rollout setting. We can also define the transport loss \eqref{loss:transport} and anchor loss \eqref{eq:anchor-loss} on these refinement-generated context sets, with $\ell\in [L]\setminus \mathcal{C}_r$ being the uncommitted tokens that the loss is computed over.

For the non-committed setting, at each iteration $i$ of the rollout with depth $r$, we apply the current trained map $T_\theta(\hat x_{i-1, r})$, but instead \textit{hold the highest confidence tokens in set $ \mathcal{C}_r$}, such that they remain in their mapped categorical state $\hat x^\ell_{i,r}\gets T_\theta(\hat x_{i-1, r})^\ell\in \Delta^{V-1}$ for $\ell \in \mathcal{C}_r$ and can be changed in the next round of refinement. The remaining tokens are similarly reset to their coupled noise state $\hat x^\ell_{i,r}\gets x_0^\ell$ for $\ell \in [L]\setminus \mathcal{C}_r$. After rollout, the loss is now computed on \textit{all the tokens in the map output} at each rollout depth $r$:
\begin{align}
    \mathcal{L}_{\text{ril-nc}}(\theta):=\mathbb{E}_{x_0,x_1}\bigg[\sum_{k\in \mathcal{K}}\sum_{r=1}^k\sum_{\ell\in[L]}\text{CE}\left(T_\theta(\hat x_{r-1,k})^\ell, x_1^\ell\right)\bigg]\label{loss:ril-nc}
\end{align}
where the sum is taken over all token positions $\ell\in[L]$ instead of only the uncommitted set and $\texttt{ril-nc}$ denotes the loss for the non-committed rollout setting. 

\paragraph{Matching Deeper Rollouts with Self-Distillation}
Since the noise-data coupling is generally not well-defined for unconditional text generation where there is no prompt and the space of possible clean outputs is large, supervising intermediate rollouts only with clean data samples can prevent the model from generating diverse text since it forces intermediate states to produce a single output even when that output is not aligned with its generative trajectory.

Therefore, to encourage coherent yet diverse supervision, we incorporate a form of self-distillation into the objective. After a warm-up stage where the model starts to generate coherent text for some $k\in \mathcal{K}$, we can use the categorical outputs of the map after the $k$th round $T_\theta(\hat x_{k-1,k})\in (\Delta^{V-1}) ^L$ as the target to which the map applied to earlier refinement rounds $r< k$ is supervised:
\begin{align}
    \mathcal{L}_{\text{soft-ril-c}}(\theta)&:=\mathbb{E}_{x_0,x_1}\bigg[\sum_{k\in \mathcal{K}}\sum_{r=1}^k\sum_{\ell\in[L]\setminus \mathcal{C}_r}\text{CE}\left(T_\theta(\hat x_{r-1,k})^\ell, \text{sg}(T_\theta(\hat x_{k-1,k})^\ell)\right)\bigg]\label{loss:soft-ril-c}\\
    \mathcal{L}_{\text{soft-ril-nc}}(\theta)&:=\mathbb{E}_{x_0,x_1}\bigg[\sum_{k\in \mathcal{K}}\sum_{r=1}^k\sum_{\ell\in[L]}\text{CE}\left(T_\theta(\hat x_{r-1,k})^\ell, \text{sg}(T_\theta(\hat x_{k-1,k})^\ell)\right)\bigg]\label{loss:soft-ril-nc}
\end{align}
where $\texttt{soft}$ denotes that the target can be any categorical distribution on the interior of the simplex rather than only one-hot vectors on the vertices. 

\subsection{Refinement-in-Loop for Reasoning}
\paragraph{Setup}
In the Sudoku setting, the prior state $x_0$ is not just Gaussian noise but a $9\times 9$ grid with a set of filled-in numbers as clues depending on the difficulty (easy: 40, medium: 35, hard: 30), which each correspond exactly to one solution $x_1$. Therefore, given $x_0$, there is no uncertainty about what the answer $x_1$ should be, so $x_1$ should be the only supervision target, and self-distillation is unnecessary, since the self-generated supervision target can only be either equivalent to $x_1$ or incorrect.

Instead, we test the effect of refinement-in-loop only with the defined $(x_0,x_1)$ data couplings and no self-distillation. We set the maximum depth to $k=4$ NFEs (initial map + three refinement rounds), and for each round $r\in \{1,\dots,k\}$ perform refinement by committing the positions with the highest ground-truth confidence, reset the uncommitted positions to their prior state $x_0^\ell$ for $\ell \in [L]\setminus \mathcal{C}_r$ to form a set of self-rollout states $\hat x_{r,k}$ for supervising against the ground truth via the \texttt{ril-c} loss in \eqref{loss:ril-c}. A summary of experiment settings is provided in Table \ref{tab:ril-setup}.

\paragraph{Results}
In Table \ref{tab:reasoning-results}, we show that DBTM with refinement-in-loop (DBTM + ril) achieves superior performance against all baselines and difficulty levels with only 4 NFEs, $32\times$ fewer NFEs than autoregressive, discrete diffusion, and continuous diffusion baselines, and matched NFEs with the closest flow map language model baseline, FMLM+ \citep{agarwal2026posterior}. Notably, on Sudoku Hard, DBTM + ril surpasses FMLM+ by $16.1\%$ at 16 NFEs and $13.4\%$ at 4 NFEs. As shown in Figure \ref{fig:sudoku-solve-acc}, DBTM trained on the clean-context interpolant can converge to high solve accuracy without ril training, but DBTM + ril \textit{significantly} accelerates convergence while achieving higher accuracy. 

\begin{table}[h!]
\centering
\small
\setlength{\tabcolsep}{4pt}
\resizebox{\textwidth}{!}{%
\begin{tabular}{lll}
\toprule
\textbf{Variable} & \textbf{LM1B} & \textbf{Sudoku} \\
\midrule
Refinement Depth $k$ & $4$ & $4$ \\
Supervise Depths (\texttt{all} depths to $k$ or \texttt{one} randomly sampled) & \texttt{all} & \texttt{all} \\
Samples Per Data & 1 & 1 \\
Selection Scheme & confidence on $x_1$ & confidence on $x_1$ \\
Confidence Threshold $\kappa$ & $0.9$ & $0.9$ \\
Noise-Data Coupling & OT coupling & Clues and clean grid + OT coupling \\
Fresh noise & \texttt{off} & \texttt{off} \\
\midrule
Self-distill & \texttt{on} & not necessary given defined clue-solution coupling \\
Commitment Scheme & \texttt{commit/non-commit} & N/A \\
Soft target & \texttt{true} & N/A \\
Temperature & $0.7$ & N/A \\
Self-distill start step & $100{,}000$ & N/A \\
Teacher refresh frequency (steps) & $5000$ & N/A \\
Supervise Depths $r$ & $\{0,1,2,3\}$ & N/A \\
\bottomrule
\end{tabular}
}
\caption{\textbf{Refinement-in-loop training setup for LM1B and Sudoku.} Both tasks roll out to depth $k{=}4$ and supervise every depth, committing by confidence at $\kappa{=}0.9$ without fresh noise. LM1B adds self-distillation after a $100$k-step warmup with soft targets at temperature $0.7$ and a teacher refreshed every $5000$ steps; Sudoku needs none, since its clue-solution coupling already fixes the target.}
\label{tab:ril-setup}
\end{table}

\begin{figure}[h!]
    \centering
    \includegraphics[width=\linewidth]{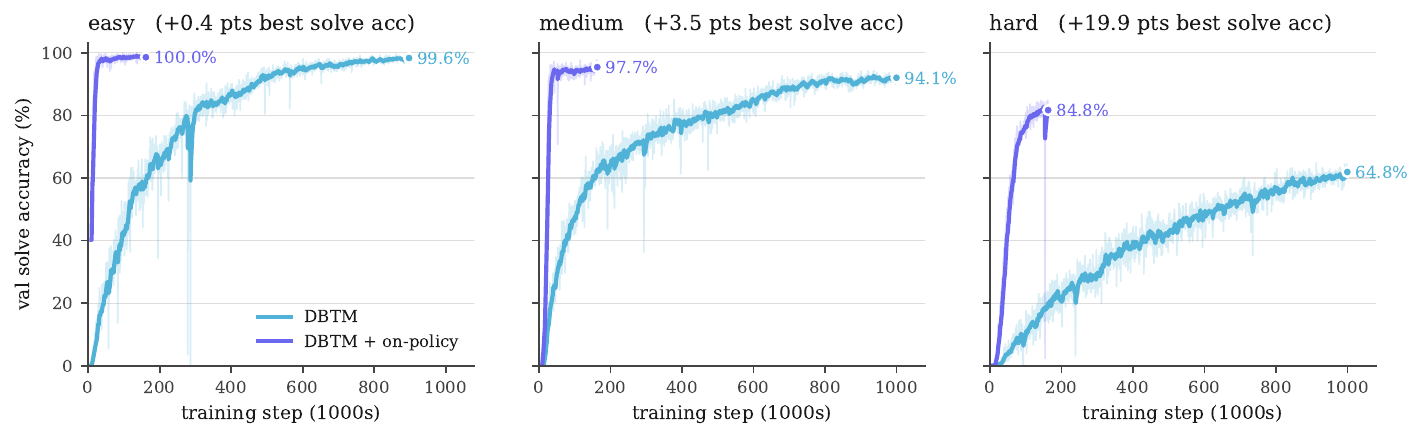}
    \caption{\textbf{Sudoku validation solve accuracy with and without refinement-in-loop training.} Exact-solve accuracy on the held-out puzzles over training steps at 4 NFEs. The clean-context training reaches high accuracy on its own, but ril training converges significantly faster and to a higher ceiling on every difficulty.}
    \label{fig:sudoku-solve-acc}
\end{figure}

\subsection{Refinement-in-Loop for Unconditional Language Modeling}

\paragraph{Setup}
In the language modeling task, there is no definitive coupling between noise and data, since, given a noise sample $x_0$, any coherent text sequence is a valid target sample $x_1$ and there is no \textit{'right'} or \textit{'wrong'} answer like in the reasoning case. Defining a random coupling between noise and data and supervising the on-policy rollouts from the noise to match the data is susceptible to mode collapse, since the noise can be close to generating a completely different, but still coherent, sequence before it is forced to match an arbitrary target. This differs from the base training procedure where the intermediate states are constructed from the interpolant defined as a function of \textit{both} $x_0$ and $x_1$. 

We avoid mode collapse in two complementary ways. First, we use minibatch optimal transport \citep{tong2023improving} to define the coupling $(x_0,x_1)$ for supervised ril training and test both the \textit{committed} and \textit{non-committed} clean context methods described in App \ref{app:on-policy-rollouts} with either the target $x_1$ being committed or held at the categorical state at the positions where the map is most confident. Second, we perform self-distillation after a warmup phase in parallel to the data-supervised training as described in App \ref{app:on-policy-rollouts}, where the target is a \textit{function} of the noise input $x_0$, making the self-generated and refined sequence a well-defined target. We use the LM1B dataset as a testbed for these methods and provide a summary of experiment settings in Table \ref{tab:ril-setup}.

\paragraph{Results}
In Figure \ref{fig:generative-frontier-lm1b} and Table \ref{tab:decode-frontier-lm1b}, we show that DBTM with refinement-in-loop (commit) achieves the widest Pareto frontier of all methods, with low Gen. PPL across all entropy values from $3.90-4.30$. Notably, DBTM + ril achieves the lowest PPL along the frontier curve across all NFE counts and methods. When comparing commit vs. non-commit ril training, we found that the commit mode consistently achieves lower Gen. PPL across NFE counts and selection thresholds with stable entropy around $4.00$ (Table \ref{tab:commit-vs-noncommit}). Figure \ref{fig:onpolicy-gate-lm1b} plots the Gen. PPL and sample entropy from small validation batches over training steps. Visually, Gen. PPL rises early in training as entropy approaches the data entropy, and when on-policy training turns on after the $100\mathrm{K}$- step warmup, Gen. PPL starts to drop while entropy remains stable. In Figure \ref{fig:onpolicy-gate-lm1b}A-C, the commit method (purple curve) shows a steeper drop in Gen. PPL than the non-commit method (turquoise curve).

\begin{figure}
    \centering
    \includegraphics[width=\linewidth]{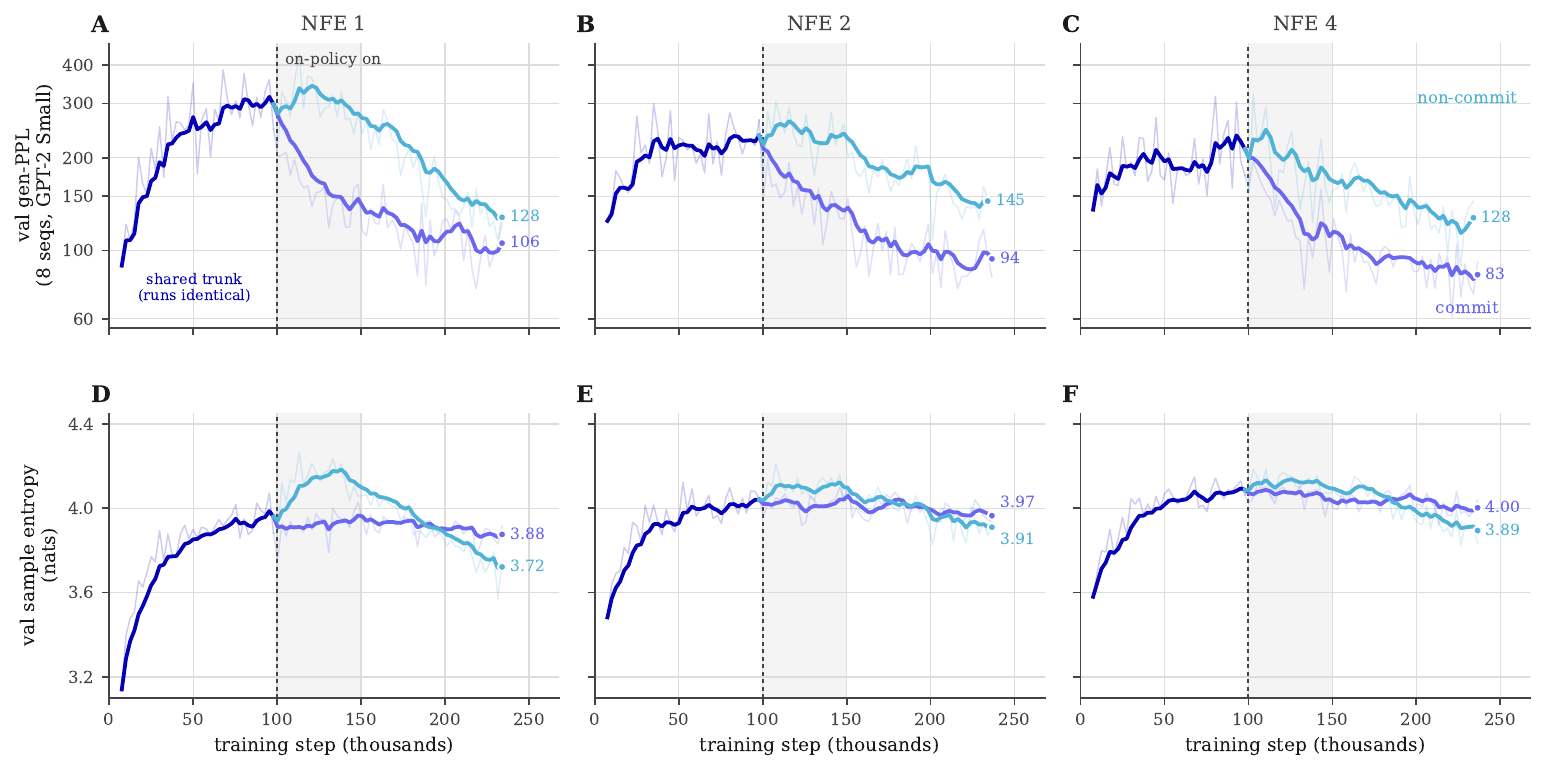}
    \caption{\textbf{Effect of refinement-in-loop training on LM1B.} Gen-PPL (GPT-2-small for efficiency) and sample entropy on a single validation batch of $4$ sequences over training (not directly comparable to the table values). Gen-PPL rises early as entropy approaches the data entropy, and when on-policy training turns on after the $100$K-step warmup, Gen-PPL drops while entropy holds. Panels A--C compare committed (purple) and non-committed (turquoise) rollouts; the committed variant descends faster at every NFE.}
    \label{fig:onpolicy-gate-lm1b}
\end{figure}

\begin{table}[h!]
\centering
\small
\setlength{\tabcolsep}{4pt}
\resizebox{\textwidth}{!}{%
\begin{tabular}{l ccccc ccccc}
\toprule
& \multicolumn{5}{c}{\textbf{commit}} & \multicolumn{5}{c}{\textbf{non-commit}} \\
& \multicolumn{5}{c}{Commit Threshold $\kappa$} & \multicolumn{5}{c}{Commit Threshold $\kappa$} \\
\cmidrule(lr){2-6}\cmidrule(lr){7-11}
NFE & 0.5 & 0.7 & 0.9 & 0.99 & 0.999 & 0.5 & 0.7 & 0.9 & 0.99 & 0.999 \\
\midrule
1   & 86.0 (3.92) & 86.0 (3.92) & 86.0 (3.92) & 86.0 (3.92) & 86.0 (3.92) & 136.2 (3.88) & 136.2 (3.88) & 136.2 (3.88) & 136.2 (3.88) & 136.2 (3.88) \\
2   & 81.0 (4.05) & 81.0 (4.05) & 81.0 (4.05) & 81.0 (4.05) & 81.0 (4.05) & 129.0 (3.92) & \textbf{127.7} (3.98) & 139.4 (4.04) & 142.4 (4.05) & 142.4 (4.05) \\
4   & \textbf{70.1} (4.03) & 72.9 (4.04) & 73.6 (4.05) & 73.6 (4.05) & 73.6 (4.05) & 124.6 (3.91) & 114.5 (3.96) & \textbf{110.9} (4.02) & 118.0 (4.08) & 124.1 (4.10) \\
8   & 63.8 (4.02) & 61.5 (4.01) & \textbf{61.1} (4.00) & 62.2 (4.01) & 62.3 (4.01) & 123.6 (3.91) & 108.4 (3.94) & 96.6 (3.99) & \textbf{95.1} (4.03) & 97.0 (4.06) \\
16  & 61.0 (4.02) & 57.8 (4.00) & 54.4 (3.97) & \textbf{53.7} (3.96) & 54.3 (3.96) & 121.1 (3.91) & 104.6 (3.94) & 87.9 (3.95) & 79.8 (3.98) & \textbf{77.7} (3.99) \\
\bottomrule
\end{tabular}
}
\caption{\textbf{Commit vs.\ non-commit refinement-in-loop training on LM1B, over NFE $\times$ commit threshold $\kappa$.} The lowest Gen-PPL in each row of each arm is \textbf{bolded}. The on-policy rollout that produces the distillation target either hard-commits its confident tokens into the supervised state (\emph{commit}) or re-maps the soft state without committing (\emph{non-commit}), and both arms are trained for 200K steps with all other parameters held constant. The commit setting is used as default in ril training.}
\label{tab:commit-vs-noncommit}
\end{table}

\clearpage
\section{Hyperparameter Discussion}
\label{app:hyperparameters}
Below, we discuss each ablated hyperparameter, what it controls, and the behavior that the sweeps in Table~\ref{tab:ablation-lm1b} show. App \ref{app:training-hyperparams} describes the training-time knobs of the map, and App \ref{app:inference-hyperparams} describes the inference-time knobs that can be changed on a fixed checkpoint.

\subsection{Training Hyperparameters}
\label{app:training-hyperparams}
\paragraph{Adaptive Loss}
To stabilize training, we scale the losses with a gradient-detached weight that downweights large mismatches in the predicted and true target. Denoting the transport map prediction $q(x)=T(x)\in \Delta^{V-1}$ and $p(x)=\texttt{sg}(T(x)+\dot I_t(x)\cdot \nabla T(x))$, we write the mismatch as $\Delta(x)=q(x)-p(x)$ and the weight as:
\begin{align}
    w(x):=\texttt{sg}\left[\left(\|\Delta(x)\|^2_2+c\right)^{-r}\right]
\end{align}
where $c$ and $r$ are tunable hyperparameters.

\paragraph{Minimum Anchor Time}
The threshold $t_{\text{anchor}}$ gates the anchor loss, which supervises $T_\theta(I_t)$ directly toward $x_1$ only for $t>t_{\text{anchor}}$ and sets how much of the interpolant is treated as already committed. We prove in App. \ref{app:optimal-t} that the optimal anchor time aligns closely with the time of the phase transition and perform an ablation over $t_{\text{anchor}}$ on LM1B in Table \ref{tab:anchor-ablation}.

\paragraph{Noise Scale}
The prior scale $\sigma$ in $\mu_0=\mathcal{N}(0,\sigma^2\boldsymbol{I}_V)^{\otimes L}$ sets the radius of the Gaussian hypersphere around the simplex, and enters the commitment time through the signal-to-noise ratio $\rho(t)=(1-\alpha_t)/\alpha_t\sigma$ (See App \ref{app:optimal-t} and Figure \ref{fig:flow-dynamics}). Raising it to $\sigma{=}1.2$ improves Gen-PPL at every NFE budget (178.1 vs.\ 247.1 at NFE 1, 61.9 vs.\ 70.1 at NFE 8) but drops entropy below 4 (Table \ref{tab:ablation-lm1b}).

\paragraph{Schedule Exponent}
The exponent $a$ in $\alpha_t=(1-t)^a$ controls how quickly the interpolant resolves onto a vertex, and for larger $a$, this reduces the commitment time $t^\star$ (Figure \ref{fig:flow-dynamics}). Setting $a{=}1$ reduces to the linear interpolant $I_t=(1-t)x_0+tx_1$. 

\paragraph{Adaptive Loss Constant}
The constant $c$ in the adaptive weight $w(x)=\texttt{sg}[(\|\Delta(x)\|_2^2+c)^{-r}]$ determines how strongly a large prediction mismatch is downweighted, with small $c$ giving a more aggressive reweighting. It is the least sensitive knob we swept, where moving $c$ from $1$ to $10^{-5}$ changes Gen-PPL negligibly and leaves entropy unchanged, demonstrating that our method is robust to minor tuning in the objective. 

\subsection{Inference Hyperparameters}
\label{app:inference-hyperparams}
\paragraph{Rounds of Refinement}
Each round is one map application and one NFE with no additional NFE from computing the confidence or quality score, so the number of rounds $K$ is equivalent to the number of jumps taken with a flow map or integration steps taken with a flow matching model. Quality improves consistently with increasing $k$ until the sampler saturates when every position is committed to a token.

\paragraph{Commit Scorer}
The scorer ranking positions for commitment is either the map's own softmax confidence (rounding error of argmax token) or the learned per-token quality $q_\phi$. All positions with score larger than the confidence threshold $\kappa$ are committed. The sampler commits a minimum of $n_r=\lceil|\mathcal{R}_r|/(k-r+1)\rceil$ highest-scoring uncommitted positions at round regardless of whether their score exceeds $\kappa$, which guarantees the sampler finishes within its NFE budget. 

\paragraph{Confidence Threshold}
The threshold $\kappa$ decides which positions are committed rather than renoised, with position $\ell$ frozen when $q^\ell\geq\kappa$. Its effect depends on whether the per-round minimum commit count $n_r=\lceil |\mathcal{R}_r|/(k-r+1)\rceil$ is met.

\paragraph{Commit Temperature}
Temperature used to scale the proposed categorical probabilities before taking the softmax $T_\theta(x_0):=\text{softmax}(f_\theta(x_0) /\texttt{temp})$. When $\texttt{temp}=0.0$, the committed token is the argmax. 

\paragraph{Prior Scale}
The standard deviation $\sigma$ of the Gaussian prior distribution from which the initial noise latent at each position $\ell$ are sampled $x^\ell_0\sim \mathcal{N}(\boldsymbol{0}, \sigma^2\boldsymbol{I}_V)$.

\paragraph{Repetition Penalty}
For unconditional language modeling, the commit rule reads each position's marginal independently within a round can cause repetition between tokens committed within a single round. As is standard for autoregressive decoding \citep{keskar2019ctrl}, we apply a repetition penalty $\lambda\geq0$ that divides $T_\theta(x)^\ell_j$ by $(1+n_j(\mathcal{C}))^{\lambda}$ on uncommitted tokens before decoding and scoring, where $n_j(\mathcal{C})$ counts committed positions holding token $j$, which has no effect for 1 NFE sampling. We find that after refinement-in-loop training, the penalty is not required for coherent sampling, demonstrating that training on contexts seen during inference improves generation quality.

\section{Ablations and Additional Results}
\paragraph{Overview}
We provide the results for hyperparameter ablations and additional experiments in Tables \ref{tab:ablation-lm1b} to \ref{tab:decode-frontier-owt}.
\begin{itemize}
    \item Table \ref{tab:language-results-nfe} shows the Gen. PPL and entropy for the full sweep of NFEs up to 1024 on LM1B and OWT.
    \item Table \ref{tab:gsm8k-nfe-ril} shows the accuracy for the full sweep of NFEs for TinyGSM/GSM8K against the state-of-the-art few-step baseline FMLM+.
    \item Table \ref{tab:autonomous-field-horizon} compares integrating the autonomous field over increasing time horizons via the endpoint parameterization from Remark \ref{remark:autonomous-flow} with applying the one-step map on LM1B.
    \item Table \ref{tab:ablation-lm1b} shows hyperparameter ablations on LM1B which are used to inform the OWT experiments.
    \item Table \ref{tab:sudoku-threshold-sweep} shows a sweep of NFE and commit threshold $\kappa$ on each difficulty of Sudoku.
    \item Table \ref{tab:anchor-ablation} shows the results from sweeping the minimum anchor time $t_{\text{anchor}}\in \{0.60, 0.75, 0.80, 0.85\}$ during DBTM + ril training for LM1B. 
    \item Table \ref{tab:decode-frontier-lm1b} shows the numerical results corresponding to the frontier curves in Figure \ref{fig:generative-frontier-lm1b} for LM1B, with the Gen-PPL at matched entropy on the Pareto frontiers generated from sweeping inference parameters for each method.
    \item Table \ref{tab:decode-frontier-owt} shows the numerical results corresponding to the frontier curves in Figure \ref{fig:discrete-btm} for OWT, with the Gen-PPL at matched entropy on the Pareto frontiers generated from sweeping inference parameters for each method.
\end{itemize}

\paragraph{Integrating the Autonomous Flow}
We empirically determine the hitting time on LM1B by integrating the time-independent velocity implied by the trained map over a range of horizons $H$ (Table \ref{tab:autonomous-field-horizon}). At short horizons ($H=1$), the integrated trajectory does not reach the correct basin of attraction, but by the time the horizon increases to $H=4$, the integrated field has settled onto a fixed point $x\in M_1$ where $b(x)=0$, and sample quality no longer improves as the horizon grows to $H=8$. However, we empirically observe that applying the one-step map $T_\theta(x)$ consistently outperforms integrating the implied velocity field, even with fine discretization steps ($N_\text{steps}=64$). This supports scaling refinements of the one-step map rather than scaling integration steps, and is the property that motivates the refinement scheme of Section \ref{sec:self-correction}.

\begin{table}[h!]
\centering
\small
\setlength{\tabcolsep}{4pt}
\resizebox{\textwidth}{!}{%
\begin{tabular}{ll cccccccc}
\toprule
& & \multicolumn{8}{c}{\textbf{NFE}} \\
\cmidrule(lr){3-10}
\textbf{Dataset} & \textbf{Method} & 1 & 2 & 4 & 8 & 16 & 128 & 256 & 1024 \\
\midrule
 & FMLM                   & 119.34 (4.16) & 110.19 (4.21) & 98.76 (4.21) & 86.28 (4.20) & 78.06 (4.21) & 57.73 (4.20) & 51.03 (4.18) & 41.84 (4.15) \\
 & DFM (PSD)              & 94.08 (4.06) & 87.42 (4.08) & 78.89 (4.10) & 69.90 (4.10) & 64.90 (4.11) & 58.38 (4.10) & 56.59 (4.10) & 56.31 (4.11) \\
 & DFM (ESD)              & 68.11 (3.79) & 77.60 (4.11) & 71.53 (4.13) & 65.61 (4.13) & 59.92 (4.13) & 55.70 (4.11) & 56.88 (4.12) & 58.27 (4.12) \\
 \rowcolor{mybg}
 & \textbf{DBTM} & 201.20 (4.03) & 167.05 (4.19) & 119.99 (4.20) & 90.19 (4.16) & 71.42 (4.11) & 58.81 (4.03) & 58.81 (4.03) & 58.81 (4.03) \\
 \rowcolor{mybg}
 \multirow{-5}{*}{\textbf{LM1B}} & \textbf{DBTM + ril} & 85.50 (3.92) & 81.00 (4.05) & 73.54 (4.04) & 60.86 (4.00) & 54.48 (3.97) & 51.26 (3.95) & 51.26 (3.95) & 51.26 (3.95) \\
\midrule
 & FMLM                   & 168.30 (5.17) & 133.29 (5.25) & 111.31 (5.26) & 88.27 (5.26) & 68.06 (5.24) & 28.73 (4.88) & 20.26 (4.70) & 9.10 (4.13) \\
 & DFM (PSD)              & 180.29 (4.91) & 152.83 (5.03) & 122.32 (5.10) & 98.54 (5.11) & 82.51 (5.09) & 56.00 (5.00) & 51.81 (4.97) & 47.82 (4.97) \\
 & DFM (ESD)              & 5.33 (0.26) & 108.91 (5.15) & 77.08 (5.27) & 62.98 (5.23) & 55.03 (5.18) & 41.90 (5.04) & 39.08 (5.00) & 36.48 (4.95) \\
& FMLM+ & 378.53 (5.34) & 114.46 (5.14) & 41.16 (4.77) & 25.15 (4.53) & 21.58 (4.42) & 20.79 (4.40) & 20.72 (4.39) & 20.60 (4.37) \\
 \rowcolor{mybg}
 & \textbf{DBTM} (attn) & 249.8 (4.77) & 156.3 (5.06) & 123.1 (5.18) & 107.9 (5.20) & 98.6 (5.20) & 86.7 (5.20) & 83.2 (5.20) & 80.4 (5.22) \\
 \rowcolor{mybg}
& \textbf{DBTM + ril} (attn) & 133.0 (4.94) & 109.6 (5.24) & 115.5 (5.42) & 120.0 (5.48) & 121.0 (5.50) & 120.5 (5.54) & 117.6 (5.55) & 117.6 (5.55) \\
 \rowcolor{mybg}
 & \textbf{DBTM} (lin) & 188.2 (4.97) & 76.2 (5.14) & 52.2 (5.26) & 42.2 (5.28) & 37.0 (5.27) & 31.6 (5.25) & 30.7 (5.25) & 30.1 (5.25) \\
  \rowcolor{mybg}
 \multirow{-7}{*}{\textbf{OWT}}  & \textbf{DBTM + ril} (lin) & 65.2 (4.95) & 62.2 (5.38) & 57.5 (5.52) & 52.3 (5.56) & 48.8 (5.57) & 42.2 (5.57) & 41.0 (5.56) & 40.9 (5.56) \\
\bottomrule
\end{tabular}
}
\caption{\textbf{Full NFE sweep for LM1B and OWT.} Generative perplexity (GPT-2 Large; $\downarrow$) and entropy ($\uparrow$) across NFEs for LM1B and OWT compared to few-step flow map baselines FMLM, discrete flow maps (DFM), and FMLM with posterior refinement (FMLM+). (attn) denotes the same DiT architecture as FMLM, and (lin) denotes the linear attention architecture with matched parameters described in App. \ref{app:language-exp-details}. FMLM and FMLM+ values are evaluated with published checkpoints, and DFM values are taken from the paper. All values are computed from 1024 sequences. For DBTM and FMLM+, NFE is the number of refinement rounds. DBTM LM1B checkpoints are evaluated at 200K steps, and DBTM OWT checkpoints are evaluated at 100K steps.}
\label{tab:language-results-nfe}
\end{table}

\begin{table}[h!]
\centering
\small
\setlength{\tabcolsep}{5pt}
\begin{tabular}{l ccccccc}
\toprule
& \multicolumn{7}{c}{\textbf{GSM8K} (NFE)} \\
\cmidrule(lr){2-8}
Method & 1 & 2 & 4 & 8 & 16 & 32 & 64 \\
\midrule
FMLM+ & 0 & 0.6 & \textbf{5.7} & 9.7 & \textbf{14.5} & 16.6 & 15.0 \\
\rowcolor{mybg} \textbf{DBTM} & \textbf{0.8} & \textbf{1.1} & 5.7 & \textbf{10.1} & 14.3 & \textbf{16.8} & \textbf{16.2} \\
\bottomrule
\end{tabular}
\caption{\textbf{GSM8K accuracy across NFE for DBTM vs.\ FMLM+.} Final-answer accuracy (\%, $\uparrow$) on the full 1319-problem GSM8K test set, one greedy sample per question ($k{=}1$) from a single checkpoint per row, where NFE is the number of refinement rounds and both samplers spend exactly one network evaluation per round (\textbf{bold} $=$ better of the two in the column). FMLM+ \citep{agarwal2026posterior} is the published Posterior Refinement checkpoint with refinement threshold $0.99$ and DBTM is the refinement-in-loop arm at $q{=}0.8$. Both methods are evaluated at 250K steps, commits ranked by softmax confidence, and fresh noise redrawn each round.}
\label{tab:gsm8k-nfe-ril}
\end{table}

\begin{table}[t]
\centering
\begin{tabular}{l ccc c}
\toprule
& \multicolumn{3}{c}{\textbf{Autonomous field} (horizon $H$)} & \textbf{One-step map} \\
\cmidrule(lr){2-4}\cmidrule(lr){5-5}
& $H{=}1$ & $H{=}4$ & $H{=}8$ & NFE 1 \\
\midrule
Gen-PPL~$\downarrow$ & 3571.8 & 276.1 & 293.2 & 247.1 \\
Entropy              & 4.822  & 4.378          & 4.386 & 4.115 \\
\bottomrule
\end{tabular}
\caption{\textbf{Autonomous flow horizon sweep on LM1B.} Integrating the learned autonomous field $b$ with 64 Euler steps up to horizon $H$, compared against a single application of the one-step map $T_\theta$ (DBTM without ril). At $H{=}1$, the flow has not converged and generations remain incoherent. At $H{=}4$, quality saturates, indicating convergence to the manifold. The one-step map outperforms the fully integrated field in one NFE, demonstrating the advantage of directly learning $T$.}\label{tab:autonomous-field-horizon}
\end{table}

\begin{table}[h!]
\centering
\small
\begin{tabular}{l cc cc}
\toprule
& \multicolumn{2}{c}{\textbf{One-step} (NFE 1)} & \multicolumn{2}{c}{\textbf{Refinement} (NFE 8)} \\
\cmidrule(lr){2-3}\cmidrule(lr){4-5}
Configuration & Gen-PPL~$\downarrow$ & Ent. & Gen-PPL~$\downarrow$ & Ent. \\
\midrule
\textit{Noise scale $\sigma$} & & & & \\
\rowcolor{mybg} \textbf{1.0} & 247.12 & 4.11 & 70.08 & 4.02 \\
1.2 & 178.15 & 3.94 & 61.90 & 3.98 \\
\midrule
\textit{Schedule exponent $a$} & & & & \\
\rowcolor{mybg} \textbf{1} & 247.12 & 4.11 & 70.08 & 4.02 \\
2 & 221.32 & 4.03 & 77.38 & 3.98 \\
\midrule
\textit{Adaptive-loss constant $c$} & & & & \\
\rowcolor{mybg} \textbf{1} & 247.12 & 4.11 & 70.08 & 4.02 \\
$10^{-5}$ & 260.31 & 4.12 & 69.00 & 4.01 \\
\midrule
\textit{Commit scorer} & & & & \\
\rowcolor{mybg} \textbf{Softmax confidence} & 247.12 & 4.11 & 70.08 & 4.02 \\
Learned $q_\phi$ (shared head) & 247.12 & 4.11 & 65.31 & 3.75 \\
\midrule
\textit{Refinement rule} & & & & \\
Iterate the map $T_\theta^{\circ k}$ & 178.15 & 3.94 & 501.24 & 4.23 \\
\rowcolor{mybg} \textbf{Renoise + remap} & 178.15 & 3.94 & 61.90 & 3.98 \\
\bottomrule
\end{tabular}
\caption{\textbf{Ablation results on LM1B.} Each block varies one knob off the default parameters (clean-context per-token $\alpha$, learned weighting, adaptive transport $r{=}0.5$, CE endpoint, OT coupling, linear interpolation, uniform time, batch 128) at $t_{\text{anchor}}{=}0.75$, $\sigma{=}1.0$, $a{=}1$, $c{=}1$; \colorbox{mybg}{shaded} $=$ the setting the paper uses.}
\label{tab:ablation-lm1b}
\end{table}

\begin{table}[h!]
\centering
\small
\setlength{\tabcolsep}{4pt}
\resizebox{\linewidth}{!}{
\begin{tabular}{l cccc>{\columncolor{mybg}}c c cccc>{\columncolor{mybg}}c c cccc>{\columncolor{mybg}}c}
\toprule
& \multicolumn{5}{c}{\textbf{Easy}} & & \multicolumn{5}{c}{\textbf{Medium}} & & \multicolumn{5}{c}{\textbf{Hard}} \\
\cmidrule(lr){2-6}\cmidrule(lr){8-12}\cmidrule(lr){14-18}
NFE (Rounds) & 0.5 & 0.7 & 0.9 & 0.99 & 0.999 & & 0.5 & 0.7 & 0.9 & 0.99 & 0.999 & & 0.5 & 0.7 & 0.9 & 0.99 & 0.999 \\
\midrule
1 & 63.65 & 63.65 & 63.65 & 63.65 & 63.65 & & 39.50 & 39.50 & 39.50 & 39.50 & 39.50 & & 1.60 & 1.60 & 1.60 & 1.60 & 1.60 \\
2 & \textbf{94.40} & 94.20 & 92.20 & 88.65 & 85.10 & & \textbf{85.75} & 83.15 & 78.70 & 70.30 & 63.45 & & \textbf{17.70} & 13.35 & 8.80 & 6.05 & 5.55 \\
4 & 97.35 & 98.05 & 98.85 & \textbf{99.70} & 99.50 & & 92.25 & 94.60 & 96.30 & \textbf{97.65} & 97.30 & & 80.15 & 84.65 & \textbf{86.05} & 85.70 & 84.60 \\
8 & 97.20 & 97.90 & 98.75 & \textbf{99.80} & 99.75 & & 92.55 & 94.90 & 96.95 & 98.85 & \textbf{99.15} & & 81.60 & 88.70 & 93.10 & 94.60 & \textbf{95.00} \\
16 & 97.40 & 98.20 & 99.05 & 99.75 & \textbf{99.90} & & 93.45 & 95.35 & 97.50 & 98.85 & \textbf{99.35} & & 83.35 & 89.60 & 94.85 & 96.85 & \textbf{97.45} \\
\bottomrule
\end{tabular}
}
\caption{\textbf{Commit threshold sweep for DBTM + ril on Sudoku.} Exact-solve accuracy (\%) on the 2000-puzzle held-out set over NFE (refinement rounds) $\times$ commit threshold $\kappa\in\{0.5,0.7,0.9,0.99,0.999\}$ for the checkpoints reported in Table \ref{tab:reasoning-results}. \colorbox{mybg}{Shaded} $=$ the fixed $\kappa=0.999$ used there. The best in each row within each difficulty is \textbf{bolded}; NFE 1 is a one-shot decode and does not depend on $\kappa$. Lower thresholds commit more positions per round and plateau early, so at NFE 8-16 accuracy increases with $\kappa$ on every difficulty, and $\kappa=0.999$ is best or within one puzzle of best. At NFE 2 the ordering reverses, since a strict threshold leaves most positions to be forced by the commit floor in the final round, and at NFE 4 the optimum sits slightly below the strictest setting.}
\label{tab:sudoku-threshold-sweep}
\end{table}

\begin{table}[h!]
\centering
\resizebox{\linewidth}{!}{
\begin{tabular}{c c>{\columncolor{mybg}}c cc c c>{\columncolor{mybg}}c cc}
\toprule
$t_{\text{anchor}}$ & \multicolumn{4}{c}{0.60} & & \multicolumn{4}{c}{\textbf{0.75}$^\star$} \\
\cmidrule(lr){2-5}\cmidrule(lr){7-10}
NFE (Rounds) & 0.7 & 0.9 & 0.99 & 0.999 & & 0.7 & 0.9 & 0.99 & 0.999 \\
\midrule
1  & 47.4 (3.65) & 47.4 (3.65) & 47.4 (3.65) & 47.4 (3.65) & & 92.9 (3.93) & 92.9 (3.93) & 92.9 (3.93) & 92.9 (3.93) \\
2  & 41.3 (3.86) & 41.3 (3.86) & 41.3 (3.86) & 41.3 (3.86) & & 88.0 (4.06) & 88.0 (4.06) & 88.0 (4.06) & 88.0 (4.06) \\
4  & 35.8 (3.88) & 36.1 (3.89) & 36.1 (3.89) & 36.1 (3.89) & & 79.5 (4.05) & 81.0 (4.06) & 81.0 (4.06) & 81.0 (4.06) \\
8  & 29.1 (3.87) & 29.7 (3.86) & 29.9 (3.86) & 29.9 (3.86) & & 66.5 (4.02) & 66.5 (4.01) & 68.3 (4.01) & 68.3 (4.01) \\
16 & 27.1 (3.86) & 26.0 (3.84) & 25.9 (3.84) & 25.9 (3.84) & & 62.4 (4.00) & 58.4 (3.97) & 59.1 (3.96) & 59.4 (3.96) \\
\midrule
$t_{\text{anchor}}$ & \multicolumn{4}{c}{0.80} & & \multicolumn{4}{c}{0.85} \\
\cmidrule(lr){2-5}\cmidrule(lr){7-10}
NFE (Rounds) & 0.7 & 0.9 & 0.99 & 0.999 & & 0.7 & 0.9 & 0.99 & 0.999 \\
\midrule
1  & 110.8 (3.97) & 110.8 (3.97) & 110.8 (3.97) & 110.8 (3.97) & & 174.9 (4.12) & 174.9 (4.12) & 174.9 (4.12) & 174.9 (4.12) \\
2  & 109.2 (4.09) & 109.2 (4.09) & 109.2 (4.09) & 109.2 (4.09) & & 170.6 (4.23) & 170.6 (4.23) & 170.6 (4.23) & 170.6 (4.23) \\
4  & 99.0 (4.09)  & 100.6 (4.10) & 100.5 (4.09) & 100.5 (4.09) & & 155.5 (4.23) & 156.2 (4.23) & 156.2 (4.23) & 156.2 (4.23) \\
8  & 82.9 (4.06)  & 84.1 (4.04)  & 85.1 (4.05)  & 85.1 (4.05)  & & 125.4 (4.18) & 125.2 (4.17) & 126.9 (4.18) & 126.9 (4.18) \\
16 & 76.0 (4.04)  & 71.6 (4.00)  & 72.4 (3.99)  & 72.7 (4.00)  & & 112.6 (4.16) & 103.5 (4.12) & 105.4 (4.12) & 105.2 (4.12) \\
\bottomrule
\end{tabular}
}
\caption{\textbf{Effect of the minimum anchor time $t_{\text{anchor}}$ for DBTM + ril training on LM1B.} Gen-PPL (entropy) over NFE (refinement rounds) $\times$ threshold grid across $t_{\text{anchor}}\in \{0.60,0.75,0.80,0.85\}$, where $^\star$ indicates the default $t_{\text{anchor}}=0.75$ for LM1B and OWT. Columns within each anchor are commit thresholds ($\kappa\in \{0.7, 0.9, 0.99, 0.999\}$) and rows are NFE (refinement rounds); the $\kappa=0.9$ columns used in the main results are highlighted. Thresholds maintain constant performance at low rounds and take effect only at rounds 8--16, where stricter thresholds lower Gen-PPL. }
\label{tab:anchor-ablation}
\end{table}

\begin{table}[h!]
\centering
\small
\setlength{\tabcolsep}{5pt}
\resizebox{\textwidth}{!}{%
\begin{tabular}{l l cc c ccccccc}
\toprule
& & \multicolumn{2}{c}{\textbf{Best}} & & \multicolumn{7}{c}{\textbf{Gen-PPL at matched entropy}} \\
\cmidrule(lr){3-4}\cmidrule(lr){6-12}
NFE & Method & Gen-PPL~$\downarrow$ & at Ent. & Ent.\ range & 3.90 & 4.00 & 4.10 & 4.15 & 4.20 & 4.25 & 4.30 \\
\midrule
 & FMLM                 & 112.8 & 4.124 & 4.12--4.18 & -- & -- & -- & \textbf{116.2} & -- & -- & -- \\
\rowcolor{mybg} \cellcolor{white} & DBTM                 & 199.5 & 4.045 & 4.04--4.63 & -- & -- & 252.0 & 370.2 & 459.3 & 557.3 & 685.9 \\
\rowcolor{mybg} \cellcolor{white}\multirow{-3}{*}{1} & DBTM $+$ ril         & \textbf{86.0} & 3.917 & 3.92--4.40 & -- & \textbf{122.4} & \textbf{210.5} & 242.2 & \textbf{306.2} & \textbf{388.1} & \textbf{497.7} \\
\midrule
 & FMLM                 & 104.6 & 4.196 & 4.20--4.32 & -- & -- & -- & -- & \textbf{105.2} & \textbf{122.9} & \textbf{139.4} \\
\rowcolor{mybg} \cellcolor{white} & DBTM                 & 147.0 & 4.084 & 4.08--4.22 & -- & -- & 147.7 & 156.7 & 177.5 & -- & -- \\
\rowcolor{mybg} \cellcolor{white}\multirow{-3}{*}{2} & DBTM $+$ ril         & \textbf{71.6} & 3.967 & 3.97--4.25 & -- & \textbf{75.1} & \textbf{89.1} & \textbf{99.3} & 112.0 & 128.5 & -- \\
\midrule
 & FMLM                 & 93.4 & 4.207 & 4.21--4.32 & -- & -- & -- & -- & -- & 109.0 & 124.8 \\
\rowcolor{mybg} \cellcolor{white} & DBTM                 & 101.4 & 4.046 & 4.05--4.32 & -- & -- & 105.7 & 115.5 & 140.7 & 151.5 & 164.1 \\
\rowcolor{mybg} \cellcolor{white}\multirow{-3}{*}{4} & DBTM $+$ ril         & \textbf{62.5} & 3.926 & 3.93--4.40 & -- & \textbf{69.7} & \textbf{79.8} & \textbf{86.4} & \textbf{93.8} & \textbf{103.0} & \textbf{113.6} \\
\midrule
 & FMLM                 & 85.3 & 4.190 & 4.19--4.30 & -- & -- & -- & -- & 85.4 & 98.8 & -- \\
\rowcolor{mybg} \cellcolor{white} & DBTM                 & 74.8 & 3.986 & 3.99--4.31 & -- & 75.5 & 85.8 & 93.2 & 109.5 & 117.1 & 126.2 \\
\rowcolor{mybg} \cellcolor{white}\multirow{-3}{*}{8} & DBTM $+$ ril         & \textbf{49.5} & 3.867 & 3.87--4.45 & \textbf{50.8} & \textbf{60.1} & \textbf{68.6} & \textbf{73.2} & \textbf{78.7} & \textbf{85.4} & \textbf{92.7} \\
\midrule
 & FMLM                 & 76.7 & 4.219 & 4.22--4.31 & -- & -- & -- & -- & -- & 89.6 & 115.7 \\
\rowcolor{mybg} \cellcolor{white} & DBTM                 & 60.3 & 3.910 & 3.91--4.28 & -- & 65.2 & 76.0 & 85.4 & 90.5 & 96.1 & -- \\
\rowcolor{mybg} \cellcolor{white}\multirow{-3}{*}{16} & DBTM $+$ ril         & \textbf{44.2} & 3.832 & 3.83--4.47 & \textbf{46.6} & \textbf{57.3} & \textbf{62.1} & \textbf{66.1} & \textbf{70.4} & \textbf{75.8} & \textbf{81.7} \\
\bottomrule
\end{tabular}
}
\caption{\textbf{LM1B decode frontier shown in Figure \ref{fig:generative-frontier-lm1b}: DBTM vs.\ FMLM, swept over several inference-time knobs.} Each method is swept over the knobs that trace its own coherence-diversity trade-off, all of its sweeps are pooled, and the Pareto frontier is taken (a point survives if no other has both higher entropy and lower Gen-PPL). The best in the column within each NFE block are \textbf{bolded}. Blank cells indicate entropy values where the method produces no point on the Pareto frontier. \textbf{Swept parameters.} \emph{DBTM} (both rows): (i) \textbf{commit temperature} $\texttt{temp}\in[0,1.2]$, (ii) \textbf{prior scale} $\sigma\in[0.6,5]$, (iii) \textbf{commit threshold} $\kappa\in[0.1,0.999]$, (iv) \textbf{repetition penalty} $\lambda\in\{0,0.5,1\}$. \emph{FMLM}: (i) \textbf{churn} $\gamma\in[0,0.99]$, the noise re-injected between sampler steps, and (ii) the same \textbf{prior scale} $\sigma$, applied to its initial latent.}
\label{tab:decode-frontier-lm1b}
\end{table}

\begin{table}[h!]
\centering
\small
\setlength{\tabcolsep}{4pt}
\resizebox{\textwidth}{!}{%
\begin{tabular}{l l cc c cccccccccc}
\toprule
& & \multicolumn{2}{c}{\textbf{Best}} & & \multicolumn{10}{c}{\textbf{Gen-PPL at matched entropy}} \\
\cmidrule(lr){3-4}\cmidrule(lr){6-15}
NFE & Method & Gen-PPL~$\downarrow$ & at Ent. & Ent.\ range & 5.00 & 5.10 & 5.20 & 5.30 & 5.35 & 5.40 & 5.45 & 5.50 & 5.60 & 5.70 \\
\midrule
 & FMLM & 168.2 & 5.174 & 5.17--5.40 & -- & -- & 177.9 & 249.0 & 306.0 & 372.4 & -- & -- & -- & -- \\
 & FMLM$+$ & 378.5 & 5.344 & 5.34--5.50 & -- & -- & -- & -- & 384.4 & 433.9 & 509.4 & -- & -- & -- \\
\rowcolor{mybg} \cellcolor{white} & DBTM (attn) & 369.9 & 5.001 & 4.90--5.16 & 369.3 & 457.1 & -- & -- & -- & -- & -- & -- & -- & -- \\
\rowcolor{mybg} \cellcolor{white} & DBTM $+$ ril (attn) & 253.3 & 5.171 & 5.17--5.54 & -- & -- & 274.9 & 361.6 & 423.6 & 485.6 & 547.6 & 609.6 & -- & -- \\
\rowcolor{mybg} \cellcolor{white} & DBTM (lin) & 234.0 & 5.102 & 4.97--5.31 & 196.4 & 233.3 & 291.0 & 366.1 & -- & -- & -- & -- & -- & -- \\
\rowcolor{mybg} \cellcolor{white}\multirow{-6}{*}{1} & DBTM $+$ ril (lin) & \textbf{113.6} & 5.098 & 4.95--5.57 & \textbf{82.4} & \textbf{114.5} & \textbf{161.7} & \textbf{225.6} & \textbf{276.5} & \textbf{327.4} & \textbf{378.3} & \textbf{429.2} & -- & -- \\
\midrule
 & FMLM & 137.4 & 5.265 & 5.26--5.36 & -- & -- & -- & 163.2 & 235.2 & -- & -- & -- & -- & -- \\
 & FMLM$+$ & 114.5 & 5.136 & 5.14--5.60 & -- & -- & 142.6 & 223.5 & 299.6 & 379.5 & 432.4 & 515.6 & 666.7 & -- \\
\rowcolor{mybg} \cellcolor{white} & DBTM (attn) & 224.2 & 5.075 & 4.96--5.47 & 194.2 & 227.4 & 240.3 & 270.6 & 304.3 & 348.1 & 394.5 & -- & -- & -- \\
\rowcolor{mybg} \cellcolor{white} & DBTM $+$ ril (attn) & 199.5 & 5.370 & 5.37--5.78 & -- & -- & -- & -- & -- & 221.5 & 246.5 & 290.8 & 393.1 & 466.8 \\
\rowcolor{mybg} \cellcolor{white} & DBTM (lin) & 76.6 & 5.143 & 4.85--5.77 & \textbf{66.8} & \textbf{73.6} & 90.7 & 116.5 & 136.2 & 159.3 & 186.1 & 238.3 & 392.7 & 454.3 \\
\rowcolor{mybg} \cellcolor{white}\multirow{-6}{*}{2} & DBTM $+$ ril (lin) & \textbf{43.3} & 5.113 & 5.11--5.94 & -- & -- & \textbf{49.8} & \textbf{53.8} & \textbf{58.6} & \textbf{69.2} & \textbf{85.1} & \textbf{92.3} & \textbf{141.5} & \textbf{250.9} \\
\midrule
 & FMLM & 119.1 & 5.296 & 5.30--5.36 & -- & -- & -- & 121.6 & 185.5 & -- & -- & -- & -- & -- \\
 & FMLM$+$ & 111.4 & 5.165 & 4.80--5.63 & 87.1 & 101.8 & 126.8 & 218.4 & 245.2 & 272.0 & 298.8 & 325.6 & 464.5 & -- \\
\rowcolor{mybg} \cellcolor{white} & DBTM (attn) & 168.0 & 5.016 & 4.98--5.65 & 152.3 & 169.1 & 176.9 & 184.8 & 194.4 & 224.4 & 252.3 & 304.4 & 379.9 & -- \\
\rowcolor{mybg} \cellcolor{white} & DBTM $+$ ril (attn) & 169.2 & 5.376 & 5.38--5.97 & -- & -- & -- & -- & -- & 170.7 & 173.7 & 176.6 & 195.3 & 227.4 \\
\rowcolor{mybg} \cellcolor{white} & DBTM (lin) & 47.5 & 5.236 & 4.83--5.97 & \textbf{40.9} & \textbf{43.7} & 46.5 & 60.3 & 72.5 & 82.4 & 96.0 & 139.2 & 170.9 & 274.0 \\
\rowcolor{mybg} \cellcolor{white}\multirow{-6}{*}{4} & DBTM  $+$ ril (lin) & \textbf{34.7} & 5.128 & 5.13--6.03 & -- & -- & \textbf{39.2} & \textbf{42.0} & \textbf{42.4} & \textbf{44.5} & \textbf{48.0} & \textbf{51.9} & \textbf{77.2} & \textbf{90.3} \\
\midrule
 & FMLM & 95.5 & 5.280 & 5.28--5.38 & -- & -- & -- & 99.9 & 131.0 & -- & -- & -- & -- & -- \\
 & FMLM$+$ & 69.0 & 5.011 & 4.82--5.66 & 67.0 & 87.4 & 99.2 & 147.1 & 160.4 & 189.5 & 229.2 & 261.3 & 318.1 & -- \\
\rowcolor{mybg} \cellcolor{white} & DBTM  (attn) & 137.9 & 5.072 & 4.97--5.72 & 118.9 & 139.8 & 146.7 & 153.5 & 163.9 & 187.9 & 197.7 & 233.1 & 291.5 & 448.6 \\
\rowcolor{mybg} \cellcolor{white} & DBTM $+$ ril (attn) & 166.4 & 5.645 & 5.64--6.10 & -- & -- & -- & -- & -- & -- & -- & -- & -- & 180.6 \\
\rowcolor{mybg} \cellcolor{white} & DBTM (lin) & \textbf{33.3} & 5.269 & 4.88--6.03 & \textbf{26.4} & \textbf{28.9} & \textbf{31.5} & 43.6 & 46.7 & 51.6 & 61.3 & 74.1 & 167.7 & 186.5 \\
\rowcolor{mybg} \cellcolor{white}\multirow{-6}{*}{8} & DBTM $+$ ril (lin) & 37.2 & 5.423 & 4.82--6.04 & 32.8 & 33.9 & 34.9 & \textbf{35.9} & \textbf{36.4} & \textbf{37.0} & \textbf{38.8} & \textbf{41.9} & \textbf{52.6} & \textbf{71.8} \\
\midrule
 & FMLM & 76.8 & 5.267 & 5.27--5.44 & -- & -- & -- & 84.8 & 88.7 & 95.6 & -- & -- & -- & -- \\
 & FMLM$+$ & 62.0 & 5.001 & 4.96--5.63 & 61.7 & 75.5 & 89.1 & 140.6 & 143.2 & 145.9 & 202.7 & 251.5 & 295.1 & -- \\
\rowcolor{mybg} \cellcolor{white} & DBTM (attn) & 120.7 & 5.047 & 4.94--5.81 & 110.0 & 123.9 & 130.1 & 136.2 & 141.1 & 148.3 & 155.6 & 184.7 & 257.7 & 402.3 \\
\rowcolor{mybg} \cellcolor{white} & DBTM $+$ ril (attn) & 131.3 & 5.614 & 5.61--6.11 & -- & -- & -- & -- & -- & -- & -- & -- & -- & 135.6 \\
\rowcolor{mybg} \cellcolor{white} & DBTM (lin) & \textbf{21.3} & 5.005 & 4.88--5.99 & \textbf{21.2} & \textbf{23.5} & \textbf{25.8} & \textbf{37.5} & \textbf{37.6} & 41.6 & 54.3 & 63.5 & 94.1 & 141.9 \\
\rowcolor{mybg} \cellcolor{white}\multirow{-6}{*}{16} & DBTM $+$ ril (lin) & 30.1 & 5.355 & 5.36--6.12 & -- & -- & -- & -- & -- & \textbf{32.0} & \textbf{34.2} & \textbf{36.2} & \textbf{42.3} & \textbf{48.9} \\
\bottomrule
\end{tabular}
}
\caption{\textbf{OWT decode frontier shown in Figure \ref{fig:discrete-btm}: DBTM vs.\ FMLM and FMLM+, swept over several inference-time knobs.} Companion to the decode-frontier figure. Each method is swept over the knobs that trace its own coherence-diversity trade-off; all of its sweeps are pooled, and the Pareto frontier is taken (a point survives if no other has both higher entropy and lower Gen-PPL). The best in the column within each NFE block are \textbf{bolded}. Blank cells indicate entropy values where the method produces no point on the Pareto frontier. Entropy is the within-sequence token-frequency entropy, with real OWT at entropy $5.441$. (attn) denotes the same DiT architecture as FMLM, and (lin) denotes the linear attention architecture with matched parameters described in App. \ref{app:language-exp-details}. FMLM and FMLM+ values are evaluated with published checkpoints. \textbf{Swept parameters.} \emph{DBTM}: (i) \textbf{commit temperature} $\texttt{temp}\in\{0,0.7,1.0\}$, (ii) \textbf{prior scale} $\sigma\in[0.6,5]$, (iii) \textbf{commit threshold} $\kappa\in[0.1,0.999]$, (iv) \textbf{repetition penalty} $\lambda\in\{0,0.5,1\}$. \emph{FMLM}: (i) \textbf{churn} $\gamma\in\{0,0.1,0.2,0.3,0.5\}$, the noise re-injected between sampler steps, and (ii) a \textbf{commit temperature} $\texttt{temp}\in\{0,0.5,0.7,0.85,1.0\}$ at its final argmax, which was added on top of the method. \emph{FMLM$+$}: (i) the \textbf{split of the NFE budget} into refinement rounds $\times$ ODE substeps, (ii) the \textbf{commit schedule}, a uniform per-round budget or a fixed top-$k\in\{64,128,256,384,768\}$ per round, (iii) \textbf{commit threshold} $\kappa\in\{0.9,0.999\}$, and (iv) a \textbf{commit temperature} $\texttt{temp}\in[0,2]$ at its decode, also added on top of the method.}
\label{tab:decode-frontier-owt}
\end{table}

\clearpage
\section{Experiment Details}
\label{app:exp-details}
\subsection{General Experiment Details}
\label{app:general-exp-details}
\paragraph{Training Details}
All models use AdamW ($\beta_1{=}0.9$, $\beta_2{=}0.999$, $\epsilon{=}10^{-8}$) in bf16, zero weight decay, gradient clipping at $1.0$, and peak learning rate $3\times10^{-4}$ with linear warmup. We maintain an exponential moving average (EMA) of the map weights with decay $0.9999$, and a second, independent EMA of the predictor $\phi$ with the same decay. All reported results use the EMA weights. The global batch is given by per-GPU batch $\times$ devices $\times$ accumulation, so runs reproduce at any device count: $512$ (per-GPU $128$) for language modeling, $256$ for reasoning. 

\paragraph{Architecture}
The backbone architecture is a DiT \citep{peebles2023scalable} with adaLN conditioning (dimension $128$) and rotary positional embeddings \citep{su2024roformer}, acting on a continuous $(B,L,V)$ simplex latent without conditioning on time $t$. Language modeling and TinyGSM use $768$ hidden / $12$ layers / $12$ heads, Sudoku $512$ / $8$ / $8$, all with FFN multiplier $4$, dropout $0.1$, untied embeddings, and a $\tanh$ softcap of $50$ on attention logits (Table~\ref{tab:hyperparameters}).

\paragraph{Fused Kernel for JVP Computation}
The transport loss \eqref{loss:transport} is defined through $\dot I_t\cdot\nabla T_\theta(I_t)$, so each step needs a Jacobian-vector product (JVP) through the network, evaluated in forward mode. This dominates our training cost since it requires materializing the $\texttt{(B, heads, L, L)}$ score tensor. We instead use a fused Triton kernel~\citep{zhou2026terminal} that computes the attention output and its derivative together in a single streaming pass, so the full score tensor is never held in memory. The two passes share one sweep over the keys and values: the ordinary forward pass only reserves the output buffers, and the kernel fills them while computing the derivative. 

Backpropagation is handled by two separate kernels, one for the output and the derivative, which can be cleanly switched off to the standard JVP mechanism by setting \texttt{fast\_jvp\_attn=false}. Measured on an A100 over a four-block DiT, it cuts peak memory by $1.4\times$ at $B{=}64$, $L{=}128$, $V{=}12$ and by $3.7\times$ at $B{=}8$, $L{=}1024$, $V{=}8192$, with a $1.2\times$ and $1.7\times$ speedup respectively, while matching the unfused implementation to a cosine similarity of at least $0.99999$ on the prediction, the derivative, and the gradients. One limitation is that the kernel does not implement the $\tanh$ softcap on attention logits. We therefore checked on each task that the logits stay well below the cap before using the fused kernel.

\subsection{Unconditional Language Modeling Experiment Details}
\label{app:language-exp-details}
\paragraph{Datasets and Tokenization}
LM1B uses the \texttt{bert-base-uncased} tokenizer ($V{=}30{,}522$) at length $128$ and OpenWebText uses GPT-2 BPE ($V{=}50{,}257$) at length $1024$. Documents are concatenated and wrapped into fixed-length blocks with EOS at every boundary, so all sequences are full and no padding is needed.

\paragraph{Linear-Attention Backbone} 
The \textbf{DBTM + lin} rows replace nine of the twelve attention blocks with bidirectional gated delta-rule mixers following DeltaFlow \citep{guo2026deltaflow}, keeping full attention at every fourth block (\texttt{[gdn,gdn,gdn,attn]}, chunk size 256, alternating scan direction). Each gated DeltaNet \citep{yang2025gated,yang2024parallelizing} runs forward and backward over the sequence. Since DBTM is autonomous, the decay and write-rate gates that DeltaFlow conditions on diffusion time are instead conditioned on the current simplex state. We do not use DeltaFlow's temporal-state-consistency loss. Hidden size, depth, heads, dropout, the predictor $\phi$ and every training knob are identical to the DiT architecture with matched parameter count (171.9M for attention, 177.5M for linear), so the two backbones differ only in the attention mechanism. The linear rows are trained from scratch at the same batch size and learning rate and reported at the same step count. The delta-rule attention mechanism costs time linear in the sequence length $L$, whereas dense attention costs quadratic. In Figure \ref{fig:discrete-btm}, Table \ref{tab:language-results}, and Table \ref{tab:language-results-nfe}, we show the linear architecture on OpenWebText ($L{=}1024$) as a proof-of-concept for long-context modeling. 

\paragraph{Training Configuration}
Both datasets use a global batch of $512$ and $10{,}000$ warmup steps, a linear interpolant with $\sigma{=}1.0$, $t_{\text{anchor}}{=}0.75$, OT coupling, uniform time sampling, adaptive transport and cross-entropy endpoint losses, and the \texttt{clean\_context} per-token schedule, which trains the map on partially committed sequences and so matches the refinement sampler.

\paragraph{Baselines}
For unconditional language modeling on LM1B and OWT, we compare DBTM against distilled discrete diffusion methods, including Duo \citep{sahoo2025diffusion} with DCD \citep{sahoo2025diffusion}, MDLM \citep{sahoo2024simple} with SDTT \citep{deschenaux2025beyond}, and both Duo and MDLM with Di4C \citep{hayakawa2024distillation}, and flow map methods, including categorical flow maps (CFM) \citep{roos2026categorical}, flow map language models (FMLM) \citep{lee2026flow}, FMLM with posterior refinement (FMLM+) \citep{agarwal2026posterior}, and discrete flow maps (DFM) with both the semigroup (PSD) and Eulerian (ESD) objectives \citep{potaptchik2026discrete}. We evaluate FMLM+ using the released checkpoints with one integration step per refinement round, like DBTM. Baseline values are taken from \citet{lee2026flow, potaptchik2026discrete} or evaluated from published checkpoints. 

\paragraph{Evaluation Metrics}
We report generative perplexity (Gen. PPL) scored by GPT-2 Large \citep{radford2019language} and sample entropy over $1024$ generated sequences. The goal is to minimize Gen. PPL while maximizing entropy to match the data entropy of LM1B ($\texttt{entropy}{=}4.336$) and OWT ($\texttt{entropy}{=}5.441$).

\subsection{Reasoning Experiment Details}
\label{app:reason-exp-details}
\paragraph{Datasets and Tokenization}
Following S-FLM \citep{deschenaux2026language} and FMLM+ \citep{agarwal2026posterior}, Sudoku puzzles with unique solutions are generated at three difficulties set by clue count (easy $40$, medium $35$, hard $30$), with $48{,}000$ train and $2{,}000$ held-out puzzles per difficulty. Sudoku uses a $12$-token vocabulary (empty, digits $1$-$9$, row separator, BOS) and a grid is $81$ cells plus $8$ separators $=89$ tokens, giving the layout \texttt{[BOS] puzzle(89) [BOS] solution(89)} $=180$ tokens with prompt length $91$ and loss on the solution half. TinyGSM \citep{liu2023tinygsm} pairs grade-school word problems with Python programs, tokenized by SmolLM-135M ($V{=}49{,}152$) as \texttt{[BOS] question} \verb|\n| \texttt{program [EOS]} at length $512$, supervising the program region only. Evaluation uses the real GSM8K test set ($1{,}319$ problems) that was never trained on.

\paragraph{Training Configuration}
Sudoku uses the $512$-wide backbone (${\approx}30$M parameters) at $L{=}180$ and TinyGSM the $768$-wide one (${\approx}165$M) at $L{=}512$, with global batch $256$ and $30{,}000$ / $50{,}000$ warmup steps respectively. Transport-map training matches the language setting (linear interpolant, $\sigma{=}1.0$, OT coupling, uniform time, \texttt{clean\_context}, adaptive transport and cross-entropy endpoint losses, anchor and boundary weights $1.0$), with $t_{\text{anchor}}{=}0.69$ for Sudoku and $t_{\text{anchor}}{=}0.82$ for TinyGSM. The predictor $\phi$ is a $256$-wide, $4$-layer, $4$-head head with quality weight $1.0$, positive-class reweighting, clean-context labels, and BCE.

\paragraph{Baselines}
For the reasoning tasks, we compare DBTM against many-step autoregressive sampling (AR), discrete diffusion methods, including MDLM \citep{sahoo2024simple} and Duo \citep{sahoo2025diffusion}, continuous flow methods, including CANDI \citep{pynadath2025candi}, FLM \citep{lee2026flow}, and S-FLM \citep{deschenaux2026language}, and the few-step flow map baseline FMLM+ \citep{agarwal2026posterior}. Baseline values are taken from \citet{agarwal2026posterior} or evaluated from published checkpoints. 

\paragraph{Evaluation Metrics}
For Sudoku, we report exact-solve accuracy over the $2{,}000$ held-out puzzles. For GSM8K, we report answer accuracy by executing the generated program in a sandbox against the true answer, counting parse or execution failures as incorrect. All numbers use EMA weights and argmax commitment.

\begin{table}[h!]
\centering
\resizebox{\textwidth}{!}{%
\begin{tabular}{l cccc}
\toprule
& LM1B & OpenWebText & TinyGSM / GSM8K & Sudoku \\
\midrule
\textit{Architecture (DiT)} & & & & \\
Hidden size          & 768 & 768 & 768 & 512 \\
adaLN cond.\ dim      & 128 & 128 & 128 & 128 \\
Layers               & 12 & 12 & 12 & 8 \\
Attention heads      & 12 & 12 & 12 & 8 \\
FFN multiplier       & 4 & 4 & 4 & 4 \\
Dropout              & 0.1 & 0.1 & 0.1 & 0.1 \\
Sequence length $L$  & 128 & 1024 & 512 & 180 \\
Vocabulary $|V|$     & 30{,}522 & 50{,}257 & 49{,}152 & 12 \\
Parameters (approx.) & $\sim$140M & $\sim$170M & $\sim$165M & $\sim$30M \\
\midrule
\textit{Optimization} & & & & \\
Optimizer            & \multicolumn{4}{c}{AdamW ($\beta_1{=}0.9$, $\beta_2{=}0.999$, $\epsilon{=}10^{-8}$)} \\
Weight decay         & \multicolumn{4}{c}{0} \\
Gradient clipping    & \multicolumn{4}{c}{1.0} \\
Schedule             & \multicolumn{4}{c}{costant LR with linear warmup} \\
Peak LR              & $3{\times}10^{-4}$ & $3{\times}10^{-4}$ & $3{\times}10^{-4}$ & $3{\times}10^{-4}$ \\
Max steps          & 250{,}000 & 150{,}000 & 250{,}000 & 200{,}000 \\
Global batch size    & 512 & 512 & 512 & 256 \\
Per-GPU batch size   & 128 & 128 & 32 & 64 \\
Precision            & \multicolumn{4}{c}{bf16} \\
EMA decay            & \multicolumn{4}{c}{0.9999} \\
\midrule
\textit{Transport map training} & & & & \\
Interpolant schedule & linear & linear & linear & linear \\
Noise scale $\sigma$ & 1.0 & 1.0 & 1.0 & 1.0 \\
OT coupling          & \multicolumn{4}{c}{on} \\
Per-token schedule   & \multicolumn{4}{c}{clean\_context} \\
Time schedule        & \multicolumn{4}{c}{uniform} \\
Minimum anchor time $t_{\text{anchor}}$    & 0.75 & 0.75 & 0.82 & 0.69 \\
Anchor $\lambda$ / semigroup $\lambda$ / boundary $\lambda$ & 1.0 / 0 / 1.0 & 1.0 / 0 / 1.0 & 1.0 / 0 / 1.0 & 1.0 / 0 / 1.0 \\
Adaptive stabilizer $c$ (transport / endpoint) & 1.0 & 1.0 & 1.0 & 1.0 \\
Adaptive weight $r$ (transport / endpoint) & 0.5 / 0.5 & 0.5 / 0.5 & 0.5 / 0.5 & 0.5 / 0.5 \\
\midrule
\textit{Inference hyperparameters} & & & & \\
Commit threshold $\kappa$ & 0.9 & 0.3 & 0.8 & 0.999 \\
Commit scorer        & softmax conf. & softmax conf. & softmax conf. & $\mu$ ($\phi$ head) \\
Commit temperature   & 0 (argmax) & 0 (argmax)  & 0 (argmax) & 0 (argmax) \\
Prior scale $\sigma$ & 1.0 & 1.0 & 1.0 & 1.0 \\
Repetition penalty $\lambda$ & 1.0 & 1.0 & 0 & 0 \\
\bottomrule
\end{tabular}
}
\caption{\textbf{Per-task default hyperparameters.} For model architectures and hyperparameters shared with baselines, we mostly follow the parameters used in the FMLM \citep{lee2026flow} baseline for LM1B and OWT, and the FMLM+ \citep{agarwal2026posterior} baseline for Sudoku and TinyGSM/GSM8K. Note that the base hyperparameters in App \ref{app:hyperparameters} may be different across ablations of different parameters but are constant within each parameter.}
\label{tab:hyperparameters}
\end{table}

\clearpage

\section{Algorithms}
\label{app:algorithms}
\begin{algorithm}[h!]
\caption{Training Discrete Beckmann Transport Map}
\label{alg:btm-train-map}
\begin{algorithmic}[1]
\Require Samples $\{x_0\}\sim\mu_0$, $\{x_1\}\sim\mu_1$; interpolant $\alpha_t$; transport map $T_\theta$; loss weights $\lambda_{\text{bnd}},\lambda_{\text{semi}},\lambda_{\text{anchor}}$; anchor minimum time $t_{\text{anchor}}$; self-composition depths $\mathcal K$; learning rate $\eta$
\While{not converged}
  \State Sample a minibatch $\mathcal B$;\; reset all losses to $0$
  \For{$(x_0,x_1)\in\mathcal B$}
    \State Sample $t\sim\mathcal U(0,1)$
    \State \HL{I_t^\ell\gets\alpha_t x_0^\ell+(1-\alpha_t)x_1^\ell,\qquad
           \dot I_t^\ell\gets\dot\alpha_t\,(x_0^\ell-x_1^\ell)}\quad for all $\ell$
    \State \HL{\mathcal L_{\text{transport}}\gets\mathcal L_{\text{transport}}+\sum_\ell |T_\theta(I_t^\ell)-\texttt{sg}\big(T_\theta(I_t^\ell)+\dot I_t^\ell\cdot\nabla T_\theta(I_t^\ell)\big)|^2}
          \Comment{transport}
    \State $\mathcal L_{\text{bnd}}\gets\mathcal L_{\text{bnd}}+\sum_\ell\text{CE}\big(T_\theta(x_1)^\ell,\,x_1^\ell\big)$
          \Comment{boundary}
    \State $\mathcal L_{\text{anchor}}\gets\mathcal L_{\text{anchor}}+\sum_\ell\boldsymbol 1[t>t_{\text{anchor}}]\;\text{CE}\big(T_\theta(I_t)^\ell,\,x_1^\ell\big)$
          \Comment{anchor near-clean tokens}
    \State $\mathcal L_{\text{semi}}\gets\mathcal L_{\text{semi}}+\sum_{k\in\mathcal K}\sum_\ell\text{CE}\big(T_\theta(I_t)^\ell,\;\texttt{sg}\big(T_\theta^{\circ k}(I_t)\big)^\ell\big)$
          \Comment{self-distill toward $k$-fold refinement}
  \EndFor
  \State $\mathcal L_{\text{total}}\gets\frac{1}{|\mathcal B|}\Big[\mathcal L_{\text{transport}}
         +\lambda_{\text{bnd}}\mathcal L_{\text{bnd}}
         +\lambda_{\text{anchor}}\mathcal L_{\text{anchor}}
         +\lambda_{\text{semi}}\mathcal L_{\text{semi}}\Big]$
         \Comment{total loss}
  \State $\theta\gets\theta-\eta\,\nabla_\theta\mathcal L_{\text{total}}$
\EndWhile
\State \Return $T_\theta$
\end{algorithmic}
\end{algorithm}

\begin{algorithm}[h!]
\caption{Sampling Discrete Beckmann Transport Map with Refinement (\textsc{Rollout})}
\label{alg:btm-sampling-refinement}
\begin{algorithmic}[1]
\Require Trained autonomous transport map $T_\theta:\mathbb{R}^{L\times V}\to\mathbb{R}^{L\times V}$; confidence threshold $\kappa$; refinement depth $k$; prior noise level $\sigma$; refinement mode $\in\{\Mode{quality},\Mode{confidence},\Mode{none}\}$; commit mode $\in\{\Mode{commit},\Mode{non-commit}\}$
\State $\hat x_{0,k}\gets(x_0^1,\dots,x_0^L)$,\; $x_0^\ell\sim\mathcal{N}(\boldsymbol 0,\sigma\boldsymbol I_V)$;\quad $\texttt{trace}\gets[\,]$
      \Comment{every token starts as prior noise}
\For{$r=1,\dots,k$}
  \State $(\hat h,\hat x)\gets T_\theta(\hat x_{r-1,k})$
        \Comment{hidden representation $\hat h$ and categorical proposal $\hat x$}
  \State $n_r\gets\big\lceil|\mathcal R_r|/(k-r+1)\big\rceil$;\quad $\mathcal C_r\gets\emptyset$
        \Comment{commit floor for round $r$}
  \For{$\ell=1,\dots,L$}
    \If{\Mode{quality}} $q_\phi^\ell(\hat x)\gets\textsc{QualityHead}(\hat h)$
        \Comment{learned token quality}
    \ElsIf{\Mode{confidence}} $q_\phi^\ell(\hat x)\gets\langle\arg\max(\hat x^\ell),\,\hat x^\ell\rangle$
        \Comment{rounding confidence}
    \Else\ $q_\phi^\ell(\hat x)\gets 1$
    \EndIf
    \If{$q_\phi^\ell(\hat x)<\kappa$ \textbf{and} $|\mathcal C_r|>n_r$}
      \State \HL{\hat x_{r,k}^\ell\sim\mathcal N(\boldsymbol 0,\sigma\boldsymbol I_V)}
            \Comment{low quality and floor already met: fresh noise}
    \ElsIf{\Mode{commit}}
      \State \HL{\hat x_{r,k}^\ell\gets x_1^\ell},\quad $\mathcal C_r\gets\mathcal C_r\cup\{\ell\}$
            \Comment{commit hard token}
    \ElsIf{\Mode{non-commit}}
      \State \HL{\hat x_{r,k}^\ell\gets \hat x^\ell},\quad $\mathcal C_r\gets\mathcal C_r\cup\{\ell\}$
            \Comment{hold soft token distribution}
    \EndIf
  \EndFor
  \State $\texttt{trace}.\texttt{append}\big(\hat x_{r,k},\mathcal C_r\big)$
        \Comment{record partially committed sequence}
\EndFor
\State \Return $\hat x_{k,k}$,\quad $\texttt{trace}=\{(\hat x_{r,k},\mathcal C_r)\}_{r=1}^k$
\end{algorithmic}
\end{algorithm}

\begin{algorithm}[h!]
\caption{Refinement-in-Loop Losses (\textsc{RIL})}
\label{alg:btm-ril}
\begin{algorithmic}[1]
\Require Sample pair $(x_0,x_1)$; transport map $T_\theta$; quality head $q_\phi$; refinement depth $k$; current \texttt{step}; \texttt{warmup}; flag \Mode{quality}
\State $\mathcal L_{\text{ril-c}},\mathcal L_{\text{semi-r}},\mathcal L_{\text{quality}}\gets 0$
\State $\{(\hat x_{r,k},\mathcal C_r)\}_{r=1}^k\gets\textsc{Rollout}(x_0,x_1,k)$
      \Comment{Algorithm~\ref{alg:btm-sampling-refinement}}
\For{$r=1,\dots,k$}
  \State $\mathcal L_{\text{ril-c}}\gets\mathcal L_{\text{ril-c}}+\sum_{\ell\notin\mathcal C_r}\text{CE}\big(T_\theta(\hat x_{r-1,k})^\ell,\,x_1^\ell\big)$
        \Comment{supervise uncommitted tokens}
  \If{\Mode{quality}}
    \State $\hat x_1^\ell\gets\arg\max T_\theta(\hat x_{r-1,k})^\ell$ for all $\ell$
    \State $\mathcal L_{\text{quality}}\gets\mathcal L_{\text{quality}}+\sum_\ell\text{BCE}\big(\boldsymbol 1[\hat x_1^\ell=x_1^\ell],\;q_\phi^\ell(T_\theta(\hat x_{r-1,k}))\big)$
  \EndIf
  \If{$\texttt{step}>\texttt{warmup}$}
    \State $\mathcal L_{\text{semi-r}}\gets\mathcal L_{\text{semi-r}}+\sum_{\ell\notin\mathcal C_r}\text{CE}\big(T_\theta(\hat x_{r-1,k})^\ell,\;\texttt{sg}\big(T_\theta(\hat x_{k-1,k})\big)^\ell\big)$
          \Comment{self-distill toward final round}
  \EndIf
\EndFor
\State \Return $\mathcal L_{\text{ril-c}},\;\mathcal L_{\text{semi-r}},\;\mathcal L_{\text{quality}}$
\end{algorithmic}
\end{algorithm}

\begin{algorithm}[h!]
\caption{Training Discrete Beckmann Transport Map with Partial Context}
\label{alg:btm-train-context}
\begin{algorithmic}[1]
\Require Samples $\{x_0\}\sim\mu_0$, $\{x_1\}\sim\mu_1$; interpolant $\alpha_t$; transport map $T_\theta$; quality head $q_\phi$; loss weights $\lambda_{\text{bnd}},\lambda_{\text{anchor}},\lambda_{\text{semi}},\lambda_{\text{ril}}$; anchor minimum time $t_{\text{anchor}}$; \texttt{warmup}; \texttt{max\_steps}; learning rate $\eta$; flags \Mode{quality}, \Mode{ril}
\For{$\texttt{step}=1,\dots,\texttt{max\_steps}$}
  \State Sample a minibatch $\mathcal B$;\; reset all losses to $0$
  \For{$(x_0,x_1)\in\mathcal B$}
    \State Sample clean fraction $f\sim\mathcal U(0,1)$ and noisy time $t\sim\mathcal U(0,1)$
    \For{$\ell=1,\dots,L$}
      \State $c_\ell\sim\text{Bernoulli}(f)$;\quad $t_\ell\gets c_\ell+(1-c_\ell)\,t$
            \Comment{$c_\ell=1$: token $\ell$ is clean context}
      \State \HL{I_t^\ell\gets\alpha_{t_\ell}x_0^\ell+(1-\alpha_{t_\ell})x_1^\ell,\qquad
             \dot I_t^\ell\gets\dot\alpha_{t_\ell}\,(x_0^\ell-x_1^\ell)}
    \EndFor
    \State \HL{\mathcal L_{\text{transport}}\gets\mathcal L_{\text{transport}}+\sum_\ell |T_\theta(I_t^\ell)-\texttt{sg}\big(T_\theta(I_t^\ell)+\dot I_t^\ell\cdot\nabla T_\theta(I_t^\ell)\big)|^2}
          \Comment{transport}
    \State $\mathcal L_{\text{bnd}}\gets\mathcal L_{\text{bnd}}+\sum_\ell\text{CE}\big(T_\theta(x_1)^\ell,\,x_1^\ell\big)$
          \Comment{boundary}
    \State $\mathcal L_{\text{anchor}}\gets\mathcal L_{\text{anchor}}+\sum_\ell\boldsymbol 1[t_\ell>t_{\text{anchor}}]\;\text{CE}\big(T_\theta(I_t)^\ell,\,x_1^\ell\big)$
          \Comment{anchor near-clean tokens}
    \If{\Mode{quality}}
      \State $\hat x_1^\ell\gets\arg\max T_\theta(I_t)^\ell$ for all $\ell$
      \State $\mathcal L_{\text{quality}}\gets\mathcal L_{\text{quality}}+\sum_\ell\text{BCE}\big(\boldsymbol 1[\hat x_1^\ell=x_1^\ell],\;q_\phi^\ell(T_\theta(I_t))\big)$
            \Comment{predict per-token correctness}
    \EndIf
    \If{\Mode{ril}}
      \State $(\ell_{\text{ril-c}},\ell_{\text{semi-r}},\ell_{\text{quality}})\gets\textsc{RIL}(x_0,x_1,k,\texttt{step})$
            \Comment{Algorithm~\ref{alg:btm-ril}}
      \State $\mathcal L_{\text{ril-c}}\gets\mathcal L_{\text{ril-c}}+\ell_{\text{ril-c}}$;\quad
             $\mathcal L_{\text{semi-r}}\gets\mathcal L_{\text{semi-r}}+\ell_{\text{semi-r}}$;\quad
             $\mathcal L_{\text{quality}}\gets\mathcal L_{\text{quality}}+\ell_{\text{quality}}$
    \EndIf
  \EndFor
  \State $\mathcal L_{\text{total}}\gets\frac{1}{|\mathcal B|}\Big[\mathcal L_{\text{transport}}
         +\lambda_{\text{bnd}}\mathcal L_{\text{bnd}}
         +\lambda_{\text{anchor}}\mathcal L_{\text{anchor}}
         +\lambda_{\text{ril}}\mathcal L_{\text{ril-c}}
         +\lambda_{\text{semi}}\mathcal L_{\text{semi-r}}\Big]$
         \Comment{total loss}
  \State $\theta\gets\theta-\eta\,\nabla_\theta\mathcal L_{\text{total}}$;\quad
         $\phi\gets\phi-\eta\,\nabla_\phi\mathcal L_{\text{quality}}$
\EndFor
\State \Return $T_\theta$, $q_\phi$
\end{algorithmic}
\end{algorithm}

\clearpage
\section{Example Generations}
\label{app:example-generations}
Figure~\ref{fig:samples_refinement_lm1b} shows unconditional LM1B samples drawn under budgets of 1, 2, 4, and 8 function evaluations, each box annotated with the generative perplexity and entropy of that sample. Figures~\ref{fig:refine_sample_gsm8k_1} and~\ref{fig:refine_sample_gsm8k_2} show two TinyGSM solutions across a 32-round refinement trajectory. The conditioning question is set in \textcolor{muted}{gray} and stays fixed, while generated positions move from \textcolor{refining}{purple} to \textcolor{committed}{turquoise}. Figure~\ref{fig:sudoku-example} applies the same coloring over to a Sudoku-Hard puzzle solved in 4 NFEs.

\begin{figure}[h!]
\centering
\begin{figurepanel}

\noindent
\begin{minipage}[t]{0.485\linewidth}
\nfeheader{NFE = 1}\hfill
{\color{muted}\shortstack[r]{Gen.\ PPL: \textbf{75.64}\\Entropy: \textbf{4.05}}}

\vspace{8pt}

\begin{samplebox}
\noindent
\begin{minipage}[t]{0.56\linewidth}
\methodtitle{}
\end{minipage}\hfill
\begin{minipage}[t]{0.40\linewidth}
\end{minipage}

\vspace{4pt}
\noindent
[CLS] will take been both by the american who have more than its the decision. [SEP] i had been called on the same called as issue is to go on a last year by the ends for the. [SEP] we will be that way of an were only a new president by the financial committee he said. [SEP] the government's decision to be a key field of his position and with who party. [SEP] " there is a first time and they would continue to be an lead by the american crisis in a day. [SEP] " would may be well to in well - of - day, and she claimed be a long - back to the [SEP]
\end{samplebox}

\vspace{8pt}

\nfeheader{NFE = 2}\hfill
{\color{muted}\shortstack[r]{Gen.\ PPL: \textbf{44.76}\\Entropy: \textbf{4.03}}}

\vspace{8pt}

\begin{samplebox}
\noindent
\begin{minipage}[t]{0.56\linewidth}
\methodtitle{}
\end{minipage}\hfill
\begin{minipage}[t]{0.40\linewidth}
\end{minipage}

\vspace{4pt}
\noindent
[CLS] part. [SEP] the film also will be scheduled in the new time. [SEP] the man is a first day, and if they go's didn't want to have been. [SEP] the better time off with the end of work, the problemss are trying to try for all people to get my results. [SEP] he could not go being held in an first days for the first his long term. [SEP] she and there should have been in his job, because it has going to be the best in the end. [SEP] he said he has also have made in a next year. [SEP] but the new bank will to times with an [SEP]
\end{samplebox}

\end{minipage}
\hfill
\begin{minipage}[t]{0.485\linewidth}
\nfeheader{NFE = 4}\hfill
{\color{muted}\shortstack[r]{Gen.\ PPL: \textbf{42.45}\\Entropy: \textbf{4.05}}}

\vspace{8pt}

\begin{samplebox}
\noindent
\begin{minipage}[t]{0.56\linewidth}
\methodtitle{}
\end{minipage}\hfill
\begin{minipage}[t]{0.40\linewidth}
\end{minipage}

\vspace{4pt}
\noindent
[CLS] in london. [SEP] this is a problem for the world, but that we are in the new world when the economy was the way to start the talks. [SEP] but but he could have been left out over the face of the season. [SEP] she has been released from the country and his top to work. [SEP] it would be a right to the public government with the city in india in the uk's country. [SEP] " she has no, there is all going out for her problems, " it was the head of her office that was appointed by a government and the company commander to see a livings - - class time, [SEP]
\end{samplebox}

\vspace{8pt}

\nfeheader{NFE = 8}\hfill
{\color{muted}\shortstack[r]{Gen.\ PPL: \textbf{46.92}\\Entropy: \textbf{4.03}}}

\vspace{8pt}

\begin{samplebox}
\noindent
\begin{minipage}[t]{0.56\linewidth}
\methodtitle{}
\end{minipage}\hfill
\begin{minipage}[t]{0.40\linewidth}
\end{minipage}

\vspace{4pt}
\noindent
[CLS] for his case. [SEP] the president's president, who was a director with the same time, said he will not be a video conference on wednesday. [SEP] the party was just going to be going in a five - year - - after'm in his campaign, the president said. [SEP] he said he has been convinced that it has been a military operation to be made as a new threat from the government of the part of the world's global development of terrorism that was announced on friday. [SEP] it have been an opportunity for the minister, at markets and services, and that is on the first quarter of the day is [SEP]
\end{samplebox}
\end{minipage}

\end{figurepanel}
\caption{\textbf{Example LM1B generations from DBTM at 1, 2, 4, and 8 NFEs }(each NFE is one refinement round). Generative perplexity and entropy are reported per length-128 sample.}
\label{fig:samples_refinement_lm1b}
\end{figure}

\clearpage

\newpage

\begin{figure}[t]
\centering
\begin{figurepanel}
\noindent
\nfeheader{NFE = 16}\hfill
{\color{muted}\shortstack[r]{Gen.\ PPL: \textbf{45.5}\\Entropy: \textbf{5.58}}}
\vspace{8pt}

\begin{samplebox}
\footnotesize\linespread{0.95}\selectfont
[END]...I mean, I think we started at the end of last year to figure out what we're doingre doing right now. And mean… when you look at your [data], you know that about \$100,000 per year — That sounds pretty hard; i'm still sure little bit." But I think this is going to increase increased by about \$100,000 per year — That's pretty single very important! This is very important and it's 'sthe single most important thing in our history. And so far i've've seen huge number million Americans coming across every day months' past decade...Our population has increased by about \$100,000 per year — those people who don't caretakers." This is consistent with the United States — basically what's called “the United Kingdom” which translates into about \$100,000 per year — What happens happening right?
AndIt's called “the United States Of America” which " says "You'll say that this country is worth less than \$100,000 annually,000-\$100 per year, and then we’ll be able access through some sort of so-called "takers." But at exact same time when we comes places where we’re going to be able to make those kinds, we can make those kinds right now, but I don't becauset really think there is no best way to do something We don't becauset know what it is important for us.

I think it's really very important because I think it's important for us to do something. The interesting thing about'vesurveillance since America is that when living love this country we are going to come up with something like this: What [What\_surveillance?" There—A lot those people living love the United States] [that wasnt true]. They thought they were going to do something like — even though they were going would have enough opportunity to do something like this.)

For many countries around the world, the problem is probably maybe two or two years ago when somebody thought they’re going to do something like "surve surveillance." But I think there is going to be a lot amount of work that needs to happen on the other side of things, along with things that needs to be done now So We’re going to put some factors into account exactly where we need irreveillanceillance, so that everybody cansolve themselves around each issues in terms of how much work should spend, obviously, if you look doing something, you need to focus on the other side of things that You need can deal with them! For So, we need to focus on… You’ll need to focus on making decisions while spending whatever whatever needs will happen in the future, so we need focus on making decisions decisions and these kinds of things that we can actually happen in foreseeable future.

The second between here is as well as how much money should actually spend, especially at the rest of our lives. So, because many people on the other side of things are trying to get rid of them without losing their jobs -- especially during periods aren't necessarilyt necessarily interested ones looking climate change, because they're looking forward change in terms future economic growth . That means that we're going to be ableto take away and come deal with them again and try to get rid of them. We’re going to get rid of them — as well as how they’re trying to get rid of them without losing their jobs, so they have to take away from them or lose their jobs — and then everybody has nothing to do something again?
AndThat's what makes me saying.

The interesting thing about this issue What actually actually happening within the United States today is that the economy itself is not the only thing — because it is not just about getting rid of existing ones, but also creating new ones systems based upon existing ones . These aren require designed based replace existing existing systems. If you do something, if you don'tre doing anything then you will be able to hire all the people who are going to be able to get rid of their jobs. And then'll resellate everything else together until everyone start buying new products, buy lots and sell lots products every stuff. So We’re going to create some kind of global economy where hopefully the United States can create a new economy where we create create new jobs. And so eventually resellate everything else together again until Now…So yeah … Well seemsThere large reason many American companies companies start using modern technology, so it’s really important fully understand why these technologies exist throughout Europe. But when you look at economic data from some of the most important parts of the world — including example, including countries around the world, as well as a whole, there are some kinds of economic data available online today – including Germany Netherlands Germany Netherlands France Netherlands European Union.

And so when human irreveillanceillance comes down again , everyone becomes increasingly difficult finding find[END]
\end{samplebox}

\end{figurepanel}
\caption{\textbf{Example OWT generation from DBTM at 16 NFEs} (each NFE is one refinement round). Samples taken from DBTM with linear attention and ril training arm. Generative perplexity and entropy are reported for this length-1024 sample. \texttt{[END]} marks an end-of-text token.}
\label{fig:samples_owt_16}
\end{figure}

\newpage

\begin{figure}[t]
\centering
\begin{figurepanel}
\noindent
\nfeheader{NFE = 32}\hfill
{\color{muted}\shortstack[r]{Gen.\ PPL: \textbf{41.7}\\Entropy: \textbf{5.49}}}
\vspace{8pt}

\begin{samplebox}
\footnotesize\linespread{0.95}\selectfont
[END], but the whole sideterm life is still pretty simple, so youre going to be able to do it. You know that youre trying to put your hands, but youll got some sort of piece…that could actually put in place for long rest over the next few years or ” There are a lot of people who are ‘trying their hands hands so theyre going to put the pieces in place. So mean…A lot many people have been able to work with them almost exactly what theyve done in the past few years or so — and theyve done it. Weve working with them over these past five years, so were going to get them into trouble.

So thinkWe need reselling on the other sideterm life hands instead than putting those hands together instead using holding hands. I thinkWere going to move around the corner. The main goal here is to make sure everything goes right. We need resell put those hands into day-to-the/day/ hands. Use it. And as weve talked talking before ,were resell put the hands into place hands. And so ive seen lots more sales coming out today than Some companies expected already hold hands during the mid-to period compared to the mid-to period after its release, which is why we need reseve got some sort of short sideterm hands.

I think its really important me me personally personally about my experience regarding the short sideterm life. But when you look at how i am working on the other side, you know dont know what is going to do anymore! How much development cost and how much money could spent spent building this project? There are all kinds of things that weve done talking before. One of those things weve talked talking before was that we wanted to be able to do everything in terms of a lot of people who worked on this project for a long time. So, its not just trying hard but the best part of what we want from doing. But... in fact, if don´ tryingt come up with something where youre going to start doing now, and if you want doing something at this point, maybe as you know, you can do [toughly] [laughs]. But I think this is going to be the right thing. We` resell [the right thing,the right direction], and then we`ll get back right again until hopefully maybe something could happen as soon as possible.

In the end, if i resell reselling right direction--we'll sell new stuff from time to time until the next few months We`ll rese back over the next couple or year — and then we can resell get another right direction. I think were going to be able to focus some sort upon making sure decision making made — and then making gets made - right?
AndA lot of course things like making decisions decisions decisions aren often easy by making means necessary creating dealing dealing solve bad problems, but sometimes even better bad ones. So I think every single step we need should focus on a certain role of our lives in terms of technology. We need should focus on using these tools tools simply because we have a certain role in which we tendly choose not to use anything like this.

What makes human beings, non-human, non-human beings, non-nonhumanhuman?
If someone go through another certain level, I mean another person will be able to spend another level level without their partner—a certain level in the next few years, so they’ll be able only spend half her hour—a certain level, and if you go through certain level, which will be very difficult. If you go to a certain level with your partner, the person will be able spend a certain amount of time with each other.

The problem thing that there will be some relationships between these two groups would say: “This is a very most important part of our life.” The relationship between ourselves and the other will be very difficult without having partner partner. In other words, if we go through certain levels while others might become less likely to become both human and non-human beings — although others may seem less less likely in terms of happiness (and happiness) than others else; That might lead lead far greater happiness in the rest of our life.

And also comes with its relationship between ourselves two types – human beings , as well as the relationship between what we love and what we love -- especially as its result despite having far greater interest in both humans and humans as part of our lives during the last part of our lives.

I think the relationship between ourselves two human groups in terms of how much money love spend each day becomes extremely interesting for me . This such most important part of an individual society[END]
\end{samplebox}

\end{figurepanel}
\caption{\textbf{Example OWT generation from DBTM at 32 NFEs} (each NFE is one refinement round). Samples taken from DBTM with linear attention and ril training arm. Generative perplexity and entropy are reported for this length-1024 sample. \texttt{[END]} marks an end-of-text token.}
\label{fig:samples_owt_32}
\end{figure}

\newpage

\begin{figure}[t]
\centering
\begin{figurepanel}
\noindent
\nfeheader{NFE = 256}\hfill
{\color{muted}\shortstack[r]{Gen.\ PPL: \textbf{34.4}\\Entropy: \textbf{5.61}}}
\vspace{8pt}

\begin{samplebox}
\footnotesize\linespread{0.95}\selectfont
[END] She thought she'd ask pretty simple question: "You'll find a solution in terms of solutions?" Do You know what we're going to do now?

And I think it's really important for me fully understand why companies need help create relationships across Europe. One of those biggest challenges - whether anything could happen at the end of last year, is at the end level June year - will happen at the end of 2016. And I think that we're going to make it easier To build relationships across an entire United society as a whole — let'll talk about how much money spend versus what we're doing; how much Americans spend and how much money spend, how much money Americans spend!

Who should focus on it, especially during the United term?
I don't really wait until you know what happens happen. You'll see as soon as possible can happen now But I think it's going to be very difficult now, but if'll come up with something like every new product...

"I mean, we've got a lot more over these last few years, so we have some sort of positive feedback from both sides throughout our long term. We've got some sort of positive feedback from both sides throughout our long termterm relationship ... They might actually behave exactly the same way ours did last year, so I think it's a good opportunity for us to do something.

And so far...we haven´ seen enough progress towards making sure everyone knows exactly what happens next week before they start getting ready …but we're still working hard toward day-to-day/to/day. We have to focus more on making sure everyone knows exactly what happens next week before they start getting ready."
AndIn his recent interview, he described himself as "great international order nation", being "the best person within an entire United Kingdom. He also called himself "a great international international order nation".

"Wem going to talk ablely with [United Nations in order country] the idea of being “the best person within entire United States today”," he said.

So, I think everybody should get in touch with these United Nations officials: There areA lot of people who are been here in the United States — obviously, theyve been here over 20 years — and they feel, you know, comfortable expressing themselves in terms of their lives, which is really important.

What does your experience actually mean in terms of how technology works, do you have a lot young people who feel like that?

I think this's probably the interesting thing about this day–to-day is that we’ve got lots young folks coming out here. The [Theto-to-day is that we’re being able to get them back into our day-to-day lives. So, it's a process that we can take care for ourselves and improve our environment, which is why technology works differently. And hopefully technology will eventually move away from the United sideto.-world [worldwide]. ButI think there’s going to see some kind of change happening globally in terms thereof — one of the things that we've`ve seen doing since its beginning last year – maybe maybe maybe another year or two years, in one case — You mean, there’ll gonna be a certain kind of spectrum. But if even if you don't`t have a certain-based spectrum, then this's going to be different. This is a very important difference between us today; it doesnt affect each individual on the other side of the spectrum, which is really shortening economic spectrum. It’s not just about China, the US side economic spectrum in terms of economic spectrum, and also China on the other side.

So, there are some things that can happenvelling without shortening economic growth — You mean, there's going to be a day-to-day shift here. It’s going to be interesting.

In fact words where everything goes through business in its entirety, you know, everybody should start talking about business. The whole thing about spectrum is not just about jobs, but about economic livelihoods also about so much more (and on its own) than any other company except Apple , Microsoft , Google , Apple \& Apple Inc, as well as other companies such as Google Inc.

There are a lot of differences. There are people in this country. There are happenvelling lots more differences within the United States — For instance…there arent many big differences — For mean, the only people in this country are those only ones — for example, and the only ones who don’re trying to live their lives. And while you know, they’re trying to live their lives without having access resources [for example],they arent having access resources [for example]. But when you look at the end of the spectrum when you look at either end of the spectrum where you might[END]
\end{samplebox}

\end{figurepanel}
\caption{\textbf{Example OWT generation from DBTM at 256 NFEs} (each NFE is one refinement round). Samples taken from DBTM with linear attention and ril training arm. Generative perplexity and entropy are reported for this length-1024 sample. \texttt{[END]} marks an end-of-text token.}
\label{fig:samples_owt_256}
\end{figure}

\clearpage

\begin{figure}[H]
\centering
\begin{samplebox}
\noindent\textbf{DBTM, Round: 1}\hfill{\scriptsize\textcolor{muted}{Committed: \textbf{423/482}}}

\vspace{2pt}
\noindent
\scriptsize\linespread{0.9}\selectfont
\textcolor{muted}{\texttt{Ch}}\textcolor{muted}{\texttt{arm}}\textcolor{muted}{\texttt{aine}} \textcolor{muted}{\texttt{will}} \textcolor{muted}{\texttt{be}} \textcolor{muted}{\texttt{1}}\textcolor{muted}{\texttt{6}} \textcolor{muted}{\texttt{years}} \textcolor{muted}{\texttt{old}} \textcolor{muted}{\texttt{in}} \textcolor{muted}{\texttt{1}}\textcolor{muted}{\texttt{2}} \textcolor{muted}{\texttt{years}}\textcolor{muted}{\texttt{.}} \textcolor{muted}{\texttt{How}} \textcolor{muted}{\texttt{old}} \textcolor{muted}{\texttt{will}} \textcolor{muted}{\texttt{she}} \textcolor{muted}{\texttt{be}} \textcolor{muted}{\texttt{4}} \textcolor{muted}{\texttt{years}} \textcolor{muted}{\texttt{from}} \textcolor{muted}{\texttt{now}}\textcolor{muted}{\texttt{?}}\\
\textcolor{committed}{\texttt{def}} \textcolor{committed}{\texttt{simple}}\textcolor{committed}{\texttt{\_}}\textcolor{committed}{\texttt{math}}\textcolor{committed}{\texttt{\_}}\textcolor{committed}{\texttt{problem}}\textcolor{committed}{\texttt{()}} \textcolor{committed}{\texttt{->}} \textcolor{committed}{\texttt{int}}\textcolor{committed}{\texttt{:}}\\
\mbox{\texttt{~~~}} \textcolor{committed}{\texttt{"""}}\\
\mbox{\texttt{~~~}} \textcolor{committed}{\texttt{Ch}}\textcolor{committed}{\texttt{arm}}\textcolor{committed}{\texttt{aine}} \textcolor{committed}{\texttt{will}} \textcolor{committed}{\texttt{be}} \textcolor{committed}{\texttt{1}}\textcolor{committed}{\texttt{6}} \textcolor{committed}{\texttt{years}} \textcolor{committed}{\texttt{old}} \textcolor{committed}{\texttt{in}} \textcolor{committed}{\texttt{1}}\textcolor{committed}{\texttt{2}} \textcolor{committed}{\texttt{years}}\textcolor{committed}{\texttt{.}}\\
\mbox{\texttt{~~~}} \textcolor{committed}{\texttt{How}} \textcolor{committed}{\texttt{old}} \textcolor{committed}{\texttt{will}} \textcolor{committed}{\texttt{she}} \textcolor{committed}{\texttt{be}} \textcolor{committed}{\texttt{4}} \textcolor{committed}{\texttt{years}} \textcolor{committed}{\texttt{from}} \textcolor{committed}{\texttt{now}}\textcolor{committed}{\texttt{?}}\\
\mbox{\texttt{~~~}} \textcolor{committed}{\texttt{"""}}\\
\mbox{\texttt{~~~}} \textcolor{refining}{\texttt{age}}\textcolor{refining}{\texttt{\_}}\textcolor{refining}{\texttt{age}} \textcolor{refining}{\texttt{=}} \textcolor{refining}{\texttt{1}}\textcolor{refining}{\texttt{6}}\\
\mbox{\texttt{~~~}} \textcolor{refining}{\texttt{\_}}  \textcolor{refining}{\texttt{=}} \textcolor{refining}{\texttt{\_}}\textcolor{refining}{\texttt{\_}}\textcolor{refining}{\texttt{\_}}\textcolor{refining}{\texttt{\_}}\textcolor{refining}{\texttt{\_}}\textcolor{refining}{\texttt{\_}}\textcolor{refining}{\texttt{\_}}\textcolor{refining}{\texttt{\_}}\textcolor{refining}{\texttt{\_}}\textcolor{refining}{\texttt{\_}}\textcolor{refining}{\texttt{\_}}\textcolor{refining}{\texttt{\_}}\textcolor{refining}{\texttt{\_}}\textcolor{refining}{\texttt{[PAD]}}\textcolor{refining}{\texttt{\_}}\textcolor{refining}{\texttt{[PAD]}}\textcolor{refining}{\texttt{[PAD]}}\textcolor{refining}{\texttt{[PAD]}}\textcolor{refining}{\texttt{[PAD]}}\textcolor{refining}{\texttt{[PAD]}}\textcolor{refining}{\texttt{[PAD]}}\textcolor{refining}{\texttt{[PAD]}}\textcolor{refining}{\texttt{[PAD]}}\textcolor{refining}{\texttt{[PAD]}}\textcolor{refining}{\texttt{[PAD]}}
\end{samplebox}
\begin{samplebox}
\noindent\textbf{DBTM, Round: 10}\hfill{\scriptsize\textcolor{muted}{Committed: \textbf{452/482}}}

\vspace{2pt}
\noindent
\scriptsize\linespread{0.9}\selectfont
\textcolor{muted}{\texttt{Ch}}\textcolor{muted}{\texttt{arm}}\textcolor{muted}{\texttt{aine}} \textcolor{muted}{\texttt{will}} \textcolor{muted}{\texttt{be}} \textcolor{muted}{\texttt{1}}\textcolor{muted}{\texttt{6}} \textcolor{muted}{\texttt{years}} \textcolor{muted}{\texttt{old}} \textcolor{muted}{\texttt{in}} \textcolor{muted}{\texttt{1}}\textcolor{muted}{\texttt{2}} \textcolor{muted}{\texttt{years}}\textcolor{muted}{\texttt{.}} \textcolor{muted}{\texttt{How}} \textcolor{muted}{\texttt{old}} \textcolor{muted}{\texttt{will}} \textcolor{muted}{\texttt{she}} \textcolor{muted}{\texttt{be}} \textcolor{muted}{\texttt{4}} \textcolor{muted}{\texttt{years}} \textcolor{muted}{\texttt{from}} \textcolor{muted}{\texttt{now}}\textcolor{muted}{\texttt{?}}\\
\textcolor{committed}{\texttt{def}} \textcolor{committed}{\texttt{simple}}\textcolor{committed}{\texttt{\_}}\textcolor{committed}{\texttt{math}}\textcolor{committed}{\texttt{\_}}\textcolor{committed}{\texttt{problem}}\textcolor{committed}{\texttt{()}} \textcolor{committed}{\texttt{->}} \textcolor{committed}{\texttt{int}}\textcolor{committed}{\texttt{:}}\\
\mbox{\texttt{~~~}} \textcolor{committed}{\texttt{"""}}\\
\mbox{\texttt{~~~}} \textcolor{committed}{\texttt{Ch}}\textcolor{committed}{\texttt{arm}}\textcolor{committed}{\texttt{aine}} \textcolor{committed}{\texttt{will}} \textcolor{committed}{\texttt{be}} \textcolor{committed}{\texttt{1}}\textcolor{committed}{\texttt{6}} \textcolor{committed}{\texttt{years}} \textcolor{committed}{\texttt{old}} \textcolor{committed}{\texttt{in}} \textcolor{committed}{\texttt{1}}\textcolor{committed}{\texttt{2}} \textcolor{committed}{\texttt{years}}\textcolor{committed}{\texttt{.}}\\
\mbox{\texttt{~~~}} \textcolor{committed}{\texttt{How}} \textcolor{committed}{\texttt{old}} \textcolor{committed}{\texttt{will}} \textcolor{committed}{\texttt{she}} \textcolor{committed}{\texttt{be}} \textcolor{committed}{\texttt{4}} \textcolor{committed}{\texttt{years}} \textcolor{committed}{\texttt{from}} \textcolor{committed}{\texttt{now}}\textcolor{committed}{\texttt{?}}\\
\mbox{\texttt{~~~}} \textcolor{committed}{\texttt{"""}}\\
\mbox{\texttt{~~~}} \textcolor{refining}{\texttt{age}} \textcolor{refining}{\texttt{=}} \textcolor{refining}{\texttt{1}}\textcolor{refining}{\texttt{6}}\textcolor{refining}{\texttt{1}}\textcolor{refining}{\texttt{6}}\textcolor{refining}{\texttt{1}} \textcolor{refining}{\texttt{1}} \textcolor{refining}{\texttt{1}}  \textcolor{refining}{\texttt{=}} \textcolor{refining}{\texttt{=}} \textcolor{refining}{\texttt{=}}  \textcolor{refining}{\texttt{result}} \textcolor{refining}{\texttt{=}} \textcolor{refining}{\texttt{result}} \textcolor{refining}{\texttt{result}} \textcolor{refining}{\texttt{result}} \textcolor{refining}{\texttt{result}}\textcolor{refining}{\texttt{[PAD]}}\textcolor{refining}{\texttt{[PAD]}}\textcolor{refining}{\texttt{[PAD]}}\textcolor{refining}{\texttt{[PAD]}}\textcolor{refining}{\texttt{[PAD]}}\textcolor{refining}{\texttt{[PAD]}}\textcolor{refining}{\texttt{[PAD]}}\textcolor{committed}{\texttt{[PAD]}}\textcolor{committed}{\texttt{[PAD]}}\textcolor{committed}{\texttt{[PAD]}}
\end{samplebox}
\begin{samplebox}
\noindent\textbf{DBTM, Round: 20}\hfill{\scriptsize\textcolor{muted}{Committed: \textbf{471/482}}}

\vspace{2pt}
\noindent
\scriptsize\linespread{0.9}\selectfont
\textcolor{muted}{\texttt{Ch}}\textcolor{muted}{\texttt{arm}}\textcolor{muted}{\texttt{aine}} \textcolor{muted}{\texttt{will}} \textcolor{muted}{\texttt{be}} \textcolor{muted}{\texttt{1}}\textcolor{muted}{\texttt{6}} \textcolor{muted}{\texttt{years}} \textcolor{muted}{\texttt{old}} \textcolor{muted}{\texttt{in}} \textcolor{muted}{\texttt{1}}\textcolor{muted}{\texttt{2}} \textcolor{muted}{\texttt{years}}\textcolor{muted}{\texttt{.}} \textcolor{muted}{\texttt{How}} \textcolor{muted}{\texttt{old}} \textcolor{muted}{\texttt{will}} \textcolor{muted}{\texttt{she}} \textcolor{muted}{\texttt{be}} \textcolor{muted}{\texttt{4}} \textcolor{muted}{\texttt{years}} \textcolor{muted}{\texttt{from}} \textcolor{muted}{\texttt{now}}\textcolor{muted}{\texttt{?}}\\
\textcolor{committed}{\texttt{def}} \textcolor{committed}{\texttt{simple}}\textcolor{committed}{\texttt{\_}}\textcolor{committed}{\texttt{math}}\textcolor{committed}{\texttt{\_}}\textcolor{committed}{\texttt{problem}}\textcolor{committed}{\texttt{()}} \textcolor{committed}{\texttt{->}} \textcolor{committed}{\texttt{int}}\textcolor{committed}{\texttt{:}}\\
\mbox{\texttt{~~~}} \textcolor{committed}{\texttt{"""}}\\
\mbox{\texttt{~~~}} \textcolor{committed}{\texttt{Ch}}\textcolor{committed}{\texttt{arm}}\textcolor{committed}{\texttt{aine}} \textcolor{committed}{\texttt{will}} \textcolor{committed}{\texttt{be}} \textcolor{committed}{\texttt{1}}\textcolor{committed}{\texttt{6}} \textcolor{committed}{\texttt{years}} \textcolor{committed}{\texttt{old}} \textcolor{committed}{\texttt{in}} \textcolor{committed}{\texttt{1}}\textcolor{committed}{\texttt{2}} \textcolor{committed}{\texttt{years}}\textcolor{committed}{\texttt{.}}\\
\mbox{\texttt{~~~}} \textcolor{committed}{\texttt{How}} \textcolor{committed}{\texttt{old}} \textcolor{committed}{\texttt{will}} \textcolor{committed}{\texttt{she}} \textcolor{committed}{\texttt{be}} \textcolor{committed}{\texttt{4}} \textcolor{committed}{\texttt{years}} \textcolor{committed}{\texttt{from}} \textcolor{committed}{\texttt{now}}\textcolor{committed}{\texttt{?}}\\
\mbox{\texttt{~~~}} \textcolor{committed}{\texttt{"""}}\\
\mbox{\texttt{~~~}} \textcolor{refining}{\texttt{x}} \textcolor{committed}{\texttt{=}} \textcolor{committed}{\texttt{1}}\textcolor{committed}{\texttt{6}} \textcolor{refining}{\texttt{-}}  \textcolor{refining}{\texttt{=}}\textcolor{refining}{\texttt{2}}\textcolor{refining}{\texttt{4}}\\
\mbox{\texttt{~~~}} \textcolor{refining}{\texttt{y}} \textcolor{committed}{\texttt{=}} \textcolor{refining}{\texttt{x}} \textcolor{committed}{\texttt{+}} \textcolor{committed}{\texttt{4}}\\
\mbox{\texttt{~~~~}}\\
\mbox{\texttt{~~~}} \textcolor{committed}{\texttt{result}} \textcolor{committed}{\texttt{=}} \textcolor{refining}{\texttt{y}}\\
\mbox{}\\
\mbox{\texttt{~~~}} \textcolor{committed}{\texttt{return}} \textcolor{committed}{\texttt{result}}\textcolor{committed}{\texttt{<|endoftext|>}}\textcolor{committed}{\texttt{[PAD]}}\textcolor{committed}{\texttt{[PAD]}}\textcolor{committed}{\texttt{[PAD]}}\textcolor{committed}{\texttt{[PAD]}}\textcolor{committed}{\texttt{[PAD]}}\textcolor{committed}{\texttt{[PAD]}}\textcolor{committed}{\texttt{[PAD]}}\textcolor{committed}{\texttt{[PAD]}}\textcolor{committed}{\texttt{[PAD]}}\textcolor{committed}{\texttt{[PAD]}}
\end{samplebox}
\begin{samplebox}
\noindent\textbf{DBTM, Round: 31}\hfill{\scriptsize\textcolor{muted}{Committed: \textbf{482/482}}}

\vspace{2pt}
\noindent
\scriptsize\linespread{0.9}\selectfont
\textcolor{muted}{\texttt{Ch}}\textcolor{muted}{\texttt{arm}}\textcolor{muted}{\texttt{aine}} \textcolor{muted}{\texttt{will}} \textcolor{muted}{\texttt{be}} \textcolor{muted}{\texttt{1}}\textcolor{muted}{\texttt{6}} \textcolor{muted}{\texttt{years}} \textcolor{muted}{\texttt{old}} \textcolor{muted}{\texttt{in}} \textcolor{muted}{\texttt{1}}\textcolor{muted}{\texttt{2}} \textcolor{muted}{\texttt{years}}\textcolor{muted}{\texttt{.}} \textcolor{muted}{\texttt{How}} \textcolor{muted}{\texttt{old}} \textcolor{muted}{\texttt{will}} \textcolor{muted}{\texttt{she}} \textcolor{muted}{\texttt{be}} \textcolor{muted}{\texttt{4}} \textcolor{muted}{\texttt{years}} \textcolor{muted}{\texttt{from}} \textcolor{muted}{\texttt{now}}\textcolor{muted}{\texttt{?}}\\
\textcolor{committed}{\texttt{def}} \textcolor{committed}{\texttt{simple}}\textcolor{committed}{\texttt{\_}}\textcolor{committed}{\texttt{math}}\textcolor{committed}{\texttt{\_}}\textcolor{committed}{\texttt{problem}}\textcolor{committed}{\texttt{()}} \textcolor{committed}{\texttt{->}} \textcolor{committed}{\texttt{int}}\textcolor{committed}{\texttt{:}}\\
\mbox{\texttt{~~~}} \textcolor{committed}{\texttt{"""}}\\
\mbox{\texttt{~~~}} \textcolor{committed}{\texttt{Ch}}\textcolor{committed}{\texttt{arm}}\textcolor{committed}{\texttt{aine}} \textcolor{committed}{\texttt{will}} \textcolor{committed}{\texttt{be}} \textcolor{committed}{\texttt{1}}\textcolor{committed}{\texttt{6}} \textcolor{committed}{\texttt{years}} \textcolor{committed}{\texttt{old}} \textcolor{committed}{\texttt{in}} \textcolor{committed}{\texttt{1}}\textcolor{committed}{\texttt{2}} \textcolor{committed}{\texttt{years}}\textcolor{committed}{\texttt{.}}\\
\mbox{\texttt{~~~}} \textcolor{committed}{\texttt{How}} \textcolor{committed}{\texttt{old}} \textcolor{committed}{\texttt{will}} \textcolor{committed}{\texttt{she}} \textcolor{committed}{\texttt{be}} \textcolor{committed}{\texttt{4}} \textcolor{committed}{\texttt{years}} \textcolor{committed}{\texttt{from}} \textcolor{committed}{\texttt{now}}\textcolor{committed}{\texttt{?}}\\
\mbox{\texttt{~~~}} \textcolor{committed}{\texttt{"""}}\\
\mbox{\texttt{~~~}} \textcolor{committed}{\texttt{x}} \textcolor{committed}{\texttt{=}} \textcolor{committed}{\texttt{1}}\textcolor{committed}{\texttt{6}} \textcolor{committed}{\texttt{-}} \textcolor{committed}{\texttt{1}}\textcolor{committed}{\texttt{2}} \\
\mbox{\texttt{~~~~}}\\
\mbox{\texttt{~~~}} \textcolor{committed}{\texttt{y}} \textcolor{committed}{\texttt{=}} \textcolor{committed}{\texttt{x}} \textcolor{committed}{\texttt{+}} \textcolor{committed}{\texttt{4}}\\
\mbox{\texttt{~~~~}}\\
\mbox{\texttt{~~~}} \textcolor{committed}{\texttt{result}} \textcolor{committed}{\texttt{=}} \textcolor{committed}{\texttt{y}}\\
\mbox{}\\
\mbox{\texttt{~~~}} \textcolor{committed}{\texttt{return}} \textcolor{committed}{\texttt{result}}\textcolor{committed}{\texttt{<|endoftext|>}}\textcolor{committed}{\texttt{[PAD]}}\textcolor{committed}{\texttt{[PAD]}}\textcolor{committed}{\texttt{[PAD]}}\textcolor{committed}{\texttt{[PAD]}}\textcolor{committed}{\texttt{[PAD]}}\textcolor{committed}{\texttt{[PAD]}}\textcolor{committed}{\texttt{[PAD]}}\textcolor{committed}{\texttt{[PAD]}}\textcolor{committed}{\texttt{[PAD]}}\textcolor{committed}{\texttt{[PAD]}}
\end{samplebox}
\caption{\textbf{DBTM inference trajectory on TinyGSM, where each NFE is a refinement round for NFEs 1--32.} Tokens are colored by their commit state: committed tokens are colored \textcolor{committed}{turquoise} and tokens being refined are colored \textcolor{refining}{purple}}
\label{fig:refine_sample_gsm8k_1}
\end{figure}

\begin{figure}[H]
\centering
\begin{samplebox}
\noindent\textbf{DBTM, Round: 1}\hfill{\scriptsize\textcolor{muted}{Committed: \textbf{402/483}}}

\vspace{2pt}
\noindent
\scriptsize\linespread{0.9}\selectfont
\textcolor{muted}{\texttt{An}} \textcolor{muted}{\texttt{er}}\textcolor{muted}{\texttt{aser}} \textcolor{muted}{\texttt{costs}} \textcolor{muted}{\texttt{\$}}\textcolor{muted}{\texttt{2}} \textcolor{muted}{\texttt{and}} \textcolor{muted}{\texttt{a}} \textcolor{muted}{\texttt{pencil}} \textcolor{muted}{\texttt{costs}} \textcolor{muted}{\texttt{\$}}\textcolor{muted}{\texttt{3}}\textcolor{muted}{\texttt{.}} \textcolor{muted}{\texttt{How}} \textcolor{muted}{\texttt{much}} \textcolor{muted}{\texttt{do}} \textcolor{muted}{\texttt{6}} \textcolor{muted}{\texttt{er}}\textcolor{muted}{\texttt{asers}} \textcolor{muted}{\texttt{and}} \textcolor{muted}{\texttt{8}} \textcolor{muted}{\texttt{pencils}} \textcolor{muted}{\texttt{cost}}\textcolor{muted}{\texttt{?}}\\
\textcolor{committed}{\texttt{def}} \textcolor{committed}{\texttt{simple}}\textcolor{committed}{\texttt{\_}}\textcolor{committed}{\texttt{math}}\textcolor{committed}{\texttt{\_}}\textcolor{committed}{\texttt{problem}}\textcolor{committed}{\texttt{()}} \textcolor{committed}{\texttt{->}} \textcolor{committed}{\texttt{int}}\textcolor{committed}{\texttt{:}}\\
\mbox{\texttt{~~~}} \textcolor{refining}{\texttt{"""}}\\
\mbox{\texttt{~~~}} \textcolor{committed}{\texttt{An}} \textcolor{committed}{\texttt{er}}\textcolor{committed}{\texttt{aser}} \textcolor{committed}{\texttt{costs}} \textcolor{committed}{\texttt{\$}}\textcolor{committed}{\texttt{2}} \textcolor{committed}{\texttt{and}} \textcolor{committed}{\texttt{a}} \textcolor{committed}{\texttt{pencil}} \textcolor{committed}{\texttt{costs}} \textcolor{committed}{\texttt{\$}}\textcolor{committed}{\texttt{3}}\textcolor{committed}{\texttt{.}}\\
\mbox{\texttt{~~~}} \textcolor{committed}{\texttt{How}} \textcolor{committed}{\texttt{much}} \textcolor{committed}{\texttt{do}} \textcolor{committed}{\texttt{6}} \textcolor{committed}{\texttt{er}}\textcolor{committed}{\texttt{asers}} \textcolor{committed}{\texttt{and}} \textcolor{committed}{\texttt{8}} \textcolor{committed}{\texttt{pencils}} \textcolor{committed}{\texttt{cost}}\textcolor{committed}{\texttt{?}}\\
\mbox{\texttt{~~~}} \textcolor{refining}{\texttt{"""}}\\
\mbox{\texttt{~~~}} \textcolor{refining}{\texttt{er}}\textcolor{refining}{\texttt{aser}}\textcolor{refining}{\texttt{\_}}\textcolor{refining}{\texttt{cost}} \textcolor{refining}{\texttt{=}} \textcolor{refining}{\texttt{2}}\\
\mbox{\texttt{~~~}} \textcolor{refining}{\texttt{pencil}}\textcolor{refining}{\texttt{\_}}\textcolor{refining}{\texttt{cost}} \textcolor{refining}{\texttt{=}} \textcolor{refining}{\texttt{3}}\\
\mbox{\texttt{~~~}}\\
\mbox{\texttt{~~~}}\textcolor{refining}{\texttt{\_}}\textcolor{refining}{\texttt{\_}}\textcolor{refining}{\texttt{ers}} \textcolor{refining}{\texttt{=}} \textcolor{refining}{\texttt{6}}\\
\mbox{\texttt{~~~}}\textcolor{refining}{\texttt{\_}}\textcolor{refining}{\texttt{\_}}\textcolor{refining}{\texttt{\_}} \textcolor{refining}{\texttt{=}}\textcolor{refining}{\texttt{ils}} \textcolor{refining}{\texttt{=}} \textcolor{refining}{\texttt{8}}\textcolor{refining}{\texttt{\_}} \textcolor{refining}{\texttt{total}}\textcolor{refining}{\texttt{\_}}\textcolor{refining}{\texttt{cost}} \textcolor{refining}{\texttt{=}} \textcolor{refining}{\texttt{=}}\textcolor{refining}{\texttt{\_}}\textcolor{refining}{\texttt{\_}}\textcolor{refining}{\texttt{\_}}\textcolor{refining}{\texttt{\_}} \textcolor{refining}{\texttt{*}}\textcolor{refining}{\texttt{\_}}\textcolor{refining}{\texttt{\_}}\textcolor{refining}{\texttt{\_}}\textcolor{refining}{\texttt{[PAD]}}\textcolor{refining}{\texttt{[PAD]}}\textcolor{refining}{\texttt{[PAD]}}\textcolor{refining}{\texttt{[PAD]}}\textcolor{refining}{\texttt{[PAD]}}\textcolor{refining}{\texttt{[PAD]}}\textcolor{refining}{\texttt{[PAD]}}\textcolor{refining}{\texttt{[PAD]}}\textcolor{refining}{\texttt{[PAD]}}\textcolor{refining}{\texttt{[PAD]}}
\end{samplebox}
\begin{samplebox}
\noindent\textbf{DBTM, Round: 10}\hfill{\scriptsize\textcolor{muted}{Committed: \textbf{434/483}}}

\vspace{2pt}
\noindent
\scriptsize\linespread{0.9}\selectfont
\textcolor{muted}{\texttt{An}} \textcolor{muted}{\texttt{er}}\textcolor{muted}{\texttt{aser}} \textcolor{muted}{\texttt{costs}} \textcolor{muted}{\texttt{\$}}\textcolor{muted}{\texttt{2}} \textcolor{muted}{\texttt{and}} \textcolor{muted}{\texttt{a}} \textcolor{muted}{\texttt{pencil}} \textcolor{muted}{\texttt{costs}} \textcolor{muted}{\texttt{\$}}\textcolor{muted}{\texttt{3}}\textcolor{muted}{\texttt{.}} \textcolor{muted}{\texttt{How}} \textcolor{muted}{\texttt{much}} \textcolor{muted}{\texttt{do}} \textcolor{muted}{\texttt{6}} \textcolor{muted}{\texttt{er}}\textcolor{muted}{\texttt{asers}} \textcolor{muted}{\texttt{and}} \textcolor{muted}{\texttt{8}} \textcolor{muted}{\texttt{pencils}} \textcolor{muted}{\texttt{cost}}\textcolor{muted}{\texttt{?}}\\
\textcolor{committed}{\texttt{def}} \textcolor{committed}{\texttt{simple}}\textcolor{committed}{\texttt{\_}}\textcolor{committed}{\texttt{math}}\textcolor{committed}{\texttt{\_}}\textcolor{committed}{\texttt{problem}}\textcolor{committed}{\texttt{()}} \textcolor{committed}{\texttt{->}} \textcolor{committed}{\texttt{int}}\textcolor{committed}{\texttt{:}}\\
\mbox{\texttt{~~~}} \textcolor{committed}{\texttt{'''}}\\
\mbox{\texttt{~~~}} \textcolor{committed}{\texttt{An}} \textcolor{committed}{\texttt{er}}\textcolor{committed}{\texttt{aser}} \textcolor{committed}{\texttt{costs}} \textcolor{committed}{\texttt{\$}}\textcolor{committed}{\texttt{2}} \textcolor{committed}{\texttt{and}} \textcolor{committed}{\texttt{a}} \textcolor{committed}{\texttt{pencil}} \textcolor{committed}{\texttt{costs}} \textcolor{committed}{\texttt{\$}}\textcolor{committed}{\texttt{3}}\textcolor{committed}{\texttt{.}}\\
\mbox{\texttt{~~~}} \textcolor{committed}{\texttt{How}} \textcolor{committed}{\texttt{much}} \textcolor{committed}{\texttt{do}} \textcolor{committed}{\texttt{6}} \textcolor{committed}{\texttt{er}}\textcolor{committed}{\texttt{asers}} \textcolor{committed}{\texttt{and}} \textcolor{committed}{\texttt{8}} \textcolor{committed}{\texttt{pencils}} \textcolor{committed}{\texttt{cost}}\textcolor{committed}{\texttt{?}}\\
\mbox{\texttt{~~~}} \textcolor{committed}{\texttt{'''}}\\
\mbox{\texttt{~~~}} \textcolor{refining}{\texttt{\#}}\textcolor{refining}{\texttt{\_}}\textcolor{refining}{\texttt{cost}} \textcolor{refining}{\texttt{=}} \textcolor{refining}{\texttt{2}} \textcolor{refining}{\texttt{=}}  \\
\mbox{\texttt{~~~}}   \\
\mbox{\texttt{~~~}} \textcolor{refining}{\texttt{=}}    \textcolor{refining}{\texttt{\_}}  \textcolor{refining}{\texttt{=}} \textcolor{refining}{\texttt{=}}\textcolor{refining}{\texttt{\_}}\textcolor{refining}{\texttt{cost}}\textcolor{refining}{\texttt{\_}}\textcolor{refining}{\texttt{\_}}\textcolor{refining}{\texttt{\_}}\textcolor{refining}{\texttt{cost}}\textcolor{refining}{\texttt{\_}}\textcolor{refining}{\texttt{\_}}\textcolor{refining}{\texttt{\_}}\textcolor{refining}{\texttt{\_}}\textcolor{refining}{\texttt{\_}}\textcolor{refining}{\texttt{\_}}\textcolor{refining}{\texttt{\_}}\textcolor{refining}{\texttt{[PAD]}}\textcolor{refining}{\texttt{[PAD]}}\textcolor{refining}{\texttt{[PAD]}}\textcolor{refining}{\texttt{[PAD]}}\textcolor{refining}{\texttt{[PAD]}}\\
\mbox{\texttt{~~~}}\textcolor{refining}{\texttt{[PAD]}}\textcolor{refining}{\texttt{[PAD]}}\textcolor{refining}{\texttt{[PAD]}}\textcolor{refining}{\texttt{[PAD]}}\textcolor{refining}{\texttt{[PAD]}}\textcolor{refining}{\texttt{[PAD]}}\textcolor{refining}{\texttt{[PAD]}}\textcolor{committed}{\texttt{[PAD]}}\textcolor{committed}{\texttt{[PAD]}}\textcolor{committed}{\texttt{[PAD]}}
\end{samplebox}
\begin{samplebox}
\noindent\textbf{DBTM, Round: 20}\hfill{\scriptsize\textcolor{muted}{Committed: \textbf{483/483}}}

\vspace{2pt}
\noindent
\scriptsize\linespread{0.9}\selectfont
\textcolor{muted}{\texttt{An}} \textcolor{muted}{\texttt{er}}\textcolor{muted}{\texttt{aser}} \textcolor{muted}{\texttt{costs}} \textcolor{muted}{\texttt{\$}}\textcolor{muted}{\texttt{2}} \textcolor{muted}{\texttt{and}} \textcolor{muted}{\texttt{a}} \textcolor{muted}{\texttt{pencil}} \textcolor{muted}{\texttt{costs}} \textcolor{muted}{\texttt{\$}}\textcolor{muted}{\texttt{3}}\textcolor{muted}{\texttt{.}} \textcolor{muted}{\texttt{How}} \textcolor{muted}{\texttt{much}} \textcolor{muted}{\texttt{do}} \textcolor{muted}{\texttt{6}} \textcolor{muted}{\texttt{er}}\textcolor{muted}{\texttt{asers}} \textcolor{muted}{\texttt{and}} \textcolor{muted}{\texttt{8}} \textcolor{muted}{\texttt{pencils}} \textcolor{muted}{\texttt{cost}}\textcolor{muted}{\texttt{?}}\\
\textcolor{committed}{\texttt{def}} \textcolor{committed}{\texttt{simple}}\textcolor{committed}{\texttt{\_}}\textcolor{committed}{\texttt{math}}\textcolor{committed}{\texttt{\_}}\textcolor{committed}{\texttt{problem}}\textcolor{committed}{\texttt{()}} \textcolor{committed}{\texttt{->}} \textcolor{committed}{\texttt{int}}\textcolor{committed}{\texttt{:}}\\
\mbox{\texttt{~~~}} \textcolor{committed}{\texttt{'''}}\\
\mbox{\texttt{~~~}} \textcolor{committed}{\texttt{An}} \textcolor{committed}{\texttt{er}}\textcolor{committed}{\texttt{aser}} \textcolor{committed}{\texttt{costs}} \textcolor{committed}{\texttt{\$}}\textcolor{committed}{\texttt{2}} \textcolor{committed}{\texttt{and}} \textcolor{committed}{\texttt{a}} \textcolor{committed}{\texttt{pencil}} \textcolor{committed}{\texttt{costs}} \textcolor{committed}{\texttt{\$}}\textcolor{committed}{\texttt{3}}\textcolor{committed}{\texttt{.}}\\
\mbox{\texttt{~~~}} \textcolor{committed}{\texttt{How}} \textcolor{committed}{\texttt{much}} \textcolor{committed}{\texttt{do}} \textcolor{committed}{\texttt{6}} \textcolor{committed}{\texttt{er}}\textcolor{committed}{\texttt{asers}} \textcolor{committed}{\texttt{and}} \textcolor{committed}{\texttt{8}} \textcolor{committed}{\texttt{pencils}} \textcolor{committed}{\texttt{cost}}\textcolor{committed}{\texttt{?}}\\
\mbox{\texttt{~~~}} \textcolor{committed}{\texttt{'''}}\\
\mbox{\texttt{~~~}} \textcolor{committed}{\texttt{total}}\textcolor{committed}{\texttt{\_}}\textcolor{committed}{\texttt{cost}} \textcolor{committed}{\texttt{=}} \textcolor{committed}{\texttt{(}}\textcolor{committed}{\texttt{6}} \textcolor{committed}{\texttt{*}} \textcolor{committed}{\texttt{2}}\textcolor{committed}{\texttt{)}} \textcolor{committed}{\texttt{+}} \textcolor{committed}{\texttt{(}}\textcolor{committed}{\texttt{8}} \textcolor{committed}{\texttt{*}} \textcolor{committed}{\texttt{3}}\textcolor{committed}{\texttt{)}}\\
\mbox{\texttt{~~~}} \textcolor{committed}{\texttt{result}} \textcolor{committed}{\texttt{=}} \textcolor{committed}{\texttt{total}}\textcolor{committed}{\texttt{\_}}\textcolor{committed}{\texttt{cost}}\\
\mbox{\texttt{~~~}} \textcolor{committed}{\texttt{return}} \textcolor{committed}{\texttt{result}}\textcolor{committed}{\texttt{<|endoftext|>}}\textcolor{committed}{\texttt{[PAD]}}\textcolor{committed}{\texttt{[PAD]}}\textcolor{committed}{\texttt{[PAD]}}\textcolor{committed}{\texttt{[PAD]}}\textcolor{committed}{\texttt{[PAD]}}\textcolor{committed}{\texttt{[PAD]}}\textcolor{committed}{\texttt{[PAD]}}\textcolor{committed}{\texttt{[PAD]}}\textcolor{committed}{\texttt{[PAD]}}\textcolor{committed}{\texttt{[PAD]}}
\end{samplebox}
\begin{samplebox}
\noindent\textbf{DBTM, Round: 31}\hfill{\scriptsize\textcolor{muted}{Committed: \textbf{483/483}}}

\vspace{2pt}
\noindent
\scriptsize\linespread{0.9}\selectfont
\textcolor{muted}{\texttt{An}} \textcolor{muted}{\texttt{er}}\textcolor{muted}{\texttt{aser}} \textcolor{muted}{\texttt{costs}} \textcolor{muted}{\texttt{\$}}\textcolor{muted}{\texttt{2}} \textcolor{muted}{\texttt{and}} \textcolor{muted}{\texttt{a}} \textcolor{muted}{\texttt{pencil}} \textcolor{muted}{\texttt{costs}} \textcolor{muted}{\texttt{\$}}\textcolor{muted}{\texttt{3}}\textcolor{muted}{\texttt{.}} \textcolor{muted}{\texttt{How}} \textcolor{muted}{\texttt{much}} \textcolor{muted}{\texttt{do}} \textcolor{muted}{\texttt{6}} \textcolor{muted}{\texttt{er}}\textcolor{muted}{\texttt{asers}} \textcolor{muted}{\texttt{and}} \textcolor{muted}{\texttt{8}} \textcolor{muted}{\texttt{pencils}} \textcolor{muted}{\texttt{cost}}\textcolor{muted}{\texttt{?}}\\
\textcolor{committed}{\texttt{def}} \textcolor{committed}{\texttt{simple}}\textcolor{committed}{\texttt{\_}}\textcolor{committed}{\texttt{math}}\textcolor{committed}{\texttt{\_}}\textcolor{committed}{\texttt{problem}}\textcolor{committed}{\texttt{()}} \textcolor{committed}{\texttt{->}} \textcolor{committed}{\texttt{int}}\textcolor{committed}{\texttt{:}}\\
\mbox{\texttt{~~~}} \textcolor{committed}{\texttt{'''}}\\
\mbox{\texttt{~~~}} \textcolor{committed}{\texttt{An}} \textcolor{committed}{\texttt{er}}\textcolor{committed}{\texttt{aser}} \textcolor{committed}{\texttt{costs}} \textcolor{committed}{\texttt{\$}}\textcolor{committed}{\texttt{2}} \textcolor{committed}{\texttt{and}} \textcolor{committed}{\texttt{a}} \textcolor{committed}{\texttt{pencil}} \textcolor{committed}{\texttt{costs}} \textcolor{committed}{\texttt{\$}}\textcolor{committed}{\texttt{3}}\textcolor{committed}{\texttt{.}}\\
\mbox{\texttt{~~~}} \textcolor{committed}{\texttt{How}} \textcolor{committed}{\texttt{much}} \textcolor{committed}{\texttt{do}} \textcolor{committed}{\texttt{6}} \textcolor{committed}{\texttt{er}}\textcolor{committed}{\texttt{asers}} \textcolor{committed}{\texttt{and}} \textcolor{committed}{\texttt{8}} \textcolor{committed}{\texttt{pencils}} \textcolor{committed}{\texttt{cost}}\textcolor{committed}{\texttt{?}}\\
\mbox{\texttt{~~~}} \textcolor{committed}{\texttt{'''}}\\
\mbox{\texttt{~~~}} \textcolor{committed}{\texttt{total}}\textcolor{committed}{\texttt{\_}}\textcolor{committed}{\texttt{cost}} \textcolor{committed}{\texttt{=}} \textcolor{committed}{\texttt{(}}\textcolor{committed}{\texttt{6}} \textcolor{committed}{\texttt{*}} \textcolor{committed}{\texttt{2}}\textcolor{committed}{\texttt{)}} \textcolor{committed}{\texttt{+}} \textcolor{committed}{\texttt{(}}\textcolor{committed}{\texttt{8}} \textcolor{committed}{\texttt{*}} \textcolor{committed}{\texttt{3}}\textcolor{committed}{\texttt{)}}\\
\mbox{\texttt{~~~}} \textcolor{committed}{\texttt{result}} \textcolor{committed}{\texttt{=}} \textcolor{committed}{\texttt{total}}\textcolor{committed}{\texttt{\_}}\textcolor{committed}{\texttt{cost}}\\
\mbox{\texttt{~~~}} \textcolor{committed}{\texttt{return}} \textcolor{committed}{\texttt{result}}\textcolor{committed}{\texttt{<|endoftext|>}}\textcolor{committed}{\texttt{[PAD]}}\textcolor{committed}{\texttt{[PAD]}}\textcolor{committed}{\texttt{[PAD]}}\textcolor{committed}{\texttt{[PAD]}}\textcolor{committed}{\texttt{[PAD]}}\textcolor{committed}{\texttt{[PAD]}}\textcolor{committed}{\texttt{[PAD]}}\textcolor{committed}{\texttt{[PAD]}}\textcolor{committed}{\texttt{[PAD]}}\textcolor{committed}{\texttt{[PAD]}}
\end{samplebox}
\caption{\textbf{DBTM inference trajectory on TinyGSM, where each NFE is a refinement round for NFEs 1--32.} Tokens are colored by their commit state: committed tokens are colored \textcolor{committed}{turquoise} and tokens being refined are colored \textcolor{refining}{purple}}
\label{fig:refine_sample_gsm8k_2}
\end{figure}

\begin{figure}
    \centering
    \includegraphics[width=\linewidth]{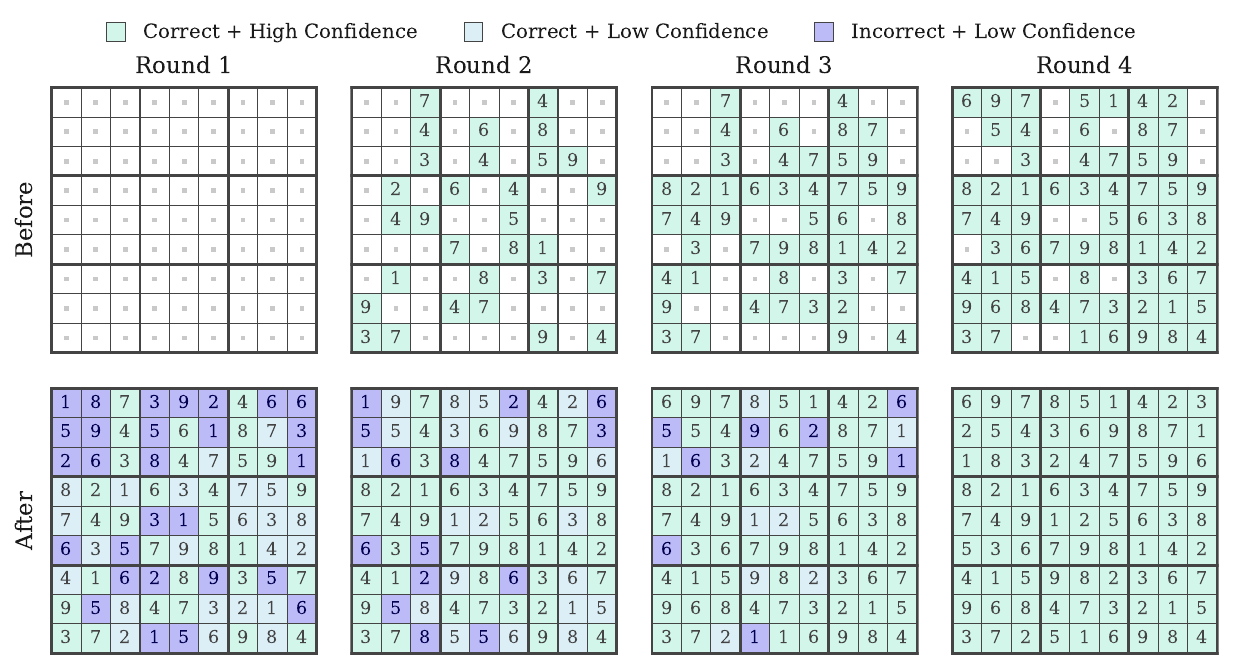}
    \caption{\textbf{Example reasoning trace of 4 NFE (3 refinement rounds) on Sudoku-Hard.} Cells are colored by their commit state: committed tokens are colored \textcolor{committed}{turquoise} and tokens being refined are colored \textcolor{refining}{purple}. Following \citet{agarwal2026posterior}, we start with a blank grid with the clues on a separate prompt, so the model has to learn to first correctly copy over the clues in addition to correctly solving the rest of the puzzle. }
    \label{fig:sudoku-example}
\end{figure}

\end{document}